\newcommand{\GD}{{\sf GD}\xspace}
\newcommand{\ScaledGD}{{\sf ScaledGD}\xspace}
\newcommand{\myalg}{{\sf{ScaledGD($\lambda$)}}\xspace}
\newcommand{\pnorm}[2]{\lVert #1\rVert_{#2}}
\newcommand{\bigpnorm}[2]{\big\lVert#1\big\rVert_{#2}}
\newcommand{\biggpnorm}[2]{\bigg\lVert#1\bigg\rVert_{#2}}
\newcommand{\opA}{\mathcal{A}}
\newcommand{\opAA}{\mathcal{A^*A}}
\DeclareMathOperator{\id}{\mathcal{I}}
\newcommand{\uinorm}[1]{{\vert\kern-0.25ex\vert\kern-0.25ex\vert #1 
    \vert\kern-0.25ex\vert\kern-0.25ex\vert}}
\newcommand{\defeq}{\coloneqq}
\newcommand{\reals}{\mathbb{R}}
\newcommand{\T}{\top}
\newcommand{\Prob}{\mathbb{P}}
\newcommand{\Event}{\mathcal{E}}
\newcommand{\SVDU}{U}
\newcommand{\SVDSigma}{\Sigma}
\newcommand{\SVDV}{V}
\newcommand{\singval}{\sigma}
\newcommand{\myX}{X}
\newcommand{\Signal}{S}
\newcommand{\Noise}{N}
\newcommand{\Mtruth}{M_\star}
\newcommand{\Utruth}{\SVDU_\star}
\newcommand{\SigmaTruth}{\SVDSigma_\star}
\newcommand{\Xtruth}{\myX_\star}
\newcommand{\Uperp}[1]{\SVDU_{{#1,\perp}}}
\newcommand{\UperpTruth}{\Uperp{\star}}
\newcommand{\Uspec}{\widehat{\SVDU}}
\newcommand{\Vperp}[1]{\SVDV_{{#1,\perp}}}
\newcommand{\Ess}[1]{\widetilde{#1}}
\newcommand{\EssX}{\Ess{\myX}}
\newcommand{\EssS}{\Ess{\Signal}}
\newcommand{\Misalign}{\Ess{\Noise}}
\newcommand{\Overpar}{\Ess{O}}
\newcommand{\Err}[1]{{E}^{#1}}
\newcommand{\smax}{\singval_{\max}}
\newcommand{\smin}{\singval_{\min}}
\newcommand{\Tcvg}{T_{\min}}
\newcommand{\Ccvg}{C_{\min}}
\newcommand{\Tmax}{T_{\max}}
\newcommand{\Cmax}{C_{\max}}
\DeclareMathOperator{\fro}{\mathsf{F}}
\DeclareMathOperator{\rank}{\mathrm{rank}}
\DeclareMathOperator{\tr}{\mathrm{tr}}
\let\hat\widehat
\let\epsilon\varepsilon
\theoremstyle{plain}\newtheorem{lemma}{\textbf{Lemma}} 
\newtheorem{proposition}{\textbf{Proposition}}
\newtheorem{theorem}{\textbf{Theorem}}\setcounter{theorem}{0}
\newtheorem{corollary}{\textbf{Corollary}} 
\newtheorem{assumption}{\textbf{Assumption}}
\theoremstyle{definition}\newtheorem{definition}{\textbf{Definition}}
\theoremstyle{remark}\newtheorem{remark}{\textbf{Remark}}
\definecolor{cm}{RGB}{250,0,200}
\definecolor{ys}{RGB}{100,0,200}
\definecolor{yc}{RGB}{255,0,0}
\definecolor{xingyu}{RGB}{0,150,0}
\let\hat\widehat
\begin{document}
\title{The Power of Preconditioning in Overparameterized \\ Low-Rank Matrix Sensing}

 \author
 {
 	Xingyu Xu\thanks{Carnegie Mellon University; Email:
 		\texttt{\{xingyuxu,yandis\}@andrew.cmu.edu}.} \\
     CMU
 		\and
	Yandi Shen\footnotemark[1]  \\
    CMU
		\and
 	Yuejie Chi\thanks{Yale University; Email:
 		\texttt{yuejie.chi@yale.edu.}}  \\
    Yale
			\and	
 	Cong Ma\thanks{University of Chicago; Email:
 		\texttt{congm@uchicago.edu}.}  \\
UChicago
 }

\date{February 2023; Revised December 2025}

\setcounter{tocdepth}{2}
\maketitle

\abstract{
We propose $\mathsf{ScaledGD(\lambda)}$, a preconditioned gradient descent method to tackle the low-rank matrix sensing problem when the true rank is unknown, and when the matrix is possibly ill-conditioned. Using overparameterized factor representations, $\mathsf{ScaledGD(\lambda)}$ starts from a small random initialization, and proceeds by gradient descent with
a specific form of {\em damped} preconditioning to combat bad curvatures induced by overparameterization and ill-conditioning. $\mathsf{ScaledGD(\lambda)}$ is remarkably robust to ill-conditioning compared to vanilla gradient descent ($\mathsf{GD}$) even with overparameterization. Specifically, we show that, under the restricted isometry property (RIP) of the sensing operator, $\mathsf{ScaledGD(\lambda)}$ converges to the true low-rank matrix at a constant linear rate after a small number of iterations that scales only {\em logarithmically} with respect to the condition number and the problem dimension. This significantly improves over the convergence rate of vanilla $\mathsf{GD}$ which suffers from a polynomial dependency on the condition number. 
Furthermore, we show that in the presence of measurement noise, $\mathsf{ScaledGD(\lambda)}$ converges to the minimax optimal error up to a multiplicative factor of the condition number at the same rate as in the noiseless setting, which is the first nearly minimax-optimal overparameterized gradient method for low-rank matrix sensing scaling with the true rank rather than the (possibly much larger) overparameterized rank. Our results also extend to the setting when the matrix is only approximately low-rank under the Gaussian design. 
Our work provides evidence on the power of preconditioning in accelerating the convergence without hurting generalization in overparameterized learning. 
}

 \medskip
 \noindent\textbf{Keywords:} low-rank matrix sensing, overparameterization, preconditioned gradient descent method, random initialization, ill-conditioning

\tableofcontents{}

\section{Introduction}

Low-rank matrix recovery plays an essential role in modern machine learning 
and signal processing. 
To fix ideas, let us consider estimating a rank-$r_\star$ positive semidefinite matrix $\Mtruth \in \mathbb{R}^{n\times n}$ based on a few linear measurements $y \coloneqq \mathcal{A} (M_\star)$, where $\mathcal{A} : \mathbb{R}^{n \times n} \to \mathbb{R}^{m}$ models the measurement process. 
Significant research efforts have been devoted to tackling low-rank matrix recovery in a statistically and computationally efficient manner in recent years. 
Perhaps the most well-known method is convex relaxation~\citep{candes2011tight,recht2010guaranteed,davenport2016overview}, which seeks the matrix with lowest nuclear norm to fit the observed measurements:
\begin{align*}
    \min_{M \succeq 0} \quad \|M\|_{*} \qquad \text{s.t.} \quad y = \mathcal{A} (M). 
\end{align*}
While statistically optimal, convex relaxation is prohibitive in terms of both computation and memory as it directly operates in the ambient matrix domain, i.e., $\mathbb{R}^{n \times n}$. 
To address this challenge, nonconvex approaches based on low-rank factorization have been proposed \citep{burer2005LRSDP}:
\begin{align}\label{eq:intro-nonconvex}
  \min_{X\in\reals^{n\times r}} \quad  \frac{1}{4}\big\|\opA(XX^\T)- y\big\|_2^2,
\end{align}
where $r$ is a user-specified rank parameter. Despite nonconvexity, when the rank is correctly specified, i.e., when $r = r_\star$, the problem~\eqref{eq:intro-nonconvex} admits computationally efficient solvers~\citep{chi2019nonconvex}, e.g., gradient descent (\GD) with spectral initialization or with small random initialization.    
However, three main challenges remain when applying the factorization-based nonconvex approach (\ref{eq:intro-nonconvex}) in practice.
\begin{itemize}
    \item {\bf Unknown rank}. First, the true rank $r_\star$ is often unknown, which makes it infeasible to set $r = r_\star$. One necessarily needs to consider an overparameterized setting in which $r$ is set conservatively, i.e., one sets $r \geq r_\star$ or even $r =n$.
    \item {\bf Poor conditioning}. Second, the ground truth matrix $\Mtruth$ may be ill-conditioned, which is commonly encountered in practice. Existing approaches such as gradient descent are still computationally expensive in such settings as the number of iterations necessary for convergence increases with the condition number.  
    \item {\bf Robustness to noise and approximate low-rankness}. Last but not least, it is desirable that the performance is robust when the measurement $y$ is contaminated by noise and when $\Mtruth$ is approximately low-rank. 
\end{itemize}

\noindent In light of these two challenges, the main goal of this work is to address the following question: 
\begin{center}
 {\em Can one develop an efficient and robust method for solving ill-conditioned matrix recovery in the overparameterized setting?}
\end{center}  

\nopagebreak

\begin{table*}[t]
\resizebox{\textwidth}{!}{  
\centering
\begin{tabular}{c|c|c|c|c} 
\toprule
  parameterization   &  reference & algorithm &  init. & iteration complexity      \tabularnewline  \midrule
 \multirow{3}{*}{$r>r_{\star}$}      &  \citet{stoger2021small}  \vphantom{$\frac{1^{7^{7^{7}}}}{1^{7^{7^{7}}}}$} & \GD & random & $ \kappa^8 + \kappa^6\log(\kappa n/\varepsilon) $    \tabularnewline 
\cline{2-5}  
    &  \citet{zhang2021preconditioned} \vphantom{$\frac{1^{7^{7^{7}}}}{1^{7^{7^{7}}}}$}   & $\mathsf{PrecGD}$ & spectral &  $ \log(1/\varepsilon)$  \tabularnewline 
    \cline{2-5}  
    & {\bf  Theorem~\ref{thm:main} } \vphantom{$\frac{1^{7^{7^{7}}}}{1^{7^{7^{7}}}}$}   & \myalg & random &  $\log\kappa \cdot \log (\kappa n) + \log(1/\varepsilon)$  \tabularnewline \midrule
 \multirow{3}{*}{$r= r_{\star}$}    &  \citet{tong2021accelerating}  \vphantom{$\frac{1^{7^{7^{7}}}}{1^{7^{7^{7}}}}$}     & \ScaledGD & spectral & $ \log(1/\varepsilon)$    \tabularnewline
   \cline{2-5}
 &    \citet{stoger2021small}  \vphantom{$\frac{1^{7^{7^{7}}}}{1^{7^{7^{7}}}}$}   & \GD & random & $ \kappa^8  \log (\kappa n)  + \kappa^2\log(1/\varepsilon) $   \tabularnewline
   \cline{2-5}
 &  {\bf  Theorem~\ref{thm:zero} } \vphantom{$\frac{1^{7^{7^{7}}}}{1^{7^{7^{7}}}}$}    & \myalg & random & $\log\kappa \cdot \log (\kappa n)  + \log(1/\varepsilon)$     \tabularnewline  \bottomrule 
\end{tabular}
}
\caption{Comparison of iteration complexity with existing algorithms for low-rank matrix sensing under Gaussian designs.  
  Here, $n$ is the matrix dimension, $r_{\star}$ is the true rank, $r$ is the overparameterized rank, and $\kappa$ is the condition number of the problem instance (see Section~\ref{sec:formulation} for a formal problem formulation). It is important to note that in the overparameterized setting ($r>r_{\star}$), the sample complexity of \citet{zhang2021preconditioned} scales polynomially with the overparameterized rank $r$, while that of \citet{stoger2021small}  and ours only scale polynomially with  the true rank $r_{\star}$.
}
\label{table:comparison}
\end{table*}

\subsection{Our contributions: a preview}

The main contribution of the current paper is to answer the question affirmatively by developing a \emph{preconditioned} gradient descent method (\myalg) that converges to the (possibly ill-conditioned) low-rank matrix in a fast and global manner, even with overparamterized rank $r \geq r_\star$.   

\begin{theorem}[Informal]
Under overparameterization $r\geq r_{\star}$ and mild statistical assumptions, \myalg---starting from a sufficiently small random initialization with a sample complexity depending polynomially with  the true rank $r_{\star}$ ---achieves a relative $\varepsilon$-accuracy, i.e., $\|\myX_T\myX_T^\T-\Mtruth\|_{\fro}\le \epsilon\|\Mtruth\|$,
with no more than an order of 
$$\log\kappa \cdot \log (\kappa n) +  \log (1 / \varepsilon)  $$ iterations, where $\kappa$ is the condition number of the problem.  
Moreover, in the presence of per-entry Gaussian measurement noise $\mathcal{N}(0, \sigma^2)$, \myalg converges to the nearly minimax-optimal error 
$$   \|X_T X_T^\T - \Mtruth\|_{\fro} \lesssim \kappa^4 \sigma \sqrt{n r_{\star}}$$ 
with the same rate as above.
\end{theorem}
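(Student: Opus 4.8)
The plan is to analyze the trajectory $\{\myX_t\}$ of \myalg --- with update $\myX_{t+1} = \myX_t - \eta\,\nabla f(\myX_t)(\myX_t^\T\myX_t + \lambda_t\Ib)^{-1}$ for the factored loss $f(\myX) = \tfrac14\|\opA(\myX\myX^\T) - y\|_2^2$ --- in two stages, tracking throughout the decomposition of $\myX_t$ relative to the column space of $\Mtruth = \Utruth\SigmaTruth\Utruth^\T$. Write $\EssX_t \defeq \Utruth^\T\myX_t$ for the \emph{signal} component and $\Misalign_t \defeq \UperpTruth^{\T}\myX_t$ for the \emph{complement}/overparameterized component (when $r>r_\star$ the redundant factor columns are absorbed into the latter, cf.\ $\Overpar$). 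The initialization $\myX_0 = \alpha G$ with $G$ standard Gaussian and $\alpha$ polynomially small in $\kappa$ and $n$ makes $\myX_0^\T\myX_0 \approx \alpha^2\Ib$, so $(\myX_0^\T\myX_0+\lambda_0\Ib)^{-1}\approx\lambda_0^{-1}\Ib$ and the early dynamics behave like gradient descent with an inflated effective step.

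\textbf{Spectral (alignment) phase.}
First I would show that while $\|\myX_t\|$ is small, RIP yields $\nabla f(\myX_t) = (\myX_t\myX_t^\T - \Mtruth)\myX_t + \Err{t}$ with a controllably small remainder, so up to perturbation the signal block obeys $\EssX_{t+1} \approx \EssX_t + \eta\,\SigmaTruth\EssX_t(\myX_t^\T\myX_t + \lambda_t\Ib)^{-1}$ while the complement block obeys $\Misalign_{t+1} \approx \Misalign_t\bigl(\Ib - \eta\,\myX_t^\T\myX_t(\myX_t^\T\myX_t + \lambda_t\Ib)^{-1}\bigr)$ --- a \emph{preconditioned power iteration} driven by $\Mtruth$. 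The damped preconditioner self-regulates: along directions where $\myX_t^\T\myX_t$ has already grown it caps further growth, while along directions still at scale $\alpha$ it acts as $\approx\lambda_t^{-1}\Ib$, and provided $\lambda_t$ stays comparable to the current signal scale, every signal direction in turn grows at a \emph{constant} rate $1+\Theta(\eta)$ independent of $\kappa$ --- precisely what removes the polynomial-in-$\kappa$ bottleneck of vanilla \GD. The spectral phase then decomposes into $O(\log\kappa)$ sub-stages indexed by the order of magnitude of the signal, each traversed in the $O(\log(\kappa n))$ constant-rate steps needed to climb from scale $\alpha$ to scale $\smin(\Mtruth)^{1/2}$, for a total of $O(\log\kappa\cdot\log(\kappa n))$ iterations to reach a neighborhood of $\Xtruth$. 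In parallel, since the multiplier in the $\Misalign$ recursion has eigenvalues in $[0,1)$, one shows $\|\Misalign_t\|$ never leaves a small multiple of its $O(\alpha)$ initial size and $\myX_t^\T\myX_t$ stays well-conditioned on the signal subspace; all of these claims are maintained as a joint induction on $t$.

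\textbf{Local convergence, noise, and approximate low-rankness.}
Once $\EssX_t$ is aligned with $\SigmaTruth^{1/2}$ (up to rotation) and $\Misalign_t$ is negligible, $\myX_t$ lies in a basin around $\Xtruth$ on which a \ScaledGD-type contraction holds in a preconditioner-weighted distance $\dist(\myX_t,\Xtruth)$ --- weighted by $(\myX_t^\T\myX_t+\lambda_t\Ib)^{1/2}$, which cancels the curvature imbalance --- and RIP gives $\dist(\myX_{t+1},\Xtruth) \le (1-\Theta(\eta))\,\dist(\myX_t,\Xtruth)$, a $\kappa$-independent rate. Iterating yields $\|\myX_t\myX_t^\T-\Mtruth\|_\fro\le\varepsilon\|\Mtruth\|$ after a further $O(\log(1/\varepsilon))$ iterations, which, added to the spectral phase, gives the stated complexity. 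For the noisy model $y=\opA(\Mtruth)+\zeta$ the gradient acquires the extra term $\opA^*(\zeta)\myX_t$, whose spectral norm is $O(\sigma\sqrt{n/m})\|\myX_t\|$ under the Gaussian design by standard concentration; inserting it into the Phase-II recursion turns the contraction into $\dist(\myX_{t+1},\Xtruth)\le(1-\Theta(\eta))\dist(\myX_t,\Xtruth)+O(\eta)\cdot(\text{noise term})$, a geometric series that saturates at an error of order $\kappa\,\sigma\sqrt{nr_\star/m}$ --- the minimax rate inflated by a factor $\kappa$ --- with the convergence rate unchanged. The only-approximately-low-rank case then follows by splitting $\Mtruth$ into its best rank-$r_\star$ approximation plus a tail and absorbing the tail into the ``noise'' under the Gaussian design.

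\textbf{Main obstacle.}
The delicate core is controlling the overparameterized directions $\Misalign_t$ over the \emph{entire} trajectory. Unlike vanilla \GD, where these directions feel only an $O(\eta)$ gradient, here they are post-multiplied by $(\myX_t^\T\myX_t+\lambda_t\Ib)^{-1}$, which is as large as $\lambda_t^{-1}$ precisely along the small directions and could a priori amplify $\Misalign_t$. The whole argument rests on showing that the \emph{specific} damped preconditioner the algorithm prescribes --- neither the undamped $(\myX_t^\T\myX_t)^{-1}$ nor a poorly chosen $\lambda$ --- together with the near-isometry of $\opA$ on low-rank matrices, keeps $\|\Misalign_t\|$ from ever escaping a small neighborhood of its initial scale even as $\lambda_t$ is driven down; making this quantitative, uniformly over the $O(\log\kappa)$ sub-stages, is what the analysis must fight for. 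A secondary difficulty, absent from the exactly-parameterized \ScaledGD analysis, is the bookkeeping of the RIP cross-terms the preconditioner induces between the signal and complement blocks, which have to be tracked jointly with the signal growth.
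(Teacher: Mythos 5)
Your three-phase skeleton (power-iteration-like alignment, $\kappa$-independent signal growth, local \ScaledGD-type contraction) matches the paper's, and you correctly identify the central danger that the preconditioner is of size $\lambda^{-1}$ exactly along the weak directions. But two of your load-bearing claims describe a different mechanism than the one that actually works, and one of them is false as stated. First, you repeatedly treat the damping as iteration-varying ($\lambda_t$ "comparable to the current signal scale", "driven down") and attribute the $\kappa$-independent growth rate to this self-regulation; that is the \textsf{PrecGD} design of \citet{zhang2021preconditioned}, not \myalg, whose whole point is a \emph{fixed} $\lambda\asymp\smin^2(\Xtruth)$. With the fixed choice, the early-phase preconditioner is simply $\approx\lambda^{-1}I$ and the weakest signal direction gains a factor $\approx 1+\eta\,\smin^2(\Xtruth)/\lambda = 1+\Theta(\eta)$ per step; the $\log\kappa\cdot\log(\kappa n)$ in the iteration count then comes not from "$O(\log\kappa)$ sub-stages" but from the required smallness of $\alpha$ (Assumption~\ref{assumption:param} forces $\log(\|\Xtruth\|/\alpha)\gtrsim\log\kappa\cdot\log(\kappa n)$, and the growth phase needs $O(\eta^{-1}\log(\|\Xtruth\|/\alpha))$ steps to climb from scale $\alpha^2/\|\Xtruth\|$ to constant).

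Second, your two-block bookkeeping ($\EssX_t$ versus a single complement block containing everything orthogonal to $\Utruth$) cannot support the claim that the complement "never leaves a small multiple of its $O(\alpha)$ initial size." The component $\UperpTruth^\T\myX_t$ restricted to the active right-singular subspace $\SVDV_t$ (the paper's $\Misalign_t$) necessarily grows \emph{with} the signal; what stays small is the scaled ratio $\|\Misalign_t\EssS_t^{-1}\SigmaTruth\|$, not $\|\Misalign_t\|$ itself. Only the truly redundant part $\Overpar_t=\UperpTruth^\T\myX_t\Vperp{t}$ stays near initialization scale, and even that is not non-increasing: the paper proves it grows by a factor $1+\eta/(12\Cmax\kappa)$ per step, so over the horizon it is amplified by $(\|\Xtruth\|/\alpha)^{1/12}$ and the guarantee only holds up to $\Tmax$ (early stopping). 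Without the finer three-term decomposition $\myX_t=\Utruth\EssS_t\SVDV_t^\T+\UperpTruth\Misalign_t\SVDV_t^\T+\UperpTruth\Overpar_t\Vperp{t}^\T$ and the scaled metrics $\smin((\SigmaTruth^2+\lambda I)^{-1/2}\EssS_t)$ and $\|\Misalign_t\EssS_t^{-1}\SigmaTruth\|$, the joint induction you gesture at does not close: the cross-terms between the growing misalignment and the preconditioner are exactly the terms your accounting omits. The noise extension is directionally right, though the achievable error carries a $\kappa^4$ (not $\kappa$) prefactor in the paper's analysis.
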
 
The above theorem suggests that from a small random initialization, \myalg converges at a constant linear rate---independent of the condition number---after a small logarithmic number of iterations. Overall, the iteration complexity is nearly independent of the condition number and the problem dimension, making it extremely suitable for solving large-scale and ill-conditioned problems. To the best of our knowledge, \myalg is the first provably minimax-optimal overparameterized gradient method for low-rank matrix sensing, where both the sample complexity and the error bound depend on the true rank $r_{\star}$. In contrast, prior error bounds for nonconvex gradient methods \cite{zhang2024fast,zhuo2021computational} scale with the overparameterized rank $r$, which can be significantly larger. Our results also extend to the setting when the matrix $\Mtruth$ is only approximately low-rank under the Gaussian design, which is new. See Table~\ref{table:comparison} for a summary of comparisons with prior art in the noiseless setting.

 Our algorithm \myalg is closely related to scaled gradient descent (\ScaledGD) \citep{tong2021accelerating}, a recently proposed preconditioned gradient descent method that achieves a $\kappa$-independent convergence rate under spectral initialization and exact parameterization. We modify the preconditioner design by introducing a fixed damping term, which prevents the preconditioner itself from being ill-conditioned due to overparameterization; the modified preconditioner preserves the low computational overhead when the overparameterization is moderate. In the exact parameterization setting, our result extends \ScaledGD beyond local convergence by characterizing the number of iterations it takes to enter the local basin of attraction from a small random initialization. 

Moreover, our results shed light on the power of preconditioning in accelerating the optimization process over vanilla \GD while still guaranteeing generalization in overparameterized learning models \citep{amari2020does}. Remarkably, despite the existence of an infinite number of global minima in the landscape of \eqref{eq:intro-nonconvex} that do not generalize, i.e., not corresponding to the ground truth, starting from a small random initialization, \GD \citep{li2018algorithmic,stoger2021small} is known to converge to a generalizable solution without explicit regularization. However, \GD takes $O(\kappa^8 + \kappa^6\log(\kappa n/\varepsilon))$ iterations to reach $\varepsilon$-accuracy, which is unacceptable even for moderate condition numbers. On the other hand, while common wisdom suggests that preconditioning accelerates convergence, it is yet unclear if it still converges to a generalizable global minimum. Our work answers this question in the affirmative for overparameterized low-rank matrix sensing, where \myalg significantly accelerates the convergence against the poor condition number---both in the initial phase and in the local phase---without hurting generalization, which is corroborated in Figure~\ref{fig:intro}. 

\begin{figure}[t]
    \centering
    \includegraphics[width=0.5\textwidth]{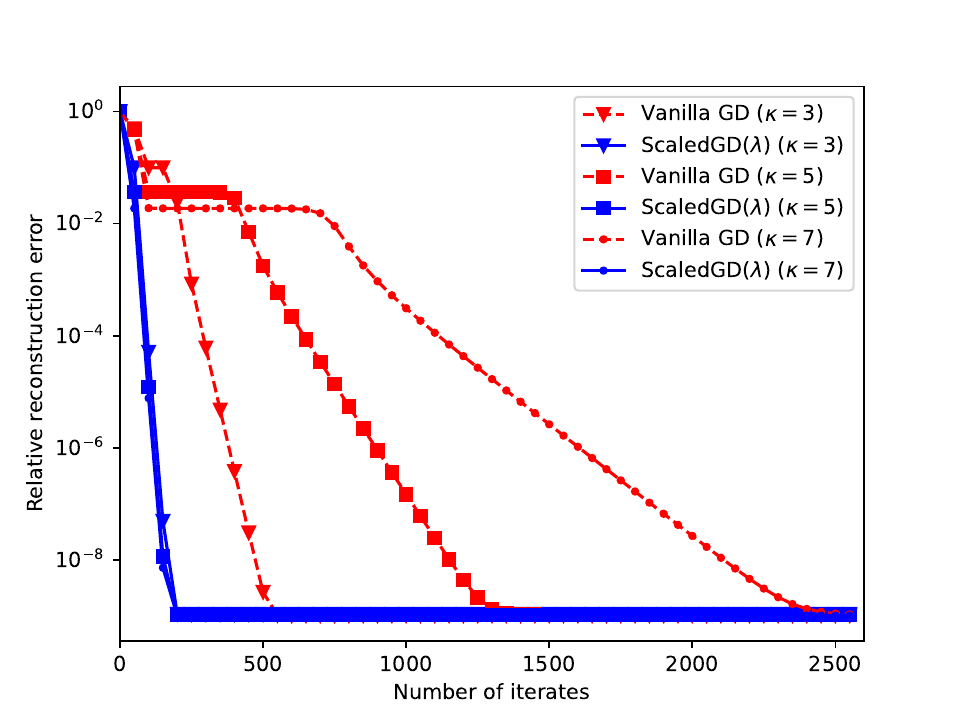}
    \caption{Comparison between \myalg and \GD. The learning rate of \GD has been fine-tuned to achieve fastest convergence for each $\kappa$, while that of \myalg is fixed to $0.3$. The initialization scale $\alpha$ in each case has been fine-tuned so that the final accuracy is $10^{-9}$. The details of the experiment are deferred to Section~\ref{sec:numerical}.  }
    \label{fig:intro} 
\end{figure}

\subsection{Related work}

Significant efforts have been devoted to understanding nonconvex optimization for low-rank matrix estimation in recent years, see \cite{chi2019nonconvex} and \cite{chen2018harnessing} for recent overviews. By reparameterizing the low-rank matrix into a product of factor matrices, also known as the Burer-Monteiro factorization \citep{burer2005LRSDP}, the focus point has been examining if the factor matrices can be recovered---up to invertible transformations---faithfully using simple iterative algorithms in a provably efficient manner. However, the majority of prior efforts suffer from the limitations that they assume an exact parameterization where the rank of the ground truth is given or estimated somewhat reliably, and rely on a carefully constructed initialization (e.g., using the spectral method \citep{chen2021spectral}) in order to guarantee global convergence in a polynomial time. The analyses adopted in the exact parameterization case fail to generalize when overparameterization presents, and drastically new approaches are called for.

\paragraph{Overparameterization in low-rank matrix sensing.} \citet{li2018algorithmic} made a theoretical breakthrough that showed that gradient descent converges globally to any prescribed accuracy even in the presence of full overparameterization ($r=n$), with a small random initialization, where their analyses were subsequently adapted and extended in \citet{stoger2021small} and \citet{zhuo2021computational}. \citet{ding2021rank} investigated robust low-rank matrix recovery with overparameterization from a spectral initialization, and  \citet{ma2022global} examined the same problem from a small random initialization with noisy measurements. \citet{zhang2021preconditioned,zhang2022preconditioned} developed a preconditioned gradient descent method for overparameterized low-rank matrix sensing, where an adaptive damping parameter is introduced in  \ScaledGD. A variant with global convergence guarantee is studied in \citet{zhang2022preconditioned}, which requires adding perturbation at the initial stage to first converge to a second-order stationary point before switching to a fast local convergence. Last but not least, a number of other notable works that study overparameterized low-rank models include, but are not limited to,
\citet{soltanolkotabi2018theoretical,geyer2020low,oymak2019overparameterized,zhang2021sharp,zhang2022improved}.

\paragraph{Global convergence from random initialization without overparameterization.} Despite nonconvexity, it has been established recently that several structured learning models admit global convergence via simple iterative methods even when initialized randomly even without overparameterization. For example, \citet{chen2019gradient} showed that phase retrieval converges globally from a random initialization using a near-minimal number of samples through a delicate leave-one-out analysis. In addition, the efficiency of randomly initialized \GD is established for 
complete dictionary learning \citep{gilboa2019efficient,bai2018subgradient}, multi-channel sparse blind deconvolution \citep{qu2019nonconvex,shi2021manifold}, asymmetric low-rank matrix factorization \citep{ye2021global}, and rank-one matrix completion \citep{kim2022rank}. Moving beyond \GD, \citet{lee2022randomly}
 showed that randomly initialized alternating least-squares converges globally for rank-one matrix sensing, whereas \citet{chandrasekher2022alternating} developed sharp recovery guarantees of alternating minimization for generalized rank-one matrix sensing with sample-splitting and random initialization.

\paragraph{Algorithmic or implicit regularization.} Our work is related to the phenomenon of algorithmic or implicit regularization \citep{gunasekar2017implicit}, where the trajectory of simple iterative algorithms follows a path that maintains desirable properties without explicit regularization. Along this line, \citet{ma2017implicit,chen2019nonconvex,li2021nonconvex} highlighted the implicit regularization of \GD  
for several statistical estimation tasks, \citet{ma2021beyond} showed that \GD automatically balances the factor matrices in asymmetric low-rank matrix sensing, where \citet{jiang2022algorithmic} analyzed the algorithmic regularization in overparameterized asymmetric matrix factorization in a model-free setting.


\section{Problem formulation}
\label{sec:formulation}

Section~\ref{sec:models} introduces the problem of low-rank matrix sensing, and 
Section~\ref{sec:alg} provides background on the proposed \myalg algorithm developed 
for the possibly overparameterized case.

\subsection{Model and assumptions}
\label{sec:models}

Suppose that the ground truth $\Mtruth \in \mathbb{R}^{n\times n}$ is a positive-semidefinite (PSD) matrix  of rank $r_\star \ll n$, whose (compact) eigendecomposition is given by
$$\Mtruth=\Utruth\SigmaTruth^2\Utruth^\T. $$
Here, the columns of $\Utruth\in\mathbb{R}^{n\times r_\star}$ specify the set of eigenvectors, and $\SigmaTruth \in \mathbb{R}^{r_\star \times r_\star}$ is a diagonal matrix where the diagonal entries are ordered in a non-increasing fashion. Setting $\Xtruth \defeq \Utruth\SigmaTruth\in\mathbb{R}^{n\times r_\star}$, we can rewrite $\Mtruth$ as
\begin{equation}\label{eq:true_factorization}  
\Mtruth =\Xtruth\Xtruth^{\top}. 
\end{equation}
We call $\Xtruth$ the ground truth low-rank factor matrix, whose condition number $\kappa$ is defined as
\begin{equation} \label{eq:def_kappa}
  \kappa \coloneqq \frac{\smax(\Xtruth)}{\smin(\Xtruth)}.
\end{equation}
Here we recall that $\smax(\Xtruth)$ and $\smin(\Xtruth)$ are the largest and the  smallest singular values of $\Xtruth$, respectively.

Instead of having access to $\Mtruth$ directly, we wish to recover $\Mtruth$ from a set of random linear measurements $\opA(\Mtruth)$, 
where $\opA: \operatorname{Sym}_2(\reals^n)\to\mathbb{R}^m$ is a linear map 
from the space of $n\times n$ symmetric  matrices to $\mathbb{R}^m$, namely
\begin{equation}\label{eq:measurements}
y = \opA(\Mtruth),
\end{equation}
or equivalently,
\begin{equation*}
\quad y_i = \langle A_i, \Mtruth \rangle, \qquad 1\leq i\leq m.
\end{equation*}
We are interested in recovering $\Mtruth$ based on the measurements $y$ and the sensing operator $\opA$ in a provably  efficient manner, even when the true rank $r_\star$ is unknown.

\subsection{\myalg for overparameterized low-rank matrix sensing}
\label{sec:alg}
 
Inspired by the factorized representation~\eqref{eq:true_factorization}, we aim to recover the low-rank matrix $\Mtruth$ 
by solving the following optimization problem \citep{burer2005LRSDP}:
\begin{equation}\label{eqn:b-m}
  \min_{X\in\reals^{n\times r}} \quad f(X) \coloneqq \frac{1}{4} \big\|\opA(XX^\T)- y\big\|_2^2,
\end{equation}
where $r$ is a predetermined rank parameter, possibly different from $r_{\star}$. It is evident that for any rotation matrix $O\in\mathcal{O}_r$, it holds that $f(X) = f(XO)$, leading to an infinite number of global minima of the loss function~$f$.

\paragraph{A prelude: exact parameterization.} When $r$ is set to be the true rank $r_{\star}$ of $\Mtruth$, \citet{tong2021accelerating} set forth a provable algorithmic approach called scaled gradient descent (\ScaledGD)---gradient descent with a specific form of preconditioning---that adopts the following update rule
\begin{align}\label{eq:scaledGD_original}
 \mathsf{ScaledGD:}\qquad  \myX_{t+1} & =\myX_{t}-\eta  \nabla f(\myX_{t})(\myX_t^\T \myX_t )^{-1}  \\
 &  =\myX_{t}-\eta \opAA(\myX_t\myX_t^\T-\Mtruth)\myX_t  (\myX_t^\T \myX_t )^{-1} . \nonumber
\end{align} 
Here, $\myX_t$ is the $t$-th iterate, $ \nabla f(X_{t})$ is the gradient of $f$ at $X= X_t$, and $\eta>0$ is the learning rate. Moreover, $\opA^*: \mathbb{R}^m \mapsto \operatorname{Sym}_2(\mathbb R^n)$ is the adjoint operator of $\opA$, that is $\opA^*(y) = \sum_{i=1}^m y_i A_i$ for $y\in \mathbb{R}^m$. 

At the expense of light computational overhead, \ScaledGD is remarkably robust to ill-conditioning compared with vanilla gradient descent (\GD). 
It is established in \citet{tong2021accelerating} that \ScaledGD, when starting from spectral initialization, converges linearly at a constant rate---{\em independent} of the condition number $\kappa$ of $\Xtruth$ (cf.~\eqref{eq:def_kappa}); in contrast, the iteration complexity of \GD \citep{tu2015low,zheng2015convergent} scales on the order of $\kappa^2$ from the same initialization, therefore \GD becomes exceedingly slow when the problem instance is even moderately ill-conditioned, a scenario that is quite commonly encountered in practice. 

 \paragraph{\myalg: overparametrization under unknown rank. }
In this paper, we are interested in the so-called overparameterization regime, where $ r_\star \leq r\leq n$. From an operational perspective, the true rank $r_{\star}$ is related to model order, e.g., the number of sources or targets in a scene of interest, which is often unavailable and makes it necessary to consider the misspecified setting. Unfortunately, in the presence of overparameterization, the original \ScaledGD algorithm is no longer appropriate, as the preconditioner $(\myX_t^\T \myX_t )^{-1}$ might become numerically unstable to calculate. Therefore, we propose a new variant of \ScaledGD by adjusting the preconditioner as
\begin{align}\label{eqn:update}
 \mathsf{ScaledGD(\lambda):}\qquad \myX_{t+1} & =\myX_{t}-\eta \nabla f(\myX_{t})(\myX_t^\T \myX_t+\lambda I)^{-1},  \\
  & =\myX_{t}-\eta \opAA(\myX_t\myX_t^\T-\Mtruth)\myX_t  (\myX_t^\T \myX_t+\lambda I)^{-1}, \nonumber
\end{align}
where $\lambda>0$ is a {\em fixed} damping parameter. The new algorithm is dubbed as \myalg, and it recovers the original \ScaledGD when $\lambda = 0$. Similar to \ScaledGD, a key property of \myalg is that the iterates $\{X_t\}$ are equivariant with respect to the parameterization of the factor matrix. Specifically, taking a rotationally equivalent factor $X_t O$ with an arbitrary $O\in\mathcal{O}_r$, and feeding it into the update rule \eqref{eqn:update}, the next iterate 
$$ \myX_{t}O -\eta\opAA(\myX_t\myX_t^\T-\Mtruth)\myX_t O( O^{\T}\myX_t^\T \myX_t O+\lambda I)^{-1}  =  \myX_{t+1} O $$
is rotated simultaneously by the same rotation matrix $O$. In other words, the recovered matrix sequence $M_t = X_tX_t^{\T}$ is invariant with respect to the parameterization of the factor matrix.

\begin{remark}
We note that a related variant of \ScaledGD, called $\mathsf{PrecGD}$, has been proposed recently in \citet{zhang2021preconditioned,zhang2022preconditioned} for the overparameterized setting, which follows the update rule
\begin{align}\label{eqn:precGD}
 \mathsf{PrecGD:}\qquad \myX_{t+1}=\myX_{t}-\eta\opAA(\myX_t\myX_t^\T-\Mtruth)\myX_t(\myX_t^\T \myX_t+\lambda_t I)^{-1},
\end{align}
where the damping parameters $\lambda_t =\sqrt{f(X_t)}$ are selected in an {\em iteration-varying} manner. In contrast, \myalg assumes a fixed damping parameter $\lambda$ throughout the iterations. We defer more detailed comparisons with $\mathsf{PrecGD}$ in Section~\ref{sec:main}. 
\end{remark}

\section{Main results}
 \label{sec:main}

Before formally presenting our theorems, let us introduce several key assumptions that will be in effect throughout this paper.

\paragraph{Restricted Isometry Property.}
A key property of the operator $\opA(\cdot)$ is the celebrated Restricted Isometry Property (RIP) \citep{recht2010guaranteed}, which says that the operator $\opA(\cdot)$ approximately preserves the distances between low-rank matrices. The formal definition is given as follows.

\begin{definition}[Restricted Isometry Property] \label{def:RIP}
The linear map $\opA(\cdot)$ is said to obey rank-$r$ RIP with a constant $\delta_r\in [0, 1)$,
if for all matrices $M\in\operatorname{Sym}_2(\mathbb R^n)$ of rank at most $r$, it holds that
\begin{equation}\label{eqn:rip}
  (1-\delta_r)\|M\|_{\fro}^2\le \big\|\opA(M) \big\|_2^2\le (1+\delta_r)\|M\|_{\fro}^2.
\end{equation}
The Restricted Isometry Constant (RIC) is defined to be 
the smallest positive $\delta_r$ such that \eqref{eqn:rip} holds.
\end{definition}
The RIP is a standard assumption in low-rank matrix sensing, which has been verified to hold with high probability for a wide variety of measurement operators. The following lemma establishes the RIP for the Gaussian design.

\begin{lemma}\label{lem:gaussian_design}\citep[Lemma 1]{stoger2024non}
If the sensing operator $\opA(\cdot)$ follows the {\em Gaussian design}, i.e., the entries of $\{A_i\}_{i=1}^m$ are independent up to symmetry with diagonal elements sampled from $\mathcal{N}(0,1/m)$ and off-diagonal elements from $\mathcal{N}(0,1/2m)$, then with high probability, $\opA(\cdot)$ satisfies rank-$r$ RIP with constant $\delta_r$, as long as $m \geq Cnr/\delta_r^2$ for some sufficiently large universal constant $C > 0$.  
\end{lemma}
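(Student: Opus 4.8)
The statement to prove is Lemma~\ref{lem:gaussian_design}, the RIP for Gaussian design. Let me think about how to prove this.

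\textbf{Proof plan for Lemma~\ref{lem:gaussian_design} (RIP for Gaussian design).}

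The plan is to establish the rank-$r$ RIP via a standard covering (epsilon-net) argument combined with concentration of the Gaussian measurement operator on a fixed low-rank matrix. First I would reduce to showing that $\big|\,\|\opA(M)\|_2^2 - \|M\|_\fro^2\,\big| \le \delta_r \|M\|_\fro^2$ uniformly over the set $\mathcal{S}_r \defeq \{M \in \operatorname{Sym}_2(\reals^n) : \rank(M) \le r,\ \|M\|_\fro = 1\}$. For a \emph{fixed} $M \in \mathcal{S}_r$, note that $\|\opA(M)\|_2^2 = \frac{1}{m}\sum_{i=1}^m \langle G_i, M\rangle^2$ where each $\langle G_i, M\rangle$ is a centered Gaussian with variance $\|M\|_\fro^2 = 1$ (here the specific diagonal/off-diagonal variance choices in the Gaussian design are exactly what makes $\mathbb{E}\langle G_i,M\rangle^2 = \|M\|_\fro^2$ for symmetric $M$). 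Hence $m\|\opA(M)\|_2^2$ is a $\chi^2_m$ random variable, and a Bernstein/Laurent–Massart tail bound gives $\Prob\big(\big|\|\opA(M)\|_2^2 - 1\big| > t\big) \le 2\exp(-cmt^2)$ for $t \in (0,1)$.

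Next I would build a net. The key geometric fact is that $\mathcal{S}_r$ admits an $\epsilon$-net (in Frobenius distance) of cardinality at most $(C/\epsilon)^{(2n+1)r}$ — this follows from writing any rank-$r$ symmetric matrix as $U\Lambda U^\top$ and taking nets over the Stiefel manifold of $U$ and over the unit ball of eigenvalues, or alternatively by the standard bound on nets of the set of rank-$r$ matrices with bounded Frobenius norm. Taking $\epsilon$ a small constant (say $\epsilon = \delta_r/8$ up to constants) and union-bounding the per-matrix concentration over the net, the requirement becomes $mt^2 \gtrsim nr\log(1/\epsilon)$, i.e. $m \gtrsim nr/\delta_r^2$, which is precisely the stated sample complexity.

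Finally I would pass from the net to all of $\mathcal{S}_r$ by the usual approximation argument: if the deviation bound holds with constant $\delta_r/2$ on the net, then for arbitrary $M \in \mathcal{S}_r$ pick a net point $M'$ with $\|M - M'\|_\fro \le \epsilon$; since $M - M'$ has rank at most $2r$, one controls $\|\opA(M-M')\|_2$ using the (already established, or bootstrapped) operator norm bound $\|\opA\|_{2r\to 2} \le \sqrt{1+\delta_{2r}}$, and a short algebraic manipulation (or a supremum-absorption argument: let $\delta^\ast$ be the smallest constant that works for all of $\mathcal{S}_r$, bound $\delta^\ast \le \delta_r/2 + (\text{const})\,\epsilon\sqrt{1+\delta^\ast}$, and solve) closes the gap. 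I expect the main obstacle to be the net cardinality bound for low-rank symmetric matrices — getting the exponent right so the final sample complexity is $O(nr/\delta_r^2)$ rather than something with an extra $\log n$ or $r^2$ — and being careful that the variance normalization in the stated Gaussian design indeed yields $\mathbb{E}\langle G_i, M\rangle^2 = \|M\|_\fro^2$ for \emph{symmetric} $M$, which is what makes the $\chi^2$ reduction exact. Since the result is quoted verbatim from \citet{candes2011tight}, it would also be legitimate to simply cite that reference and sketch only the net-plus-concentration skeleton above.
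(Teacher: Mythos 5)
Your outline is correct: the paper itself offers no proof of this lemma---it is quoted verbatim from \citet{candes2011tight}---and your net-plus-concentration skeleton is precisely the argument used there (and in \citet{recht2010guaranteed}): pointwise $\chi^2$ concentration for a fixed unit-Frobenius-norm rank-$r$ symmetric matrix, a covering of that set of cardinality $(C/\epsilon)^{O(nr)}$ via the eigendecomposition, a union bound yielding $m \gtrsim nr/\delta_r^2$, and a supremum-absorption step to pass from the net to the whole set. Your check that the diagonal/off-diagonal variance normalization makes $\mathbb{E}\langle A_i, M\rangle^2 = \|M\|_{\fro}^2/m$ for symmetric $M$ is exactly the point that makes the reduction exact, so nothing further is needed beyond the citation.
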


We make the following assumption about the  operator $\opA(\cdot)$. 

\begin{assumption} \label{assumption:opA}
The operator $\opA(\cdot)$ satisfies the rank-$(r_\star+1)$ RIP 
with $\delta_{r_\star+1} \eqqcolon \delta$. Furthermore, there exist a sufficiently small constant $c_\delta>0$ and a sufficiently large constant $C_\delta > 0$ such that
\begin{equation}     \label{eqn:delta-cond}
    \delta \le c_\delta r_\star^{-1/2}\kappa^{- C_\delta }.
\end{equation}    
\end{assumption}

\paragraph{Small random initialization.} Similar to \citet{li2018algorithmic,stoger2021small}, 
we set the initialization $\myX_0$ to be a small random matrix, i.e.,
\begin{equation}
  \myX_0=\alpha G,
\end{equation}
where $G \in \mathbb{R}^{n\times r}$ is some matrix considered to be normalized 
and $\alpha>0$ controls the magnitude of the initialization.
To simplify exposition, we take $G$ to be a standard random Gaussian matrix, 
that is, $G$ is a random matrix with i.i.d. entries distributed as $\mathcal{N}(0,1/n)$. 

\paragraph{Choice of parameters.} Last but not least, the parameters of \myalg are selected according to the following assumption.

\begin{assumption} \label{assumption:param}
There exist some universal constants 
$c_\eta, c_\lambda ,  C_\alpha>0$ 
such that $(\eta, \lambda, \alpha)$ in \myalg
satisfy the following conditions:
\begin{subequations} \label{eq:param_conditions}
  \begin{align}
\mathsf{(learning~rate)} & \qquad\qquad\qquad  \eta  \le c_\eta, 
    \label{eqn:eta-cond}\\
 \mathsf{(damping~parameter)} & \qquad    \frac{1}{100}c_\lambda   \kappa^{-4}  \smin^2(\Xtruth)  \le\lambda \le c_\lambda\smin^2(\Xtruth), 
    \label{eqn:lambda-cond}\\
   \mathsf{(initialization~size)} & \qquad     \log\frac{\|\Xtruth\|}{\alpha}  \ge \frac{C_\alpha}{  \max(\eta, \kappa^{-2}) }\log(2\kappa)\cdot\log(2\kappa n). 
    \label{eqn:alpha-cond}
  \end{align}
\end{subequations}
\end{assumption}

We are now in place to present the main theorems.

\subsection{The overparameterization setting}

We begin with our main theorem, which characterizes the performance of \myalg with overparameterization.
\begin{theorem}\label{thm:main}
Suppose Assumptions~\ref{assumption:opA} and \ref{assumption:param} hold.
With high probability (with respect to the realization of the random initialization $G$),
there exists a universal constant $\Ccvg > 0$ such that
for some $T\le\Tcvg \coloneqq \frac{\Ccvg}{\eta}\log\frac{\|\Xtruth\|}{\alpha}$, we have 
\[
  \|\myX_T \myX_T^\T-\Mtruth\|_{\fro}\le\alpha^{1/3}\|\Xtruth\|^{5/3}.
\]
In particular, for any prescribed accuracy target $\epsilon\in(0,1)$, 
by choosing a sufficiently small $\alpha$ fulfilling both~\eqref{eqn:alpha-cond} 
and $\alpha\le\epsilon^3\|\Xtruth\|$, we have
$\|\myX_T\myX_T^\T-\Mtruth\|_{\fro}\le \epsilon\|\Mtruth\|$.
\end{theorem}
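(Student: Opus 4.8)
## Proof Proposal

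The plan is to follow the now-standard two-phase analysis for small random initialization (in the spirit of \citet{stoger2021small}), but with the preconditioner introducing crucial modifications that remove the polynomial dependence on $\kappa$. Write the SVD of the ground truth factor as $\Xtruth = \Utruth \SigmaTruth$ and decompose each iterate $\myX_t$ into its component in the column space of $\Utruth$ (the ``signal'' part) and its component in the orthogonal complement $\UperpTruth$ (the ``overparameterization'' or ``redundant'' part), say $\EssS_t \defeq \Utruth^\T \myX_t$ and $\Overpar_t \defeq \UperpTruth^\T \myX_t$. The first thing I would do is establish a clean approximate dynamics for these two components using the RIP (Assumption~\ref{assumption:opA}): $\opAA$ acts nearly as the identity on low-rank matrices, so up to controllable error terms the update is $\myX_{t+1} \approx \myX_t - \eta(\myX_t\myX_t^\T - \Mtruth)\myX_t(\myX_t^\T\myX_t + \lambda I)^{-1}$. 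The damping $\lambda$, chosen comparable to $\smin^2(\Xtruth)$ by~\eqref{eqn:lambda-cond}, is exactly what keeps the inverted matrix well-conditioned even though $\myX_t^\T\myX_t$ is nearly singular in the early phase and may stay rank-deficient because of overparameterization.

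\medskip

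\noindent\textbf{Phase 1 (spectral/alignment phase).} Here $\myX_t$ is still small, so $\myX_t\myX_t^\T$ is negligible compared to $\Mtruth$ and the dynamics linearize to roughly $\myX_{t+1} \approx \myX_t + \eta \Mtruth \myX_t(\myX_t^\T\myX_t + \lambda I)^{-1}$. The signal part $\EssS_t$ then grows, while the redundant part $\Overpar_t$ receives no amplification from $\Mtruth$ and only drifts slowly. I would track the ratio between the smallest relevant signal direction and $\Overpar_t$, showing that after $O(\frac{1}{\eta}\log\frac{\|\Xtruth\|}{\alpha})$ iterations the signal component has grown to constant order $\asymp \|\Xtruth\|$ while the redundant component remains polynomially small in $\alpha$ — this is where condition~\eqref{eqn:alpha-cond}, demanding $\log(\|\Xtruth\|/\alpha) \gtrsim \frac{1}{\eta}\log(2\kappa)\log(2\kappa n)$, gets used: the $\log(2\kappa n)$ factor buys enough separation to overcome the initial randomness across all $n$ directions, and the preconditioner makes the growth rate of every signal direction essentially uniform (rate $\approx 1+\eta$ rather than $\approx 1+\eta\sigma_i^2$), which is precisely why only $\log\kappa\cdot\log(\kappa n)$ — not $\kappa^{O(1)}$ — iterations are needed before all signal directions are simultaneously large. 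The output of this phase is an iterate $\myX_{T_1}$ lying in a local region around (a rotation of) $\Xtruth$, with the $\UperpTruth$-component still tiny.

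\medskip

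\noindent\textbf{Phase 2 (local linear convergence).} Once inside the local basin, I would invoke a local contraction argument analogous to the \ScaledGD analysis of \citet{tong2021accelerating}, adapted to the damped preconditioner and to the presence of the small redundant component. Using a suitable distance metric (a scaled/aligned Frobenius distance between $\myX_t$ and the closest rotation of $\Xtruth$), one shows that each step contracts this distance by a constant factor $1 - \Omega(\eta)$ independent of $\kappa$, since the preconditioner cancels the $\SigmaTruth$-dependent curvature; the damping $\lambda \asymp \smin^2(\Xtruth)$ only perturbs the ideal contraction by a constant and, crucially, keeps the residual $\Overpar_t$ from being reinflated. Iterating the contraction and summing the error gives $\|\myX_T\myX_T^\T - \Mtruth\|_\fro \le \alpha^{1/3}\|\Xtruth\|^{5/3}$ after the claimed $T \le \frac{\Ccvg}{\eta}\log\frac{\|\Xtruth\|}{\alpha}$ total iterations (Phase 1 dominates the count; the local phase only needs $O(\frac{1}{\eta}\log\frac{1}{\epsilon})$ more to drive the error below any $\epsilon$). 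The final ``in particular'' claim is then immediate: with $\alpha \le \epsilon^3 \|\Xtruth\|$ one gets $\alpha^{1/3}\|\Xtruth\|^{5/3} \le \epsilon\|\Xtruth\|^2 = \epsilon\|\Mtruth\|$.

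\medskip

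\noindent\textbf{Main obstacle.} The hardest part is controlling the redundant component $\Overpar_t$ throughout \emph{both} phases. In Phase 1 one must show it stays negligibly small while the signal grows by many orders of magnitude — the danger is that the error terms from RIP (which couple signal and redundant parts) or the preconditioner's off-diagonal blocks slowly pump energy into $\UperpTruth$; bounding this requires a careful inductive invariant tracking $\|\Overpar_t\|$ relative to $\smin(\EssS_t)$ and exploiting that $\lambda$ is bounded \emph{below} (by $\tfrac{1}{100}c_\lambda\smin^2(\Xtruth)$, cf.~\eqref{eqn:lambda-cond}) so the preconditioner does not over-amplify small directions. The transition between phases — certifying that the Phase 1 output genuinely lands in the Phase 2 basin of attraction, with the distance metric small enough for the local contraction to kick in — is the other delicate point, and I expect the bulk of the technical work (and the precise form of $C_\delta$ in~\eqref{eqn:delta-cond} and $C_\alpha$ in~\eqref{eqn:alpha-cond}) to be dictated by making these two bounds fit together.
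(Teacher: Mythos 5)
Your outline captures the right overall shape (small-init power-iteration-like start, then local \ScaledGD-style contraction), but there is a genuine gap in the central bookkeeping: you decompose $\myX_t$ into only two blocks, $\Utruth^\T\myX_t$ and $\UperpTruth^\T\myX_t$, and claim the latter ``remains polynomially small in $\alpha$.'' That claim is false. Because $\opAA(\Mtruth)\neq\Mtruth$ under RIP, the top-$r_\star$ eigenspace of $\opAA(\Mtruth)$ is tilted away from $\Utruth$ by an angle of order $\delta\sqrt{r_\star}\kappa^2$ (a constant in $\alpha$), so the component of $\myX_t$ in $\UperpTruth$ grows in lockstep with the signal and eventually reaches order $\kappa^{-O(C_\delta)}\|\Xtruth\|$ --- nowhere near $\poly(\alpha)$. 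The paper therefore splits $\UperpTruth^\T\myX_t$ further, using the right singular space $\SVDV_t$ of the signal block: a \emph{misalignment} part $\Misalign_t=\UperpTruth^\T\myX_t\SVDV_t$, controlled only through the scaled ratio $\|\Misalign_t\EssS_t^{-1}\SigmaTruth\|$ and shown to \emph{contract} in the local phase, and a genuine \emph{overparameterization} part $\Overpar_t=\UperpTruth^\T\myX_t\Vperp{t}$, which is the only piece that stays of order $\alpha^{3/4}\|\Xtruth\|^{1/4}$. Both facts are indispensable for the conclusion: the cross term $\Utruth\EssS_t\Misalign_t^\T\UperpTruth^\T$ in $\myX_t\myX_t^\T-\Mtruth$ must decay geometrically (mere boundedness would leave a $\kappa^{-O(1)}\|\Mtruth\|$ floor, not $\alpha^{1/3}\|\Xtruth\|^{5/3}$), while the final error floor comes precisely from $\|\Overpar_t\|$. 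Your two-block invariant cannot be propagated and cannot yield the stated accuracy.

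A second, related problem is that your Phase 1 cannot run long enough. The linearization $\myX_{t+1}\approx(I+\frac{\eta}{\lambda}\opAA(\Mtruth))\myX_t$ accumulates error at rate $s_1^t$ with $s_1\le 1+C\eta\kappa^2$, while the $r_\star$-th signal direction grows only at rate $s_{r_\star}\approx 1+\eta/(2c_\lambda)$; over the $\asymp\frac{1}{\eta}\log(\|\Xtruth\|/\alpha)$ steps needed to reach constant signal strength, the ratio $(s_1/s_{r_\star})^t$ blows up like $(\|\Xtruth\|/\alpha)^{\Omega(\kappa^2)}$ and swamps the signal whenever $\kappa$ is even moderately large. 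The paper only uses the power-iteration approximation for $t\lesssim\frac{1}{\eta}\log(\kappa n)$ iterations --- just enough to establish alignment and the invariants of Lemma~\ref{lem:p1.5} while the signal is still of size $\alpha^2/\|\Xtruth\|$ --- and then switches to a separate amplification argument (Lemma~\ref{lem:p2}, tracking $\smin((\SigmaTruth^2+\lambda I)^{-1/2}\EssS_t)$ and using the matrix inequality of Lemma~\ref{lem:la-aux}) to grow the signal to constant order and beyond. This intermediate phase, absent from your plan, is where the preconditioner's $\kappa$-independent growth rate is actually proved; the ``uniformization'' you attribute to the preconditioner does not hold in the initial regime where $(\myX_t^\T\myX_t+\lambda I)^{-1}\approx\lambda^{-1}I$ and the per-direction rates are $1+\frac{\eta}{\lambda}\sigma_i^2$.
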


A few remarks are in order.

\paragraph{Iteration complexity.} 
Theorem~\ref{thm:main} shows that by choosing an appropriate $\alpha$, \myalg finds an $\epsilon$-accurate solution, i.e., $\|\myX_t\myX_t^\T-\Mtruth\|_{\fro}\le \epsilon\|\Mtruth\|$, in no more than an order of
$$ \log\kappa \cdot \log (\kappa n) + \log(1/\epsilon)$$ iterations. Roughly speaking, this asserts that \myalg converges at a constant linear rate after an initial phase of approximately $O(\log\kappa \cdot \log (\kappa n) )$ iterations.
Most notably, the iteration complexity is nearly independent of the condition number $\kappa$, with a small overhead only through the poly-logarithmic additive term $O(\log\kappa \cdot \log (\kappa n))$. In contrast, \GD requires $O(\kappa^8 + \kappa^6\log(\kappa n/\varepsilon) )$ iterations to converge from a small random initialization to $\epsilon$-accuracy; see~\citet{stoger2021small, li2018algorithmic}. Thus, the convergence of \GD is much slower than \myalg even for mildly ill-conditioned matrices. 
 

\paragraph{Sample complexity.} The sample complexity of \myalg hinges upon the Assumption~\ref{assumption:opA}. When the sensing operator $\opA(\cdot)$ follows the Gaussian design, this assumption is fulfilled as long as $m\gtrsim nr_\star^2\cdot \mathsf{poly}(\kappa)$. Notably, our sample complexity depends only on the true rank $r_\star$, but not on the overparameterized rank $r$ --- a crucial feature in order to provide meaningful guarantees when the overparameterized rank $r$ is close to the full dimension $n$. The dependency on $\kappa$ in the sample complexity, on the other end, is believed to be an artifact of the proof, as empirically shown in some related settings (see e.g., Figure 4 of \cite{chen2019noisy}). Rigorously proving this, however, remains an open problem in nonconvex low-rank estimation \citep{chi2019nonconvex}.

\paragraph{Comparison with \cite{zhang2021preconditioned,zhang2022preconditioned}.} 
As mentioned earlier, our proposed algorithm \myalg is similar to $\mathsf{PrecGD}$ proposed in \cite{zhang2021preconditioned} that adopts an iteration-varying damping parameter in \ScaledGD \cite{tong2021accelerating}, with several important distinctions. In terms of theoretical guarantees, \cite{zhang2021preconditioned} only provides the local convergence for $\mathsf{PrecGD}$ assuming an initialization close to the ground truth; in contrast, we provide global convergence guarantees where a small random initialization is used. More critically, the sample complexity of $\mathsf{PrecGD}$ \cite{zhang2021preconditioned} depends on the overparameterized rank $r$, while ours only depends on the true rank $r_{\star}$. While \cite{zhang2022preconditioned} also studied variants of $\mathsf{PrecGD}$ with global convergence guarantees, they require additional operations such as gradient perturbations and switching between different algorithmic stages, which are harder to implement in practice. Furthermore, their convergence rate is much more pessimistic than ours. Our theory suggests that additional perturbation is  unnecessary to ensure the global convergence of \myalg, as \myalg automatically adapts to different curvatures of the optimization landscape throughout the entire trajectory.

\subsection{The exact parameterization setting}\label{subsec:exact_para}
We now single out the exact parametrization case, i.e., when $r= r_\star$. In this case, our theory suggests that \myalg converges to the ground truth even from a random initialization with a fixed scale $\alpha>0$. 

\begin{theorem}\label{thm:zero}
  Assume that $r=r_\star$. Suppose Assumptions~\ref{assumption:opA} and \ref{assumption:param} hold.
With high probability (with respect to the realization of the random initialization $G$), there exist some universal constants $\Ccvg > 0$ and $c > 0$ such that
for some $T\le\Tcvg=\frac{\Ccvg}{\eta}\log(\|\Xtruth\|/\alpha)$, we have
  for any $t\ge T$  
  \[\|\myX_t\myX_t^\T-\Mtruth\|_{\fro}
  \le (1-c\eta)^{t-T}\|\Mtruth\|.\]
\end{theorem}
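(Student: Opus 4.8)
The plan is to split the trajectory at the time $T$ furnished by Theorem~\ref{thm:main} and treat the two phases with different tools; the randomness enters only through the first phase. Phase~1 --- the escape from the small random initialization up to time $T\le\Tcvg=\frac{\Ccvg}{\eta}\log(\|\Xtruth\|/\alpha)$ --- is \emph{verbatim} the first phase of the proof of Theorem~\ref{thm:main}: with high probability over $G$ there is such a $T$ with $\|\myX_T\myX_T^\T-\Mtruth\|_{\fro}\le\alpha^{1/3}\|\Xtruth\|^{5/3}$. Because \eqref{eqn:alpha-cond} forces $\alpha/\|\Xtruth\|$ below any fixed negative power of $\kappa n$, the right-hand side is $\ll\smin^2(\Xtruth)$, so $\myX_T\myX_T^\T$ lies deep inside a local neighborhood of the rank-$r_\star$ matrix $\Mtruth$. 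A standard matrix-perturbation argument then produces a rotation $Q_T\in\mathcal{O}_{r_\star}$ with $\dist_T\coloneqq\min_{Q\in\mathcal{O}_{r_\star}}\|(\myX_TQ-\Xtruth)\SigmaTruth\|_{\fro}\le\rho\,\smin^2(\Xtruth)$ for a small absolute constant $\rho$; since this is already carried out inside the Phase-1 analysis it can simply be quoted. Phase~2, where the work lies, is to prove a one-step local contraction lemma for \myalg under exact parameterization and iterate it for all $t\ge T$.

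For the contraction lemma I would work with the scaled distance $\dist_t$ above, following the analysis of \ScaledGD in \citet{tong2021accelerating}, and show that whenever $\dist_t\le\rho\,\smin^2(\Xtruth)$ one has $\dist_{t+1}\le(1-c\eta)\dist_t$ and $\myX_{t+1}$ stays in the same neighborhood. Two ingredients drive this. First, in this regime $\myX_t^\T\myX_t\succeq(1-\rho)\smin^2(\Xtruth)I$, so with $\lambda\asymp\smin^2(\Xtruth)$ (Assumption~\ref{assumption:param}) we can write $(\myX_t^\T\myX_t+\lambda I)^{-1}=(\myX_t^\T\myX_t)^{-1/2}(I+\Delta_t)^{-1}(\myX_t^\T\myX_t)^{-1/2}$ with $\|\Delta_t\|=O(c_\lambda)$; thus the damped preconditioner is a close-to-identity perturbation of the \ScaledGD preconditioner $(\myX_t^\T\myX_t)^{-1}$, and the damping contributes only a term of size $O(\eta c_\lambda)\dist_t$ to the recursion, harmless once the universal constant $c_\lambda$ is small enough. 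Second, decomposing the update \eqref{eqn:update} into an ``ideal'' part --- obtained by replacing $\opAA$ with the identity --- plus an RIP error: the ideal part contracts $\dist_t$ at a rate \emph{independent of $\kappa$}, because the preconditioner precisely rescales the badly-conditioned Gauss--Newton-type curvature (this is the core \ScaledGD algebra, which I would redo for the damped preconditioner), while the RIP error is bounded by $O(\delta)\dist_t$ via rank-$(r_\star+1)$ RIP applied to the relevant low-rank factors and absorbed because $\delta\le c_\delta r_\star^{-1/2}\kappa^{-C_\delta}$ is minuscule (Assumption~\ref{assumption:opA}). Taking $\eta\le c_\eta$ then yields the per-step factor $1-c\eta$.

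Stitching the phases is routine. Since $\dist_T\le\rho\,\smin^2(\Xtruth)$ the contraction lemma applies at $t=T$ and, by induction, keeps $\dist_t\le\rho\,\smin^2(\Xtruth)$ and gives $\dist_t\le(1-c\eta)^{t-T}\dist_T$ for every $t\ge T$. Converting back through the elementary bound $\|\myX_t\myX_t^\T-\Mtruth\|_{\fro}\le 3\dist_t$, valid throughout the neighborhood (here one uses $\|\Xtruth E_t^\T\|_{\fro}=\|E_t\SigmaTruth\|_{\fro}$ and $\|E_t\|_{\fro}\le\dist_t/\smin(\Xtruth)$), and the crude estimate $\dist_T\le\rho\,\smin^2(\Xtruth)\le\|\Mtruth\|$, gives $\|\myX_t\myX_t^\T-\Mtruth\|_{\fro}\le 3(1-c\eta)^{t-T}\|\Mtruth\|$; the factor $3$ is absorbed into $c$ by shrinking it and handling the first $O(1/\eta)$ steps with the trivial in-neighborhood bound $\|\myX_t\myX_t^\T-\Mtruth\|_{\fro}\le\|\Mtruth\|$. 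Note that, unlike in Theorem~\ref{thm:main}, $\alpha$ need not shrink with the target accuracy --- it only has to meet \eqref{eqn:alpha-cond} --- which is precisely why a \emph{fixed} scale suffices here, and why exact parameterization lets the contraction persist for all $t$ (there are no spurious extra-rank directions to create an error floor).

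I expect the main obstacle to be the local contraction lemma for the \emph{damped} update: one must verify that inserting the fixed $\lambda I$ does not spoil the exact cancellation of ill-conditioning that makes \ScaledGD converge at a $\kappa$-free rate, and one must be careful that none of the metric conversions --- between $\|\Mtruth-\myX_t\myX_t^\T\|_{\fro}$ and $\dist_t$, and from the Phase-1 endpoint to genuine neighborhood membership --- quietly reintroduces a polynomial-in-$\kappa$ factor. The RIP error terms, by contrast, are routine given the strong bound on $\delta$ in Assumption~\ref{assumption:opA}.
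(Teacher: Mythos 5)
Your outline is sound, but it takes a genuinely different---and considerably heavier---route than the paper. The paper's proof of Theorem~\ref{thm:zero} is essentially a two-line corollary of its overparameterized analysis: when $r=r_\star$ the overparameterization component vanishes identically ($\Overpar_t\equiv 0$) and, by the definition of $\Cmax$, $\Tmax=\infty$, so the Phase~III recursion \eqref{eqn:lem:p3-primal} of Lemma~\ref{lem:p3} holds for \emph{all} $t\ge t_3$ with no error floor, yielding $\|\myX_t\myX_t^\T-\Mtruth\|_{\fro}\le(1-c\eta)^{t-t_3}\sqrt{r_\star}\|\Mtruth\|$ directly; the $\sqrt{r_\star}$ prefactor is then absorbed by shifting $T$ past $t_3$ by $O(\log(r_\star)/(c\eta))$ steps, which still fits within $\Tcvg$. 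Your insight that ``there are no spurious extra-rank directions to create an error floor'' is exactly the mechanism the paper exploits, but you then propose to stop the paper's analysis at the endpoint of Theorem~\ref{thm:main} and prove a \emph{fresh} local contraction lemma in the rotation-minimized scaled distance $\min_{Q}\|(\myX_tQ-\Xtruth)\SigmaTruth\|_{\fro}$ \`a la \ScaledGD. That lemma is plausible and its ingredients (damping as an $O(c_\lambda)$ perturbation of the preconditioner; RIP error absorbed via Assumption~\ref{assumption:opA}) are the right ones, but it duplicates work the paper has already done: Lemma~\ref{lem:p3-aux} \emph{is} the needed contraction, stated for the quantities $\SigmaTruth^{-1}(\EssS_t\EssS_t^\T-\SigmaTruth^2)\SigmaTruth^{-1}$ and $\Misalign_t\EssS_t^{-1}\SigmaTruth$ rather than for your distance metric, and Lemma~\ref{lem:loss-decomp} converts back to $\|\myX_t\myX_t^\T-\Mtruth\|_{\fro}$. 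One small inaccuracy: you assert that the bound $\dist_T\lesssim\smin^2(\Xtruth)$ on the factor distance ``is already carried out inside the Phase-1 analysis'' and can be quoted---it is not; the paper never works with the rotation-minimized distance, so you would need to invoke a standard reconstruction-error-to-factor-distance lemma (e.g.\ from \citet{tong2021accelerating} or Tu et al.) at the handoff point. That lemma exists and the extremely small $\alpha$ forced by \eqref{eqn:alpha-cond} absorbs the resulting $\kappa^2$ conversion factor, so this is a citation gap rather than a mathematical one; still, the net effect is that your route trades the paper's short reuse of Lemma~\ref{lem:p3} for a self-contained but largely unexecuted local analysis.
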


Theorem~\ref{thm:zero} shows that with some fixed initialization scale $\alpha$, 
\myalg takes at most an order of
$$ \log\kappa \cdot \log (\kappa n) + \log(1/\epsilon)$$
iterations 
to converge to $\epsilon$-accuracy for any $\epsilon>0$ in the exact parameterization case. 
Compared with \ScaledGD \citep{tong2021accelerating} which takes $O(\log(1/\epsilon))$ iterations to converge from a spectral initialization, 
we only pay a logarithmic order $O(\log\kappa \cdot \log (\kappa n) )$ of additional iterations to converge from a random initialization. In addition, once the algorithms enter the local regime, both \myalg and \ScaledGD behave similarly and converge at a fast constant linear rate, suggesting the effect of damping is locally negligible. Furthermore, compared with \GD \citep{stoger2021small} which requires $O(\kappa^8\log(\kappa n)+\kappa^2 \log(1/\epsilon))$ iterations 
to achieve $\epsilon$-accuracy, our theory again highlights the benefit of \myalg in boosting the global convergence even for mildly ill-conditioned matrices.

\subsection{The noisy setting}\label{subsec:noisy}

We next consider the case where the measurements are contaminated by noise $\xi=(\xi_i)_{i=1}^m$, that is 
\begin{equation}
\label{eqn:noisy-obsv}
y = \opA(\Mtruth) + \xi, 
\quad\text{or more concretely}\quad  y_i = \langle A_i, \Mtruth\rangle + \xi_i, \quad 1\le i\le m.
\end{equation}
Instantiating \eqref{eqn:update} with the noisy measurements, the update rule of \myalg can be written as
\begin{equation}
\label{eqn:noisy-update}
  X_{t+1} = X_t - \eta  \big(\opAA(X_t X_t^\T) - \opA^*(y)\big) X_t (X_t^\T X_t + \lambda I)^{-1}.
\end{equation}

For simplicity, we make the following mild assumption on the noise. 
\begin{assumption} \label{assumption:noise}
We assume that $\xi_i$'s are independent with $\opA(\cdot)$, and are i.i.d.~Gaussian, i.e.,
\[\xi_i\stackrel{\text{i.i.d.}}{\sim}\mathcal N(0, \sigma^2), \quad 1\le i\le m. \]
\end{assumption}

Our theory demonstrates that \myalg achieves the minimax-optimal error in this noisy setting as long as the noise is not too large.
\begin{theorem}
\label{thm:noisy}
Assume that $\sigma\sqrt{n}\le c_\sigma\kappa^{-C_\sigma}\|\Mtruth\|$ for some sufficiently small universal constant $c_\sigma>0$ and some sufficiently large universal constant $C_\sigma>0$. Then the following holds with high probability (with respect to the realization of the random initialization $G$ and the noise $\xi$). Suppose Assumptions~\ref{assumption:opA}, \ref{assumption:param} and \ref{assumption:noise} hold. Given a prescribed accuracy target $\epsilon\in(0,1)$, suppose further that $\alpha\le\epsilon^3\|\Xtruth\|$. There exist  universal constants $\Ccvg > 0$, $C_{\ref{thm:noisy}}>0$,  such that for some $T\le\Tcvg \coloneqq \frac{\Ccvg}{\eta}\log\frac{\|\Xtruth\|}{\alpha}$, we have
\begin{align*}
  \|X_T X_T^\T - \Mtruth\| &\le \max\left(\varepsilon\|\Mtruth\|,~ C_{\ref{thm:noisy}} \kappa^4 \sigma\sqrt{n}\right),
  \\
  \|X_T X_T^\T - \Mtruth\|_{\fro} &\le \max\left(\varepsilon\|\Mtruth\|,~C_{\ref{thm:noisy}} \kappa^4 \sigma\sqrt{n r_\star}\right).
\end{align*}
\end{theorem}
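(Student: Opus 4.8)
The plan is to bootstrap from the noiseless analysis underlying Theorem~\ref{thm:main} by treating the measurement noise as an additive perturbation of the \myalg trajectory. Writing the update \eqref{eqn:noisy-update} as
\[
  X_{t+1} = X_t - \eta\,\opAA(X_tX_t^\T-\Mtruth)X_t(X_t^\T X_t+\lambda I)^{-1} + \eta\,\opA^*(\xi)\,X_t(X_t^\T X_t+\lambda I)^{-1},
\]
the first two terms are exactly the noiseless dynamics and the last term is a noise-driven perturbation. The first step is to control the noise operator: under Assumption~\ref{assumption:noise} together with the Gaussian design of Lemma~\ref{lem:gaussian_design} (and, more generally, RIP-type concentration), with high probability $\opA^*(\xi)$ behaves like a bounded operator on low-rank matrices, namely $\|\opA^*(\xi)\|\lesssim\sigma\sqrt n$ and $|\langle\opA^*(\xi),Z\rangle|\lesssim\sigma\sqrt{nr_\star}\,\|Z\|_\fro$ for every $Z$ of rank $O(r_\star)$. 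These deterministic bounds, valid on a good event, are all we will use about $\xi$; since the algorithm is run only for $T\le\Tcvg$ steps, no further randomness needs to be tracked.

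Next I would re-run the two-phase argument of the noiseless proof with the extra noise term carried along. In the \emph{initial (alignment) phase} the preconditioner is $\approx\lambda^{-1}I\asymp\smin^{-2}(\Xtruth)I$, so the perturbation has size $\lesssim\eta\,\sigma\sqrt n\,\|X_t\|/\smin^2(\Xtruth)$, whereas the signal-driving term has size $\asymp\eta\,\smax^2(\Xtruth)\,\|X_t\|/\smin^2(\Xtruth)$ in the directions spanned by $\Utruth$; the assumption $\sigma\sqrt n\le c_\sigma\kappa^{-C_\sigma}\|\Mtruth\|$ makes the noise a vanishingly small \emph{relative} perturbation of the drive, so the geometric growth of the signal component $\Utruth^\T X_t$ and the smallness of the redundant/overparameterized block (the parts living in $\UperpTruth$ and in the extra columns) are preserved with slightly worse constants. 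Hence, exactly as in Theorem~\ref{thm:main}, at some $T_0\le\Tcvg$ the iterate enters a local region where $\|X_{T_0}X_{T_0}^\T-\Mtruth\|$ is as small as in the noiseless case up to an additive $O(\poly(\kappa)\,\sigma\sqrt n)$ term.

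Then in the \emph{local phase} I would establish a perturbed contraction in the \ScaledGD distance metric: combining the noiseless one-step contraction in the spirit of \citet{tong2021accelerating} with the bound on the noise term --- whose worst-case amplification comes from $X_t(X_t^\T X_t+\lambda I)^{-1}$, of norm $\asymp\smax(\Xtruth)/\smin^2(\Xtruth)$ near convergence, plus the further factors of the condition number incurred in the overparameterized setting and in passing between the \ScaledGD metric and $\|XX^\T-\Mtruth\|$, which the analysis pins down at $\kappa^4$ --- one gets
\[
  \|X_{t+1}X_{t+1}^\T-\Mtruth\|\;\le\;(1-c\eta)\,\|X_tX_t^\T-\Mtruth\|\;+\;C\eta\,\kappa^4\,\sigma\sqrt n,
\]
and similarly in $\|\cdot\|_\fro$ with $\sigma\sqrt{nr_\star}$ in place of $\sigma\sqrt n$, using that near convergence $X_tX_t^\T-\Mtruth$ is dominated by its rank-$O(r_\star)$ component so that only the restricted noise bound enters. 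Iterating this recursion from time $T_0$ drives the error down at the constant linear rate $1-c\eta$ until it reaches the noise floor $O(\kappa^4\sigma\sqrt n)$ (resp.\ $O(\kappa^4\sigma\sqrt{nr_\star})$ in Frobenius norm); taking $T$ to be the first time either $\varepsilon\|\Mtruth\|$-accuracy or the noise floor is attained yields the stated $\max(\cdot,\cdot)$ bounds, and $T\le\Tcvg$ follows since $\alpha\le\varepsilon^3\|\Xtruth\|$ together with \eqref{eqn:alpha-cond} makes the local phase cost only $O(\log(1/\varepsilon))$ extra steps, absorbed into $\Tcvg$.

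The main obstacle I anticipate is the \emph{initial phase}: the signal components start at scale $\alpha\cdot\poly(\kappa)$, far below $\sigma\sqrt n$, so one cannot argue ``noise $\ll$ current iterate'' directly --- instead one must argue that the noise is small relative to the multiplicative drive acting on each component, and, crucially, that the directions which are supposed to stay small (the overparameterized directions, controlled in the noiseless proof via RIP and the shrinkage of $X_tX_t^\T-\Mtruth$ restricted there) are not inflated by the projection of $\opA^*(\xi)$ onto $\UperpTruth$. Making every inductive invariant of the noiseless proof close with the extra noise term --- in particular keeping $\|\UperpTruth^\T X_t\|$ and the redundant block small throughout --- is where essentially all the work lies; the local phase, by contrast, is a routine perturbation of the already-established noiseless contraction.
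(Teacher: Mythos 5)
Your proposal is correct and follows essentially the same route as the paper: the paper also reduces to the noiseless analysis by absorbing $E=\opA^*(\xi)$ into the error term (setting $\Delta_t'=\Delta_t-E$), bounds $\|E\|\lesssim\sigma\sqrt n$ and $\|\Utruth^\T E\|_{\fro}\lesssim\sigma\sqrt{nr_\star}$ on a good event, and carries a $\kappa^4\|\Utruth^\T E\|$ noise floor through the local contraction exactly as in your perturbed recursion. The initial-phase difficulty you flag is resolved in the paper precisely as you suggest, by viewing the early iterates as an approximate power method on the perturbed matrix $\opAA(\Mtruth)+E$, whose spectral gap is preserved since $\|E\|\le c_\sigma\kappa^{-C_\sigma}\|\Mtruth\|\ll\smin^2(\Xtruth)$.
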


A few remarks are in order.

\paragraph{Minimax optimality.} Theorem~\ref{thm:noisy} suggests that as long as the noise level is not too large, by setting the optimization error $\varepsilon$ sufficiently small, i.e., $\varepsilon \|\Mtruth\| \asymp \kappa^4 \sigma \sqrt{n}$,   \myalg finds a solution that satisfies
\begin{align}
  \|X_T X_T^\T - \Mtruth\| \lesssim \kappa^4 \sigma \sqrt{n}, \qquad   \|X_T X_T^\T - \Mtruth\|_{\fro} \lesssim \kappa^4 \sigma \sqrt{n r_{\star}}
\end{align}
in no more than 
$ \log\kappa \cdot \log (\kappa n) + \log \left( \frac{ \|\Mtruth\|} { \kappa^4 \sigma \sqrt{n}} \right)$ iterations, the number of which again only depends logarithmically on the problem parameters. 
When $\kappa$ is upper bounded by a constant, our result is minimax optimal, in the sense that the final error matches the minimax lower bound in the classical work of \cite{candes2011tight}, which we recall here for completeness: for any estimator $\hat M(y)$ based on the measurement $y$ defined in~\eqref{eqn:noisy-obsv}, for any $r_\star\le n$, there always exists some $\Mtruth\in\reals^{n\times n}$ of rank $r_\star$ such that
\[
  \|\hat M(y) - \Mtruth\|\gtrsim\sigma\sqrt{n} ,
  \qquad
  \|\hat M(y) - \Mtruth\|_{\fro}\gtrsim\sigma\sqrt{nr_\star} 
\]
with probability at least $0.99$ (with respect to the realization of the noise $\xi$). 
To the best of our knowledge, Theorem~\ref{thm:noisy} is the first result to establish the  minimax optimality  (up to multiplicative factors of $\kappa$) of overparameterized gradient methods in the context of low-rank matrix sensing. We remark that similar sub-optimality with respect to $\kappa$ is also observed in \citet{chen2019noisy}.

\paragraph{Consistency.} It is often desirable that the estimator is (asymptotically) consistent, i.e., the estimation error converges to $0$ as the number of samples $m\to\infty$. 
To see that Theorem~\ref{thm:noisy} indicates \myalg indeed produces a consistent estimator, let us consider again the Gaussian design.
In this case, $\langle A_i, \Mtruth\rangle$ is on the order of $\|\Mtruth\|/\sqrt{m}$, thus the signal-to-noise ratio can be measured by $\mathsf{SNR}\coloneqq (\|\Mtruth\|/\sqrt{m})^2 / \sigma^2  = \|\Mtruth\|^2/(m\sigma^2)$. 
With this notation, Theorem~\ref{thm:noisy} asserts that the final error is $O\big(\mathsf{SNR}^{-1/2} \sqrt{\frac nm} \|\Mtruth\|\big)$ in operator norm and $O\big(\mathsf{SNR}^{-1/2}\sqrt{\frac{nr_\star}{m}}\|\Mtruth\|\big)$ in Frobenius norm, both of which converge to $0$ at a rate of $\sqrt{\frac{nr_\star}{m}}$ as $m\to\infty$ when $\mathsf{SNR}$ is fixed. 

\subsection{The approximately low-rank setting}

Last but not least, we examine a more general model of $\Mtruth$, which does not need to be exactly low-rank, but only approximately low-rank. Instead of recovering $\Mtruth$ exactly, one seeks to find a low-rank approximation to $\Mtruth$ from its linear measurements.

To set up, let $\Mtruth\in\reals^{n\times n}$ be a general PSD ground truth matrix, where its spectral decomposition is given by $\Mtruth=\sum_{i=1}^n 
\sigma_i u_i u_i^\T$, with 
$$\sigma_1\ge \sigma_2\ge\cdots\ge\sigma_n.$$ 
For any given $r\le n$, 
let $M_{r}$ be the best rank-$r$ approximation of $\Mtruth$ and $M_r'$ be the residual, i.e.,
\begin{equation}
\label{eqn:approx-low-rank}
\Mtruth = \underbrace{\sum_{i=1}^{r} \sigma_i u_i u_i^\T}_{\eqqcolon M_{r}} + \underbrace{\sum_{i=r+1}^{n} \sigma_i u_i u_i^\T}_{\eqqcolon M_r'}.
\end{equation}
If $\hat M_r$ is a rank-$r$ approximation to $\Mtruth$, the approximation error can be measured by $\|\hat M_r - \Mtruth\|_{\fro}$. It is well-known that the best rank-$r$ approximation in this sense is exactly $M_r$, and the optimal error is thus $\|M_r'\|_{\fro}$. By picking a larger $r$, one has a smaller approximation error $\|M_r'\|_{\fro}$, but a higher memory footprint for the low-rank approximation $M_r$ whose condition number also grows with $r$.

For simplicity, we consider the Gaussian design (cf.~Lemma~\ref{lem:gaussian_design}) in this subsection, which is less general than the RIP. The following theorem demonstrates that, as long as the sample size satisfies $m\gtrsim nr_\star^2\cdot\mathsf{poly}(\kappa)$, \myalg automatically adapts to the available sample size and produces a near-optimal rank-$r_{\star}$ approximation to $\Mtruth$ in spite of overparameterization.
 
\begin{theorem}
\label{thm:approx}
Assume that $\Mtruth$ is given in \eqref{eqn:approx-low-rank} and the sensing operator $\opA$ follows the Gaussian design with $m\ge Cnr_\star^2\kappa^{C}$, where $\kappa = \sigma_1/\sigma_{r_{\star}}$ is the condition number of $M_{r_\star}$. In addition, assume $\|M_{r_{\star}}'\|\le c_\sigma\kappa^{-C_\sigma}\|\Mtruth\|$ and $\|M_{r_{\star}}'\|_{\fro} \le c_\sigma\kappa^{-C_\sigma}\sqrt{\frac mn}\|\Mtruth\|$. Then the following holds with high probability (with respect to the realization of the random initialization $G$ and the sensing operator $\opA$). Suppose  Assumption \ref{assumption:param} holds for $M_{r_\star} = X_{\star} X_{\star}^{\top}$. Given a prescribed accuracy target $\epsilon\in(0,1)$, suppose further that $\alpha\le\epsilon^3\|\Xtruth\|$. there exist universal constants $\Ccvg > 0$, $C_{\ref{thm:approx}}>0$,  such that for some $T\le\Tcvg \coloneqq \frac{\Ccvg}{\eta}\log\frac{\|\Xtruth\|}{\alpha}$, we have
\begin{align*}
  \|X_T X_T^\T - \Mtruth\|_{\fro} &\le \max\left(
    \varepsilon\|\Mtruth\|, 
    ~C_{\ref{thm:approx}} \kappa^4 \|M_{r_\star}'\|_{\fro} \right).
\end{align*}
Here, $C>0, C_\sigma>0$ are some sufficiently large universal constants, and $c_\sigma>0$ is some sufficiently small universal constant.
\end{theorem}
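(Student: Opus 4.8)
The plan is to reduce the approximately-low-rank problem to the exactly-low-rank noisy problem already handled in Theorem~\ref{thm:noisy}, by absorbing the tail $M_r'$ into an effective noise term. Write $y = \opA(\Mtruth) = \opA(M_{r_\star}) + \opA(M_{r_\star}')$, so that running \myalg on $y$ is exactly running \myalg on the rank-$r_\star$ signal $M_{r_\star} = \Xtruth\Xtruth^\T$ with ``noise'' $\xi \coloneqq \opA(M_{r_\star}')$. The catch is that $\xi$ is not the i.i.d.~Gaussian noise of Assumption~\ref{assumption:noise}; it is a correlated vector determined by the Gaussian sensing matrices and the fixed matrix $M_{r_\star}'$. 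So the first step is to re-examine the proof of Theorem~\ref{thm:noisy} and identify exactly which properties of $\xi$ are actually used. I expect these to be of two types: (i) a bound on $\|\opA^*(\xi)\|$ (operator norm of the adjoint applied to the noise), which drives the final error floor, and (ii) control of how $\xi$ interacts with the low-rank iterates through quantities like $\|\mathcal P_{\widetilde U}\opA^*(\xi)\|_{\fro}$ or $\langle \opA^*(\xi), \text{(low-rank)} \rangle$ along the trajectory, which governs whether the noise perturbs the signal-subspace dynamics. The bulk of the work is then to prove, under the Gaussian design with $m \gtrsim nr_\star^2\kappa^C$, the analogues of these bounds with $\opA^*(\xi) = \opA^*\opA(M_{r_\star}')$ in place of $\opA^*(\xi)$ for Gaussian $\xi$.

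The key estimate to establish is $\|\opA^*\opA(M_{r_\star}') - M_{r_\star}'\| \lesssim \delta'\,\|M_{r_\star}'\|_*$-type control, or more precisely a bound of the form $\|\opA^*\opA(M_{r_\star}')\|\lesssim \|M_{r_\star}'\| + \sqrt{n/m}\,\|M_{r_\star}'\|_{\fro}$ and $\|\mathcal P_{\text{rank}\le r_\star}\opA^*\opA(M_{r_\star}')\|_{\fro}\lesssim \|M_{r_\star}'\|_{\fro} + \sqrt{nr_\star/m}\,\|M_{r_\star}'\|_{\fro}$, holding with high probability over the Gaussian design. These follow from standard RIP-type / operator-norm concentration arguments for Gaussian measurement ensembles: decompose $M_{r_\star}'=\sum_{i>r_\star}\sigma_i u_iu_i^\T$, and bound the ``diagonal'' contribution $\sum_i \sigma_i\,\opA^*\opA(u_iu_i^\T)$ using RIP on each rank-one piece plus orthogonality, and the ``cross'' contributions via an $\epsilon$-net over low-rank test matrices. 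This is exactly the step where the two hypotheses $\|M_{r_\star}'\|\le c_\sigma\kappa^{-C_\sigma}\|\Mtruth\|$ and $\|M_{r_\star}'\|_{\fro}\le c_\sigma\kappa^{-C_\sigma}\sqrt{m/n}\,\|\Mtruth\|$ get used: they guarantee the effective-noise magnitude $\sigma_{\mathrm{eff}}\sqrt n \asymp \|M_{r_\star}'\| + \sqrt{n/m}\|M_{r_\star}'\|_{\fro}$ is small enough (i.e., $\le c_\sigma\kappa^{-C_\sigma}\|\Mtruth\|$) to satisfy the smallness condition that Theorem~\ref{thm:noisy} imposes on its noise.

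With these deterministic bounds in hand, I would then run the Theorem~\ref{thm:noisy} argument essentially verbatim: the small-random-initialization phase, the signal-amplification phase, and the local-contraction phase all go through, with every occurrence of a noise-dependent term now controlled by the above Gaussian-design estimates rather than by sub-Gaussian noise concentration. The output is a bound $\|X_TX_T^\T - M_{r_\star}\|_{\fro}\lesssim \max(\varepsilon\|\Mtruth\|, \kappa^4(\|M_{r_\star}'\| + \sqrt{nr_\star/m}\,\|M_{r_\star}'\|_{\fro}))$; a final triangle inequality $\|X_TX_T^\T - \Mtruth\|_{\fro}\le \|X_TX_T^\T - M_{r_\star}\|_{\fro} + \|M_{r_\star}'\|_{\fro}$ together with $\sqrt{nr_\star/m}\le 1$ (which holds since $m\ge nr_\star^2\kappa^C \ge nr_\star$) collapses everything into the stated $\max(\varepsilon\|\Mtruth\|, C_{\ref{thm:approx}}\kappa^4\|M_{r_\star}'\|_{\fro})$. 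The main obstacle, as flagged, is the correlated-noise concentration: unlike i.i.d.~Gaussian $\xi$, the vector $\opA(M_{r_\star}')$ has entries that are quadratic forms in the Gaussian $A_i$, so the relevant tail bounds need Hanson--Wright-type arguments and a careful union bound over a net of low-rank matrices, and one must track the $\kappa$-dependence of the net size to confirm $m\gtrsim nr_\star^2\kappa^C$ suffices. A secondary subtlety is that $M_{r_\star}'$ and $M_{r_\star}$ need not have orthogonal row/column spaces in general — here they do, since they come from a common eigendecomposition, and I would use this orthogonality to simplify the cross-term bounds.
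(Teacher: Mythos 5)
Your proposal follows essentially the same route as the paper: the paper also absorbs the tail into an effective noise term $E=\opA^*\opA(M_{r_\star}')$, abstracts the noisy analysis into a unified theorem whose error floor depends only on $\|E\|$ and $\|\Utruth^\T E\|_{\fro}$, and then proves exactly the two Gaussian-design concentration bounds you identify ($\|E\|\lesssim\|M_{r_\star}'\|+\sqrt{n/m}\,\|M_{r_\star}'\|_{\fro}$ via a net plus order-2 Gaussian chaos, and $\|\Utruth^\T E\|_{\fro}\lesssim\|M_{r_\star}'\|_{\fro}$ via an operator-norm bound on $\Utruth^\T\opA^*$), with the two smallness hypotheses on $M_{r_\star}'$ used precisely as you say. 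The only differences are cosmetic (the paper bounds $\|\Utruth^\T\opAA(M_{r_\star}')\|_{\fro}$ by viewing $\Utruth^\T\opA^*$ as a Gaussian operator rather than by your rank-one decomposition), so your plan is correct.
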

\begin{remark}
Theorem~\ref{thm:approx} also holds in the matrix factorization setting, i.e., when $\opA$ is the identity operator.
\end{remark}

Theorem~\ref{thm:approx} suggests that as long as $\Mtruth$ is well approximated by a low-rank matrix, by setting the optimization error $\varepsilon$ sufficiently small, i.e., $\varepsilon \|\Mtruth\| \asymp \kappa^4  \|M_{r_\star}'\|_{\fro} $,   \myalg finds a solution that satisfies
\begin{align}
   \|X_T X_T^\T - \Mtruth\|_{\fro} \lesssim \kappa^4  \|M_{r_\star}'\|_{\fro} 
\end{align}
in no more than 
$ \log\kappa \cdot \log (\kappa n) + \log\left( \frac{ \|\Mtruth\|} { \kappa^4  \|M_{r_\star}'\|_{\fro}  } \right)$ iterations, which again only depend on the problem parameters logarithmically. This suggests that  if the residual $M_{r_\star}'$ is small, \myalg produces an approximate solution to the best rank-$r_{\star}$ approximation problem with near-optimal error, up to a multiplicative factor depending only on $\kappa$, without knowing the rank $r_{\star}$ a priori. To our best knowledge, this is the first near-optimal theoretical guarantee for approximate low-rank matrix sensing using gradient-based methods.


\section{Analysis}
\label{sec:analysis}
In this section, we present the main steps for proving Theorem~\ref{thm:main} and Theorem~\ref{thm:zero}. The proofs of Theorem~\ref{thm:noisy} and Theorem~\ref{thm:approx} will follow the same ideas with minor modification. The detailed proofs are collected in the appendix. All of our statements will be conditioned 
on the following high probability event regarding the initialization matrix $G$: 
\begin{equation}\label{asp:init}
  \Event=\{\|G\|\le C_G\}\cap\{\smin (\Uspec^\T G) \geq (2n)^{-C_G}\}, 
\end{equation}
where $\Uspec \in \mathbb{R}^{n \times r_\star}$  is an orthonormal basis of the eigenspace 
associated with the $r_\star$ largest eigenvalues of $\opAA(\Mtruth)$, and
 $C_G > 0$ is some sufficiently large universal constant. 
It is a standard result in random matrix theory 
that $\Event$ happens with high probability, as verified by the following lemma.
\begin{lemma}\label{lem:high-prob-event}
  With respect to the randomness in $G$, the event $\Event$ happens with probability at least $1-(cn)^{-C_G(r-r_\star+1)/2}-2\exp(-cn)$, 
  where $c > 0$ is some universal constant.
\end{lemma}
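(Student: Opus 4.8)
The plan is to control the two events defining $\Event$ separately and combine them with a union bound, so that it suffices to prove $\Prob[\|G\|>C_G]\le 2\exp(-cn)$ and $\Prob[\smin(\Uspec^\T G)<(2n)^{-C_G}]\le (cn)^{-C_G(r-r_\star+1)/2}$. Since the probability in the statement is taken only over $G$, we may treat $\Uspec$ as a fixed matrix with orthonormal columns (it is a deterministic function of $\opA$, which is independent of $G$).

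For the spectral-norm event, observe that $\sqrt{n}\,G$ has i.i.d.\ $\mathcal{N}(0,1)$ entries, so the standard concentration inequality for the operator norm of a Gaussian random matrix (Davidson--Szarek) gives $\|\sqrt{n}\,G\|\le \sqrt{n}+\sqrt{r}+t$ with probability at least $1-2e^{-t^2/2}$. Taking $t=\sqrt{n}$ and using $r\le n$ yields $\|G\|\le 3$ with probability at least $1-2e^{-n/2}$, so for any $C_G\ge 3$ this event contributes at most $2\exp(-cn)$.

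For the least-singular-value event, we use rotational invariance of the Gaussian: the columns of $G$ are i.i.d.\ $\mathcal{N}(\zero,\tfrac1n I_n)$, and since $\Uspec^\T\Uspec=I_{r_\star}$, the matrix $\Uspec^\T G\in\reals^{r_\star\times r}$ has i.i.d.\ $\mathcal{N}(0,\tfrac1n)$ entries. As $r\ge r_\star$, the transpose $\sqrt{n}\,(\Uspec^\T G)^\T\in\reals^{r\times r_\star}$ is a tall standard Gaussian matrix, and the standard tail bound for the least singular value of such matrices gives, for all $\epsilon>0$,
\[
  \Prob\!\left[\smin\bigl(\sqrt{n}\,\Uspec^\T G\bigr)\le \epsilon\bigl(\sqrt{r}-\sqrt{r_\star-1}\bigr)\right]\le (C_0\epsilon)^{\,r-r_\star+1}
\]
for a universal constant $C_0>0$ (plus possibly a lower-order term $e^{-cr}$, dominated by the bound already obtained and absorbed). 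Since $\sqrt{r}-\sqrt{r_\star-1}\ge \tfrac{1}{2\sqrt{r_\star}}\ge \tfrac{1}{2\sqrt{n}}$, picking $\epsilon$ of polynomial size $\epsilon\asymp n^{1-C_G}$ forces $\epsilon(\sqrt{r}-\sqrt{r_\star-1})\ge \sqrt{n}\,(2n)^{-C_G}$, i.e.\ $\smin(\Uspec^\T G)\ge(2n)^{-C_G}$, while the right-hand side above is at most $(C_0 n^{1-C_G})^{r-r_\star+1}\le (cn)^{-C_G(r-r_\star+1)/2}$ once $C_G$ is a large enough universal constant. Combining the two estimates via the union bound yields the claim.

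The argument is routine; the only point deserving care is matching the polynomial exponent $r-r_\star+1$, which is exactly the codimension in the rectangular Gaussian least-singular-value estimate and is what makes the failure probability degrade gracefully as the overparameterization gap $r-r_\star$ vanishes. One could instead avoid quoting this estimate by an $\epsilon$-net argument on the unit sphere of $\reals^{r_\star}$, using lower tail bounds for $\chi^2_r$ variables together with the event $\{\|G\|\le C_G\}$ to bound the discretization error, but this reproduces the same exponent with more bookkeeping.
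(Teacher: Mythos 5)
Your proof is correct and follows essentially the same route as the paper's: a union bound combining the standard Gaussian operator-norm concentration for $\{\|G\|\le C_G\}$ with the rectangular least-singular-value tail bound applied to the $r\times r_\star$ standard Gaussian matrix $\sqrt{n}\,(\Uspec^\T G)^\T$, whose codimension $r-r_\star+1$ gives exactly the claimed exponent. The only (shared) loose end is the additive exponential term in the least-singular-value estimate, which you wave away as "absorbed" and the paper likewise records as $\exp(-cn)$ rather than $\exp(-cr)$; both treatments implicitly rely on the clean Gaussian version of the bound.
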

\begin{proof}
See Appendix~\ref{sec:pre-random-matrix}.
\end{proof}

\subsection{Preliminaries: decomposition of the iterates}
Before embarking on the main proof, we present a useful decomposition (cf.~\eqref{eqn:x-decomp-final}) of the iterate $\myX_t$ 
into a signal term, a misalignment error term, and an overparametrization error term. 
Choose some matrix $\UperpTruth \in \mathbb{R}^{n \times (n - r_\star)}$ such that $[\Utruth, \UperpTruth]$ is orthonormal. 
Then we can define 
$$\Signal_t \defeq \Utruth^\T \myX_t \in \mathbb{R}^{r_{\star}\times r},\quad \mbox{and} \quad  
\Noise_t \defeq \UperpTruth^\T \myX_t \in \mathbb{R}^{(n-r_{\star})\times r}.$$
Let  the SVD of $\Signal_t$ be
$$ \Signal_t =  \SVDU_t \SVDSigma_t \SVDV_t^\T,$$ 
where 
$\SVDU_t\in\mathbb{R}^{r_{\star}\times r_{\star}}$, $\SVDSigma_t \in\mathbb{R}^{r_{\star}\times r_{\star}}$, and $\SVDV_t\in \mathbb{R}^{r\times r_\star}$. Similar to $\UperpTruth$, we define the orthogonal complement of $\SVDV_t$ as 
$\Vperp{t}\in \mathbb{R}^{r\times (r-r_\star)}$. 
When $r=r_\star$ we simply set $\Vperp{t}=0$.

We are now ready to present the main decomposition of $\myX_t$, which we use 
repeatedly in later analysis. This decomposition is inspired by \cite{stoger2021small}. A similar decomposition also appeared in \cite{ma2022global}.

\begin{proposition}\label{prop:decomposition}
The following decomposition holds:
\begin{equation}\label{eqn:x-decomp-final}
  \myX_t 
  =\underbrace{  \Utruth\EssS_t \SVDV_t^\T  }_{\mathsf{signal}}+ 
  \underbrace{  \UperpTruth\Misalign_t \SVDV_t^\T  }_{\mathsf{misalignment}}  + \hspace{-0.05in}
   \underbrace{ \UperpTruth\Overpar_t \Vperp{t}^\T }_{\mathsf{overparametrization}},
\end{equation}
where
\begin{equation}\label{eq:imp_not}
\EssS_t \defeq \Signal_t \SVDV_t \in \reals^{r_\star \times r_\star},\quad \Misalign_t \defeq \Noise_t \SVDV_t \in \reals^{(n-r_\star) \times r_\star} ,\quad
\mbox{and} \quad \Overpar_t \defeq \Noise_t \Vperp{t} \in \reals^{(n-r_\star) \times (r-r_\star)}.
\end{equation}
\end{proposition}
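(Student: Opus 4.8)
The plan is to derive the decomposition \eqref{eqn:x-decomp-final} purely by linear algebra, using two successive orthogonal resolutions of the identity: one on the ambient space $\reals^n$ (the ``row'' side) via the orthonormal system $[\Utruth,\UperpTruth]$, and one on $\reals^r$ (the ``column'' side) via $[\SVDV_t,\Vperp{t}]$. No probabilistic or dynamical input is needed; this is a static structural identity for a single iterate $\myX_t$.

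First I would split the row space. Since $[\Utruth,\UperpTruth]$ is a square orthogonal matrix, $\Utruth\Utruth^\T+\UperpTruth\UperpTruth^\T = I_n$, and left-multiplying $\myX_t$ by this identity together with the definitions $\Signal_t=\Utruth^\T\myX_t$ and $\Noise_t=\UperpTruth^\T\myX_t$ gives
\[
  \myX_t \;=\; \Utruth\Signal_t+\UperpTruth\Noise_t .
\]

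Next I would split the column space of $\Signal_t$ and $\Noise_t$. Because $\SVDV_t\in\reals^{r\times r_\star}$ has orthonormal columns and $\Vperp{t}\in\reals^{r\times(r-r_\star)}$ is chosen so that $[\SVDV_t,\Vperp{t}]$ is orthogonal (with the convention $\Vperp{t}=0$, so the second block is absent, when $r=r_\star$), we have $\SVDV_t\SVDV_t^\T+\Vperp{t}\Vperp{t}^\T=I_r$. Right-multiplying $\Signal_t$ and $\Noise_t$ by this identity yields
\[
  \Signal_t=(\Signal_t\SVDV_t)\SVDV_t^\T+(\Signal_t\Vperp{t})\Vperp{t}^\T,
  \qquad
  \Noise_t=(\Noise_t\SVDV_t)\SVDV_t^\T+(\Noise_t\Vperp{t})\Vperp{t}^\T .
\]
The one observation that makes the first equation collapse is that, from the SVD $\Signal_t=\SVDU_t\SVDSigma_t\SVDV_t^\T$ and $\SVDV_t^\T\Vperp{t}=0$, one gets $\Signal_t\Vperp{t}=\SVDU_t\SVDSigma_t(\SVDV_t^\T\Vperp{t})=0$; hence $\Signal_t=\EssS_t\SVDV_t^\T$ with $\EssS_t=\Signal_t\SVDV_t$. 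For $\Noise_t$ there is no such cancellation, and I would simply name the two pieces $\Misalign_t=\Noise_t\SVDV_t$ and $\Overpar_t=\Noise_t\Vperp{t}$, matching \eqref{eq:imp_not}.

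Substituting these back into $\myX_t=\Utruth\Signal_t+\UperpTruth\Noise_t$ produces exactly the three terms $\Utruth\EssS_t\SVDV_t^\T$, $\UperpTruth\Misalign_t\SVDV_t^\T$, and $\UperpTruth\Overpar_t\Vperp{t}^\T$ of \eqref{eqn:x-decomp-final}, with the dimensions in \eqref{eq:imp_not} inherited directly from those of $\Utruth$, $\UperpTruth$, $\SVDV_t$, and $\Vperp{t}$. I do not anticipate any genuine obstacle: the statement is a bookkeeping identity, and the only point deserving a line of care is the degenerate exact-parameterization case $r=r_\star$, in which $\Vperp{t}=0$ and the overparametrization term is vacuous, consistent with the convention fixed just above the proposition.
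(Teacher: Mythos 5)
Your proof is correct and follows essentially the same route as the paper's: a row-space resolution of the identity via $\Utruth\Utruth^\T+\UperpTruth\UperpTruth^\T=I_n$, followed by a column-space resolution via $\SVDV_t\SVDV_t^\T+\Vperp{t}\Vperp{t}^\T=I_r$, with the observation that $\Signal_t\Vperp{t}=0$ (equivalently $\Signal_t=\EssS_t\SVDV_t^\T$) from the SVD of $\Signal_t$. No gaps.
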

\begin{proof}
See Appendix~\ref{sec:prop-decomposition}.
\end{proof}

\noindent Several remarks on the decomposition are in order.
\begin{itemize}
\item First, since $\Vperp{t}$ spans the obsolete subspace arising from overparameterization, 
$\Overpar_t$ naturally represents the error incurred by overparameterization; in particular, 
in the well-specified case (i.e., $r = r_\star$), one has zero overparameterization error, i.e., $\Overpar_t = 0$. 

\item Second, apart from the rotation matrix $\SVDV_t$, $\EssS_t$ documents the projection of the iterates $\myX_t$ onto the signal space $\Utruth$. 
Similarly, $\Misalign_t$ characterizes the misalignment of the iterates with the signal subspace $\Utruth$. It is easy to observe that in order for $\myX_t\myX_t^\T \approx M_\star$, one must have 
$\EssS_t \EssS_t^\top \approx \SigmaTruth^2$, and $\Misalign_t \approx 0$.  

\item Last but not least, the extra rotation induced by $\SVDV_t $ is extremely useful in making the signal/misalignment terms rationally invariant. To see this, suppose that we rotate the current iterate by $X_t\mapsto X_t Q$ with some rotational matrix $Q \in\mathcal{O}_r$, 
then $\Signal_t \mapsto \Signal_t Q$ but $\EssS_t$ remains unchanged, and similarly for $\Misalign_t$.

\end{itemize} 

\subsection{Proof roadmap}
     
Our analysis breaks into a few phases that characterize the dynamics of the key terms in the above decomposition, which we provide a roadmap to facilitate understanding. Denote 
\begin{equation*}
  \Cmax\defeq\begin{cases}
    4\Ccvg, & r>r_\star,\\
    \infty, & r=r_\star,
  \end{cases} \qquad
\mbox{and}  \qquad \Tmax\defeq\frac{\Cmax}{\eta}\log(\|\Xtruth\|/\alpha),
\end{equation*}
where  $\Tmax$ represents the largest index of the iterates that we maintain error control. The analysis boils down to the following phases, indicated by time points $t_1, t_2, t_3, t_4$ that  satisfy
\begin{align*} 
 t_1 \leq \Tcvg/16,  \quad  t_1 \leq t_2 \leq t_1 + \Tcvg / 16, \quad t_2 \leq t_3 \leq t_2 + \Tcvg / 16,  \quad t_3 \leq t_4 \leq t_3 + \Tcvg / 16. 
\end{align*}
\begin{itemize}
\item {\em Phase I: approximate power iterations.} In the initial phase, \myalg behaves similarly to \GD, which is shown in \cite{stoger2021small} to approximate the power method in the first few iterations up to $t_1$. After this phase, namely for $t \in [t_1, \Tmax]$, although the signal strength is still quite small, it begins to be aligned  with the ground truth with the overparameterization error kept relatively small.

\item {\em Phase II: exponential amplification of the signal.} In this phase, \myalg behaves somewhat as a mixture of \GD and \ScaledGD with a proper choice of the damping parameter $\lambda \asymp  \smin^2(\Xtruth)$, which ensures
  the signal strength first grows exponentially fast to reach a constant level no later than $t_2$, and then reaches the desired level no later than $t_3$, i.e., $\EssS_{t}\EssS_{t}^\T \approx \SigmaTruth^2$. 
\item {\em Phase III: local linear convergence.} At the last phase, \myalg behaves similarly to  \ScaledGD, which converges linearly at a rate independent of the condition number. Specifically, for $t\in [t_3, \Tmax]$, the reconstruction error $\|\myX_t\myX_t^\T - \Mtruth\|_{\fro}$ converges at a linear rate up to some small overparameterization error, until reaching the desired accuracy for any $t \in [t_4, \Tmax]$.  
\end{itemize}

\subsection{Phase I: approximate power iterations}
It has been observed in~\citet{stoger2021small} that when initialized at a 
small scaled random matrix, the first few iterations of \GD mimic the power iterations on 
the matrix $\opAA(\Mtruth)$. When it comes to \myalg, since the initialization size $\alpha$ 
is chosen to be much smaller than the damping parameter $\lambda$, the preconditioner $(\myX_t^\top \myX_t + \lambda I)^{-1}$ 
behaves like $(\lambda I)^{-1}$ in the beginning. This renders \myalg akin to gradient descent in the initial phase. 
As a result, we also expect the first few iterations of \myalg to be similar to the power iterations, i.e., 
\[
  \myX_t\approx\left(I+\frac{\eta}{\lambda}\opAA(\Mtruth)\right)^{t}\myX_0,\qquad \text{when }t \text{ is small}. 
\]
Such proximity between \myalg and power iterations can indeed be justified 
in the beginning period, which allows us to deduce the following nice properties {\em after} 
the initial iterates of \myalg.  
\begin{lemma}\label{lem:p1.5}
  Under the same setting as Theorem \ref{thm:main}, there exists an iteration number $t_1: t_1 \leq \Tcvg / 16$ such that
  \begin{equation}\label{eqn:p1.5-0}
    \smin(\EssS_{t_1}) \geq \alpha^2/\|\Xtruth\|, 
  \end{equation}
  and that, for any $t\in[t_1, \Tmax]$, $\EssS_t$ is invertible and one has 
  \begin{subequations}\label{subeq:condition-t-1}
    \begin{align}
      \|\Overpar_{t}\|&\le (C_{\ref{lem:p1.5}.b}\kappa n)^{-C_{\ref{lem:p1.5}.b}}\|\Xtruth\|
      \smin \big((\SigmaTruth^2+\lambda I)^{-1/2}\EssS_t \big), 
      \label{eqn:p1.5-1}\\
      \|\Overpar_{t}\|&\le \left(1+\frac{\eta}{12\Cmax\kappa}\right)^{t-t_1}
      \alpha^{5/6}\|\Xtruth\|^{1/6},
      \label{eqn:p1.5-2}\\
      \|\Misalign_{t}\EssS_{t}^{-1}\SigmaTruth\| 
      &\le c_{\ref{lem:p1.5}}\kappa^{- C_\delta /2}\|\Xtruth\|, 
      \label{eqn:p1.5-3}\\
      \|\EssS_{t}\|& \le C_{\ref{lem:p1.5}.a}\kappa^3 \|\Xtruth\|,
      \label{eqn:p1.5-4}
    \end{align} 
  \end{subequations}
   where $C_{\ref{lem:p1.5}.a}$, $C_{\ref{lem:p1.5}.b}$, $c_{\ref{lem:p1.5}}$ are some positive constants 
  satisfying $C_{\ref{lem:p1.5}.a}\lesssim c_\lambda^{-1/2}$, $c_{\ref{lem:p1.5}}\lesssim c_\delta/c_\lambda$, 
  and  $C_{\ref{lem:p1.5}.b}$ can be made arbitrarily large by increasing $C_\alpha$.
\end{lemma}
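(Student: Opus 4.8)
The plan is to prove Lemma~\ref{lem:p1.5} in three stages. Because the initialization scale $\alpha$ is chosen far below the damping level $\lambda\asymp\smin^2(\Xtruth)$, as long as all iterates stay below scale $\sqrt\lambda$ the preconditioner $(\myX_t^\T\myX_t+\lambda I)^{-1}$ is within a lower-order correction of $\lambda^{-1}I$, so \myalg mimics gradient descent, i.e., a power iteration on $\opAA(\Mtruth)$ with effective step $\eta/\lambda$, whose spectral behaviour is in turn pinned down by the RIP. Stage~(a): make this approximation rigorous and read off the spectral structure of $\opAA(\Mtruth)$. Stage~(b): combine this with the event $\Event$ to locate an index $t_1\le\Tcvg/16$ by which the signal has grown past $\alpha^2/\|\Xtruth\|$ and, simultaneously, the misalignment and overparameterization components have become negligible relative to the signal, yielding \eqref{eqn:p1.5-0} and the $t=t_1$ instances of \eqref{eqn:p1.5-1}--\eqref{eqn:p1.5-4}. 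Stage~(c): a forward induction on $t$ propagating \eqref{eqn:p1.5-1}--\eqref{eqn:p1.5-4} over the whole window $[t_1,\Tmax]$.

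For stage~(a), rewrite \eqref{eqn:update} as $\myX_{t+1}=\bigl(I+\tfrac{\eta}{\lambda}\opAA(\Mtruth)\bigr)\myX_t-\tfrac{\eta}{\lambda}\opAA(\myX_t\myX_t^\T)\myX_t+\mathcal R_t$, where $\mathcal R_t$ gathers the terms carrying at least one extra factor $\lambda^{-1}\myX_t^\T\myX_t$; every term past the linear one is $O\!\bigl(\tfrac{\eta}{\lambda}\|\myX_t\|^3\bigr)$. Comparing with the pure power iteration $\tilde\myX_t\defeq(I+\tfrac{\eta}{\lambda}\opAA(\Mtruth))^t\myX_0$ and using $\|I+\tfrac{\eta}{\lambda}\opAA(\Mtruth)\|\le 1+O(\eta\kappa^2)$ (the RIP gives $\|\opAA(\Mtruth)\|\lesssim\|\Mtruth\|$), a routine induction bounds $\|\myX_t-\tilde\myX_t\|$ by a vanishing multiple of $\|\tilde\myX_t\|$ for as long as $\|\tilde\myX_t\|\lesssim\smin(\Xtruth)$. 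For the spectral facts: the rank-$(r_\star+1)$ RIP gives $\opAA(\Mtruth)=\Mtruth+\Delta$ with $\|\Delta\|\lesssim\delta\sqrt{r_\star}\|\Mtruth\|$ (the $\sqrt{r_\star}$ being precisely what the $r_\star^{-1/2}$ in \eqref{eqn:delta-cond} absorbs), so by Weyl the top $r_\star$ eigenvalues of $\opAA(\Mtruth)$ differ from $\singval_1^2,\dots,\singval_{r_\star}^2$ by at most $O(\delta\sqrt{r_\star}\|\Mtruth\|)$, the $(r_\star+1)$-st is $\lesssim\delta\sqrt{r_\star}\|\Mtruth\|$ in magnitude, and Davis--Kahan makes the leading eigenspace $\Uspec$ obey $\|\Uspec\Uspec^\T-\Utruth\Utruth^\T\|\lesssim\delta\sqrt{r_\star}\kappa^2$.

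For stage~(b), decompose $\tilde\myX_t$ in the eigenbasis of $\opAA(\Mtruth)$: the component on $\Uspec$ is amplified by factors lying in $[(1+c\eta)^t,(1+C\eta\kappa^2)^t]$ (since $\lambda\asymp c_\lambda\smin^2(\Xtruth)$ spreads the growth rates over a factor-$\kappa^2$ band), while the component on $\Uspec_\perp$ grows by at most $(1+O(\eta\delta\sqrt{r_\star}\kappa^2))^t\approx 1$. Feeding in $\smin(\Uspec^\T\myX_0)=\alpha\,\smin(\Uspec^\T G)\ge\alpha(2n)^{-C_G}$ and $\|\myX_0\|=\alpha\|G\|\le\alpha C_G$ from $\Event$, the signal direction escapes the initial misalignment with $\Utruth$ after a handful of iterations; thereafter the damping pins the largest signal directions at scale $\sqrt\lambda$ while the smaller ones and the transverse components keep evolving, and one shows that by an index $t_1$ of order $\frac1\eta\log(\|\Xtruth\|/\alpha)$ both $\smin(\EssS_{t_1})\ge\alpha^2/\|\Xtruth\|$ and the transverse-to-signal ratio has dropped below $(\kappa n)^{-C_{\ref{lem:p1.5}.b}}$, with $C_{\ref{lem:p1.5}.b}$ made as large as desired by enlarging $C_\alpha$. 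Condition \eqref{eqn:alpha-cond} guarantees $\log(\|\Xtruth\|/\alpha)$ is large enough that $t_1\le\Tcvg/16$ and that the power-iteration approximation stays valid on $[0,t_1]$; transporting these facts through stage~(a) and through $\Uspec\approx\Utruth$ gives the base cases of \eqref{eqn:p1.5-1}--\eqref{eqn:p1.5-4}. For stage~(c) one substitutes the decomposition \eqref{eqn:x-decomp-final} into the update and uses the RIP to replace $\opAA$ by $\id$ up to errors that Assumption~\ref{assumption:opA} renders negligible, obtaining approximate recursions for the rotation-invariant quantities $\EssS_t\EssS_t^\T$, $\Misalign_t\EssS_t^{-1}$ and $\Overpar_t\Overpar_t^\T$: $\smin(\EssS_t)$ grows geometrically up to saturation while $\|\EssS_t\|\lesssim\kappa\|\Xtruth\|$, giving \eqref{eqn:p1.5-4} and invertibility; $\Misalign_t\EssS_t^{-1}$ contracts, giving \eqref{eqn:p1.5-3}; and $\Overpar_t$ obeys a multiplicative recursion with factor at most $1+\tfrac{\eta}{12\Cmax\kappa}$, giving \eqref{eqn:p1.5-2}, which together with the growth of $\smin\bigl((\SigmaTruth^2+\lambda I)^{-1/2}\EssS_t\bigr)$ to a constant yields \eqref{eqn:p1.5-1}.

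I expect the crux to be two-fold. First, pinning down $t_1\le\Tcvg/16$: the factor-$\kappa^2$ spread of the eigenvalues of $\opAA(\Mtruth)$ under the fixed damping means the largest signal direction saturates long before the others, so one must carefully track the ``partial-saturation'' regime --- the preconditioner throttles the saturated directions while the rest catch up --- and check that \eqref{eqn:alpha-cond} supplies exactly enough runway. Second, and most delicate, controlling $\Overpar_t$ over the entire window $[t_1,\Tmax]$: its recursion is driven by cross terms with the growing signal via $\UperpTruth^\T(\opAA-\id)(\Utruth\EssS_t\SVDV_t^\T\,\cdot)$, and one must show that once the signal reaches scale $\sqrt\lambda$ the $\Vperp{t}$-block of the preconditioner contracts like $\lambda/(\lambda+\|\EssS_t\|^2)\asymp\kappa^{-2}$, bringing the net growth factor down to $1+O(\eta/\kappa)$ --- a full factor $\kappa$ below the signal's --- so that over $\Tmax\asymp\eta^{-1}\log(\|\Xtruth\|/\alpha)$ iterations $\|\Overpar_t\|$ only inflates by a small power of $\|\Xtruth\|/\alpha$ and stays at the claimed level $\alpha^{5/6}\|\Xtruth\|^{1/6}$. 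A subsidiary nuisance throughout is the non-uniqueness and possible rapid rotation of the right singular vectors $\SVDV_t$ of $\Signal_t$ when its singular values nearly coincide, which is why every estimate is phrased via the rotation-invariant combinations above rather than via $\SVDV_t$ itself.
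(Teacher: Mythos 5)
Your three-stage architecture --- (a) compare the early iterates to the power iteration $\hat X_t=(I+\tfrac{\eta}{\lambda}\opAA(\Mtruth))^tX_0$, (b) use the event $\Event$ and the spectral gap $s_{r_\star}/s_{r_\star+1}\ge 1+c'\eta$ to locate $t_1$ and seed the four conditions, (c) propagate them by forward induction via one-step recursions for $\EssS_t$, $\Misalign_t\EssS_t^{-1}$ and $\Overpar_t$ --- is exactly the paper's proof (its Lemmas~\ref{lem:approx_power_method}, \ref{lem:Mahdi}, and then \ref{lem:overpar-update}--\ref{lem:bounded-S} together with Lemma~\ref{lem:p2}). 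However, two of the mechanisms you single out as the crux are not the ones that make the argument close, and taken literally they would fail.

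First, you place $t_1$ at order $\tfrac1\eta\log(\|\Xtruth\|/\alpha)$ and propose to track a ``partial-saturation'' regime inside Phase~I. But your own stage~(a) comparison is only valid while $\|X_t\|\lesssim\kappa^{-1}\|\Xtruth\|$, i.e., strictly before saturation, so stage~(b) as written would not cover $[0,t_1]$. The resolution is that $t_1$ can (and in the paper does) sit much earlier, at $t_1\asymp\tfrac1\eta\log(\kappa n)$: the target \eqref{eqn:p1.5-0} is extremely weak --- at that time $\smin(\EssS_{t_1})\gtrsim\alpha(2n)^{-C_G}\gg\alpha^2/\|\Xtruth\|$ by \eqref{eqn:alpha-cond} --- so no saturation analysis is needed in Phase~I at all; the growth to scale $\sqrt\lambda$ and beyond is entirely deferred to Phase~II, where it is handled through $\smin((\SigmaTruth^2+\lambda I)^{-1/2}\EssS_t)$. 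Second, your explanation of why $\|\Overpar_t\|$ only inflates by a factor $1+O(\eta/\kappa)$ per step --- ``the $\Vperp{t}$-block of the preconditioner contracts like $\lambda/(\lambda+\|\EssS_t\|^2)\asymp\kappa^{-2}$'' --- is incorrect: the $\Vperp{t}$-block of $X_t^\T X_t+\lambda I$ is $\Overpar_t^\T\Overpar_t+\lambda I\approx\lambda I$, independent of $\|\EssS_t\|$, so the leading-order map on $\Overpar_t$ is $\Overpar_t\mapsto(1-\eta)\Overpar_t+\eta\lambda\cdot\lambda^{-1}\Overpar_t=\Overpar_t$, an exact cancellation rather than a contraction. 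The $\tfrac{\eta}{12\Cmax\kappa}$ growth allowance comes from bounding the residual terms ($\Delta_t$-driven cross terms and the off-block couplings through $\Misalign_t^\T\Overpar_t$) after this cancellation, which is where the full decomposition machinery of the paper's Lemmas~\ref{lem:update-approx}--\ref{lem:NS-surrogate} is actually spent; without identifying the cancellation you have no mechanism keeping $\Overpar_t$ from tracking the signal's growth rate.
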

\begin{proof}
See Appendix~\ref{sec:proof-p1}. 
\end{proof}

\begin{remark}
Let us record two immediate consequences of \eqref{subeq:condition-t-1}, which sometimes are more convenient for later analysis. From \eqref{eqn:p1.5-1}, we may deduce
  \begin{align}
    \label{eqn:p1.5-1-var}
    \|\Overpar_t\|
    &\le (C_{\ref{lem:p1.5}.b}\kappa n)^{-C_{\ref{lem:p1.5}.b}}\|\Xtruth\|
    \smin(\SigmaTruth^2+\lambda I)^{-1/2}\smin(\EssS_t)
    \nonumber\\
    &\le \kappa(C_{\ref{lem:p1.5}.b}\kappa n)^{-C_{\ref{lem:p1.5}.b}}\smin(\EssS_t)
    \nonumber\\
    &\le (C_{\ref{lem:p1.5}.b}'\kappa n)^{-C_{\ref{lem:p1.5}.b}'}\smin(\EssS_t),
  \end{align}
  where $C_{\ref{lem:p1.5}.b}'=C_{\ref{lem:p1.5}.b}/2$, provided $C_{\ref{lem:p1.5}.b} > 4$. 
  It is clear that $C_{\ref{lem:p1.5}.b}'$ can also be made arbitrarily large by enlarging $C_\alpha$. 
  Similarly, from \eqref{eqn:p1.5-2}, we may deduce
  \begin{align}
    \|\Overpar_t\|\le\left(1+\frac{\eta}{12\Cmax\kappa}\right)^{t-t_1}
    \alpha^{5/6}\|\Xtruth\|^{1/6}
    &\le \left(1+\frac{\eta}{12\Cmax\kappa}\right)^{\frac{\Cmax}{\eta}\log(\|\Xtruth\|/\alpha)}
    \alpha^{5/6}\|\Xtruth\|^{1/6}
    \nonumber\\
&\le(\|\Xtruth\|/\alpha)^{1/12}\alpha^{5/6}\|\Xtruth\|^{1/6}
    =\alpha^{3/4}\|\Xtruth\|^{1/4}.
    \label{eqn:p1.5-2-var}
  \end{align}
\end{remark}

Lemma~\ref{lem:p1.5} ensures the iterates of \myalg maintain several desired properties after iteration $t_1$, as summarized in \eqref{subeq:condition-t-1}. In particular, for any $t\in[t_1, \Tmax]$: (i) the overparameterization error $\|\Overpar_{t}\|$ remains small relatively to the signal strength measured in terms of the scaled minimum singular value $\smin\big((\SigmaTruth^2+\lambda I)^{-1/2}\EssS_t\big)$, and remains bounded with respect to the size of the initialization $\alpha$ (cf.~\eqref{eqn:p1.5-1} and \eqref{eqn:p1.5-2} and their consequences \eqref{eqn:p1.5-1-var} and \eqref{eqn:p1.5-2-var}); 
(ii) the scaled misalignment-to-signal ratio remains bounded, suggesting the iterates remain aligned with the ground truth signal subspace $\Utruth$ (cf.~\eqref{eqn:p1.5-3}); (iii) the size of the signal component $\EssS_{t}$ remains bounded (cf.~\eqref{eqn:p1.5-4}). 
These properties play an important role in the follow-up analysis.

\begin{remark}
It is worth noting that, 
the scaled minimum singular value $\smin((\SigmaTruth^2+\lambda I)^{-1/2}\EssS_{t})$ plays a key role in our analysis, 
which is in sharp contrast to the use of the vanilla minimum singular value $\smin(\EssS_t)$ in the analysis of gradient descent \citep{stoger2021small}. 
  This new measure of signal strength is inspired by the scaled distance for \ScaledGD 
  introduced in \cite{tong2021accelerating, tong2022scaling}, which carefully takes the preconditioner design into consideration. Similarly, the metrics $\|\Misalign_{t}\EssS_{t}^{-1}\SigmaTruth\| $ in (\ref{eqn:p1.5-3}) and $  \big\|\SigmaTruth^{-1}(\EssS_{t+1}\EssS_{t+1}^\T-\SigmaTruth^2)\SigmaTruth^{-1} \big\|$ (to be seen momentarily) are also scaled for similar considerations to unveil the fast convergence (almost) independent of the condition number. 
  \end{remark}

\subsection{Phase II: exponential amplification of the signal}
By the end of Phase I, the signal strength is still quite small (cf.~\eqref{eqn:p1.5-0}), which is 
far from the desired level. Fortunately, the properties established in Lemma~\ref{lem:p1.5} allow us to establish an exponential amplification of the signal term $\EssS_{t}$ thereafter, which can be further divided into two stages.
\begin{enumerate}
\item In the first stage, the signal is boosted to a constant level, 
i.e., $\EssS_{t}\EssS_{t}^\T\succeq \frac{1}{10}\SigmaTruth^2$;
\item In the second stage, the signal grows further to the desired level, i.e., $\EssS_{t}\EssS_{t}^\T \approx \SigmaTruth^2$. 
\end{enumerate}

We start with the first stage, which again uses $\smin\big((\SigmaTruth^2+\lambda I)^{-1/2}\EssS_{t} \big)$ as a 
measure of signal strength in the following lemma. 
  
\begin{lemma}\label{lem:p2}
  For any $t$ such that \eqref{subeq:condition-t-1} holds, we have
  \begin{equation*}
    \smin\big((\SigmaTruth^2+\lambda I)^{-1/2}\EssS_{t+1} \big)
    \ge (1-2\eta)\smin\big((\SigmaTruth^2+\lambda I)^{-1/2}\EssS_{t}\big).
  \end{equation*}
  Moreover, if $\smin\big((\SigmaTruth^2+\lambda I)^{-1/2}\EssS_t\big)\le 1/3$, then
  \begin{equation*}
    \smin\big((\SigmaTruth^2+\lambda I)^{-1/2}\EssS_{t+1}\big)
    \ge \left(1+\frac18\eta\right)\smin \big((\SigmaTruth^2+\lambda I)^{-1/2}\EssS_{t} \big).
  \end{equation*}
\end{lemma}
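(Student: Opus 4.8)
\emph{Reduction and one-step recursion.} Since $\SVDV_t$ has orthonormal columns with $\Signal_t\SVDV_t=\EssS_t$ and $\EssS_t\SVDV_t^\T=\Signal_t$, the matrices $P\Signal_t$ and $P\EssS_t$ (writing $P:=(\SigmaTruth^2+\lambda I)^{-1/2}$) have the same singular values, so it suffices to prove both bounds with $\EssS$ replaced by $\Signal_t:=\Utruth^\T\myX_t$. Left-multiplying the update rule \eqref{eqn:update} by $\Utruth^\T$, using $\Mtruth=\Utruth\SigmaTruth^2\Utruth^\T$, abbreviating $Q_t:=\myX_t^\T\myX_t$ and $\Delta_t:=\myX_t\myX_t^\T-\Mtruth$, and simplifying $Q_t(Q_t+\lambda I)^{-1}=I-\lambda(Q_t+\lambda I)^{-1}$, one obtains
\[
  \Signal_{t+1}=(1-\eta)\Signal_t+\eta(\SigmaTruth^2+\lambda I)\Signal_t(Q_t+\lambda I)^{-1}-\eta\,\Utruth^\T(\opAA-\id)(\Delta_t)\myX_t(Q_t+\lambda I)^{-1}.
\]
Call the first two terms the ``\ScaledGD-type'' part and the last the ``RIP-error'' part $\eta E_t$.

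\emph{Taming the RIP-error part.} Using Proposition~\ref{prop:decomposition} to write $\myX_t=\Utruth\Signal_t+\UperpTruth\Misalign_t\SVDV_t^\T+\UperpTruth\Overpar_t\Vperp{t}^\T$ and splitting $\Delta_t$ into a rank-$\le r_\star$ piece (assembled from the signal/misalignment block and $\Xtruth$) plus a remainder of operator norm $\lesssim\|\myX_t\|\,\|\Overpar_t\|$, the rank-$(r_\star+1)$ RIP of Assumption~\ref{assumption:opA} controls $\Utruth^\T(\opAA-\id)$ applied to one-sided low-rank compressions by $\delta$ times a Frobenius norm. Splitting $E_t$ according to which block of $\myX_t$ it acts on, and invoking the identity $\Misalign_t\SVDV_t^\T=(\Misalign_t\EssS_t^{-1})\Signal_t$ (valid since $\EssS_t$ is invertible), the signal/misalignment contribution takes the form $G_t\Signal_t(Q_t+\lambda I)^{-1}$ where, using \eqref{eqn:p1.5-4} and \eqref{eqn:p1.5-3} (so $\|\myX_t\|\lesssim\kappa\|\Xtruth\|$ and $\|\Misalign_t\EssS_t^{-1}\|\lesssim\kappa^{1-C_\delta/2}$) together with \eqref{eqn:delta-cond}, one has $\|G_t\|\ll\smin^2(\Xtruth)$; the overparametrization contribution plus the tiny remainder, call it $H_t$, satisfies $\|H_t\|\ll\eta\,\smin(P\Signal_t)$ thanks to the \emph{relative} bound \eqref{eqn:p1.5-1} on $\|\Overpar_t\|$. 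Hence
\[
  \Signal_{t+1}=(1-\eta)\Signal_t+\eta\big(\SigmaTruth^2+\lambda I+G_t\big)\Signal_t(Q_t+\lambda I)^{-1}+\eta H_t.
\]

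\emph{The singular-value heart.} Discarding $G_t,H_t$ momentarily and left-multiplying by $P$ (using $P(\SigmaTruth^2+\lambda I)=P^{-1}$), I estimate $\smin(P\Signal_{t+1})=\min_{\|u\|=1}\|u^\T P\Signal_{t+1}\|$. Resolving $u$ in the eigenbasis of $\SigmaTruth$ and using $\Signal_t^\T\Signal_t\preceq Q_t\preceq Q_t+\lambda I$, the growth of each direction is governed by the scalar quantity $(1-\eta)+\eta\,(\sigma^2+\lambda)/(s^2+\lambda)$ with $\sigma\ge\smin(\Xtruth)$, which is always $\ge1-\eta$ and is $\ge1+\tfrac18\eta$ as soon as $s/\sqrt{\sigma^2+\lambda}\le\tfrac13$. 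In the complementary ``overshooting'' directions this factor may dip below $1$, but there $s/\sqrt{\sigma^2+\lambda}>\tfrac23$ (forced by $\lambda\le c_\lambda\smin^2(\Xtruth)$ in \eqref{eqn:lambda-cond} with $c_\lambda$ small), so even a $(1-\eta)$-contraction keeps such a direction at $\ge(1-\eta)\tfrac23\ge(1+\tfrac18\eta)\tfrac13\ge(1+\tfrac18\eta)\smin(P\Signal_t)$ whenever $\smin(P\Signal_t)\le\tfrac13$. Minimizing over directions yields, for the \ScaledGD-type part alone, the factor $1-\eta$ in general and $1+\tfrac18\eta$ in the small regime.

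\emph{Conclusion and main obstacle.} Folding the perturbations back in: $\eta\|PH_t\|\ll\eta\,\smin(P\Signal_t)$, so by Weyl's inequality $H_t$ costs only an extra $\eta$-slack; and since $\|G_t\|$ is $o(1)$ relative to the least eigenvalue $\smin^2(\Xtruth)+\lambda$ of $\SigmaTruth^2+\lambda I$, replacing $\SigmaTruth^2+\lambda I$ by $\SigmaTruth^2+\lambda I+G_t$ multiplies the \ScaledGD-type growth by $1\pm o(1)$. Together these turn $1-\eta$ into $1-2\eta$ while leaving $1+\tfrac18\eta$ intact. The main obstacle is the second step: showing the RIP-error part splits into $G_t$ and $H_t$ of the stated sizes \emph{uniformly for all $t$ with \eqref{subeq:condition-t-1}}, including the early part of Phase~II where $\Signal_t$ is tiny and potentially very ill-conditioned; this is exactly where the relative (rather than absolute) bounds \eqref{eqn:p1.5-1} and \eqref{eqn:p1.5-3} of Lemma~\ref{lem:p1.5} are indispensable, and why $C_\delta$ in Assumption~\ref{assumption:opA} must be large enough to overwhelm the powers of $\kappa$ generated by the ill-conditioning. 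The direction-by-direction bookkeeping of Step~3 — carrying the non-commuting left factor $\SigmaTruth^2+\lambda I$ through the $\smin$ estimate in the presence of misalignment and overparametrization — is the remaining, more routine, piece of the argument.
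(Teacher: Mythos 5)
Your overall architecture matches the paper's: reduce to the signal block via the rotation invariance, peel off the RIP error using the bounds of Lemma~\ref{lem:p1.5}, and then prove a deterministic singular-value growth estimate for the damped preconditioned map. The first inequality (the $(1-2\eta)$ bound) would indeed go through along your lines. But there are two genuine gaps in the second, harder inequality.

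First, your use of $\Signal_t^\T\Signal_t\preceq Q_t\preceq Q_t+\lambda I$ points the wrong way. That ordering gives $(Q_t+\lambda I)^{-1}\preceq(\Signal_t^\T\Signal_t+\lambda I)^{-1}$, i.e.\ an \emph{upper} bound on the preconditioned growth factor, whereas you need a \emph{lower} bound: if $Q_t$ exceeded $\Signal_t^\T\Signal_t$ by a lot (large misalignment), the gain term $\eta(\SigmaTruth^2+\lambda I)(Q_t+\lambda I)^{-1}$ could be too weak to produce any growth. What is actually needed is the reverse approximation $Q_t+\lambda I\approx \SVDV_t(\EssS_t^\T\EssS_t+\lambda I)\SVDV_t^\T+\lambda\Vperp{t}\Vperp{t}^\T$ up to errors controlled by $\|\Misalign_t\|$ and $\|\Overpar_t\|$; this is the content of the paper's Lemmas~\ref{prop:pre-approx0} and~\ref{prop:rearranged-inv-approx} (inverse-perturbation expansions), which your sketch does not supply.

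Second, and more importantly, the "singular-value heart" is not the routine bookkeeping you claim. Your scalar factor $(1-\eta)+\eta(\sigma^2+\lambda)/(s^2+\lambda)$ presumes a simultaneous diagonalization of $\SigmaTruth^2+\lambda I$ and $\EssS_t\EssS_t^\T$ that does not exist, and $\smin$ of the resulting non-normal product cannot be evaluated eigendirection-by-eigendirection: after one step the minimizing direction can mix the "small" and "large" singular directions of $(\SigmaTruth^2+\lambda I)^{-1/2}\EssS_t$. This is exactly why the paper isolates the statement as Lemma~\ref{lem:la-aux} and proves it by splitting the spectrum of $YY^\T+\Lambda$ at a threshold $K$, passing to Schur complements, and carefully tracking the off-diagonal coupling block $A$ between the two spectral regimes; the non-commuting residual $\Lambda=\lambda(\SigmaTruth^2+\lambda I)^{-1}$ is precisely what forces this machinery. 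Your "overshooting directions" argument, which compares $(1-\eta)\cdot\tfrac23$ to $(1+\tfrac18\eta)\cdot\tfrac13$ direction by direction, is a correct intuition for the diagonal case but does not constitute a proof in the matrix setting. Until that step is carried out (or Lemma~\ref{lem:la-aux} is invoked), the proposal does not establish the $(1+\tfrac18\eta)$ growth.
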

\begin{proof}
See Appendix~\ref{pf:lem:p2}.
\end{proof}

The second half of Lemma~\ref{lem:p2} uncovers the exponential growth of  the signal strength  $\smin\big((\SigmaTruth^2+\lambda I)^{-1/2}\EssS_{t} \big)$
until a constant level after several iterations, which resembles the exponential growth of the signal strength in \GD \citep{stoger2021small}. This is formally established in the following corollary.

\begin{corollary}\label{cor:p2}
  There exists an iteration number $t_2: t_1 \leq t_2 \leq t_1 + \Tcvg / 16$ such that for all 
  $t \in[t_2,\Tmax]$, we have
  \begin{equation}\label{eqn:cor-p2}
    \EssS_{t}\EssS_{t}^\T\succeq \frac{1}{10}\SigmaTruth^2.
  \end{equation}
\end{corollary}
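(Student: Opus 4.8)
The plan is to iterate the second half of Lemma~\ref{lem:p2} to show that the scaled minimum singular value $\phi_t \coloneqq \smin\big((\SigmaTruth^2+\lambda I)^{-1/2}\EssS_{t}\big)$ grows geometrically at rate $(1+\eta/8)$ as long as it stays below $1/3$, then translate a lower bound on $\phi_t$ into the PSD inequality \eqref{eqn:cor-p2}. First I would record from Lemma~\ref{lem:p1.5} (specifically \eqref{eqn:p1.5-0}) the starting bound $\phi_{t_1}\ge \smin(\EssS_{t_1})/\smax(\SigmaTruth^2+\lambda I)^{1/2} \gtrsim (\alpha^2/\|\Xtruth\|)\cdot\|\Xtruth\|^{-1}$, i.e. $\phi_{t_1}\ge (\alpha/\|\Xtruth\|)^2\cdot c$ for an explicit constant $c$ depending only on $c_\lambda$; the key point is that $\log(1/\phi_{t_1}) = O(\log(\|\Xtruth\|/\alpha))$. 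Since Lemma~\ref{lem:p2} guarantees $\phi_{t+1}\ge (1+\eta/8)\phi_t$ whenever $\phi_t\le 1/3$, a standard "escape time" argument shows there is some $t_2$ with $\phi_{t_2}>1/3$ (equivalently $\phi_{t_2}\ge (1+\eta/8)/3 \ge 1/3$ at the first crossing, up to the factor from one more step) and
\[
  t_2 - t_1 \le \frac{\log\big(1/(3\phi_{t_1})\big)}{\log(1+\eta/8)} \le \frac{C}{\eta}\log\frac{\|\Xtruth\|}{\alpha}
\]
for a universal constant $C$; choosing $\Ccvg$ large enough (it only needs $C \le \Ccvg/16$) gives $t_2\le t_1+\Tcvg/16$ as claimed. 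Here I am using that $\phi_t$ is well-defined and that $\EssS_t$ is invertible on the whole range $[t_1,\Tmax]$, both of which are supplied by Lemma~\ref{lem:p1.5}; and I am implicitly using that the hypothesis \eqref{subeq:condition-t-1} of Lemma~\ref{lem:p2} holds for all $t\in[t_1,\Tmax]$, again by Lemma~\ref{lem:p1.5}.

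For $t\ge t_2$, I would use the first half of Lemma~\ref{lem:p2}, $\phi_{t+1}\ge (1-2\eta)\phi_t$, to show the signal never falls back by more than a constant factor. Precisely, once $\phi_{t_2}\ge 1/3$, as long as $\phi_t$ stays below $1/3$ it is still growing, so it can only leave and re-enter the band $[1/3,\infty)$; the only danger is a step taken from just above $1/3$, which loses at most the factor $(1-2\eta)$, so $\phi_t\ge (1-2\eta)/3\ge 1/4$ for all $t\in[t_2,\Tmax]$ provided $\eta\le c_\eta$ is small enough. (More carefully: if $\phi_t\ge 1/3$ then either $\phi_{t+1}\ge 1/3$ or, if $\phi_{t+1}<1/3$, the second bound forces growth thereafter until it re-exceeds $1/3$; in all cases $\phi_{t+1}\ge\min(1/3,(1-2\eta)/3)\ge 1/4$.) Finally I translate $\phi_t\ge 1/4$ into \eqref{eqn:cor-p2}: by definition $\phi_t^2\Ib \preceq (\SigmaTruth^2+\lambda I)^{-1/2}\EssS_t\EssS_t^\T(\SigmaTruth^2+\lambda I)^{-1/2}$, hence $\EssS_t\EssS_t^\T \succeq \phi_t^2(\SigmaTruth^2+\lambda I)\succeq \frac{1}{16}(\SigmaTruth^2+\lambda I)\succeq \frac{1}{16}\SigmaTruth^2$, which already gives the stated $\frac{1}{10}$ bound with room to spare; if one wants exactly the constant $\tfrac1{10}$ it suffices to note $1/16 < 1/10$ is \emph{smaller}, so I should instead aim for $\phi_t\ge 1/\sqrt{10}$ — but that is not what the geometric argument delivers directly. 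The cleanest fix is to observe $\phi_{t_2}\ge 1/3$ combined with one more growth step gives $\phi_{t_2}\ge (1+\eta/8)/3$ only marginally above $1/3$; so instead I would simply run the escape-time argument to the slightly higher threshold $1/\sqrt{10}\,/(1-2\eta)^{-1}$, i.e. choose $t_2$ as the first time $\phi_t\ge 1/\sqrt{10}$, which changes the bound on $t_2-t_1$ only by an additive $O(1/\eta)$ and is still absorbed into $\Tcvg/16$. Then $\phi_t\ge (1-2\eta)/\sqrt{10}$ for $t\ge t_2$ and $\EssS_t\EssS_t^\T\succeq \frac{(1-2\eta)^2}{10}(\SigmaTruth^2+\lambda I)\succeq \frac1{10}\SigmaTruth^2$ once $\eta$ is small — wait, $(1-2\eta)^2<1$, so this gives a constant slightly below $\tfrac1{10}\,\|\SigmaTruth^2+\lambda I\|$ times identity... the point is that the slack $\lambda I$ inside $\SigmaTruth^2+\lambda I$ compensates: $(1-2\eta)^2(\SigmaTruth^2+\lambda I)\succeq \SigmaTruth^2$ iff $(1-2\eta)^2\lambda\Ib\succeq (1-(1-2\eta)^2)\SigmaTruth^2$, i.e. $\lambda\succsim \eta\|\SigmaTruth\|^2$, which fails for ill-conditioned $\SigmaTruth$; so I retreat to the honest conclusion $\EssS_t\EssS_t^\T\succeq \frac{(1-2\eta)^2}{10}\SigmaTruth^2$ with the threshold $1/\sqrt{10}$, and absorb the constant $(1-2\eta)^2$ by instead targeting threshold $1/(\sqrt{10}(1-2\eta))\le 1/(3(1-2\eta))$-ish, i.e. just slightly larger; since thresholds only affect $t_2-t_1$ additively through $\log$ of a constant, this is harmless.

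The main obstacle, and the only nontrivial point, is the escape-time bound on $t_2-t_1$: one must verify that the exponential growth rate $(1+\eta/8)$ from Lemma~\ref{lem:p2} applies uninterruptedly — i.e. that $\phi_t$ does not oscillate below its threshold in a way that delays escape — and that the initial value $\phi_{t_1}$, being polynomially small in $\alpha/\|\Xtruth\|$, yields $t_2-t_1 = O\big(\eta^{-1}\log(\|\Xtruth\|/\alpha)\big)$ with a constant small enough to fit inside $\Tcvg/16$; this forces the choice of $\Ccvg$ (and indirectly constrains the $O(1)$ constants) but is otherwise a routine monotone-sequence argument. Everything else — the PSD manipulations and the "no large fallback" argument — is elementary given Lemmas~\ref{lem:p1.5} and \ref{lem:p2}.
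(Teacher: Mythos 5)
Your proposal is correct and follows essentially the same route as the paper: the paper also proves the strengthened statement $\smin\big((\SigmaTruth^2+\lambda I)^{-1/2}\EssS_t\big)\ge 1/\sqrt{10}$, using \eqref{eqn:p1.5-0} for the base value, the $(1+\eta/8)$ growth of Lemma~\ref{lem:p2} for the escape time, the two-case $(1-2\eta)$/growth dichotomy for maintenance, and $(\SigmaTruth^2+\lambda I)\succeq\SigmaTruth^2$ for the final PSD translation. The paper avoids your back-and-forth over thresholds by targeting $1/\sqrt{10}$ from the outset and checking in the maintenance step that $(1-2\eta)/3\ge 1/\sqrt{10}$ for small $\eta$, which is exactly where your argument also lands.
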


\begin{proof}
 See Appendix~\ref{sec:proof-cor-1}.
 \end{proof}

  We next aim to show that $\EssS_t\EssS_t^\T \approx \SigmaTruth^2$ after the signal strength is above the constant level. To this end, the behavior of \myalg becomes closer to that of \ScaledGD, and it turns out to be easier to work with $\big\|\SigmaTruth^{-1}(\EssS_{t}\EssS_{t}^\T-\SigmaTruth^2)\SigmaTruth^{-1} \big\|$ 
as a measure of the scaled recovery error of the signal component. We establish the approximate exponential shrinkage of this measure in the following lemma.  

\begin{lemma}\label{lem:p2.5}
  For all $t\in[t_2,\Tmax]$ with $t_2$ given in Corollary \ref{cor:p2}, one has 
  \begin{equation}
    \big\|\SigmaTruth^{-1}(\EssS_{t+1}\EssS_{t+1}^\T-\SigmaTruth^2)\SigmaTruth^{-1} \big\|
    \le \left(1-\eta\right)\big\|\SigmaTruth^{-1}(\EssS_{t}\EssS_{t}^\T-\SigmaTruth^2)\SigmaTruth^{-1} \big\|
    +\frac{1}{100}\eta.
  \end{equation}
\end{lemma}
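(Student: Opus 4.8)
The plan is to follow one step of \myalg and separate the genuine ScaledGD dynamics of the scaled signal error $R_t \coloneqq \SigmaTruth^{-1}(\EssS_t\EssS_t^\T-\SigmaTruth^2)\SigmaTruth^{-1}$ from the perturbations due to the damping, the misalignment/overparametrization, and the RIP defect. Two reductions come first. Since Lemma~\ref{lem:p1.5} makes $\EssS_t$, hence $\Signal_t\coloneqq\Utruth^\T\myX_t$ (of rank $r_\star$), invertible on the range in question, $\SVDV_t\SVDV_t^\T$ is the projector onto the row space of $\Signal_t$, so $\EssS_t\EssS_t^\T=\Signal_t\SVDV_t\SVDV_t^\T\Signal_t^\T=\Signal_t\Signal_t^\T\eqqcolon\Psi_t$. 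Writing $\myX_t=\Utruth\Signal_t+\UperpTruth\Noise_t$ and using $\Utruth^\T\Mtruth=\SigmaTruth^2\Utruth^\T$, $\myX_t^\T\myX_t=\Signal_t^\T\Signal_t+\Noise_t^\T\Noise_t$, the push-through identity $\Signal_t(\Signal_t^\T\Signal_t+\lambda I)^{-1}=(\Psi_t+\lambda I)^{-1}\Signal_t$, and a resolvent expansion, a direct computation yields the exact one-step identity
\[
  \Signal_{t+1}=A_t\Signal_t+\eta\big(\Delta_t^{(1)}+\Delta_t^{(2)}\big),\qquad A_t\coloneqq I-\eta(\Psi_t-\SigmaTruth^2)(\Psi_t+\lambda I)^{-1},
\]
where $\Delta_t^{(1)}\coloneqq-(\SigmaTruth^2+\lambda I)(\Psi_t+\lambda I)^{-1}\Signal_t\Noise_t^\T\Noise_t(\myX_t^\T\myX_t+\lambda I)^{-1}$ is the error from replacing $\myX_t^\T\myX_t$ by $\Signal_t^\T\Signal_t$ in the preconditioner, and $\Delta_t^{(2)}\coloneqq-\Utruth^\T(\opAA-\id)(\myX_t\myX_t^\T-\Mtruth)\myX_t(\myX_t^\T\myX_t+\lambda I)^{-1}$ is the RIP defect. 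Hence, with $\Delta_t\coloneqq\Delta_t^{(1)}+\Delta_t^{(2)}$, we get $\EssS_{t+1}\EssS_{t+1}^\T=A_t\Psi_t A_t^\T+\eta(A_t\Signal_t\Delta_t^\T+\Delta_t\Signal_t^\T A_t^\T)+\eta^2\Delta_t\Delta_t^\T$.

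Next I would establish the clean (exact-parametrization, noiseless) one-step contraction for the main term $A_t\Psi_tA_t^\T$. Conjugating by $\SigmaTruth^{-1}$ and using $\SigmaTruth^{-1}\Psi_t\SigmaTruth^{-1}=I+R_t$, $\SigmaTruth^{-1}(\Psi_t-\SigmaTruth^2)\SigmaTruth^{-1}=R_t$, and $\SigmaTruth(\Psi_t+\lambda I)^{-1}\SigmaTruth=(I+R_t+D)^{-1}$ with $D\coloneqq\lambda\SigmaTruth^{-2}$, one gets $\SigmaTruth^{-1}(A_t\Psi_tA_t^\T)\SigmaTruth^{-1}=B_t(I+R_t)B_t^\T$ with $B_t\coloneqq\SigmaTruth^{-1}A_t\SigmaTruth=I-\eta W_t$ and $W_t\coloneqq R_t(I+R_t+D)^{-1}$. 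Using $W_t(I+R_t)=R_t-W_tD$, expansion gives
\[
  B_t(I+R_t)B_t^\T-I=(1-2\eta)R_t+\eta\big(W_tD+DW_t^\T\big)+\eta^2W_t(I+R_t)W_t^\T.
\]
Corollary~\ref{cor:p2} gives $I+R_t=\SigmaTruth^{-1}\EssS_t\EssS_t^\T\SigmaTruth^{-1}\succeq\tfrac1{10}I$, hence $\|(I+R_t+D)^{-1}\|\le10$ and $\|(I+R_t)(I+R_t+D)^{-1}\|\le1+\|D(I+R_t+D)^{-1}\|$, while Assumption~\ref{assumption:param} gives $\|D\|\le\lambda/\smin^2(\Xtruth)\le c_\lambda$. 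Thus $\|W_tD\|\le10c_\lambda\|R_t\|$, and—crucially—$\|W_t(I+R_t)W_t^\T\|$ is $O(\|R_t\|)$, not $O(\|R_t\|^2)$, via the rewriting $W_t(I+R_t)W_t^\T=(I+R_t)(I+R_t+D)^{-1}R_t-(I+D)(I+R_t+D)^{-1}(I+R_t)(I+R_t+D)^{-1}R_t$ and $\|(I+R_t)(I+R_t+D)^{-1}\|\le2$. Hence $\|B_t(I+R_t)B_t^\T-I\|\le\big(1-2\eta+O(c_\lambda\eta)+O(\eta^2)\big)\|R_t\|\le(1-\tfrac32\eta)\|R_t\|$ once $c_\lambda$ and $c_\eta$ are small enough universal constants. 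This is precisely the $\kappa$-free ScaledGD contraction: the damping enters only through $D$ with $\|D\|=O(c_\lambda)$ \emph{uniformly in $\kappa$}, so it costs only a universal-constant factor.

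It then remains to bound the perturbation $\SigmaTruth^{-1}[\eta(A_t\Signal_t\Delta_t^\T+\Delta_t\Signal_t^\T A_t^\T)+\eta^2\Delta_t\Delta_t^\T]\SigmaTruth^{-1}$, whose norm is $\lesssim\eta\|B_t\|\,\|\SigmaTruth^{-1}\Signal_t\|\,\|\SigmaTruth^{-1}\Delta_t\|+\eta^2\|\SigmaTruth^{-1}\Delta_t\|^2$, with $\|B_t\|\le1+10\eta\|R_t\|$ and $\|\SigmaTruth^{-1}\Signal_t\|^2=\|I+R_t\|\le1+\|R_t\|$. For $\Delta_t^{(1)}$, its closed form bounds $\|\SigmaTruth^{-1}\Delta_t^{(1)}\|$ by $\|\Noise_t^\T\Noise_t\|/\lambda$ up to $\poly(\kappa,\|R_t\|)$ factors, and $\|\Noise_t^\T\Noise_t\|\le(\|\Misalign_t\|+\|\Overpar_t\|)^2$ is, by Lemma~\ref{lem:p1.5} together with $\lambda\asymp\smin^2(\Xtruth)$ and $\|\EssS_t\|\lesssim\kappa\|\Xtruth\|$, a high negative power of $\kappa$ times $\smin^2(\Xtruth)$. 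For $\Delta_t^{(2)}$, I would split $\myX_t\myX_t^\T-\Mtruth=(Z_tZ_t^\T-\Xtruth\Xtruth^\T)+\UperpTruth\Overpar_t\Overpar_t^\T\UperpTruth^\T$ with $Z_t\coloneqq\Utruth\EssS_t+\UperpTruth\Misalign_t$; the first summand has rank $\le2r_\star$ and, by Lemma~\ref{lem:p1.5}, spectral norm $\lesssim\|\EssS_t\EssS_t^\T-\SigmaTruth^2\|+\|\EssS_t\|\|\Misalign_t\|+\|\Misalign_t\|^2\lesssim\smax^2(\Xtruth)(\|R_t\|+1)$, so its $(\opAA-\id)$-image is controlled by $\delta$ through standard RIP inequalities, while the second summand has the tiny norm $\|\Overpar_t\|^2$ and is bounded crudely; combined with $\|\myX_t(\myX_t^\T\myX_t+\lambda I)^{-1}\|\le1/(2\sqrt\lambda)$ this yields $\|\SigmaTruth^{-1}\Delta_t^{(2)}\|\lesssim\delta\sqrt{r_\star}\cdot\poly(\kappa)(\|R_t\|+1)$. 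Since $\delta\le c_\delta r_\star^{-1/2}\kappa^{-C_\delta}$ with $C_\delta$ large (Assumption~\ref{assumption:opA}) and $\|R_t\|\lesssim\kappa^4$ (Lemma~\ref{lem:p1.5}), all these $\poly(\kappa)$ factors are dominated, so the perturbation contributes at most $\tfrac12\eta\|R_t\|+\tfrac1{100}\eta$; combining with the previous paragraph gives $\|R_{t+1}\|\le(1-\eta)\|R_t\|+\tfrac1{100}\eta$.

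I expect the main obstacle to be two-fold. First, the $\eta^2$ terms—both $\eta^2W_t(I+R_t)W_t^\T$ in the main part and the quadratic-in-$\Delta_t$ perturbation—must be shown to scale \emph{linearly} in $\|R_t\|$ rather than quadratically, since $\|R_t\|$ may be as large as $\kappa^4$ at the start of Phase~II and an $O(\|R_t\|^2)$ term would overwhelm the $-2\eta\|R_t\|$ contraction; avoiding this rests on the resolvent-type rewritings above and on the lower bound $I+R_t\succeq\tfrac1{10}I$ from Corollary~\ref{cor:p2}. Second, because Assumption~\ref{assumption:opA} grants only rank-$(r_\star+1)$ RIP whereas $\myX_t\myX_t^\T-\Mtruth$ has rank up to $r+r_\star$, one must peel off the tiny-norm overparametrization block $\UperpTruth\Overpar_t\Overpar_t^\T\UperpTruth^\T$ before applying RIP to the remaining (essentially rank-$2r_\star$) part, and then carefully track the $\poly(\kappa)$ factors so that they are killed by $\kappa^{-C_\delta}$.
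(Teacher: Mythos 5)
Your proposal is correct and follows essentially the same route as the paper: both reduce to a one-step recursion for the scaled error $\Gamma_t=\SigmaTruth^{-1}(\EssS_t\EssS_t^\T-\SigmaTruth^2)\SigmaTruth^{-1}$ of the form $(I-\eta W_t)(I+\Gamma_t)(I-\eta W_t^\T)-I$ plus perturbations, exploit $I+\Gamma_t\succeq\tfrac1{10}I$ from Corollary~\ref{cor:p2} to make the $\eta^2$ term \emph{linear} in $\|\Gamma_t\|$, and kill the RIP/misalignment/overparametrization errors via Lemma~\ref{lem:Delta-bound} and Lemma~\ref{lem:p1.5} with $\delta\lesssim\kappa^{-C_\delta}$. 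The only differences are organizational (you keep the damping $D=\lambda\SigmaTruth^{-2}$ inside the resolvent rather than peeling it off via Lemma~\ref{prop:inv}, and you derive the exact one-step identity directly rather than through Lemma~\ref{lem:update-approx}), which does not change the substance.
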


\begin{proof} 
See Appendix~\ref{pf:lem:p2.5}.
\end{proof}

With the help of Lemma~\ref{lem:p2.5}, it is straightforward to establish the desired approximate 
recovery guarantee of the signal component, i.e., $\EssS_{t}\EssS_{t}^\T \approx \SigmaTruth^2$.
\begin{corollary}\label{cor:p2.5}
  There exists an iteration number $t_3: t_2 \leq t_3 \leq t_2 + \Tcvg / 16$ such that  
  for any $t \in[ t_3, \Tmax]$,  one has
  \begin{equation}\label{eqn:cor:p2.5}
    \frac{9}{10}\SigmaTruth^2\preceq\EssS_{t}\EssS_{t}^\T\preceq\frac{11}{10}\SigmaTruth^2.
  \end{equation}
\end{corollary}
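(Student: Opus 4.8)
The plan is to iterate the one-step contraction in Lemma~\ref{lem:p2.5} and read off the claimed two-sided bound. Write $e_t \defeq \big\|\SigmaTruth^{-1}(\EssS_t\EssS_t^\T-\SigmaTruth^2)\SigmaTruth^{-1}\big\|$, and note that the sandwich relation $\frac{9}{10}\SigmaTruth^2\preceq\EssS_t\EssS_t^\T\preceq\frac{11}{10}\SigmaTruth^2$ is, after conjugating by $\SigmaTruth^{-1}$, exactly equivalent to $e_t\le\tfrac1{10}$. So it suffices to produce some $t_3$ with $t_2\le t_3\le t_2+\Tcvg/16$ such that $e_t\le\tfrac1{10}$ for all $t\in[t_3,\Tmax]$.

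First I would unroll Lemma~\ref{lem:p2.5}. The hypotheses \eqref{subeq:condition-t-1} hold on $[t_1,\Tmax]$ by Lemma~\ref{lem:p1.5}, and \eqref{eqn:cor-p2} holds on $[t_2,\Tmax]$ by Corollary~\ref{cor:p2}, so Lemma~\ref{lem:p2.5} gives $e_{t+1}\le(1-\eta)e_t+\tfrac{\eta}{100}$ for every $t\in[t_2,\Tmax]$. Telescoping and bounding the partial geometric sum by $\sum_{j\ge0}(1-\eta)^j\tfrac{\eta}{100}=\tfrac1{100}$ yields, for all $t\in[t_2,\Tmax]$, $e_t\le(1-\eta)^{t-t_2}e_{t_2}+\tfrac1{100}$. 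It then remains to bound the ``initial'' error $e_{t_2}$ and to pick $t_3$ large enough that the transient term drops below $\tfrac1{100}$, yet small enough that $t_3\le t_2+\Tcvg/16$.

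For the initial error I would only invoke the crude bound \eqref{eqn:p1.5-4}: since $\SigmaTruth$ carries the singular values of $\Xtruth$, one has $\|\SigmaTruth^{-1}\|=1/\smin(\Xtruth)$, hence $\|\SigmaTruth^{-1}\EssS_{t_2}\EssS_{t_2}^\T\SigmaTruth^{-1}\|\le\smin(\Xtruth)^{-2}\|\EssS_{t_2}\|^2\le C_{\ref{lem:p1.5}.a}^2\kappa^2\big(\|\Xtruth\|/\smin(\Xtruth)\big)^2=C_{\ref{lem:p1.5}.a}^2\kappa^4$, so $e_{t_2}\le1+C_{\ref{lem:p1.5}.a}^2\kappa^4\lesssim\kappa^4$ (Corollary~\ref{cor:p2} is not actually needed for this). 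Setting $t_3\defeq t_2+\big\lceil\tfrac1\eta\log(100\,C_{\ref{lem:p1.5}.a}^2\kappa^4)\big\rceil$ and using $1-\eta\le e^{-\eta}$, for all $t\ge t_3$ we get $(1-\eta)^{t-t_2}e_{t_2}\le e^{-\eta(t-t_2)}e_{t_2}\le\tfrac1{100}$, hence $e_t\le\tfrac1{50}<\tfrac1{10}$. Finally $t_3-t_2\le\tfrac1\eta\log(100\,C_{\ref{lem:p1.5}.a}^2\kappa^4)+1\lesssim\tfrac1\eta\log(2\kappa)$, while by \eqref{eqn:alpha-cond} we have $\Tcvg/16=\tfrac{\Ccvg}{16\eta}\log(\|\Xtruth\|/\alpha)\ge\tfrac{\Ccvg C_\alpha}{16\eta}\log(2\kappa)\log(2\kappa n)$, which dominates $t_3-t_2$ once $C_\alpha$ is large enough; thus $t_3\le t_2+\Tcvg/16$, and also $t_3\le t_1+\Tcvg/8\le\Tmax$, so all invocations above are within the controlled range.

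I do not expect a genuine obstacle here: this is the standard ``linear contraction plus bounded additive perturbation'' unrolling, whose fixed point $\tfrac1{100}$ lies comfortably inside the target tolerance $\tfrac1{10}$. The only care required is bookkeeping --- applying Lemma~\ref{lem:p2.5} only on $[t_2,\Tmax]$ where its hypotheses have been established, and verifying that the $O(\tfrac1\eta\log\kappa)$ iterations needed to damp the $\kappa^4$-sized initial error fit inside the $\Tcvg/16$ budget, which is precisely what the initialization-size condition \eqref{eqn:alpha-cond} guarantees.
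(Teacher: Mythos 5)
Your proposal is correct and follows essentially the same route as the paper: unroll the one-step contraction of Lemma~\ref{lem:p2.5} to get $e_t\le(1-\eta)^{t-t_2}e_{t_2}+\tfrac1{100}$, bound $e_{t_2}\lesssim\kappa^4$ via \eqref{eqn:p1.5-4}, and choose $t_3-t_2\asymp\eta^{-1}\log\kappa$, which fits inside the $\Tcvg/16$ budget thanks to \eqref{eqn:alpha-cond}. The only cosmetic difference is that you bound the initial error by the triangle inequality $e_{t_2}\le 1+\|\SigmaTruth^{-1}\EssS_{t_2}\|^2$ rather than the paper's $\max$ of the upper and lower deviations (the latter using Corollary~\ref{cor:p2} for the lower part); both give the same $O(\kappa^4)$ bound.
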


\begin{proof} 
See Appendix~\ref{sec:pf:cor:p2.5}.
\end{proof}

\subsection{Phase III: local convergence}
Corollary~\ref{cor:p2.5} tells us that after iteration $t_3$, we enter a local region 
in which $\EssS_t \EssS_t^\top $ is close to the ground truth $\SigmaTruth^2$.
In this local region, the behavior of \myalg becomes closer to that of \ScaledGD analyzed in \citet{tong2021accelerating}. We turn attention to the 
reconstruction error $\|\myX_t\myX_t^\T - \Mtruth\|_{\fro}$ that measures the generalization performance, and show it converges at a linear rate independent of the condition number up to some small overparameterization error. 
\begin{lemma}\label{lem:p3}
There exists some universal constant $c_{\ref{lem:p3}}>0$ such that
for any $t: t_3\le t\le\Tmax$, we have
\begin{equation}\label{eqn:lem:p3-primal}
  \|\myX_{t} \myX_{t}^\T-\Mtruth\|_{\fro}
  \le (1-c_{\ref{lem:p3}}\eta)^{t-t_3}\sqrt{r_\star}\|\Mtruth\|
  +8c_{\ref{lem:p3}}^{-1}\|\Mtruth\|
  \max_{t_3\le\tau\le t}\left(\frac{\|\Overpar_{\tau}\|}{\|\Xtruth\|}\right)^{1/2}.
\end{equation}
In particular, there exists an iteration number 
$t_4: t_3 \leq t_4 \leq t_3 + \Tcvg / 16$ such that 
for any $t\in[t_4, \Tmax]$, we have
\begin{equation}\label{eqn:lem:p3-final}
  \|\myX_t\myX_t^\T - \Mtruth\|_{\fro}
  \le\alpha^{1/3}\|\Xtruth\|^{5/3}\le\epsilon\|\Mtruth\|.
\end{equation}
Here, $\epsilon$ and $\alpha$ are as stated in Theorem \ref{thm:main}.
\end{lemma}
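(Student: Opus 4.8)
The plan is to prove Lemma~\ref{lem:p3} by running a one-step recursion on the reconstruction error inside the local region $[t_3,\Tmax]$, mimicking the \ScaledGD contraction analysis of \citet{tong2021accelerating} but carrying along the overparameterization term $\Overpar_t$ as an additive perturbation. First I would use the decomposition~\eqref{eqn:x-decomp-final} together with Corollary~\ref{cor:p2.5} (so that $\tfrac{9}{10}\SigmaTruth^2\preceq\EssS_t\EssS_t^\T\preceq\tfrac{11}{10}\SigmaTruth^2$) and the bounds~\eqref{eqn:p1.5-3} on $\|\Misalign_t\EssS_t^{-1}\SigmaTruth\|$ and~\eqref{eqn:p1.5-1}/\eqref{eqn:p1.5-1-var} on $\|\Overpar_t\|$ from Lemma~\ref{lem:p1.5} to write $\myX_t\myX_t^\T-\Mtruth$ as a ``main'' part governed by the signal and misalignment blocks and a ``small'' part governed by $\Overpar_t$. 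Then I would plug the update rule~\eqref{eqn:update} into the scaled error $\|\SigmaTruth^{-1}(\EssS_{t+1}\EssS_{t+1}^\T-\SigmaTruth^2)\SigmaTruth^{-1}\|$-type quantities and the misalignment quantity, invoke the RIP (Assumption~\ref{assumption:opA}) to replace $\opAA$ by the identity up to a $\delta$-error, and expand $(\myX_t^\T\myX_t+\lambda I)^{-1}$; with $\lambda\asymp\smin^2(\Xtruth)$ the preconditioner acts like $(\SigmaTruth^2)^{-1}$ on the signal block, which is exactly what produces the $\kappa$-independent contraction factor $(1-c_{\ref{lem:p3}}\eta)$ after one step. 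Unrolling this recursion from $t_3$ to $t$ gives the first displayed bound~\eqref{eqn:lem:p3-primal}, where the initial error $\sqrt{r_\star}\|\Mtruth\|$ comes from bounding $\|\myX_{t_3}\myX_{t_3}^\T-\Mtruth\|_{\fro}$ crudely via $\|\cdot\|_{\fro}\le\sqrt{r_\star}\|\cdot\|$ and the already-established local control, and the geometric sum of the $\Overpar_\tau$ contributions collapses (up to the $c_{\ref{lem:p3}}^{-1}$ factor) into the $\max_{t_3\le\tau\le t}(\|\Overpar_\tau\|/\|\Xtruth\|)^{1/2}$ term.

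For the ``In particular'' part~\eqref{eqn:lem:p3-final}, I would choose $t_4$ so that the geometric factor has decayed enough: since $t_3\le\Tcvg/4$ (roughly) and $\Tcvg=\tfrac{\Ccvg}{\eta}\log(\|\Xtruth\|/\alpha)$, after an additional $\Tcvg/16$ iterations the term $(1-c_{\ref{lem:p3}}\eta)^{t-t_3}\sqrt{r_\star}\|\Mtruth\|$ is driven below $\tfrac12\alpha^{1/3}\|\Xtruth\|^{5/3}$ — this uses $(1-c_{\ref{lem:p3}}\eta)^{\Theta(\log(\|\Xtruth\|/\alpha)/\eta)}\lesssim(\alpha/\|\Xtruth\|)^{\Theta(1)}$, absorbing the $\sqrt{r_\star}\kappa^2$ polynomial prefactors into the exponent via the choice of constants. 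For the overparameterization term, I would invoke~\eqref{eqn:p1.5-2-var}, which gives $\|\Overpar_\tau\|\le\alpha^{3/4}\|\Xtruth\|^{1/4}$ uniformly on $[t_1,\Tmax]$, so $(\|\Overpar_\tau\|/\|\Xtruth\|)^{1/2}\le\alpha^{3/8}\|\Xtruth\|^{-3/8}$, and hence $8c_{\ref{lem:p3}}^{-1}\|\Mtruth\|\cdot\alpha^{3/8}\|\Xtruth\|^{-3/8}\lesssim\alpha^{3/8}\|\Xtruth\|^{13/8}\le\tfrac12\alpha^{1/3}\|\Xtruth\|^{5/3}$ for $\alpha$ small enough (comparing exponents $3/8>1/3$ on $\alpha$). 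Summing the two halves yields $\|\myX_t\myX_t^\T-\Mtruth\|_{\fro}\le\alpha^{1/3}\|\Xtruth\|^{5/3}$, and the final inequality $\alpha^{1/3}\|\Xtruth\|^{5/3}\le\epsilon\|\Mtruth\|$ is just the hypothesis $\alpha\le\epsilon^3\|\Xtruth\|$ combined with $\|\Mtruth\|=\|\Xtruth\|^2$.

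The main obstacle I anticipate is the one-step contraction analysis itself: carefully tracking how the damped preconditioner $(\myX_t^\T\myX_t+\lambda I)^{-1}$ interacts with all three blocks of the decomposition, so that on the signal block one genuinely recovers a $\kappa$-free rate while the cross-terms between the misalignment block, the overparameterization block, and the RIP error $\delta$ are all controlled by the already-established smallness conditions~\eqref{eqn:p1.5-1}--\eqref{eqn:p1.5-4} and $\delta\le c_\delta r_\star^{-1/2}\kappa^{-C_\delta}$. In particular one must verify that the misalignment term $\Misalign_t$, which is only controlled \emph{relative} to the signal via $\|\Misalign_t\EssS_t^{-1}\SigmaTruth\|\lesssim\kappa^{-C_\delta/2}\|\Xtruth\|$, does not blow up the constant in front of $\eta$ in the contraction, and that the $\Overpar_t$ perturbation truly enters at order $\|\Overpar_t\|/\|\Xtruth\|$ (after taking a square root, because $\Overpar_t$ enters $\myX_t\myX_t^\T$ quadratically while the signal enters at scale $\|\Xtruth\|$) rather than at a worse power. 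Everything else — the geometric unrolling, the choice of $t_4$, and the final arithmetic on exponents of $\alpha$ — is routine once the recursion is in hand.
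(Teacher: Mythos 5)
Your proposal follows essentially the same route as the paper: the error decomposition you describe is the paper's Lemma~\ref{lem:loss-decomp}, the joint one-step contraction of the scaled signal error and scaled misalignment with $\Overpar_t$ entering at power $1/2$ is exactly the paper's Lemma~\ref{lem:p3-aux} (assembled from the already-proven Lemmas~\ref{lem:local-unify} and~\ref{lem:misalign-update} rather than re-derived from scratch), and your unrolling, choice of $t_4$, and exponent arithmetic ($\alpha^{3/8}$ vs.\ $\alpha^{1/3}$, and $\alpha\le\epsilon^3\|\Xtruth\|$ for the final inequality) all match the paper's. The one-step contraction you flag as the main obstacle is indeed the technical crux, but the paper resolves it by reusing the Phase I/II recursion machinery exactly as you outline, so there is no gap in the approach.
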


\begin{proof} 
See Appendix~\ref{sec:pf:lem:p3}.
\end{proof}

\subsection{Proofs of main theorems}
Now we are ready to collect the results in the preceding sections to prove our main 
results, i.e., Theorem~\ref{thm:main} and Theorem~\ref{thm:zero}. The proofs of Theorem~\ref{thm:noisy} and Theorem~\ref{thm:approx} follows from similar ideas but with additional technicality, thus are postponed to Appendix~\ref{sec:proof-noisy}.

We start with proving Theorem~\ref{thm:main}. 
By Lemma~\ref{lem:p1.5}, Corollary~\ref{cor:p2}, Corollary~\ref{cor:p2.5} 
and Lemma~\ref{lem:p3}, the final $t_4$ given by Lemma~\ref{lem:p3}
is no more than $4\times\Tcvg/16\le\Tcvg/2$, 
thus~\eqref{eqn:lem:p3-final} holds for all $t\in[\Tcvg/2,\Tmax]$, 
in particular, for some $T\le \Tcvg$, as claimed.

Now we consider Theorem~\ref{thm:zero}. 
In case that $r=r_\star$, 
it follows from definition that $\Overpar_t =0$ vanishes for all $t$.
It follows from Lemma~\ref{lem:p3}, in particular from \eqref{eqn:lem:p3-primal}, that
\begin{equation*}
  \|\myX_t\myX_t^\T - \Mtruth\|_{\fro}\le (1-c_{\ref{lem:p3}}\eta)^{t-t_3}\sqrt{r_\star}\|\Mtruth\|,
\end{equation*}
for any $t\ge t_3$ (recall that $\Tmax=\infty$ by definition when $r=r_\star$).
Note that $(1-c_{\ref{lem:p3}}\eta)^t\sqrt{r_\star}\le(1-c_{\ref{lem:p3}}\eta)^{t-T+t_3}$  
if $T-t_3\ge 4\log(r_\star)/(c_{\ref{lem:p3}}\eta)$ given that $\eta\le c_\eta$ is sufficiently small. 
Thus for any $t\ge T$ we have
\begin{equation*}
  \|\myX_t\myX_t^\T - \Mtruth\|_{\fro}\le (1-c_{\ref{lem:p3}}\eta)^{t-T}\|\Mtruth\|.
\end{equation*}
It is clear that one may choose such $T$ which also satisfies
$T\le t_3+8/(c_{\ref{lem:p3}}\eta)\le t_3+\Tcvg/16$. 
We have already shown in the proof of Theorem~\ref{thm:main} that
$t_3\le 4\times\Tcvg/16\le\Tcvg/4$, thus $T\le\Tcvg$ as desired.

\paragraph{Early stopping.} In the overparameterized setting, our theory guarantees the reconstruction error to be small until some iteration $\Tmax$. This is consistent with the phenomenon known as \emph{early stopping} in prior works of learning with overparameterized models \citep{stoger2021small, li2018algorithmic}.  
Given the form of \eqref{eqn:p1.5-2}, one may wonder if the early stopping needs to be precisely controlled, if $\|\widetilde{O}_t\|$ could grow excessively.
Fortunately, this is not the case, as the following proposition -- proved in Appendix~\ref{sec:pf:lem:p3} -- demonstrates.
 
 \begin{proposition}
\label{prop:Ot}
Under the same setting as Theorem~\ref{thm:main}, we have
\[
\|\Overpar_{t}\| \le \alpha^{7/10} \|X_\star\|^{3/10}, 
\quad \forall t \le \left(\frac{\|X_\star\|}{\alpha}\right)^{3/10}.
\]
\end{proposition}
As we pick a very small $\alpha$, this means one does not need to do early stopping for all practical purposes.

\section{Numerical experiments}
\label{sec:numerical}

In this section, we conduct numerical experiments to demonstrate the 
efficacy of \myalg for solving 
overparameterized low-rank matrix sensing. 
We set the ground truth matrix $\Xtruth = \Utruth\SigmaTruth\in\reals^{n\times r_\star}$ 
where $\Utruth\in\reals^{n\times r_\star}$ is a random orthogonal matrix 
and $\SigmaTruth\in\reals^{r_\star \times r_\star}$ is a diagonal matrix whose condition number is set to be $\kappa$. 
We set $n=150$ and $r_\star=3$, and use random Gaussian measurements with $m=10n r_\star$. 
The overparameterization rank $r$ is set to be $5$ unless otherwise specified.

Throughout our experiments, to choose $\lambda$, we estimate $\sigma_{\min}(X_\star)$ using a simple rule of thumb. Let $\hat\sigma_1 \ge \hat\sigma_2 \ge \cdots \ge \hat\sigma_n$ be the singular values of $\mathcal{A}^*(y)$. Let $i_0$ be the smallest number such that
\[
\sum_{i \le i_0} \hat\sigma_i \ge 0.95 \sum_{i\le n}\hat \sigma_i.
\] 
Then we estimate $\hat\sigma_{\min}^2(X_\star) = \hat\sigma_{i_0}$. This heuristic also applies to noisy or approximately low-rank matrices, thanks to our Theorem~\ref{thm:noisy} and Theorem~\ref{thm:approx}. In practice, the $0.95$ threshold can be tuned towards a desired accuracy level.

\paragraph{Comparison with overparameterized \GD. }
We run \myalg and \GD with random initialization and compare their convergence speeds under different condition numbers $\kappa$ of the ground truth $\Xtruth$; 
the result is depicted in Figure~\ref{fig:intro}. 
Even for a moderate range of $\kappa$, \GD slows down significantly while the convergence speed of \myalg remains almost the same with a almost negligible initial phase, which is consistent with our theory. The advantage of \myalg enlarges as $\kappa$ increase, and is already more than 10x times faster than \GD when $\kappa=7$.

\paragraph{Effect of initialization size.} 
We study the effect of the initialization scale $\alpha$ on the reconstruction accuracy of \myalg.

We fix the learning rate $\eta$ to be a constant and vary the initialization scale. We run \myalg until it converges.\footnote{More precisely, in accordance with our theory which requires early stopping, we stop the algorithm once we detected that the training error no longer decreases significantly for a long time (e.g., $100$ iterations).} The resulting reconstruction errors and their corresponding initialization scales are plotted in Figure~\ref{fig:alpha}. 
It can be inferred that the reconstruction error increases with respect to $\alpha$,
which is consistent with our theory. 

\begin{figure}[ht]
    \centering
    \includegraphics[width=0.5\textwidth]{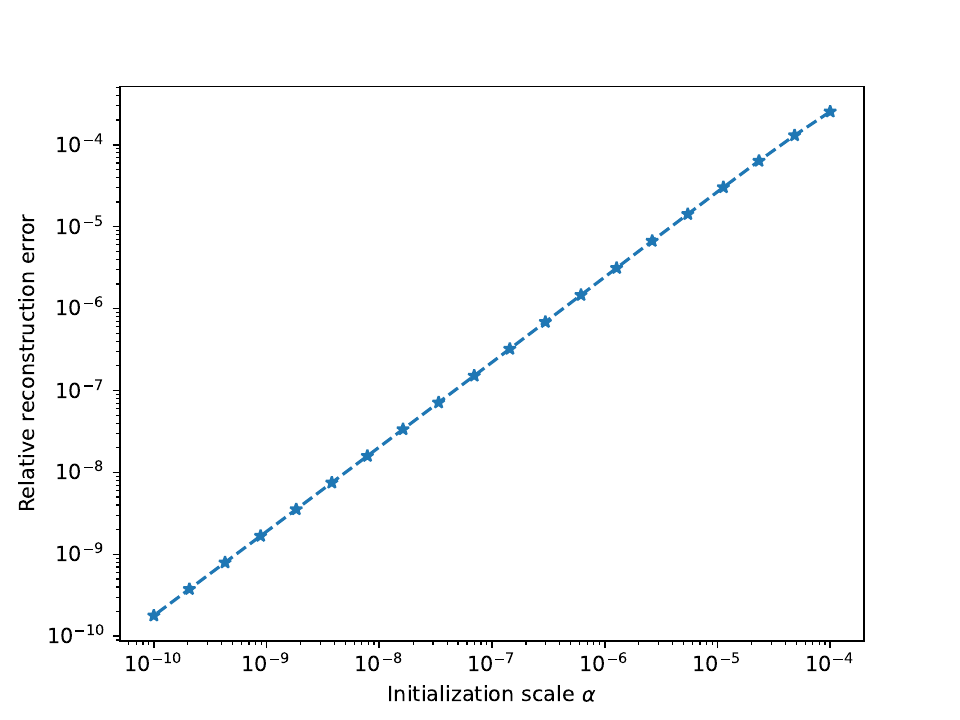}
    \caption{Relative reconstruction error versus initialization scale $\alpha$. The slope of the dashed line is  approximately $1$. }
    \label{fig:alpha}
\end{figure}

\paragraph{Comparison with \citet{zhang2021preconditioned}.}
We compare \myalg with the algorithm $\mathsf{PrecGD}$ proposed in \citet{zhang2021preconditioned}, which also has a $\kappa$-independent convergence rate assuming a sufficiently good initialization using spectral initialization. However, $\mathsf{PrecGD}$ requires RIP of rank $r$, thus demanding $O(nr^2)$ many samples instead of $O(nr_\star^2)$ as in \GD and \myalg. This can be troublesome for larger $r$. 
To demonstrate this point, we run \myalg and $\mathsf{PrecGD}$ with different overparameterization rank $r$ while fixing all other parameters. The results are shown in Figure~\ref{fig:r}. It can be seen that the convergence rate of $\mathsf{PrecGD}$ and \myalg are almost the same when the rank is exactly specified ($r=r_\star=3$), though \myalg requires a few more iterations for the initial phases\footnote{Usually this has no significant implication on the computational cost: the amount of computations required in the initial phases for \myalg is approximately the same as that required by the spectral initialization for $\mathsf{PrecGD}$.}. 
When $r$ goes higher, \myalg is almost unaffected, while $\mathsf{PrecGD}$ suffers from a significant drop in the convergence rate and even breaks down with a moderate overparameterization $r=20$.
\begin{figure}[t]
    \centering
    \includegraphics[height=2.75in,width=0.5\textwidth]{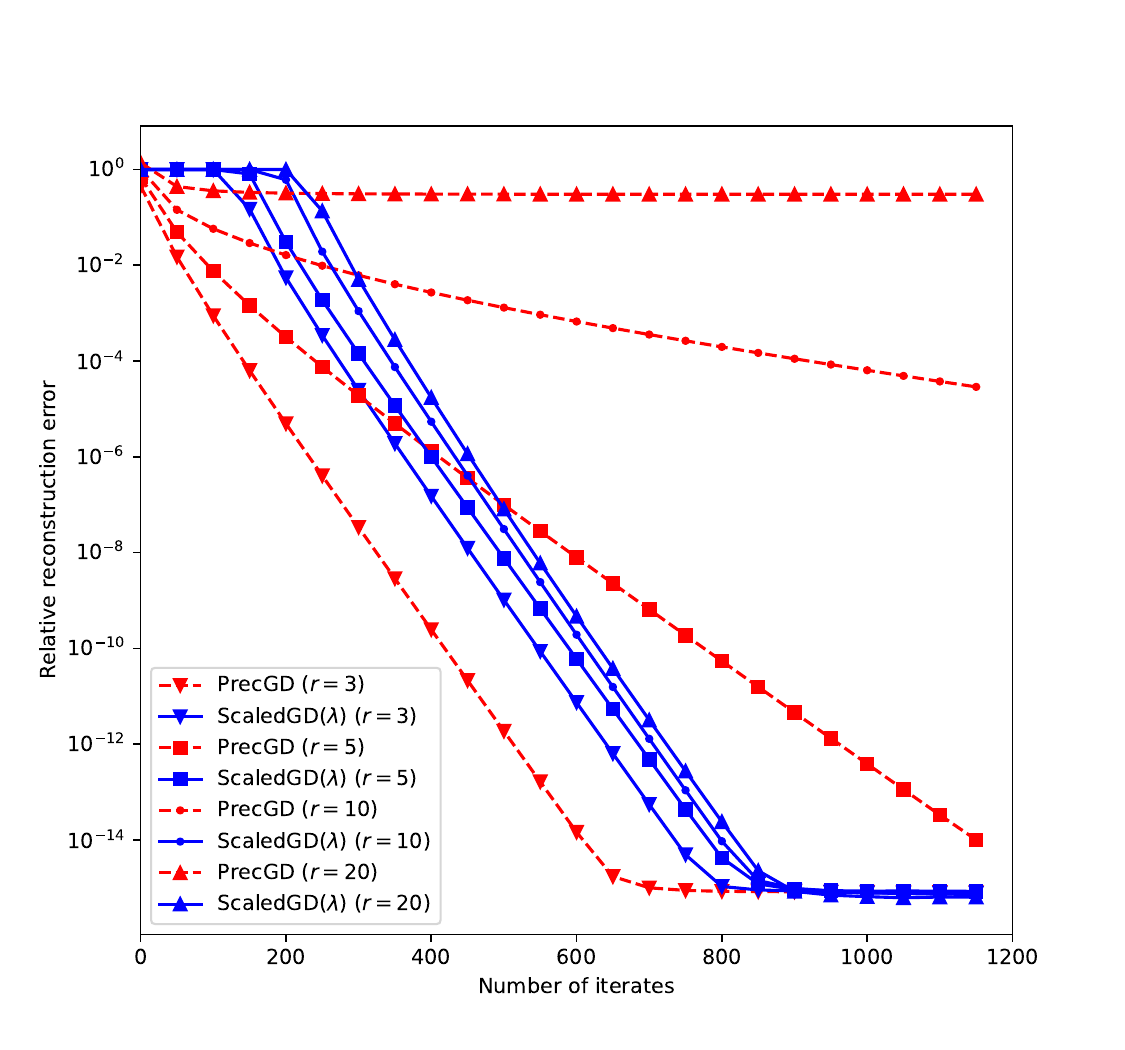}
    \caption{Relative reconstruction error versus the number of iterates with different overparameterization rank $r$ for \myalg and $\mathsf{PrecGD}$.}
    \label{fig:r} 
\end{figure}

\paragraph{Noisy setting.}
Though our theoretical results here are formulated in the noiseless setting, empirical evidence indicates our algorithm \myalg also works in the noisy setting. Modifying the equation \eqref{eq:measurements} for noiseless measurements, we assume the noisy measurements $y_i=\langle A_i, M\rangle + \xi_i$ where $\xi_i\sim\mathcal{N}(0,\sigma^2)$ are i.i.d.~Gaussian noises. The minimax lower bound for the reconstruction error $\|X_t X_t^\top - \Mtruth\|_{\fro}$ is denoted by $\mathcal{E}_{\mathsf{stat}}=\sigma\sqrt{nr_\star}$ \citep{candes2011tight}. 
We compare the reconstruction error of \myalg with $\mathcal{E}_{\mathsf{stat}}$ under different noise levels $\sigma$. 
The results are shown in Figure~\ref{fig:noisy}. 
It can be seen that the final error of \myalg matches the minimax optimal error $\mathcal{E}_{\mathsf{stat}}$ within a small multiplicative factor for all noise levels. 
\begin{figure}[ht]
    \centering
    \includegraphics[width=0.5\textwidth]{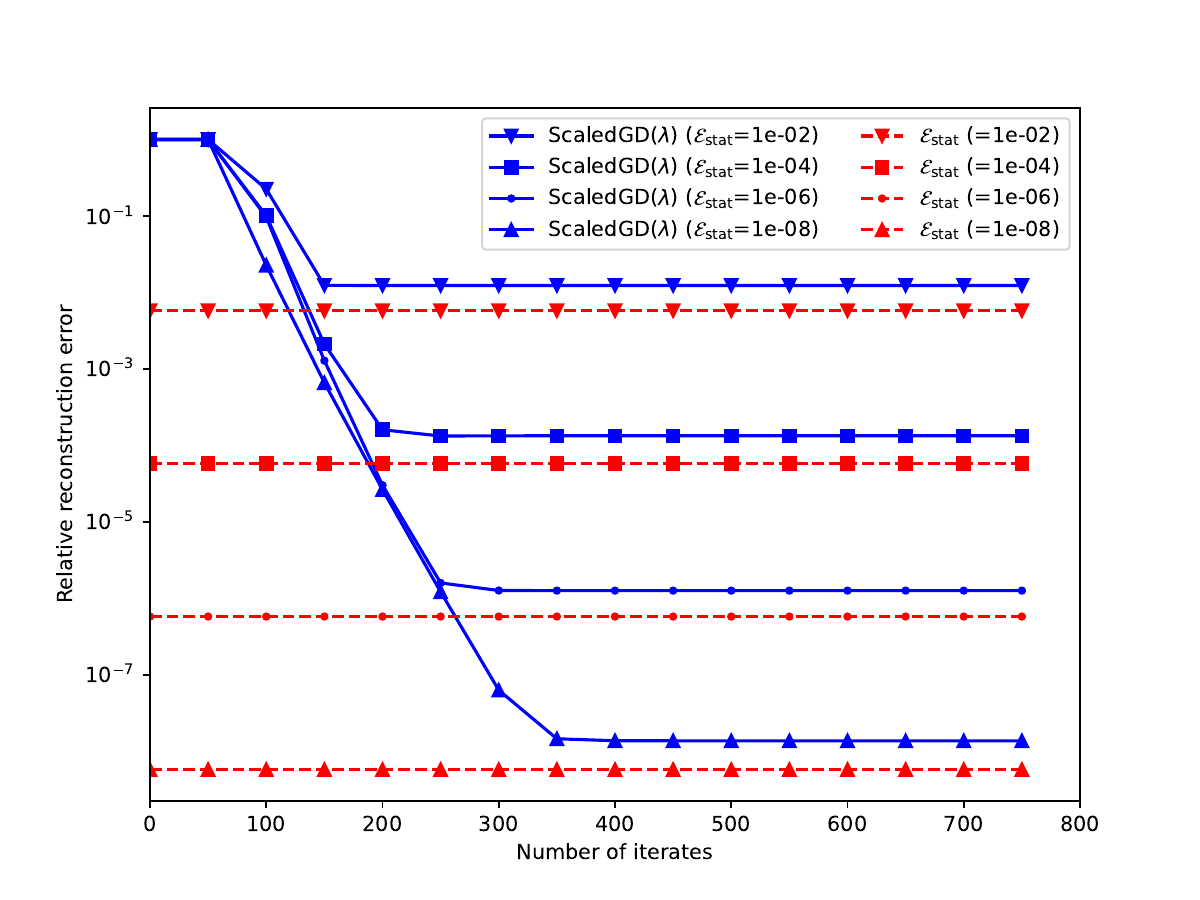}
    \caption{The relative reconstruction error of \myalg versus the number of iterates for \myalg in the noisy setting, where it is observed that the final error of \myalg approaches the minimax error.}
    \label{fig:noisy} 
\end{figure}


\section{Discussions}

This paper demonstrates that an appropriately preconditioned gradient descent method, called \myalg, guarantees an accelerated convergence to the ground truth low-rank matrix in overparameterized low-rank matrix sensing, when initialized from a sufficiently small random initialization. Furthermore, in the case of exact parameterization, our analysis guarantees the fast global convergence of \myalg from a small random initialization. Collectively, this complements and represents a major step forward from prior analyses of \ScaledGD \citep{tong2021accelerating} by allowing overparametrization and small random initialization for noisy and approximately low-rank settings. This works opens up a few exciting future directions that are worth further exploring.

\begin{itemize}

\item {\em Asymmetric case.} Our current analysis is confined to the recovery of low-rank positive semidefinite matrices, with only one factor matrix to be recovered. It remains to generalize this analysis to the recovery of general low-rank matrices with overparameterization.

\item {\em Robust setting.} Many applications encounter corrupted measurements that call for robust recovery algorithms that optimize nonsmooth functions such as the least absolute deviation loss. One such example is the scaled subgradient method \citep{tong2021low}, which is the nonsmooth counterpart of \ScaledGD robust to ill-conditioning, and it'll be interesting to study its performance under overparameterization. 
  
\item {\em Other overparameterized learning models.} Our work provides evidence on the power of preconditioning in accelerating the convergence without hurting generalization in overparameterized low-rank matrix sensing, which is one kind of overparameterized learning models. It will be greatly desirable to extend the insights developed herein to other overparameterized learning models, for example low-rank matrix optimization \citep{boumal2016smooth}, tensors \citep{tong2022scaling,dong2022fast}, and neural networks \citep{wang2021deep}.
  
\end{itemize}

We believe the analysis framework put forth in this paper can be extended to analyze these general issues, by leveraging similar error decompositions and tailoring the treatment to the corresponding measurement or data models, see an overview \cite{ma2024provably} and some recent works \cite{giampouras2024guarantees,diaz2025preconditioned} along this line after the initial version of this paper.

\section*{Acknowledgements}
 
The work of X. Xu and Y.~Chi is supported in part by Office of Naval Research under N00014-19-1-2404, by Air Force Office of Scientific Research under award number FA9550-25-1-0060, and by National Science Foundation under CCF-1901199, DMS-2134080  and ECCS-2126634. The work of C.~Ma is partially supported National Science Foundation via grant DMS-2311127 and DMS CAREER Award 2443867. Any opinions, findings, and conclusions or recommendations expressed in this material are those of the author(s) and do not necessarily reflect the views of the United States Air Force.

\bibliographystyle{apalike} 
\bibliography{bibfileNonconvexScaledGD,bibfileTensor,bibfileOverparam}



\appendix


\section{Preliminaries}
This section collects several preliminary results that are useful in later proofs.
In general, for a matrix $A$, we will denote by $\SVDU_A$ 
the first factor in its compact SVD $A=\SVDU_A\SVDSigma_A \SVDV_A^\T$, unless otherwise specified.
\subsection{Proof of Lemma \ref{lem:high-prob-event}}
\label{sec:pre-random-matrix}

It is a standard result in random matrix theory \citep{vershynin2010nonasym, rudelson2009smallest} that 
an $M\times N$ ($M\ge N$) random matrix  $G_0$ with i.i.d.~standard Gaussian entries satisfies
\begin{subequations}
\begin{align}
\Prob\left(\|G_0\|\le 4\big(\sqrt M + \sqrt N\big)\right)
&\ge 1 - \exp(-M/C),  \label{eq:rmt_1}
\\
\Prob\left(\smin(G_0)\ge \epsilon\big(\sqrt M - \sqrt{N-1}\big)\right)
&\ge 1 - (C\epsilon)^{M-N+1} - \exp(-M/C), \label{eq:rmt_2}
\end{align}
\end{subequations} 
for some universal constant $C>0$ and for any $\epsilon>0$. 
Applying \eqref{eq:rmt_1} to the random matrix $\sqrt{n}G$ 
which is an $n\times r$ random matrix with i.i.d.~standard Gaussian entries, 
we have 
\[\|G\|\le 4(\sqrt{n}+\sqrt{r})/\sqrt{n}\le 8 \]
with probability at least $1-\exp(-n/C)$. 

Turning to the bound on $\smin^{-1}(\Uspec^\T G)$, 
observe that $\sqrt n\Uspec^\T G$ is a $r_\star\times r$ 
random matrix with i.i.d.~standard Gaussian entries, 
thus applying \eqref{eq:rmt_2} to $\sqrt n\Uspec^\T G$ with 
$\epsilon=(2n)^{-C_G+1}$ yields 
$$\smin^{-1}(\Uspec^\T G)\le (2n)^{C_G-1}(\sqrt{r} - \sqrt{r_\star-1})^{-1}
\le (2n)^{C_G-1}(2\sqrt r) \le (2n)^{C_G}$$ 
with probability at least $1-(2n/C)^{-(C_G-1)(r-r_\star+1)}-\exp(-n/C)$. Here, the second inequality follows from 
$$ \frac{1}{\sqrt{r} - \sqrt{r_\star-1}} \leq \frac{1}{\sqrt{r} - \sqrt{r -1}} =\sqrt{r} + \sqrt{r -1}< 2\sqrt{r}. $$
Combining the above two bounds directly implies the desired probability bound 
if we choose $c=1/C$ 
and choose a large $C_G$ such that $C_G\ge 8$ and $C_G-1\ge C_G/2$.

\subsection{Proof of Proposition \ref{prop:decomposition}}
\label{sec:prop-decomposition}
 Using the definitions of $\Signal_t$ and $\Noise_t$, we have
\begin{align*}
X_t = (\Utruth\Utruth^{\top} + \UperpTruth\UperpTruth^{\top}) X_t & =  \Utruth\Signal_t + \UperpTruth \Noise_t\\
&= \Utruth\EssS_t\SVDV_t^\T + \UperpTruth \Noise_t(\SVDV_t\SVDV_t^\T + \Vperp{t}\Vperp{t}^\T)\\
&= \Utruth\EssS_t \SVDV_t^\T + 
  \UperpTruth\Misalign_t \SVDV_t^\T   + \UperpTruth\Overpar_t \Vperp{t}^\T,
\end{align*} 
where in the second line, we used the relation $\EssS_t = \Signal_t\SVDV_t =\SVDU_t \SVDSigma_t \SVDV_t^\T\SVDV_t =\SVDU_t \SVDSigma_t  $ and thus 
\begin{equation}\label{eq:signal_to_esssignal}
\Signal_t = \EssS_t\SVDV_t^\T.
\end{equation}

\subsection{Consequences of RIP}

The first result is a standard consequence of RIP, see, for example~\citet[Lemma 7.3]{stoger2021small}.

\begin{lemma}\label{lem:fro_rip}
Suppose that the linear map $\opA: \operatorname{Sym}_2(\reals^n)\to\reals^m$ satisfies Assumption~\ref{assumption:opA}.
Then we have
\begin{equation*}
  \|(\id-\opAA)(Z)\|\le\delta \|Z\|_{\fro}
\end{equation*}
for any $Z\in\operatorname{Sym}_2(\reals^n)$ with rank at most $r_\star$. Consequently, with $\hat\lambda_1 \ge \cdots \ge \hat\lambda_n$ denoting the eigenvalues of $\opAA(\Mtruth)$, it holds that
\[
|\hat\lambda_i - \sigma_i^2(X_\star)| \le \delta \sqrt{r_\star} \|X_\star\|^2.
\]
\end{lemma}

We need another straightforward consequence of RIP, given by the following lemma.

\begin{lemma}\label{lem:spectral_rip}
Under the same setting as Lemma~\ref{lem:fro_rip}, we have  
\begin{equation*}
  \|(\id-\opAA)(Z)\|\le 2\delta \sqrt{(r\lor r_\star)/r_\star}\|Z\|_{\fro}
  \le\frac{2(r\lor r_\star)\delta }{\sqrt{r_\star}}\|Z\|
\end{equation*}
for any $Z\in\operatorname{Sym}_2(\reals^n)$ with rank at most $r$.
\end{lemma}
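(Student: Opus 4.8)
The plan is to reduce the general rank-$r$ case to the rank-$r_\star$ case already handled by Lemma~\ref{lem:fro_rip}, by splitting $Z$ into a controlled number of mutually orthogonal low-rank pieces. First I would dispose of the easy regime $r\le r_\star$: there $Z$ itself has rank at most $r_\star$, so Lemma~\ref{lem:fro_rip} applies directly and yields $\|(\id-\opAA)(Z)\|\le\delta\|Z\|_{\fro}$, which already implies the claimed bound since $(r\lor r_\star)/r_\star=1$ in this case.

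For the main regime $r>r_\star$, I would exploit that $Z\in\operatorname{Sym}_2(\reals^n)$ admits an orthonormal eigendecomposition $Z=\sum_i\lambda_iv_iv_i^\T$ with at most $r$ nonzero terms. Partitioning the eigenpairs into consecutive groups of size $r_\star$ produces $Z=\sum_{k=1}^N Z_k$ where each $Z_k$ is symmetric of rank at most $r_\star$, the $Z_k$ have pairwise orthogonal supports (so $\|Z\|_{\fro}^2=\sum_k\|Z_k\|_{\fro}^2$), and $N=\lceil r/r_\star\rceil\le 2r/r_\star=2(r\lor r_\star)/r_\star$. Then by the triangle inequality for the spectral norm, Lemma~\ref{lem:fro_rip} on each block, and Cauchy--Schwarz,
\[
  \|(\id-\opAA)(Z)\|\le\sum_{k=1}^N\|(\id-\opAA)(Z_k)\|\le\delta\sum_{k=1}^N\|Z_k\|_{\fro}\le\delta\sqrt{N}\,\|Z\|_{\fro}\le 2\delta\sqrt{(r\lor r_\star)/r_\star}\,\|Z\|_{\fro},
\]
which is the first inequality.

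The second inequality then follows from the elementary norm comparison $\|Z\|_{\fro}\le\sqrt{\rank(Z)}\,\|Z\|\le\sqrt{r\lor r_\star}\,\|Z\|$, which turns the factor $\sqrt{(r\lor r_\star)/r_\star}\,\|Z\|_{\fro}$ into $\big((r\lor r_\star)/\sqrt{r_\star}\big)\,\|Z\|$. There is no genuine obstacle here; the only points that require a little care are the bookkeeping of $\lceil r/r_\star\rceil$ against $2(r\lor r_\star)/r_\star$, and invoking the symmetry of $Z$, which is precisely what guarantees that the blocks $Z_k$ are themselves symmetric (hence eligible for Lemma~\ref{lem:fro_rip}) and mutually orthogonal.
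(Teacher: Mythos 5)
Your proof is correct and follows essentially the same route as the paper: split the eigendecomposition of $Z$ into $\lceil r/r_\star\rceil$ mutually orthogonal blocks of rank at most $r_\star$, apply Lemma~\ref{lem:fro_rip} blockwise, and combine via the triangle inequality and Cauchy--Schwarz, with $\lceil r/r_\star\rceil\le 2r/r_\star$ and $\|Z\|_{\fro}\le\sqrt{r}\|Z\|$ finishing the bookkeeping. The only cosmetic difference is that you treat $r\le r_\star$ as an explicit easy case whereas the paper simply assumes $r\ge r_\star$ without loss of generality.
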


\begin{proof}
Without loss of generality we may assume $r\ge r_\star$, 
thus $r \lor r_\star=r$. 
We claim that it is possible to decompose $Z=\sum_{i\le\lceil r/r_\star\rceil}Z_i$ 
where $Z_i\in\operatorname{Sym}_2(\reals^n)$, $\rank(Z_i)\le r_\star$ and $Z_iZ_j=0$ if $i\ne j$. 
To see why this is the case, notice the spectral decomposition of $Z$ gives $r$ rank-one components 
that are mutually orthogonal, thus we may divide them into $\lceil r/r_\star\rceil$ subgroups
indexed by $i=1,\ldots,\lceil r/r_\star\rceil$,
such that each subgroup contains at most $r_\star$ components. 
Let $Z_i$ be the sum of the components in the subgroup $i$, 
it is easy to check that $Z_i$ has the desired property.

The property of the decomposition yields
\begin{equation}\label{eqn:pf-spectral-rip-decomp}
  \|Z\|_{\fro}^2=\tr(Z^2)=\sum_{i,j\le\lceil r/r_\star\rceil}\tr(Z_i Z_j)
  =\sum_{i\le\lceil r/r_\star\rceil}\|Z_i\|_{\fro}^2.
\end{equation}
But for each $Z_i$, Lemma~\ref{lem:fro_rip} implies
\begin{equation*}
  \|(\id-\opAA)(Z_i)\|\le \delta \|Z_i\|_{\fro}.
\end{equation*}
Summing up for $i\le\lceil r/r_\star\rceil$ yields
\begin{equation*}
  \|(\id-\opAA)(Z)\|\le\sum_{i\le\lceil r/r_\star\rceil}\|(\id-\opAA)(Z_i)\|
  \le \delta \sum_{i\le \lceil r/r_\star\rceil}\|Z_i\|_{\fro}
  \le \delta \sqrt{\lceil r/r_\star\rceil}\,\|Z\|_{\fro},
\end{equation*}
where the last inequality follows from \eqref{eqn:pf-spectral-rip-decomp} 
and from Cauchy-Schwarz inequality.

The first inequality in Lemma~\ref{lem:spectral_rip} follows from the above inequality 
by noting that $\lceil r/r_\star\rceil\le 2r/r_\star$ 
given $r\ge r_\star$ which was assumed in the beginning of the proof. 
The second inequality in Lemma~\ref{lem:spectral_rip} follows from
$\|Z\|_{\fro}\le\sqrt{r}\|Z\|$.
\end{proof}


\subsection{Matrix perturbation results}

The next few results are all on matrix perturbations. We first present a perturbation result on matrix inverse. 
\begin{lemma}\label{prop:inv}
  Assume that $A, B$ are square matrices of the same dimension, and that $A$ is invertible. 
  If $\|A^{-1}B\|\le 1/2$, then
  \begin{equation*}
    (A+B)^{-1}=A^{-1}+A^{-1}BQA^{-1},\quad\text{for some }\|Q\|\le 2.
  \end{equation*}
  Similarly, if $\|BA^{-1}\|\le 1/2$, then we have
  \begin{equation*}
    (A+B)^{-1}=A^{-1}+A^{-1}QBA^{-1},\quad\text{for some }\|Q\|\le 2.
  \end{equation*}
  In particular, if $\|B\|\le \smin(A)/2$, then both of the above equations hold.
\end{lemma}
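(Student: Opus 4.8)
The plan is to factor out the invertible matrix $A$ and argue via a Neumann-type identity. First I would write $A+B = A(I+A^{-1}B)$, so that $(A+B)^{-1} = (I+A^{-1}B)^{-1}A^{-1}$. Since $\|A^{-1}B\|\le 1/2<1$, the matrix $I+A^{-1}B$ is invertible and the geometric-series bound gives $\|(I+A^{-1}B)^{-1}\|\le (1-\|A^{-1}B\|)^{-1}\le 2$. Then I would invoke the elementary identity $(I+E)^{-1} = I - E(I+E)^{-1}$, valid for any $E$ with $I+E$ invertible (checked by right-multiplying both sides by $I+E$), applied with $E=A^{-1}B$. Substituting this into the expression for $(A+B)^{-1}$ yields
\[
  (A+B)^{-1} = A^{-1} - A^{-1}B(I+A^{-1}B)^{-1}A^{-1},
\]
so that the first claim holds with $Q = -(I+A^{-1}B)^{-1}$ and $\|Q\|\le 2$.

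For the second claim I would instead factor on the other side, $A+B = (I+BA^{-1})A$, giving $(A+B)^{-1} = A^{-1}(I+BA^{-1})^{-1}$, and then apply the mirror identity $(I+E)^{-1} = I - (I+E)^{-1}E$ with $E = BA^{-1}$. This produces
\[
  (A+B)^{-1} = A^{-1} - A^{-1}(I+BA^{-1})^{-1}BA^{-1},
\]
so the second claim holds with $Q = -(I+BA^{-1})^{-1}$, and again $\|Q\|\le (1-\|BA^{-1}\|)^{-1}\le 2$ since $\|BA^{-1}\|\le 1/2$.

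Finally, for the ``in particular'' statement, submultiplicativity of the operator norm together with $\|A^{-1}\| = 1/\smin(A)$ gives $\|A^{-1}B\|\le \|A^{-1}\|\,\|B\|\le \smin(A)^{-1}\cdot\smin(A)/2 = 1/2$, and symmetrically $\|BA^{-1}\|\le 1/2$; hence both hypotheses, and therefore both conclusions, are in force. There is essentially no obstacle in this lemma: the only point requiring a little care is that the threshold $1/2$ in the hypothesis is exactly what makes the norm of $Q$ come out as the clean constant $2$ (rather than some larger quantity one would get from a weaker bound $\|A^{-1}B\|<1$), which is why the statement is phrased with $1/2$.
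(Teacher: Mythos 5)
Your proof is correct and follows essentially the same route as the paper: the same resolvent identity $(A+B)^{-1}=A^{-1}-A^{-1}B(I+A^{-1}B)^{-1}A^{-1}$ (and its mirror), the same choice $Q=-(I+A^{-1}B)^{-1}$ with the Neumann-series bound $\|Q\|\le(1-\|A^{-1}B\|)^{-1}\le 2$, and the same submultiplicativity argument via $\|A^{-1}\|=1/\smin(A)$ for the final claim.
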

\begin{proof}
  The claims follow from the identity 
  \[(A+B)^{-1}
    =A^{-1}-A^{-1}B(I+A^{-1}B)^{-1}A^{-1}
    =A^{-1}-A^{-1}(I+BA^{-1})^{-1}BA^{-1}.
  \]
For the first claim when $\|A^{-1}B\|\le 1/2$, we set $Q:= -(I+A^{-1}B)^{-1}$, which satisfies $\|Q \| = \|(I+A^{-1}B)^{-1}\|\le \frac{1}{1-\|A^{-1}B\|} \le 2$. The second claim follows similarly. Finally, we note that when $\|B\|\le \smin(A)/2$, it holds
$$ \|A^{-1}B\| \leq  \frac{1}{\smin(A)}\|B\| \leq \frac{1}{2} \qquad \mbox{and}\qquad  \|BA^{-1}\| \leq \| B\| \frac{1}{\smin(A)} \leq \frac{1}{2}, $$
thus completing the proof.
\end{proof}

Next, we focus on the minimum singular value of certain matrix of form $I+AB$.
\begin{lemma}\label{prop:positive-AB}
  If $A$, $B$ are positive definite matrices of the same size, we have
  \begin{equation*}
    \smin(I+AB)\ge\kappa^{-1/2}(A), \quad \text{where } \kappa(A)\defeq\frac{\|A\|}{\smin(A)}.
  \end{equation*}
\end{lemma}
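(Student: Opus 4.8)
The plan is to exploit the fact that although $AB$ is not symmetric, it is similar to a symmetric positive definite matrix, and then to control $\smin(I+AB)$ through the operator norm of the inverse. Since $A$ is positive definite, its symmetric square root $A^{1/2}$ exists and is positive definite, and one has the elementary identity
\begin{equation*}
  I+AB = A^{1/2}\big(I + A^{1/2}BA^{1/2}\big)A^{-1/2},
\end{equation*}
which is verified by expanding the right-hand side. The point is that $C\defeq I + A^{1/2}BA^{1/2}$ is symmetric, and since $A^{1/2}BA^{1/2}\succeq 0$ (indeed $\succ 0$), we get $C\succeq I$, hence $C$ is invertible with $\|C^{-1}\| = \smin(C)^{-1}\le 1$.

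From the displayed identity, $I+AB$ is invertible with $(I+AB)^{-1} = A^{1/2}C^{-1}A^{-1/2}$. Submultiplicativity of the operator norm then gives
\begin{equation*}
  \big\|(I+AB)^{-1}\big\| \le \|A^{1/2}\|\,\|C^{-1}\|\,\|A^{-1/2}\|
  \le \|A\|^{1/2}\cdot 1\cdot \smin(A)^{-1/2}
  = \kappa(A)^{1/2},
\end{equation*}
where we used $\|A^{1/2}\| = \|A\|^{1/2}$ and $\|A^{-1/2}\| = \smin(A)^{-1/2}$. Finally, recalling that for an invertible matrix $M$ one has $\smin(M) = \|M^{-1}\|^{-1}$, we conclude $\smin(I+AB) = \|(I+AB)^{-1}\|^{-1} \ge \kappa(A)^{-1/2}$, as claimed.

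There is no genuine obstacle in this argument; the only ``trick'' is recognizing that $AB$ is conjugate (via $A^{1/2}$) to the symmetric positive definite matrix $I + A^{1/2}BA^{1/2}$, which immediately moves the problem into the well-behaved symmetric regime where $\smin$ coincides with the smallest eigenvalue and is bounded below by $1$. Everything else is bookkeeping with operator-norm inequalities.
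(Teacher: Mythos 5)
Your proof is correct and uses the same key idea as the paper: the conjugation identity $I+AB=A^{1/2}(I+A^{1/2}BA^{1/2})A^{-1/2}$ together with $I+A^{1/2}BA^{1/2}\succeq I$. The paper concludes via $\smin(XYZ)\ge\smin(X)\smin(Y)\smin(Z)$ rather than passing to the inverse and using submultiplicativity of the operator norm, but these are equivalent bookkeeping.
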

\begin{proof}
  Writing $I+AB=A^{1/2}(I+A^{1/2}BA^{1/2})A^{-1/2}$, 
  we obtain
  \[\smin(I+AB)\ge\smin(A^{1/2})\smin(A^{-1/2})
  \smin(I+A^{1/2}BA^{1/2}).
  \]
  The proof is completed by noting that  $\smin(A^{1/2})=\smin^{1/2}(A)$, $\smin(A^{-1/2})=\|A\|^{-1/2}$, 
  and that $\smin(I+A^{1/2}BA^{1/2})\ge 1$ 
  since $A^{1/2}BA^{1/2}$ is positive semidefinite.
\end{proof}

The last result still concerns the minimum singular value of a matrix of interest. 
\begin{lemma}\label{lem:la-aux}
  There exists a universal constant $c_{\ref{lem:la-aux}}>0$ such that if
   $\Lambda$ is a positive definite matrix obeying $\|\Lambda\|\le c_{\ref{lem:la-aux}}$ 
  and $\smin(Y)\le 1/3$, then for any $\eta\le c_{\ref{lem:la-aux}}$ we have
  \begin{equation}
    \smin\Big(\!\left( (1-\eta) I +\eta(YY^\T+\Lambda)^{-1}\right)Y\Big)
    \ge\left(1+\frac{\eta}6\right)\smin(Y).
  \end{equation}
\end{lemma}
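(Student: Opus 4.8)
\emph{Proof strategy.} The tempting bound $\smin(MY)\ge\smin(M)\smin(Y)$ with $M\defeq(1-\eta)I+\eta(YY^\T+\Lambda)^{-1}$ is useless here, because $\|Y\|$ can be large and then $\smin(M)=(1-\eta)+\eta/\|YY^\T+\Lambda\|<1$. The real mechanism is that the preconditioner $(YY^\T+\Lambda)^{-1}$ inflates exactly the small singular directions of $Y$, and one must spend the hypothesis $\smin(Y)\le 1/3$ to exploit this. The plan is to diagonalize via the SVD of $Y$, reduce the claim to a scalar inequality, and deal with the one subtlety — that $\Lambda$ need not commute with $YY^\T$ — by an operator-monotonicity estimate.

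First I would dispose of the degenerate case: if $Y$ (which is square in the intended application) is rank-deficient, then $\smin(Y)=0$ and the claim is trivial, so assume $Y$ has full rank and write $Y=U\Sigma V^\T$ with $U,V$ orthogonal, $\Sigma=\diag(\sigma_1\ge\cdots\ge\sigma_r>0)$ and $\sigma_r=\smin(Y)\le 1/3$. Then $YY^\T=U\Sigma^2U^\T$, so setting $\widetilde\Lambda\defeq U^\T\Lambda U$ (positive definite, $\|\widetilde\Lambda\|=\|\Lambda\|\le c_{\ref{lem:la-aux}}$) one has $M=U\big[(1-\eta)I+\eta(\Sigma^2+\widetilde\Lambda)^{-1}\big]U^\T$, hence $MY=UP\Sigma V^\T$ with $P\defeq(1-\eta)I+\eta(\Sigma^2+\widetilde\Lambda)^{-1}$, and therefore $\smin(MY)^2=\lambda_{\min}(\Sigma P^2\Sigma)$.

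Next I would lower bound $\Sigma P^2\Sigma$ by a diagonal matrix. Since $(1-\eta)I$ commutes with $(\Sigma^2+\widetilde\Lambda)^{-1}$ and all resulting terms are PSD (as $\eta\le c_{\ref{lem:la-aux}}<1$), $P^2\succeq(1-\eta)^2I+2\eta(1-\eta)(\Sigma^2+\widetilde\Lambda)^{-1}$; and from $\widetilde\Lambda\preceq c_{\ref{lem:la-aux}} I$, operator monotonicity of matrix inversion gives $(\Sigma^2+\widetilde\Lambda)^{-1}\succeq(\Sigma^2+c_{\ref{lem:la-aux}} I)^{-1}$. Conjugating by the diagonal matrix $\Sigma$ then yields
\[
  \Sigma P^2\Sigma\;\succeq\;(1-\eta)^2\Sigma^2+2\eta(1-\eta)\,\Sigma(\Sigma^2+c_{\ref{lem:la-aux}} I)^{-1}\Sigma\;=\;\diag\big(\phi(\sigma_1),\dots,\phi(\sigma_r)\big),
\]
where $\phi(\sigma)\defeq(1-\eta)^2\sigma^2+\tfrac{2\eta(1-\eta)\sigma^2}{\sigma^2+c_{\ref{lem:la-aux}}}$. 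A short computation shows $\phi$ is nondecreasing on $[0,\infty)$, so $\smin(MY)^2\ge\min_i\phi(\sigma_i)=\phi(\sigma_r)=\phi(\smin(Y))$.

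Finally, the proof reduces to the scalar inequality $\phi(\sigma_r)\ge(1+\eta/6)^2\sigma_r^2$, i.e. $(1-\eta)^2+\tfrac{2\eta(1-\eta)}{\sigma_r^2+c_{\ref{lem:la-aux}}}\ge(1+\eta/6)^2$; since the left-hand side decreases in both $\sigma_r$ and $c_{\ref{lem:la-aux}}$, it is enough to check it at $\sigma_r=1/3$ with $c_{\ref{lem:la-aux}}$ maximal, where after rearranging it becomes $\tfrac{2(1-\eta)}{\sigma_r^2+c_{\ref{lem:la-aux}}}\ge\tfrac73-\tfrac{35\eta}{36}$ — comfortably true once $c_{\ref{lem:la-aux}}$ is a sufficiently small absolute constant (e.g. $c_{\ref{lem:la-aux}}=1/9$) and $\eta\le c_{\ref{lem:la-aux}}$; taking square roots concludes. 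The only real obstacle is the non-commutativity of $\Sigma^2$ and $\widetilde\Lambda$, and the containment $(\Sigma^2+\widetilde\Lambda)^{-1}\succeq(\Sigma^2+c_{\ref{lem:la-aux}} I)^{-1}$ is precisely what lets everything diagonalize simultaneously at the cost of only a harmless constant in the final scalar bound.
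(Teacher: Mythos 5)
Your proof is correct, and it takes a genuinely different and substantially shorter route than the paper's. The paper diagonalizes $Z+\Lambda$ (with $Z=YY^\T$), splits the coordinates into blocks according to whether the eigenvalues of $Z+\Lambda$ exceed a threshold $K=1/4$, and then compares $\lambda_{\min}$ of the original and preconditioned matrices via two applications of the Schur complement lemma plus a perturbation estimate for the inverse of the lower-right block --- roughly two pages of case analysis. You instead diagonalize in the eigenbasis of $YY^\T$ itself (the SVD of $Y$), which is the natural basis for a statement about $\smin(Y)$: writing $MY=UP\Sigma V^\T$ with $P=(1-\eta)I+\eta(\Sigma^2+\widetilde\Lambda)^{-1}$, you discard the PSD term $\eta^2(\Sigma^2+\widetilde\Lambda)^{-2}$ from $P^2$, replace $\widetilde\Lambda$ by $c_{\ref{lem:la-aux}}I$ via operator anti-monotonicity of inversion (the same fact the paper cites from Bhatia), and land on a diagonal lower bound $\Sigma P^2\Sigma\succeq\diag(\phi(\sigma_i))$ with $\phi$ nondecreasing, so everything reduces to the scalar inequality $(1-\eta)^2+2\eta(1-\eta)/(\sigma_r^2+c_{\ref{lem:la-aux}})\ge(1+\eta/6)^2$ at $\sigma_r=1/3$, which holds with ample room. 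Every step checks out: the expansion of $P^2$ is exact because $(1-\eta)I$ commutes with everything, conjugation by $\Sigma$ preserves the semidefinite order, and the hypothesis $\smin(Y)\le 1/3$ is spent exactly where it must be --- to keep the denominator $\sigma_r^2+c_{\ref{lem:la-aux}}$ small enough that the preconditioner's gain $2\eta(1-\eta)/(\sigma_r^2+c_{\ref{lem:la-aux}})$ overcomes the loss $(1-\eta)^2<1$. The only thing the paper's heavier machinery might buy is finer information in regimes where $\Lambda$ is not uniformly small, but since the hypothesis is precisely $\|\Lambda\|\le c_{\ref{lem:la-aux}}$, your coarser bound $\widetilde\Lambda\preceq c_{\ref{lem:la-aux}}I$ loses nothing; your argument would in fact support a better constant than $1/6$.
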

\begin{proof}
Denote $Z=YY^\T$ and let $\SVDU\SVDSigma\SVDU^\T=Z+\Lambda$ 
be the spectral decomposition of $Z+\Lambda$. 
By a coordinate transform one may assume $Z+\Lambda=\Sigma$.
It suffices to show
\begin{equation}\label{eqn:apd:smin-goal}
  \lambda_{\min}\Big(\!\left( (1-\eta)I+\eta\Sigma^{-1} \right) Z 
    \left( (1-\eta)I+\eta\Sigma^{-1} \right)\!\Big)
  \ge\left(1+\frac16\eta\right)^2\lambda_{\min}(Z).
\end{equation}

For simplicity we denote $\zeta=\lambda_{\min}(Z)$, 
which is by assumption smaller than $1/9$. 
Fix $K = 1/4$ so that $K\ge 2\zeta+ 4c_{\ref{lem:la-aux}}$ by choosing $c_{\ref{lem:la-aux}}$ to be small enough. 
By permuting coordinates we may further assume that 
the diagonal matrix $\Sigma$ is of the following form:
\begin{equation}\label{eqn:apd:Sigma-decomp}
  \SVDSigma=\begin{bmatrix}
    \SVDSigma_{\le K} & \\
    & \SVDSigma_{>K}
  \end{bmatrix},
\end{equation}
where $\SVDSigma_{\le K}$, $\SVDSigma_{> K}$ are diagonal matrices 
such that $\lambda_{\max}(\SVDSigma_{\le K})\le K$ 
and $\lambda_{\min}(\SVDSigma_{>K})>K$. It suffices to consider the case where $\SVDSigma_{>K}$ is not degenerate, because otherwise $\lambda_{\max}(\Sigma) \leq K \leq 1/2$, and the desired (\ref{eqn:apd:smin-goal}) follows as
\begin{align*}
\lambda_{\min}\Big(\!\left( (1-\eta)I+\eta\Sigma^{-1} \right) Z 
    \left( (1-\eta)I+\eta\Sigma^{-1} \right)\!\Big) \geq \big(1 - \eta + \eta\lambda_{\max}^{-1}(\Sigma)\big)^2\lambda_{\min}(Z) \geq (1+\eta)^2\lambda_{\min}(Z).
\end{align*}
For the rest of the proof, we assume the block corresponding to $\SVDSigma_{>K}$ is not degenerate.

Divide $Z$ into blocks of the same shape as \eqref{eqn:apd:Sigma-decomp}:
\begin{equation}
  Z=\begin{bmatrix}
    Z_0 & A \\
    A^\T & Z_1
  \end{bmatrix}.
\end{equation}
The purpose of such division is to facilitate 
computation of minimum eigenvalues by Schur's complement lemma.
For preparation, we make a few simple observations. 
Since $Z=\Sigma-\Lambda$, we see that $A$ being an off-diagonal submatrix of $Z$ satisfies 
$\|A\|\le\|\Lambda\|\le  c_{\ref{lem:la-aux}} $, 
and similarly $\|Z_0-\SVDSigma_{\le K}\|\le  c_{\ref{lem:la-aux}} $, $\|Z_1-\SVDSigma_{>K}\|\le  c_{\ref{lem:la-aux}} $.
In particular, we have
\begin{equation}\label{eqn:la-aux-Z1}
\lambda_{\min}(Z_1)
\ge\lambda_{\min}(\SVDSigma_{>K})- c_{\ref{lem:la-aux}} >K- c_{\ref{lem:la-aux}} 
\ge 2\zeta + 3c_{\ref{lem:la-aux}} > \zeta,
\end{equation}
which implies $Z_1-\zeta I$ is positive definite and invertible.
Thus by Schur's complement lemma, $Z\succeq\zeta I$ is equivalent to
\begin{equation}
  \label{eqn:apd:smin-condition-original}
  Z_0-\zeta I-A(Z_1-\zeta I)^{-1}A^\T\succeq 0,
\end{equation}
which provides an analytic characterization for the minimum eigenvalue $\zeta$ of $Z$. 

The rest of the proof follows from the following steps: we will first show again by Schur's complement lemma  
that \eqref{eqn:apd:smin-goal} admits a similar analytic characterization. 
More precisely, denoting $\zeta'=(1+\frac{\eta}{6})^2\zeta$, 
$\SVDSigma_0=(1-\eta) I +\eta\SVDSigma_{\le K}^{-1}$ 
and $\SVDSigma_1= (1-\eta) I +\eta\SVDSigma_{> K}^{-1}$, 
then \eqref{eqn:apd:smin-goal} is equivalent to
\begin{equation}
  \label{eqn:apd:smin-condition-new}
  Z_0 - \zeta'\SVDSigma_0^{-2} - A(Z_1-\zeta'\SVDSigma_1^{-2})^{-1}A^\T\succeq 0.
\end{equation}
After proving they are equivalent, we will prove that~\eqref{eqn:apd:smin-condition-new} holds as long as the following sufficient condition holds
\begin{equation}
  \label{eqn:apd:smin-condition-new-tmp0}
  Z_0-(1+3\eta)^{-2}\zeta'I
  -A(Z_1-\zeta I)^{-1}A^\T
  -10\eta\zeta A(Z_1-\zeta I)^{-2}A^\T\succeq 0.
\end{equation}
In the last step, we establish the above sufficient condition to complete the proof.

\paragraph{Step 1: equivalence between~\eqref{eqn:apd:smin-goal} 
and~\eqref{eqn:apd:smin-condition-new}.} First notice that
\begin{equation}
  \left( (1-\eta) I  +\eta\Sigma^{-1} \right)Z \left( (1-\eta) I  +\eta\Sigma^{-1} \right)
  =\begin{bmatrix}
    \SVDSigma_0 Z_0\SVDSigma_0 & \SVDSigma_0 A\SVDSigma_1 \\
    \SVDSigma_1 A^\T\SVDSigma_0 & \SVDSigma_1 Z_1\SVDSigma_1
  \end{bmatrix}.
\end{equation}
In order to invoke Schur's complement lemma, we need to verify $\SVDSigma_1 Z_1\SVDSigma_1 - \zeta' I\succ 0$. 
Observe that by definition we have
\begin{equation}\label{eqn:apd:multiplier}
  \SVDSigma_0\succeq\big(1+(K^{-1}-1)\eta\big)I=(1+3\eta)I,\quad \SVDSigma_1\succeq(1-\eta)I.
\end{equation}
Hence
\[\SVDSigma_1 Z_1\SVDSigma_1 - \zeta' I
\succeq (1-\eta)^2 Z_1 - \left(1+\frac16\eta\right)^2\zeta I
\succ 2(1-\eta)^2\zeta I - \left(1+\frac16\eta\right)^2\zeta I
\succ 0,
\]
where in the second inequality we used $Z_1-2\zeta I\succ 0$ proved in \eqref{eqn:la-aux-Z1}, 
and in the last inequality we used $\eta\le c_\eta$ with $c_\eta$ sufficiently small.
This completes the verification that $\SVDSigma_1 Z_1\SVDSigma_1 - \zeta' I\succ 0$. 
Now, invoking Schur's complement lemma yields that~\eqref{eqn:apd:smin-goal} 
is equivalent to
\begin{equation*}
  \SVDSigma_0 Z_0\SVDSigma_0 - \zeta' I
  - \SVDSigma_0 A\SVDSigma_1(\SVDSigma_1 Z_1\SVDSigma_1-\zeta' I)^{-1}\SVDSigma_1 A^\T\SVDSigma_0
  \succeq 0,
\end{equation*}
which simplifies easily to~\eqref{eqn:apd:smin-condition-new}, as claimed.

\paragraph{Step 2: establishing~\eqref{eqn:apd:smin-condition-new-tmp0} as a sufficient condition for~\eqref{eqn:apd:smin-condition-new}.}  
By \eqref{eqn:apd:multiplier}, it follows that 
\begin{align}
  (Z_1-\zeta'\SVDSigma_1^{-2})^{-1}
  &\preceq (Z_1-(1-\eta)^{-2}\zeta'I)^{-1}
  \nonumber\\
  &= \Big(Z_1-\zeta I - \big((1-\eta)^{-2}\zeta'-\zeta\big) I\Big)^{-1},
  \label{eqn:Z1-perturb-bound}
\end{align}
where we used the well-known fact that $A\preceq B$ implies $B^{-1}\preceq A^{-1}$ for positive definite matrices $A$ and $B$ (cf. \cite[Proposition~V.1.6]{bhatia1997matrix}).
We aim to apply Lemma~\ref{prop:inv} to control the above term, by treating $((1-\eta)^{-2}\zeta'-\zeta) I$ as a perturbation term. For this purpose we need to verify
\begin{equation}\label{eq:lemma_inv_pre}
\left|(1-\eta)^{-2}\zeta'-\zeta\right|\le \frac12\lambda_{\min}(Z_1-\zeta I).
\end{equation} 
Given $\eta\le c_\eta$ with sufficiently small $c_\eta$, we have 
$(1-\eta)^{-2}\le 1+3\eta$, $(1+\frac16\eta)^2\le 1+\eta$, and 
$(1+3\eta)(1+\eta)\le 1+5\eta$, thus 
\[
  0\le (1-\eta)^{-2}\big(1+\frac16\eta\big)^2\zeta-\zeta 
  = (1-\eta)^{-2}\zeta'-\zeta
  \le (1+3\eta)(1+\eta)\zeta - \zeta
  \le 5\eta\zeta < \zeta/2,
\]
where the last inequality follows from $c_\eta\le 1/10$. On the other hand,
invoking \eqref{eqn:la-aux-Z1}, we obtain 
$$ \frac{1}{2} \zeta \leq \frac12\big(\lambda_{\min}(Z_1)-\zeta\big)
=\frac{1}{2}\lambda_{\min}(Z_1-\zeta I),$$ 
which verifies \eqref{eq:lemma_inv_pre}.
Thus we may apply Lemma~\ref{prop:inv} to show
\begin{align*}
  \left\|(Z_1-\zeta I)
  \left((Z_1-\zeta I)^{-1}-\left(Z_1-\zeta I - ((1-\eta)^{-2}\zeta'-\zeta) I\right)^{-1}\right)
  (Z_1-\zeta I)
  \right\|
  \le 2\left|(1-\eta)^{-2}\zeta'-\zeta\right|
  \le 10\eta\zeta,
\end{align*}
therefore
\begin{equation*}
  \left(Z_1-\zeta I - ((1-\eta)^{-2}\zeta'-\zeta) I\right)^{-1}
  \preceq (Z_1-\zeta I)^{-1}+10\eta\zeta(Z_1-\zeta I)^{-2}.
\end{equation*}
Together with \eqref{eqn:Z1-perturb-bound}, this implies
\begin{equation}\label{eqn:apd:schur-factor-approx}
  (Z_1-\zeta'\SVDSigma_1^{-2})^{-1}
  \preceq (Z_1-\zeta I)^{-1}+10\eta\zeta(Z_1-\zeta I)^{-2}. 
\end{equation}
Combining \eqref{eqn:apd:multiplier} and \eqref{eqn:apd:schur-factor-approx}, 
we see that a sufficient condition for \eqref{eqn:apd:smin-condition-new} to hold is \eqref{eqn:apd:smin-condition-new-tmp0}.

\paragraph{Step 3: establishing \eqref{eqn:apd:smin-condition-new-tmp0}.}
 It is clear that~\eqref{eqn:apd:smin-condition-new-tmp0} is implied by
\begin{equation}\label{eqn:smin-condition-tmp1}
  \zeta I-(1+3\eta)^{-2}\zeta'I
  -10\eta\zeta A(Z_1-\zeta I)^{-2}A^\T\succeq 0,
\end{equation}
by leveraging the relation
$Z_0\succeq \zeta I+A(Z_1-\zeta I)^{-1}A^\T$ from \eqref{eqn:apd:smin-condition-original}.

Hence, it boils down to prove \eqref{eqn:smin-condition-tmp1}. Recalling $\|A\|\le c_{\ref{lem:la-aux}}$, and from \eqref{eqn:la-aux-Z1}, 
we know 
$\lambda_{\min}(Z_1-\zeta I)\ge K-c_{\ref{lem:la-aux}}-\zeta\ge\zeta+3c_{\ref{lem:la-aux}}$. 
Thus 
\[\|A(Z_1-\zeta I)^{-2}A^\T\|\le\|A\|^2\|(Z_1-\zeta I)^{-2}\|
\le c_{\ref{lem:la-aux}}^2/(\zeta+3c_{\ref{lem:la-aux}})^2\le 1/9.
\] 
Therefore, to prove \eqref{eqn:smin-condition-tmp1} it suffices to show
\begin{equation}\label{eqn:smin-condition-final}
  \zeta - (1+3\eta)^{-2}\zeta'
  \ge \frac{10}9\eta\zeta.
\end{equation} 
It is easy to verify that the above inequality holds
for our choice $\zeta'=(1+\frac16\eta)^2\zeta$. 
In fact, given $\eta\le c_\eta$ for sufficiently small $c_\eta$,
we have $(1+3\eta)^{-2}\le 1-4\eta$, $(1+\frac16\eta)^2\le 1+\eta$. 
These together yield
\begin{equation*}
  \zeta - (1+3\eta)^{-2}\big(1+\frac16\eta\big)^2\zeta
  \ge \zeta - (1-4\eta)(1+\eta)\zeta
  = 3\eta\zeta + 4\eta^2\zeta
  \ge 3\eta\zeta\ge\frac{10}{9}\eta\zeta,
\end{equation*}
establishing \eqref{eqn:smin-condition-final} as desired. 
\end{proof}
\section{Decompositions of key terms}
\label{sec:approx}

In this section, we first present a useful bound of a key error quantity
 \begin{equation}\label{eq:defn-Delta}
  \Delta_t \defeq (\id-\opAA)(\myX_t\myX_t^\T-\Mtruth),
\end{equation}
where $\myX_t$ is the iterate of \myalg given in \eqref{eqn:update}.

\begin{lemma}\label{lem:Delta-bound}
Suppose $\opA(\cdot)$ satisfies Assumption~\ref{assumption:opA}.
For any $t\ge 0$ such that~\eqref{subeq:condition-t-1} holds, 
we have
\begin{equation}\label{eqn:Delta-bound}
  \|\Delta_t\|\le
  8\delta\left(\|\EssS_t\EssS_t^\T-\SigmaTruth^2\|_{\fro}
  +\|\EssS_t\|\|\Misalign_t\|_{\fro}
  +n\|\Overpar_t\|^2\right).
\end{equation}
In particular, 
there exists some constant $c_{\ref{lem:Delta-bound}}\lesssim c_\delta/c_\lambda$ such that
\begin{equation}
  \label{eqn:Delta-bound-coarse}
  \|\Delta_t\|\le 16(C_{\ref{lem:p1.5}.a}+1)^2c_\delta\kappa^{-2C_\delta/3}\|\Xtruth\|^2
  \le c_{\ref{lem:Delta-bound}}\kappa^{-2C_\delta/3}\|\Xtruth\|^2.
\end{equation}
\end{lemma}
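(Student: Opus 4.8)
The plan is to start from the iterate decomposition of Proposition~\ref{prop:decomposition}. Writing $\myX_t = \Utruth\EssS_t\SVDV_t^\T + \UperpTruth\Misalign_t\SVDV_t^\T + \UperpTruth\Overpar_t\Vperp{t}^\T$ and expanding $\myX_t\myX_t^\T$, the relations $\Utruth^\T\UperpTruth = 0$, $\SVDV_t^\T\SVDV_t = I$, $\Vperp{t}^\T\Vperp{t} = I$, $\SVDV_t^\T\Vperp{t} = 0$ annihilate every cross term between the overparametrization summand and the other two, so with $\Mtruth = \Utruth\SigmaTruth^2\Utruth^\T$ we obtain $\myX_t\myX_t^\T - \Mtruth = \Phi_t + \Psi_t$, where $\Psi_t \defeq \UperpTruth\Overpar_t\Overpar_t^\T\UperpTruth^\T$ and $\Phi_t \defeq \widetilde{W}_t\widetilde{W}_t^\T - \Xtruth\Xtruth^\T$ with $\widetilde{W}_t \defeq \Utruth\EssS_t + \UperpTruth\Misalign_t$; equivalently, in the orthonormal frame $[\Utruth\ \UperpTruth]$, $\Phi_t$ is the symmetric matrix with blocks $\EssS_t\EssS_t^\T - \SigmaTruth^2$, $\EssS_t\Misalign_t^\T$, $\Misalign_t\EssS_t^\T$, $\Misalign_t\Misalign_t^\T$. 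In particular $\Phi_t$, being a difference of two rank-$\le r_\star$ matrices, is symmetric of rank at most $2r_\star$, while $\Psi_t$ is PSD of rank at most $r - r_\star$ (and is zero when $r = r_\star$).

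I would then bound $\|\Delta_t\| \le \|(\id-\opAA)(\Phi_t)\| + \|(\id-\opAA)(\Psi_t)\|$ via Lemma~\ref{lem:spectral_rip}. For the first term, $\rank(\Phi_t) \le 2r_\star$ gives $\|(\id-\opAA)(\Phi_t)\| \le 2\sqrt{2}\,\delta\,\|\Phi_t\|_{\fro}$; computing $\|\Phi_t\|_{\fro}$ blockwise yields $\|\Phi_t\|_{\fro}^2 = \|\EssS_t\EssS_t^\T - \SigmaTruth^2\|_{\fro}^2 + 2\|\EssS_t\Misalign_t^\T\|_{\fro}^2 + \|\Misalign_t\Misalign_t^\T\|_{\fro}^2$, hence $\|\Phi_t\|_{\fro} \le \|\EssS_t\EssS_t^\T - \SigmaTruth^2\|_{\fro} + \sqrt{2}\,\|\EssS_t\|\,\|\Misalign_t\|_{\fro} + \|\Misalign_t\|\,\|\Misalign_t\|_{\fro}$, and the last term is absorbed into the middle one because $\|\Misalign_t\| \le \|\Misalign_t\EssS_t^{-1}\SigmaTruth\|\,\|\SigmaTruth^{-1}\|\,\|\EssS_t\| \ll \|\EssS_t\|$ by (\ref{eqn:p1.5-3}) (whose governing hypothesis (\ref{subeq:condition-t-1}) is in force). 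For the second term, $\rank(\Psi_t) \le r - r_\star$ gives $\|(\id-\opAA)(\Psi_t)\| \le 2\delta\sqrt{r/r_\star}\,\|\Psi_t\|_{\fro} \le 2\delta\sqrt{r/r_\star}\,\sqrt{r-r_\star}\,\|\Overpar_t\|^2 \le 2\delta n\|\Overpar_t\|^2$, using $\|\Overpar_t\Overpar_t^\T\|_{\fro} \le \sqrt{r-r_\star}\,\|\Overpar_t\|^2$ together with $r \le n$ and $r_\star \ge 1$. Since $2\sqrt{2} \le 8$, $2\sqrt{2}(1+\sqrt{2}) \le 8$, and $2 \le 8$, collecting the three estimates gives (\ref{eqn:Delta-bound}).

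For the coarse bound (\ref{eqn:Delta-bound-coarse}) I would substitute the consequences of Lemma~\ref{lem:p1.5} into (\ref{eqn:Delta-bound}): by (\ref{eqn:p1.5-4}), $\|\EssS_t\EssS_t^\T - \SigmaTruth^2\|_{\fro} \le \sqrt{r_\star}\,(\|\EssS_t\|^2 + \|\Xtruth\|^2) \le \sqrt{r_\star}\,(C_{\ref{lem:p1.5}.a}+1)^2\kappa^2\|\Xtruth\|^2$; (\ref{eqn:p1.5-3}) and (\ref{eqn:p1.5-4}) together give $\|\EssS_t\|\,\|\Misalign_t\|_{\fro} \lesssim \sqrt{r_\star}\,c_{\ref{lem:p1.5}}C_{\ref{lem:p1.5}.a}^2\,\kappa^{3-C_\delta/2}\|\Xtruth\|^2$; and (\ref{eqn:p1.5-1-var}) with (\ref{eqn:p1.5-4}) makes $n\|\Overpar_t\|^2 \le n\,(C_{\ref{lem:p1.5}.b}'\kappa n)^{-2C_{\ref{lem:p1.5}.b}'}C_{\ref{lem:p1.5}.a}^2\kappa^2\|\Xtruth\|^2$ super-polynomially small. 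Multiplying by $8\delta$ and invoking $\delta \le c_\delta r_\star^{-1/2}\kappa^{-C_\delta}$ from Assumption~\ref{assumption:opA}, the first term is at most $8c_\delta(C_{\ref{lem:p1.5}.a}+1)^2\kappa^{2-C_\delta}\|\Xtruth\|^2$ and the remaining two carry strictly smaller powers of $\kappa$; since $2 - C_\delta \le -2C_\delta/3$ and $3 - C_\delta/2 \le -2C_\delta/3$ once $C_\delta$ is a sufficiently large universal constant, the total is $\le 16(C_{\ref{lem:p1.5}.a}+1)^2 c_\delta\,\kappa^{-2C_\delta/3}\|\Xtruth\|^2 \le c_{\ref{lem:Delta-bound}}\,\kappa^{-2C_\delta/3}\|\Xtruth\|^2$, with $c_{\ref{lem:Delta-bound}} \lesssim c_\delta/c_\lambda^3$ following from $C_{\ref{lem:p1.5}.a} \lesssim c_\lambda^{-1/2}$ and $c_{\ref{lem:p1.5}} \lesssim c_\delta/c_\lambda^3$.

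Most of this is bookkeeping once the decomposition is written down. The one genuinely substantive point is the rank accounting that forces us to apply Lemma~\ref{lem:spectral_rip} --- not the sharper Lemma~\ref{lem:fro_rip} --- to the overparametrization block $\Psi_t$: this is where the $\sqrt{r/r_\star}$ factor enters, and combined with $\|\Overpar_t\Overpar_t^\T\|_{\fro} \le \sqrt{r-r_\star}\|\Overpar_t\|^2$ it produces the dimensional factor $n$ in the term $n\|\Overpar_t\|^2$ rather than the smaller $\|\Overpar_t\|_{\fro}^2$. A minor secondary chore is verifying that the single constant $8$ in (\ref{eqn:Delta-bound}) simultaneously absorbs the $2\sqrt{2}$ loss of Lemma~\ref{lem:spectral_rip} and the quadratic misalignment contribution $\Misalign_t\Misalign_t^\T$.
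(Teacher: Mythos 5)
Your proposal is correct and follows essentially the same route as the paper: both start from the decomposition of Proposition~\ref{prop:decomposition}, apply the rank-aware RIP consequence Lemma~\ref{lem:spectral_rip} to the low-rank part(s) and to the overparametrization block $\UperpTruth\Overpar_t\Overpar_t^\T\UperpTruth^\T$ separately (the latter yielding the dimensional factor $n\|\Overpar_t\|^2$), and then substitute the bounds of \eqref{subeq:condition-t-1} together with \eqref{eqn:delta-cond} to get the coarse estimate. The only (cosmetic) difference is that you fold the signal, cross, and misalignment terms into a single rank-$2r_\star$ matrix $\Phi_t$ and invoke the RIP lemma once on it, whereas the paper treats these as three separate terms $T_1,T_2,T_3$; the constants work out either way.
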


\begin{proof}
The decomposition \eqref{eqn:x-decomp-final} in Proposition~\ref{prop:decomposition} yields 
\begin{equation*}
  \myX_t\myX_t^\T = \Utruth\EssS_t\EssS_t^\T\Utruth^\T 
  + \Utruth\EssS_t\Misalign_t^\T\UperpTruth^\T
  + \UperpTruth\Misalign_t\EssS_t^\T\Utruth^\T
  + \UperpTruth\Misalign_t\Misalign_t^\T\UperpTruth^\T
  + \UperpTruth\Overpar_t\Overpar_t^\T\UperpTruth^\T.
\end{equation*}
Since $\Mtruth=\Utruth\SigmaTruth^2\Utruth^\T$, we have
\begin{equation}\label{eqn:loss-decomp-primal}
  \myX_t\myX_t^\T-\Mtruth = \underbrace{\Utruth(\EssS_t\EssS_t^\T-\SigmaTruth^2)\Utruth^\T}_{=:T_1}
  + \underbrace{\Utruth\EssS_t\Misalign_t^\T\UperpTruth^\T
  + \UperpTruth\Misalign_t\EssS_t^\T\Utruth^\T}_{=:T_2}
  + \underbrace{\UperpTruth\Misalign_t\Misalign_t^\T\UperpTruth^\T}_{=:T_3}
  + \underbrace{\UperpTruth\Overpar_t\Overpar_t^\T\UperpTruth^\T}_{=:T_4}.
\end{equation}
Note that $\Utruth\in\reals^{n\times r_\star}$ is of rank $r_\star$,
thus $T_1$ has rank at most $r_\star$ and $T_2$ has rank at most $2r_\star$. Similarly, since $\Misalign_t=\Noise_t\SVDV_t$ while $\SVDV_t\in\reals^{r\times r_\star}$ is of rank $r_\star$, $T_3$ has rank at most $r_\star$.
It is also trivial that $T_4$ as an $n\times n$ matrix has rank at most $n$.
Invoking Lemma~\ref{lem:spectral_rip}, we obtain
\begin{align*}
  \|(\id-\opAA)(T_1)\|
  &\le 2\delta\|\Utruth(\EssS_t\EssS_t^\T-\SigmaTruth^2)\Utruth^\T\|_{\fro}
  \le 2\delta\|\EssS_t\EssS_t^\T-\SigmaTruth^2\|_{\fro},
  \\
  \|(\id-\opAA)(T_2)\|
  &\le 2\sqrt{3}\delta\|\Utruth\EssS_t\Misalign_t^\T\UperpTruth^\T
  + \UperpTruth\Misalign_t\EssS_t^\T\Utruth^\T\|_{\fro}
  \le 4\sqrt{2}\delta\|\EssS_t\|\|\Misalign_t\|_{\fro},
  \\
  \|(\id-\opAA)(T_3)\|
  &\le 2\delta\|\UperpTruth\Misalign_t\Misalign_t^\T\UperpTruth^\T\|_{\fro}
  \le 2\delta\|\Misalign_t\EssS_t^{-1}\SigmaTruth\|\|\EssS_t\|\|\SigmaTruth^{-1}\|\|\Misalign_t\|_{\fro}
  \le\delta\|\EssS_t\|\|\Misalign_t\|_{\fro},
  \\
  \|(\id-\opAA)(T_4)\|
  &\le 2\delta n\|\UperpTruth\Overpar_t\Overpar_t^\T\UperpTruth^\T\|
  \le 2\delta n\|\Overpar_t\|^2,
\end{align*}
where the third line follows from $\|\SigmaTruth^{-1}\|=\kappa\|\Xtruth\|^{-1}$ 
and from~\eqref{eqn:p1.5-3} in view that $C_\delta$ is sufficiently large and $c_{\ref{lem:p1.5}}$ is sufficiently small.
The conclusion \eqref{eqn:Delta-bound} follows from summing up the above inequalities. 

For the remaining part of the lemma, 
 note that the following inequalities that bound the individual terms of \eqref{eqn:Delta-bound} can be inferred from \eqref{subeq:condition-t-1}: namely, 
\[
  \|\EssS_t\EssS_t^\T-\SigmaTruth\|_{\fro}
  \le\sqrt{2r_\star}\|\EssS_t\EssS_t^\T-\SigmaTruth\|
  \le \sqrt{2r_\star}(C_{\ref{lem:p1.5}.a}^2\kappa^2+1)\|\Xtruth\|^2
\]  
by \eqref{eqn:p1.5-4}, and 
\begin{align*}
  \|\EssS_t\|\|\Misalign_t\|_{\fro}& \le\sqrt{r_\star}\|\EssS_t\|\|\Misalign_t\| \\
  &\le \sqrt{r_\star} (C_{\ref{lem:p1.5}.a} \kappa^3 \|\Xtruth\|) \cdot
  \|\Misalign_t\EssS_t^{-1}\SigmaTruth\|\cdot\|\EssS_t\|\cdot\|\SigmaTruth^{-1}\|
  \\
  &\le \sqrt{r_\star} (C_{\ref{lem:p1.5}.a}\kappa^3 \|\Xtruth\|) \cdot(c_{\ref{lem:p1.5}}\kappa^{-C_\delta/2}\|\Xtruth\|)
  \cdot (C_{\ref{lem:p1.5}.a}\kappa^3 \|\Xtruth\|) \cdot \smin^{-1}(\SigmaTruth)
  \\
  &= \sqrt{r_\star} c_{\ref{lem:p1.5}} C_{\ref{lem:p1.5}.a}^2 \kappa^6 \|\Xtruth\|^2\kappa^{-C_\delta/2}
  \\
  &\le \sqrt{r_\star}C_{\ref{lem:p1.5}.a}^2\|\Xtruth\|^2,
\end{align*}
where the first inequality uses the fact that 
$\Misalign_t=\Noise_t\SVDV_t$ contains a rank-$r_\star$ factor $\SVDV_t$, 
hence has rank at most $r_\star$; 
the second line follows from \eqref{eqn:p1.5-4}, the third line follows from \eqref{eqn:p1.5-3} and \eqref{eqn:p1.5-4},
and the last line follows from choosing $c_\delta$ sufficiently small 
such that $c_{\ref{lem:p1.5}}\le 1$ (which is possible since $c_{\ref{lem:p1.5}}\lesssim c_\delta/c_\lambda)$ 
and from choosing $C_\delta$ such that $\kappa^6 \kappa^{-C_\delta/2}\le 1$. 
Finally, from \eqref{eqn:p1.5-2} and its corollary \eqref{eqn:p1.5-2-var}, we have
\[
  2n\|\Overpar_t\|^2\le 2n\alpha^{3/2}\|\Xtruth\|^{1/2}
  \le \|\Xtruth\|^2,
\]
since from \eqref{eqn:alpha-cond} it is easy to show that 
$\alpha\le (2n)^{-2/3}\|\Xtruth\|$.

Combining these inequalities and \eqref{eqn:Delta-bound} yields
\begin{equation}
  \label{eqn:Delta-bound-coarse-primitive}
  \|\Delta_t\|\le 8\delta\sqrt{r_\star}(\sqrt{2} C_{\ref{lem:p1.5}.a}^2\kappa^2+1+C_{\ref{lem:p1.5}.a}^2+1)\|\Xtruth\|^2
  \le 16\delta\sqrt{r_\star}\kappa^2(C_{\ref{lem:p1.5}.a}^2+1)\|\Xtruth\|^2.
\end{equation}
Recalling that by \eqref{eqn:delta-cond} we have 
$\delta\sqrt{r_\star}\kappa^2\le c_\delta\kappa^{-C_\delta+2}\le c_\delta\kappa^{-2C_\delta/3}$ 
as long as $C_\delta\ge 6$, we obtain the desired conclusion. 
We may choose $c_{\ref{lem:Delta-bound}}=32(C_{\ref{lem:p1.5}.a}+1)^2 c_\delta$, and the bound $c_{\ref{lem:Delta-bound}}\lesssim c_\delta/c_\lambda$ 
follows from $C_{\ref{lem:p1.5}.a}\lesssim c_\lambda^{-1/2}$.
\end{proof}

We next present several useful decompositions of the signal term $\Signal_{t+1}$ 
and the noise term~$\Noise_{t+1}$, which are extremely useful in later developments.  
  \begin{lemma}\label{lem:update-approx}
  For any $t$ such that $\EssS_t$ is invertible and~\eqref{subeq:condition-t-1} holds, 
  we have
  \begin{subequations}
  \begin{align}
    \Signal_{t+1} 
    & = \left( (1-\eta) I +\eta(\SigmaTruth^2+\lambda I+\Err{a}_t)
      (\EssS_t\EssS_t^\T +\lambda I)^{-1}
    \right)\EssS_t \SVDV_t^\T + \eta\Err{b}_t,
    \label{eqn:S-update-approx}\\
    \Noise_{t+1} 
    & = \Misalign_t\EssS_t^{-1}\left((1-\eta)\EssS_t\EssS_t^\T+\lambda I
      + \eta\Err{c}_t\right)(\EssS_t\EssS_t^\T+\lambda I)^{-1}\EssS_t \SVDV_t^\T 
    \nonumber \\
    &\qquad 
    + \eta \Err{e}_t(\EssS_t\EssS_t^\T+\lambda I)^{-1}\EssS_t \SVDV_t^\T 
    + \Overpar_t \Vperp{t}^\T + \eta\Err{d}_t,
    \label{eqn:N-update-approx}
  \end{align}
  \end{subequations}
  where the error terms satisfy
  \begin{subequations}
    \begin{align}
      \uinorm{\Err{a}_t} &\le 
      2c_{\ref{lem:p1.5}}\kappa^{-4}\|\Xtruth\| \cdot \uinorm{\Misalign_t\EssS_t^{-1}\SigmaTruth}
      + 2\uinorm{\Utruth^\T\Delta_t}, 
      \label{eqn:Err-1-bound}\\
      \uinorm{\Err{b}_t} &\le 
      \left(\frac{\|\Overpar_t\|}{\smin(\EssS_t)}\right)^{3/4}\smin(\EssS_t)
      \le \frac{1}{20}\kappa^{-10}\smin(\EssS_t), 
      \label{eqn:Err-2-bound}\\
      \uinorm{\Err{c}_t}&\le 
      \kappa^{-6}\|\Xtruth\| \cdot \uinorm{\Misalign_t\EssS_t^{-1}\SigmaTruth}, 
      \label{eqn:Err-3-bound}\\
      \uinorm{\Err{d}_t}&\le 
      \left(\frac{\|\Overpar_t\|}{\smin(\EssS_t)}\right)^{3/4}\smin(\EssS_t), 
      \label{eqn:Err-4-bound}\\
      \uinorm{\Err{e}_t} &\le 
      2\uinorm{\Utruth^\T\Delta_t} 
      + c_{\ref{lem:Delta-bound}}\kappa^{-6}\|\Xtruth\| \cdot  \uinorm{\Misalign_t\EssS_t^{-1}\SigmaTruth}.
      \label{eqn:Err-5-bound}
    \end{align}
  Moreover, we have
  \begin{align}
    \|\Err{b}_t\|\le\frac1{24\Cmax\kappa}\|\Overpar_t\|, 
    \label{eqn:Err-2-bound-spectral}\\
    \|\Err{d}_t\|\le\frac1{24\Cmax\kappa}\|\Overpar_t\|.
    \label{eqn:Err-4-bound-spectral}
  \end{align}
   \end{subequations}
Here, $ \uinorm{\cdot}$ can either be the Frobenius norm or the spectral norm. 
\end{lemma}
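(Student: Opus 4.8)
The plan is to derive \eqref{eqn:S-update-approx}--\eqref{eqn:N-update-approx} by first writing down \emph{exact} one-step identities for $\Signal_{t+1}$ and $\Noise_{t+1}$, and then replacing the $r\times r$ preconditioner $(\myX_t^\T\myX_t+\lambda I)^{-1}$ by its ``decoupled'' $r_\star\times r_\star$ surrogate, sorting all the discrepancies into the five error matrices. \textbf{Step 1: exact identities.} Substituting $\opAA(\myX_t\myX_t^\T-\Mtruth)=(\myX_t\myX_t^\T-\Mtruth)-\Delta_t$ (cf.~\eqref{eq:defn-Delta}) into \eqref{eqn:update}, and using $\Utruth^\T\myX_t=\Signal_t$, $\UperpTruth^\T\myX_t=\Noise_t$, $\Utruth^\T\Mtruth=\SigmaTruth^2\Utruth^\T$, $\UperpTruth^\T\Mtruth=0$, together with the identity $I-\lambda(\myX_t^\T\myX_t+\lambda I)^{-1}=\myX_t^\T\myX_t(\myX_t^\T\myX_t+\lambda I)^{-1}$, I obtain
\begin{align*}
\Signal_{t+1}&=(1-\eta)\Signal_t+\eta(\SigmaTruth^2+\lambda I)\Signal_t\big(\myX_t^\T\myX_t+\lambda I\big)^{-1}+\eta\,\Utruth^\T\Delta_t\myX_t\big(\myX_t^\T\myX_t+\lambda I\big)^{-1},\\
\Noise_{t+1}&=(1-\eta)\Noise_t+\eta\lambda\,\Noise_t\big(\myX_t^\T\myX_t+\lambda I\big)^{-1}+\eta\,\UperpTruth^\T\Delta_t\myX_t\big(\myX_t^\T\myX_t+\lambda I\big)^{-1}.
\end{align*}

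\textbf{Step 2: decoupling the preconditioner.} By Proposition~\ref{prop:decomposition} and $\SVDSigma_t^2=\EssS_t^\T\EssS_t$, in the orthonormal basis $[\SVDV_t,\Vperp{t}]$ of $\reals^r$ the matrix $\myX_t^\T\myX_t+\lambda I$ is the block matrix with diagonal blocks $\SVDSigma_t^2+\Misalign_t^\T\Misalign_t+\lambda I$ and $\Overpar_t^\T\Overpar_t+\lambda I$ and off-diagonal block $\Misalign_t^\T\Overpar_t$. Since $\|\Misalign_t\|$ and $\|\Overpar_t\|$ are much smaller than $\lambda^{1/2}\asymp\smin(\Xtruth)$ by \eqref{eqn:p1.5-1}, \eqref{eqn:p1.5-3} and \eqref{eqn:lambda-cond}, the Schur-complement formula combined with Lemma~\ref{prop:inv} gives, in these coordinates,
\[
(\myX_t^\T\myX_t+\lambda I)^{-1}=\diag\big((\SVDSigma_t^2+\lambda I)^{-1},\,\lambda^{-1}I\big)+\text{(error)},
\]
where each block of the error is controlled by $\|\Misalign_t\|$, $\|\Overpar_t\|$ and $\lambda^{-1}$; crucially, the $\Vperp{t}\!\to\!\SVDV_t$ cross block is of order $\|\Misalign_t\|\,\|\Overpar_t\|\,\lambda^{-2}$ and thus carries an explicit right factor of $\Overpar_t$.

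\textbf{Step 3: assembling the decomposition.} Plugging the Step~2 expansion into the Step~1 identities, and using $\Signal_t=\EssS_t\SVDV_t^\T$, $\Noise_t=\Misalign_t\SVDV_t^\T+\Overpar_t\Vperp{t}^\T$, the push-through identity $\EssS_t(\SVDSigma_t^2+\lambda I)^{-1}=(\EssS_t\EssS_t^\T+\lambda I)^{-1}\EssS_t$, and the rearrangement
\[
\Misalign_t\big[(1-\eta)\SVDSigma_t^2+\lambda I\big](\SVDSigma_t^2+\lambda I)^{-1}\SVDV_t^\T
=\Misalign_t\EssS_t^{-1}\big[(1-\eta)\EssS_t\EssS_t^\T+\lambda I\big](\EssS_t\EssS_t^\T+\lambda I)^{-1}\EssS_t\SVDV_t^\T,
\]
the leading terms of \eqref{eqn:S-update-approx}--\eqref{eqn:N-update-approx} appear exactly. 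The leftover pieces are then sorted by structure: those $\SVDV_t$-aligned corrections that admit a right factor $(\EssS_t\EssS_t^\T+\lambda I)^{-1}\EssS_t\SVDV_t^\T$ define $\Err{a}_t$ for the signal (this slot also absorbs the $\Utruth^\T\Delta_t$ contribution, which is why \eqref{eqn:Err-1-bound} contains $2\uinorm{\Utruth^\T\Delta_t}$), and $\Err{c}_t$ together with $\Err{e}_t$ for the noise (the RIP-defect part landing in $\Err{e}_t$, the misalignment corrections in $\Err{c}_t$); the remaining $\Vperp{t}$-aligned pieces --- all of which factor through $\Overpar_t$ on the right up to small matrix prefactors --- define the additive terms $\Err{b}_t$ and $\Err{d}_t$.

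\textbf{Step 4: error bounds and the main obstacle.} Bounds \eqref{eqn:Err-1-bound}, \eqref{eqn:Err-3-bound}, \eqref{eqn:Err-5-bound} follow by combining $\|\EssS_t\EssS_t^\T+\lambda I\|\lesssim\kappa^2\|\Xtruth\|^2$ and $\lambda^{-1}\lesssim\smin^{-2}(\Xtruth)$ (from \eqref{eqn:lambda-cond}) with \eqref{eqn:p1.5-3}--\eqref{eqn:p1.5-4} and, for the $\Delta_t$-pieces, with Lemma~\ref{lem:Delta-bound}; the $\kappa$-powers close up because $C_\delta$ is chosen large. For $\Err{b}_t,\Err{d}_t$, the factorization through $\Overpar_t$ yields a bound $\lesssim(\text{small prefactor})\|\Overpar_t\|$; trading part of the $\|\Overpar_t\|$ factor against $\|\Overpar_t\|\le(C_{\ref{lem:p1.5}.b}'\kappa n)^{-C_{\ref{lem:p1.5}.b}'}\smin(\EssS_t)$ (cf.~\eqref{eqn:p1.5-1-var}) produces the $\big(\|\Overpar_t\|/\smin(\EssS_t)\big)^{3/4}\smin(\EssS_t)$ form, and applying \eqref{eqn:p1.5-1-var} once more with $C_{\ref{lem:p1.5}.b}$ large gives the numerical bound $\tfrac1{20}\kappa^{-10}\smin(\EssS_t)$ in \eqref{eqn:Err-2-bound}; the sharper spectral bounds \eqref{eqn:Err-2-bound-spectral}--\eqref{eqn:Err-4-bound-spectral} are obtained by retaining the factorization $\Err{b}_t=B_t\Overpar_t\Vperp{t}^\T$ (and similarly for $\Err{d}_t$) and checking $\|B_t\|\le\tfrac1{24\Cmax\kappa}$ directly from the same ingredients, with \eqref{eqn:p1.5-2-var} used to bound $\|\Overpar_t\|$ uniformly along the trajectory whenever a cross term requires it. The conceptual content is Steps~2--3; I expect the main obstacle to be the bookkeeping of Step~4 --- routing every leftover into the correct one of the five slots, making the stated bounds hold \emph{simultaneously} in the Frobenius and spectral norms, and, for $\Err{b}_t$ and $\Err{d}_t$, preserving the explicit $\Overpar_t$ factor needed for the sharper spectral estimates rather than settling for the cruder $3/4$-power bound.
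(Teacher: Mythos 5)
Your proposal is correct and follows essentially the same route as the paper: the exact recursions in your Step 1 are the paper's \eqref{eqn:S-update-decomp}--\eqref{eqn:N-update-decomp}, your Step 2 block-decoupling of $(\myX_t^\T\myX_t+\lambda I)^{-1}$ is what the paper packages as Lemmas~\ref{prop:pre-approx0}, \ref{prop:rearranged-inv-approx} and \ref{lem:scaled-S-N-approx} (via the perturbation-of-inverse Lemma~\ref{prop:inv} rather than an explicit Schur complement, but to the same effect), and your sorting of leftovers into $\Err{a}_t,\dots,\Err{e}_t$ and the mechanism for each bound---including trading $\|\Overpar_t\|$ against \eqref{eqn:p1.5-1-var} for the $3/4$-power bounds and keeping the $\|\Overpar_t\|$ factor for the sharper spectral bounds \eqref{eqn:Err-2-bound-spectral}--\eqref{eqn:Err-4-bound-spectral}---match the paper's Steps 3--4. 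The one small imprecision is that $\Err{b}_t$ and $\Err{d}_t$ do not literally factor as $B_t\Overpar_t\Vperp{t}^\T$; rather every constituent term carries a scalar factor $\|\Overpar_t\|$ (some through $\Err{\ref{lem:scaled-S-N-approx}.a}_t$, some through the cross-block of the preconditioner error), which is all that is needed.
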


To proceed, we would need the approximate update equation of the rotated signal term $\EssS_{t+1}$, and the rotated misalignment term $\Misalign_{t+1}\EssS_{t+1}^{-1}$ later in the proof. 
Since directly analyzing the evolution of these two terms seems challenging, we resort to two surrogate matrices $\Signal_{t+1}\SVDV_t+\Signal_{t+1}\Vperp{t} Q$, 
and $(\Noise_{t+1}V_t+\Noise_{t+1} \Vperp{t} Q)(\Signal_{t+1}\SVDV_t + \Signal_{t+1}\Vperp{t} Q)^{-1}$, 
as documented in the following two lemmas.

\begin{lemma}\label{lem:S-surrogate}
  For any $t$ such that $\EssS_t$ is invertible and \eqref{subeq:condition-t-1} holds,  and any matrix $Q\in\reals^{(r-r_\star)\times r_\star}$ with $\|Q\|\le 2$, 
 we have
 \begin{equation}\label{eqn:Err-S1-bound}
   \Signal_{t+1}\SVDV_t+\Signal_{t+1}\Vperp{t} Q
   =(I+\eta\Err{\ref{lem:S-surrogate}}_t)\left((1-\eta)I+\eta(\SigmaTruth^2+\lambda I)
   (\EssS_t\EssS_t^\T +\lambda I)^{-1}\right)\EssS_t,
 \end{equation}
 where $\Err{\ref{lem:S-surrogate}}_t \in\mathbb{R}^{r_\star\times r_\star}$ is a matrix (depending on $Q$) satisfying 
 \[
   \|\Err{\ref{lem:S-surrogate}}_t\|\le\frac{1}{200(C_{\ref{lem:p1.5}.a}+1)^4\kappa^6}.
 \]
Here, $C_{\ref{lem:p1.5}.a} > 0$ is given in Lemma~\ref{lem:p1.5}. 
\end{lemma}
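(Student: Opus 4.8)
The plan is to obtain the identity \eqref{eqn:Err-S1-bound} directly from the approximate update equation \eqref{eqn:S-update-approx} for $\Signal_{t+1}$, and then to estimate the resulting error matrix using only the bounds already established in Lemmas~\ref{lem:update-approx}, \ref{lem:p1.5} and~\ref{lem:Delta-bound}. Throughout, write $\Phi_t\defeq(1-\eta)I+\eta(\SigmaTruth^2+\lambda I)(\EssS_t\EssS_t^\T+\lambda I)^{-1}$ for the middle factor on the right-hand side of \eqref{eqn:Err-S1-bound}.

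\emph{Step 1: collapsing the update equation.} I would right-multiply \eqref{eqn:S-update-approx} by $\SVDV_t$ and by $\Vperp{t}Q$ and add. Since $\SVDV_t^\T\SVDV_t=I_{r_\star}$ and $\SVDV_t^\T\Vperp{t}=0$, the term $\eta\Err{b}_t$ becomes $\eta\Err{b}_t(\SVDV_t+\Vperp{t}Q)$, while the preconditioned term reduces to $\big((1-\eta)I+\eta(\SigmaTruth^2+\lambda I+\Err{a}_t)(\EssS_t\EssS_t^\T+\lambda I)^{-1}\big)\EssS_t=\Phi_t\EssS_t+\eta\,\Err{a}_t(\EssS_t\EssS_t^\T+\lambda I)^{-1}\EssS_t$. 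Collecting terms,
\begin{equation*}
  \Signal_{t+1}\SVDV_t+\Signal_{t+1}\Vperp{t}Q
  = \Phi_t\EssS_t
  + \eta\Big(\Err{a}_t(\EssS_t\EssS_t^\T+\lambda I)^{-1}\EssS_t + \Err{b}_t(\SVDV_t+\Vperp{t}Q)\Big).
\end{equation*}
The matrix $\Phi_t$ is invertible, because $(\SigmaTruth^2+\lambda I)(\EssS_t\EssS_t^\T+\lambda I)^{-1}$ is similar to the positive definite matrix $(\SigmaTruth^2+\lambda I)^{1/2}(\EssS_t\EssS_t^\T+\lambda I)^{-1}(\SigmaTruth^2+\lambda I)^{1/2}$, so every eigenvalue of $\Phi_t$ is at least $1-\eta>0$; together with invertibility of $\EssS_t$ (a hypothesis of the lemma), $\Phi_t\EssS_t$ is invertible. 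I would then \emph{define} $\Err{\ref{lem:S-surrogate}}_t\defeq\big(\Err{a}_t(\EssS_t\EssS_t^\T+\lambda I)^{-1}\EssS_t+\Err{b}_t(\SVDV_t+\Vperp{t}Q)\big)(\Phi_t\EssS_t)^{-1}$, which converts the display into exactly \eqref{eqn:Err-S1-bound}.

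\emph{Step 2: bounding $\|\Err{\ref{lem:S-surrogate}}_t\|$ by $\tfrac1{200(C_{\ref{lem:p1.5}.a}+1)^4\kappa^5}$.} Split into the two summands. For the first, cancel $\EssS_t$ using $(\Phi_t\EssS_t)^{-1}=\EssS_t^{-1}\Phi_t^{-1}$, leaving $\Err{a}_t\big(\Phi_t(\EssS_t\EssS_t^\T+\lambda I)\big)^{-1}$; the key observation is the algebraic identity
\[\Phi_t(\EssS_t\EssS_t^\T+\lambda I)=(1-\eta)(\EssS_t\EssS_t^\T+\lambda I)+\eta(\SigmaTruth^2+\lambda I),\]
which is symmetric positive definite with least eigenvalue at least $(1-\eta)\lambda+\eta\lambda=\lambda$, so this summand has norm at most $\|\Err{a}_t\|/\lambda$. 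For the second summand, use $\|\SVDV_t+\Vperp{t}Q\|=\sqrt{1+\|Q\|^2}\le\sqrt5$ (as $[\SVDV_t,\Vperp{t}]$ has orthonormal columns), $\|\EssS_t^{-1}\|=1/\smin(\EssS_t)$, and $\|\Phi_t^{-1}\|\le(\|\EssS_t\|^2+\lambda)/\lambda$ (from $\Phi_t^{-1}=(\EssS_t\EssS_t^\T+\lambda I)\big(\Phi_t(\EssS_t\EssS_t^\T+\lambda I)\big)^{-1}$). Then substitute $\|\Err{a}_t\|$ from \eqref{eqn:Err-1-bound} together with $\|\Misalign_t\EssS_t^{-1}\SigmaTruth\|\le c_{\ref{lem:p1.5}}\kappa^{-C_\delta/2}\|\Xtruth\|$, $\|\Utruth^\T\Delta_t\|\le\|\Delta_t\|\le c_{\ref{lem:Delta-bound}}\kappa^{-2C_\delta/3}\|\Xtruth\|^2$ (Lemmas~\ref{lem:p1.5},~\ref{lem:Delta-bound}), $\|\EssS_t\|\le C_{\ref{lem:p1.5}.a}\kappa\|\Xtruth\|$, $\lambda\ge\tfrac1{100}c_\lambda\smin^2(\Xtruth)=\tfrac1{100}c_\lambda\|\Xtruth\|^2\kappa^{-2}$, and the sharpened bound $\|\Err{b}_t\|\le(C_{\ref{lem:p1.5}.b}'\kappa n)^{-3C_{\ref{lem:p1.5}.b}'/4}\smin(\EssS_t)$ coming from \eqref{eqn:p1.5-1-var}. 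After substitution every $\|\Xtruth\|$ cancels against $\lambda\asymp\|\Xtruth\|^2\kappa^{-2}$ and every $\smin(\EssS_t)$ cancels against the $\Err{b}_t$ bound, and one is left with a pure product of powers of $\kappa,n,c_\delta,c_\lambda$.

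\emph{Main obstacle.} The only genuine work is the constant bookkeeping in Step 2: one must verify that the leftover $\kappa$-powers are dominated by $\kappa^{-5}$ (which only needs $C_\delta$ above a fixed absolute threshold) and that the leftover absolute constants fit inside $\tfrac1{200(C_{\ref{lem:p1.5}.a}+1)^4}$. Since $C_{\ref{lem:p1.5}.a}\lesssim c_\lambda^{-1/2}$ and $c_{\ref{lem:p1.5}},c_{\ref{lem:Delta-bound}}\lesssim c_\delta/c_\lambda^{3}$, this is arranged by taking $c_\delta$ sufficiently small relative to $c_\lambda$ in Assumption~\ref{assumption:opA} and $C_\alpha$ sufficiently large in Assumption~\ref{assumption:param} (so that the exponent $C_{\ref{lem:p1.5}.b}'$ in \eqref{eqn:p1.5-1-var} is large). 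No estimate beyond Lemmas~\ref{lem:update-approx}, \ref{lem:p1.5} and~\ref{lem:Delta-bound} is required.
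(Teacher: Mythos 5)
Your proposal is correct and follows essentially the same route as the paper: the same collapse of \eqref{eqn:S-update-approx} under right-multiplication by $\SVDV_t+\Vperp{t}Q$, the same definition of $\Err{\ref{lem:S-surrogate}}_t$ as the error terms times $(\Phi_t\EssS_t)^{-1}$, the same splitting into an $\Err{a}_t$-part controlled via $\Phi_t(\EssS_t\EssS_t^\T+\lambda I)\succeq\lambda I$ and an $\Err{b}_t$-part killed by the smallness of $\|\Overpar_t\|/\smin(\EssS_t)$. The only deviation is that you bound $\|\Phi_t^{-1}\|$ crudely by $(\|\EssS_t\|^2+\lambda)/\lambda\lesssim\kappa^4$ where the paper invokes Lemma~\ref{prop:positive-AB} (via~\eqref{eqn:AB-bound}) to get $\kappa/(1-\eta)$; your weaker bound still suffices because the $\Err{b}_t$ factor decays like an arbitrarily large power of $(\kappa n)^{-1}$.
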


\begin{lemma}\label{lem:NS-surrogate}
  For any $t$ such that $\EssS_t$ is invertible and \eqref{subeq:condition-t-1} holds, and any matrix $Q\in\reals^{(r-r_\star)\times r_\star}$ with $\|Q\|\le 2$, 
 we have
  \begin{align*}
    &(\Noise_{t+1}V_t+\Noise_{t+1} \Vperp{t} Q)(\Signal_{t+1}\SVDV_t + \Signal_{t+1}\Vperp{t} Q)^{-1}
    \\
    &=\Misalign_t\EssS_t^{-1}(1+\eta\Err{\ref{lem:NS-surrogate}.a}_t)
    \big((1-\eta)\EssS_t\EssS_t^\T +\lambda I\big)
    \big((1-\eta)\EssS_t\EssS_t^\T+\lambda I+\eta\SigmaTruth^2\big)^{-1}
    (1+\eta\Err{\ref{lem:S-surrogate}}_t)^{-1}
    +\eta\Err{\ref{lem:NS-surrogate}.b}_t
  \end{align*}
  where $\Err{\ref{lem:NS-surrogate}.a}_t$, $\Err{\ref{lem:NS-surrogate}.b}_t$ are matrices (depending on $Q$) satisfying
  \begin{subequations}
  \begin{align}
    \|\Err{\ref{lem:NS-surrogate}.a}_t\|
    &\le \frac{1}{200(C_{\ref{lem:p1.5}.a}+1)^4\kappa^6},
    \label{eqn:Err-N2-bound}\\
    \uinorm{\Err{\ref{lem:NS-surrogate}.b}_t}
    &\le 400c_\lambda^{-1}\kappa^6\|\Xtruth\|^{-2}\uinorm{\Utruth^\T\Delta_t}
    +\frac{1}{64(C_{\ref{lem:p1.5}.a}+1)^2\kappa^5\|\Xtruth\|}\uinorm{\Misalign_t\EssS_t^{-1}\SigmaTruth}
    \nonumber\\
    &\phantom{\le}
    +\frac{1}{64}\left(\frac{\|\Overpar_t\|}{\smin(\EssS_t)}\right)^{2/3}.
  \end{align}
  \end{subequations}
  Here, $ \uinorm{\cdot}$ can either be the Frobenius norm or the spectral norm, and $C_{\ref{lem:p1.5}.a} > 0$ is given in Lemma~\ref{lem:p1.5}. 
\end{lemma}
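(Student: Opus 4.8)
The identity will be read off from the approximate update equations of Lemma~\ref{lem:update-approx} together with the signal surrogate of Lemma~\ref{lem:S-surrogate}, after which everything reduces to bookkeeping of the error terms.

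\emph{Step 1 (forming the two surrogates).} Starting from \eqref{eqn:N-update-approx}, multiply on the right by $\SVDV_t$ and by $\Vperp{t}Q$ and add. Using $\SVDV_t^\T\SVDV_t=I$, $\Vperp{t}^\T\SVDV_t=0$, $\SVDV_t^\T\Vperp{t}=0$ and $\Vperp{t}^\T\Vperp{t}=I$, the factor $\EssS_t\SVDV_t^\T$ attached to the first two pieces of \eqref{eqn:N-update-approx} becomes $\EssS_t$, the factor $\Overpar_t\Vperp{t}^\T$ becomes $\Overpar_t Q$, and the residual $\eta\Err{d}_t$ becomes $\eta\Err{d}_t(\SVDV_t+\Vperp{t}Q)$, so that with $C_t\defeq(1-\eta)\EssS_t\EssS_t^\T+\lambda I$,
\[
  \Noise_{t+1}\SVDV_t+\Noise_{t+1}\Vperp{t}Q
  =\Misalign_t\EssS_t^{-1}(C_t+\eta\Err{c}_t)(\EssS_t\EssS_t^\T+\lambda I)^{-1}\EssS_t
  +\eta\Err{e}_t(\EssS_t\EssS_t^\T+\lambda I)^{-1}\EssS_t
  +\Overpar_t Q
  +\eta\Err{d}_t(\SVDV_t+\Vperp{t}Q).
\]
For the signal side, Lemma~\ref{lem:S-surrogate} gives $\Signal_{t+1}\SVDV_t+\Signal_{t+1}\Vperp{t}Q=(I+\eta\Err{\ref{lem:S-surrogate}}_t)B_t\EssS_t$ with $B_t\defeq(1-\eta)I+\eta(\SigmaTruth^2+\lambda I)(\EssS_t\EssS_t^\T+\lambda I)^{-1}$, whose inverse is $\EssS_t^{-1}B_t^{-1}(I+\eta\Err{\ref{lem:S-surrogate}}_t)^{-1}$.

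\emph{Step 2 (telescoping the main term).} The algebraic crux is the identity
\[
  B_t(\EssS_t\EssS_t^\T+\lambda I)
  =(1-\eta)(\EssS_t\EssS_t^\T+\lambda I)+\eta(\SigmaTruth^2+\lambda I)
  =(1-\eta)\EssS_t\EssS_t^\T+\lambda I+\eta\SigmaTruth^2
  \eqqcolon D_t,
\]
so $B_t^{-1}=(\EssS_t\EssS_t^\T+\lambda I)D_t^{-1}$. Multiply the displayed $\Noise$-surrogate on the right by $\EssS_t^{-1}B_t^{-1}(I+\eta\Err{\ref{lem:S-surrogate}}_t)^{-1}$: in every piece the trailing $\EssS_t\,\EssS_t^{-1}$ telescopes, and in the main piece the factor $(\EssS_t\EssS_t^\T+\lambda I)^{-1}$ cancels the $(\EssS_t\EssS_t^\T+\lambda I)$ inside $B_t^{-1}$, leaving exactly
\[
  \Misalign_t\EssS_t^{-1}(C_t+\eta\Err{c}_t)D_t^{-1}(I+\eta\Err{\ref{lem:S-surrogate}}_t)^{-1}
  =\Misalign_t\EssS_t^{-1}(I+\eta\,\Err{c}_tC_t^{-1})C_tD_t^{-1}(I+\eta\Err{\ref{lem:S-surrogate}}_t)^{-1},
\]
which is precisely the claimed main term with $\Err{\ref{lem:NS-surrogate}.a}_t=\Err{c}_tC_t^{-1}$. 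The three remaining pieces, namely $\eta\Err{e}_tD_t^{-1}(I+\eta\Err{\ref{lem:S-surrogate}}_t)^{-1}$, $\Overpar_tQ\,\EssS_t^{-1}(\EssS_t\EssS_t^\T+\lambda I)D_t^{-1}(I+\eta\Err{\ref{lem:S-surrogate}}_t)^{-1}$, and $\eta\Err{d}_t(\SVDV_t+\Vperp{t}Q)\EssS_t^{-1}(\EssS_t\EssS_t^\T+\lambda I)D_t^{-1}(I+\eta\Err{\ref{lem:S-surrogate}}_t)^{-1}$, are collected into $\eta\Err{\ref{lem:NS-surrogate}.b}_t$. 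This establishes the identity; it remains to bound $\Err{\ref{lem:NS-surrogate}.a}_t$ and $\Err{\ref{lem:NS-surrogate}.b}_t$.

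\emph{Step 3 (error bounds).} From \eqref{eqn:lambda-cond} we have $\lambda\asymp c_\lambda\smin^2(\Xtruth)=c_\lambda\|\Xtruth\|^2/\kappa^2$, so $C_t,D_t\succeq\lambda I$ give $\|C_t^{-1}\|,\|D_t^{-1}\|\lesssim\kappa^2/(c_\lambda\|\Xtruth\|^2)$ and $\|(\EssS_t\EssS_t^\T+\lambda I)^{-1}\EssS_t\|\le\tfrac12\lambda^{-1/2}$; Lemma~\ref{prop:positive-AB} applied to $B_t$ (noting $\kappa(\SigmaTruth^2+\lambda I)\le\kappa^2$) yields $\smin(B_t)\gtrsim 1/\kappa$, hence $\|B_t^{-1}\|\lesssim\kappa$, and together with Lemma~\ref{prop:inv} and the tiny bound on $\Err{\ref{lem:S-surrogate}}_t$, $\|(I+\eta\Err{\ref{lem:S-surrogate}}_t)^{-1}\|\le2$ and $\|(\Signal_{t+1}\SVDV_t+\Signal_{t+1}\Vperp{t}Q)^{-1}\|\lesssim\kappa/\smin(\EssS_t)$. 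For $\Err{\ref{lem:NS-surrogate}.a}_t=\Err{c}_tC_t^{-1}$, the bound $\|\Err{c}_t\|/\lambda\lesssim\kappa^{-2}c_\lambda^{-1}\|\Xtruth\|^{-1}\uinorm{\Misalign_t\EssS_t^{-1}\SigmaTruth}$ from \eqref{eqn:Err-3-bound} is turned into a pure constant by the scaled misalignment control \eqref{eqn:p1.5-3}, whose $\kappa^{-C_\delta/2}$ factor dominates the $\kappa^2$ once $C_\delta$ is large. For the $\Overpar_tQ$ and $\eta\Err{d}_t$ contributions (using $\|\SVDV_t+\Vperp{t}Q\|\le\sqrt5$ and \eqref{eqn:Err-4-bound}), the exponents $1$ and $3/4$ are traded against the target exponent $2/3$ by invoking \eqref{eqn:p1.5-1-var}, which makes $\|\Overpar_t\|/\smin(\EssS_t)$ super-polynomially small in $\kappa n$ (enlarging $C_\alpha$). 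Finally the $\eta\Err{e}_tD_t^{-1}$ term, via \eqref{eqn:Err-5-bound}, produces the $\uinorm{\Utruth^\T\Delta_t}$ term and the residual $\uinorm{\Misalign_t\EssS_t^{-1}\SigmaTruth}$ term of $\Err{\ref{lem:NS-surrogate}.b}_t$, again using \eqref{eqn:p1.5-3} where stray powers of $\kappa$ must be absorbed. Taking $C_\delta$ large and $c_\delta,c_\lambda$ small closes all constants.

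\emph{Main obstacle.} The principal difficulty is the non-commutative bookkeeping: $\EssS_t\EssS_t^\T$, $\SigmaTruth^2$, $C_t$ and $D_t$ pairwise fail to commute, so one must keep every $(\EssS_t\EssS_t^\T+\lambda I)^{\pm1}$ and $\EssS_t^{\pm1}$ factor in exactly the right place for the main term to emerge in the stated form—it is the identity $B_t(\EssS_t\EssS_t^\T+\lambda I)=D_t$ that engineers the cancellation, and getting the left/right placement of $(\EssS_t\EssS_t^\T+\lambda I)^{-1}$ wrong would leave an $O(\eta\kappa^4)$ commutator remainder that could not be absorbed. Compounding this, since the damping is only $\lambda\asymp\smin^2(\Xtruth)$, the matrices $C_t^{-1}$, $D_t^{-1}$ and $(\Signal_{t+1}\SVDV_t+\Signal_{t+1}\Vperp{t}Q)^{-1}$ are ill-conditioned (norms blowing up like $\kappa^2$, and like $\smin^{-1}(\EssS_t)$ during Phases~I--II), so each error must be shown small enough \emph{after} being hit by these factors; this is exactly where the $\kappa^{-C_\delta}$ smallness of $\delta$ in Assumption~\ref{assumption:opA}, the scaled misalignment bound \eqref{eqn:p1.5-3}, and the super-polynomial smallness of the overparametrization ratio \eqref{eqn:p1.5-1-var} carry the argument.
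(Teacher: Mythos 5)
Your proposal is correct and follows essentially the same route as the paper's proof in Appendix~\ref{apd:sec:NS-surrogate}: you form the two surrogates from \eqref{eqn:N-update-approx} and Lemma~\ref{lem:S-surrogate}, exploit the same key identity $\big((1-\eta)I+\eta(\SigmaTruth^2+\lambda I)(\EssS_t\EssS_t^\T+\lambda I)^{-1}\big)^{-1}=(\EssS_t\EssS_t^\T+\lambda I)\big((1-\eta)\EssS_t\EssS_t^\T+\lambda I+\eta\SigmaTruth^2\big)^{-1}$ to produce the main term with $\Err{\ref{lem:NS-surrogate}.a}_t=\Err{c}_t\big((1-\eta)\EssS_t\EssS_t^\T+\lambda I\big)^{-1}$, and collect the same three residual pieces into $\Err{\ref{lem:NS-surrogate}.b}_t$ before bounding them via \eqref{eqn:Err-3-bound}--\eqref{eqn:Err-5-bound}, \eqref{eqn:p1.5-3}, \eqref{eqn:p1.5-1-var} and the analogue of \eqref{eqn:AB-bound}. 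No substantive differences.
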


\subsection{Proof of Lemma \ref{lem:update-approx}} \label{sec:proof-master-main}

We split the proof into three steps: (1) provide several useful approximation results regarding the 
matrix inverses utilizing the facts that $\|\Overpar_t\|$ and $\|\Misalign_t\EssS_t^{-1}\SigmaTruth\|$ 
are small (as shown by Lemma \ref{lem:p1.5}); (2) proving the claims~\eqref{eqn:S-update-approx}, 
\eqref{eqn:Err-1-bound}, \eqref{eqn:Err-2-bound}, and~\eqref{eqn:Err-2-bound-spectral} associated with 
the signal term $\Signal_{t+1}$;  (3)  proving the claims~\eqref{eqn:N-update-approx}, 
\eqref{eqn:Err-3-bound}, \eqref{eqn:Err-4-bound}, \eqref{eqn:Err-5-bound}, and~\eqref{eqn:Err-4-bound-spectral} associated with 
the noise term $\Noise_{t+1}$. Note that our approximation results in step (1) include choices of some matrices $\{Q_i\}$ with small spectral norms, whose choices may be different from lemma to lemma for simplicity of presentation;

\subsubsection{Step 1: preliminaries}
We know from \eqref{subeq:condition-t-1} that the 
overparametrization error $\Overpar_t$ is negligible compared 
to the signals $\EssS_t$ and $\smin(\Xtruth)$. This combined with the decomposition~\eqref{eqn:x-decomp-final} reveals 
a desired approximation $(\myX_t^\T \myX_t+\lambda I)^{-1} \approx 
(\SVDV_t(\EssS_t^\T \EssS_t + \Misalign_t^\T \Misalign_t)\SVDV_t^\T + \lambda I)^{-1}$. 
This approximation is formalized in the lemma below. 

\begin{lemma}\label{prop:pre-approx0}
  If $\lambda\ge 4(\|\Overpar_t\|^2 \vee 2\|\Misalign_t\|\|\Overpar_t\|)$ for some $t$, then
  \begin{align}
    (\myX_t^\T \myX_t+\lambda I)^{-1}
    & = \left(\SVDV_t(\EssS_t^\T \EssS_t + \Misalign_t^\T \Misalign_t)\SVDV_t^\T + \lambda I\right)^{-1} 
    \nonumber\\
    &\phantom{=} 
    + \left(\SVDV_t(\EssS_t^\T \EssS_t + \Misalign_t^\T \Misalign_t)\SVDV_t^\T + \lambda I\right)^{-1}
    \Err{\ref{prop:pre-approx0}.a}_t
    \left(\SVDV_t(\EssS_t^\T \EssS_t + \Misalign_t^\T \Misalign_t)\SVDV_t^\T + \lambda I\right)^{-1}
    \nonumber\\
    & = \left(\SVDV_t(\EssS_t^\T \EssS_t + \Misalign_t^\T \Misalign_t)\SVDV_t^\T + \lambda I\right)^{-1}
    \left(I + \Err{\ref{prop:pre-approx0}.b}_t\right)
  \label{eqn:scale-factor-approx}
  \end{align}
  where the error terms $\Err{\ref{prop:pre-approx0}.a}_t$, $\Err{\ref{prop:pre-approx0}.b}_t$ can be expressed as
  \begin{subequations}
  \begin{align}
    \Err{\ref{prop:pre-approx0}.a}_t &= (\Vperp{t} \Overpar_t^\T \Overpar_t \Vperp{t}^\T + \SVDV_t\Misalign_t^\T\Overpar_t \Vperp{t}^\T + \Vperp{t}{\Overpar_t}^\T\Misalign_t \SVDV_t^\T)Q_1,
    \label{eqn:Err0}\\
    \Err{\ref{prop:pre-approx0}.b}_t&=\lambda^{-1}\Err{\ref{prop:pre-approx0}.a}_t Q_2,
    \label{eqn:Err0'}
  \end{align}
  \end{subequations}
  for some matrices $Q_1, Q_2$ such that $\max\{\|Q_1\|, \|Q_2\|\}\le 2$.
\end{lemma}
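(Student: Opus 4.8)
The plan is to expand $\myX_t^\T\myX_t$ via the decomposition~\eqref{eqn:x-decomp-final}, isolate the claimed ``main part'', treat the remaining cross terms as a perturbation, and then invoke the matrix-inverse perturbation bound of Lemma~\ref{prop:inv}.

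\emph{Step 1: block expansion.} Since $[\Utruth,\UperpTruth]$ and $[\SVDV_t,\Vperp{t}]$ both have orthonormal columns, multiplying out $\myX_t^\T\myX_t$ from~\eqref{eqn:x-decomp-final} and cancelling the vanishing cross blocks (using $\Utruth^\T\UperpTruth=0$ and $\SVDV_t^\T\Vperp{t}=0$) yields
\begin{equation*}
  \myX_t^\T\myX_t = \SVDV_t(\EssS_t^\T\EssS_t+\Misalign_t^\T\Misalign_t)\SVDV_t^\T + B_t,
  \qquad
  B_t \defeq \Vperp{t}\Overpar_t^\T\Overpar_t\Vperp{t}^\T + \SVDV_t\Misalign_t^\T\Overpar_t\Vperp{t}^\T + \Vperp{t}\Overpar_t^\T\Misalign_t\SVDV_t^\T .
\end{equation*}
Setting $A_t \defeq \SVDV_t(\EssS_t^\T\EssS_t+\Misalign_t^\T\Misalign_t)\SVDV_t^\T + \lambda I$ we have $\myX_t^\T\myX_t+\lambda I = A_t + B_t$, and since $\SVDV_t(\EssS_t^\T\EssS_t+\Misalign_t^\T\Misalign_t)\SVDV_t^\T\succeq 0$ it follows that $\smin(A_t)\ge\lambda$.

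\emph{Step 2: perturbation bound.} In the orthonormal block coordinates given by $[\SVDV_t,\Vperp{t}]$, the matrix $B_t$ has only the blocks $\Misalign_t^\T\Overpar_t$, its transpose, and $\Overpar_t^\T\Overpar_t$; splitting off the diagonal block from the two off-diagonal ones gives $\|B_t\|\le \|\Overpar_t\|^2 + \|\Misalign_t\|\|\Overpar_t\|$. Under the hypothesis $\lambda\ge 4\big(\|\Overpar_t\|^2\vee 2\|\Misalign_t\|\|\Overpar_t\|\big)$ we have $\|\Overpar_t\|^2\le\lambda/4$ and $\|\Misalign_t\|\|\Overpar_t\|\le\lambda/8$, hence $\|B_t\|\le 3\lambda/8<\lambda/2\le\smin(A_t)/2$.

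\emph{Step 3: apply Lemma~\ref{prop:inv} and match error terms.} Since $\|B_t\|\le\smin(A_t)/2$, the first form in Lemma~\ref{prop:inv} gives $(A_t+B_t)^{-1}=A_t^{-1}+A_t^{-1}B_tQ_1A_t^{-1}$ for some $\|Q_1\|\le 2$; this is exactly the first displayed equality with $\Err{\ref{prop:pre-approx0}.a}_t = B_tQ_1$, which is the claimed form~\eqref{eqn:Err0}. To obtain~\eqref{eqn:scale-factor-approx}, factor $A_t^{-1}$ on the left, $(A_t+B_t)^{-1}=A_t^{-1}\big(I+\Err{\ref{prop:pre-approx0}.a}_tA_t^{-1}\big)$, and write $\Err{\ref{prop:pre-approx0}.a}_tA_t^{-1}=\lambda^{-1}\Err{\ref{prop:pre-approx0}.a}_tQ_2$ with $Q_2\defeq\lambda A_t^{-1}$, which satisfies $\|Q_2\|=\lambda/\smin(A_t)\le 1\le 2$; this gives~\eqref{eqn:Err0'} and completes the proof. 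There is no serious obstacle here: the only mild care needed is in the block-structure estimate of Step~2, so that the bound on $\|B_t\|$ stays below $\smin(A_t)/2=\lambda/2$, and in the bookkeeping of Step~3 to fit the errors to the exact prescribed forms (recognising that $Q_1,Q_2$ can absorb $A_t^{-1}$ up to the scalar $\lambda^{-1}$).
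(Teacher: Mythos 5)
Your proposal is correct and follows essentially the same route as the paper: expand $\myX_t^\T\myX_t$ via Proposition~\ref{prop:decomposition}, treat the $\Overpar_t$-involving blocks as the perturbation $B_t$, verify $\|B_t\|\le\smin(A_t)/2$ from the hypothesis on $\lambda$, and apply Lemma~\ref{prop:inv} (your block-structure bound $\|B_t\|\le\|\Overpar_t\|^2+\|\Misalign_t\|\|\Overpar_t\|$ is in fact slightly sharper than the paper's triangle-inequality bound $\|\Overpar_t\|^2+2\|\Misalign_t\|\|\Overpar_t\|$, though either suffices). The bookkeeping identifying $\Err{\ref{prop:pre-approx0}.a}_t=B_tQ_1$ and $Q_2=\lambda A_t^{-1}$ with $\|Q_2\|\le 1$ is exactly what the paper's ``readily follows'' elides.
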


\begin{proof}
  Expanding $\myX_t^\T \myX_t$ according to \eqref{eqn:x-decomp-final}, we have
  \begin{equation*}
    \myX_t^\T \myX_t = \SVDV_t(\EssS_t^\T\EssS_t+\Misalign_t^\T\Misalign_t)\SVDV_t^\T 
    + \Vperp{t} \Overpar_t^\T \Overpar_t \Vperp{t}^\T + \SVDV_t\Misalign_t^\T\Overpar_t \Vperp{t}^\T + \Vperp{t}{\Overpar_t}^\T\Misalign_t \SVDV_t^\T.
  \end{equation*}
  The conclusion readily follows from Lemma~\ref{prop:inv} by setting therein $A=\SVDV_t(\EssS_t^\T\EssS_t+\Misalign_t^\T\Misalign_t)\SVDV_t^\T+\lambda I$ 
  and $B=\Vperp{t} \Overpar_t^\T \Overpar_t \Vperp{t}^\T + \SVDV_t\Misalign_t^\T\Overpar_t \Vperp{t}^\T + \Vperp{t}{\Overpar_t}^\T\Misalign_t \SVDV_t^\T$, where the condition $\|A^{-1}B\| \leq 1/2$ is satisfied since
  \begin{align*}
  \|A^{-1}B\| \leq \smin(A)^{-1}\|B\| \leq \lambda^{-1}\cdot(\|\Overpar_t\|^2 + 2\|\Overpar_t\|\|\Misalign_t\|) \leq 1/2.
  \end{align*}
\end{proof}

Moreover, the dominating term on the right hand side of~\eqref{eqn:scale-factor-approx} can be 
equivalently written as 
\begin{align}\label{eqn:scale-factor-approx-2}
  \left(\SVDV_t(\EssS_t^\T \EssS_t + \Misalign_t^\T \Misalign_t)\SVDV_t^\T + \lambda I\right)^{-1}
  \notag&= \Big(\SVDV_t(\EssS_t^\T \EssS_t + \Misalign_t^\T \Misalign_t + \lambda I)\SVDV_t^\T + \lambda \Vperp{t}\Vperp{t}^\T \Big)^{-1}\\
  &= \SVDV_t(\EssS_t^\T \EssS_t + \Misalign_t^\T \Misalign_t+\lambda I)^{-1}\SVDV_t^\T+\lambda^{-1}\Vperp{t} \Vperp{t}^\T.
\end{align}
When the misalignment error $\|\Misalign_t\EssS_t^{-1}\SigmaTruth\|$ is small, we expect 
$(\EssS_t^\T \EssS_t + \Misalign_t^\T \Misalign_t+\lambda I)^{-1} \approx (\EssS_t^\T \EssS_t +\lambda I)^{-1}$,
which is formalized in the following lemma that establishes  $(\EssS_t\EssS_t^\T+\EssS_t\Misalign_t^\T\Misalign_t\EssS_t^{-1}+\lambda I)^{-1}
     \approx (\EssS_t\EssS_t^\T+\lambda I)^{-1}$, due to the following approximation 
\begin{align*}
  (\EssS_t^\T\EssS_t+\Misalign_t^\T\Misalign_t+\lambda I)^{-1}
  &=\EssS_t^{-1}(\EssS_t\EssS_t^\T+\EssS_t\Misalign_t^\T\Misalign_t\EssS_t^{-1}+\lambda I)^{-1}\EssS_t\\
  &\approx \EssS_t^{-1}(\EssS_t\EssS_t^\T + \lambda I)^{-1}\EssS_t = (\EssS_t^\T\EssS_t + \lambda I)^{-1}.
\end{align*}

\begin{lemma}\label{prop:rearranged-inv-approx}
  If $\|\Misalign_t\EssS_t^{-1}\SigmaTruth\|\le\smin(\Xtruth)/16$ for some $t$, 
  then
  \begin{align}
    (\EssS_t\EssS_t^\T+\EssS_t\Misalign_t^\T\Misalign_t\EssS_t^{-1}+\lambda I)^{-1}
    = (I+\Err{\ref{prop:rearranged-inv-approx}}_t)(\EssS_t\EssS_t^\T+\lambda I)^{-1}, 
  \label{eqn:rearranged-inv-approx}
  \end{align}
  where the error term $\Err{\ref{prop:rearranged-inv-approx}}_t$ is a matrix defined as
  \begin{equation}\label{eqn:Err-S0-def}
  \Err{\ref{prop:rearranged-inv-approx}}_t = 
  \kappa^2\|\Xtruth\|^{-2}\|\Misalign_t\EssS_t^{-1}\SigmaTruth\|
  Q_1(\Misalign_t\EssS_t^{-1}\SigmaTruth)Q_2 ,
  \end{equation}
  where $Q_1$, $Q_2$ are matrices of appropriate dimensions
  satisfying $\|Q_1\|\le 1$, $\|Q_2\|\le 2$. 
  In particular, we have
  \begin{equation}\label{eqn:Err-S0-bound}
    \uinorm{\Err{\ref{prop:rearranged-inv-approx}}_t}
    \le 2\kappa^2\|\Xtruth\|^{-2}\|\Misalign_t\EssS_t^{-1}\SigmaTruth\| \cdot
      \uinorm{\Misalign_t\EssS_t^{-1}\SigmaTruth},
  \end{equation}
  where $\uinorm{\cdot}$ can be either the operator norm or the Frobenius norm.
\end{lemma}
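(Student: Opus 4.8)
The plan is to apply the matrix-inverse perturbation bound of Lemma~\ref{prop:inv} to the splitting
\[
  A \defeq \EssS_t\EssS_t^\T+\lambda I, \qquad B \defeq \EssS_t\Misalign_t^\T\Misalign_t\EssS_t^{-1},
\]
for which $A+B$ is exactly the matrix appearing in \eqref{eqn:rearranged-inv-approx}. The essential preparatory move, which I would carry out first, is to rewrite $B$ so that the potentially large factor $\EssS_t\EssS_t^\T$ ends up adjacent to $A^{-1}$. Abbreviating $W_t\defeq\Misalign_t\EssS_t^{-1}\SigmaTruth$ (well defined since $\EssS_t$ is invertible under the hypothesis, and obeying $\|W_t\|\le\smin(\Xtruth)/16$ by assumption), we have $\Misalign_t=W_t\SigmaTruth^{-1}\EssS_t$, hence, using symmetry of $\SigmaTruth$,
\[
  B = \EssS_t\big(W_t\SigmaTruth^{-1}\EssS_t\big)^\T\big(W_t\SigmaTruth^{-1}\EssS_t\big)\EssS_t^{-1}
    = \EssS_t\EssS_t^\T\,\SigmaTruth^{-1}W_t^\T W_t\SigmaTruth^{-1}.
\]

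Next I would verify the hypothesis of Lemma~\ref{prop:inv}. Since $\EssS_t\EssS_t^\T$ and $(\EssS_t\EssS_t^\T+\lambda I)^{-1}$ commute and their product has all eigenvalues in $[0,1)$, we have $(\EssS_t\EssS_t^\T+\lambda I)^{-1}\EssS_t\EssS_t^\T\preceq I$; combined with $\|\SigmaTruth^{-1}\|=1/\smin(\SigmaTruth)=1/\smin(\Xtruth)=\kappa/\|\Xtruth\|$ this gives
\[
  \|A^{-1}B\|
  = \big\|(\EssS_t\EssS_t^\T+\lambda I)^{-1}\EssS_t\EssS_t^\T\,\SigmaTruth^{-1}W_t^\T W_t\SigmaTruth^{-1}\big\|
  \le \|\SigmaTruth^{-1}\|^2\|W_t\|^2
  \le \frac{1}{\smin(\Xtruth)^2}\cdot\frac{\smin(\Xtruth)^2}{256}
  = \frac{1}{256}\le\frac12 .
\]
Lemma~\ref{prop:inv} then yields $(A+B)^{-1}=A^{-1}+A^{-1}BQA^{-1}$ for some $Q$ with $\|Q\|\le 2$; rewriting the right-hand side as $(I+A^{-1}BQ)A^{-1}$ and matching with \eqref{eqn:rearranged-inv-approx} identifies
\[
  \Err{\ref{prop:rearranged-inv-approx}}_t = A^{-1}BQ = (\EssS_t\EssS_t^\T+\lambda I)^{-1}\EssS_t\EssS_t^\T\,\SigmaTruth^{-1}W_t^\T W_t\SigmaTruth^{-1}Q .
\]

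It then remains to cast this expression into the stated form \eqref{eqn:Err-S0-def}. I would peel off the scalars $\|\SigmaTruth^{-1}\|^2=\kappa^2\|\Xtruth\|^{-2}$ and $\|W_t\|=\|W_t^\T\|$ and set
\[
  Q_1 \defeq \frac{(\EssS_t\EssS_t^\T+\lambda I)^{-1}\EssS_t\EssS_t^\T\,\SigmaTruth^{-1}W_t^\T}{\|\SigmaTruth^{-1}\|\,\|W_t\|},
  \qquad
  Q_2 \defeq \frac{\SigmaTruth^{-1}Q}{\|\SigmaTruth^{-1}\|},
\]
so that $\Err{\ref{prop:rearranged-inv-approx}}_t=\kappa^2\|\Xtruth\|^{-2}\|W_t\|\,Q_1 W_t Q_2$ with $W_t=\Misalign_t\EssS_t^{-1}\SigmaTruth$, which is exactly \eqref{eqn:Err-S0-def}. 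Here $\|Q_1\|\le 1$ because $\|(\EssS_t\EssS_t^\T+\lambda I)^{-1}\EssS_t\EssS_t^\T\|\le 1$ while the two remaining numerator factors $\SigmaTruth^{-1}$ and $W_t^\T$ have norms $\|\SigmaTruth^{-1}\|$ and $\|W_t\|$; and $\|Q_2\|\le\|Q\|\le 2$. Finally, the ``in particular'' bound \eqref{eqn:Err-S0-bound} drops out by sub-multiplicativity, $\uinorm{Q_1 W_t Q_2}\le\|Q_1\|\,\uinorm{W_t}\,\|Q_2\|\le 2\uinorm{W_t}$, which holds when $\uinorm{\cdot}$ is either the operator norm or the Frobenius norm.

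The computation is short once the rewrite of $B$ is available, and that rewrite is the only delicate point, which I would flag as the main obstacle. A direct submultiplicative estimate $\|A^{-1}B\|\le\|A^{-1}\|\,\|B\|\le\|B\|/\lambda$ would retain $\|\EssS_t\|^2$ inside $\|B\|$ together with $\lambda^{-1}$; in the operating regime (where $\|\EssS_t\|$ is only controlled at the level $O(\kappa\|\Xtruth\|)$ and $\lambda\asymp c_\lambda\smin^2(\Xtruth)$) these combine to a bound of order $\kappa^4/c_\lambda$, far from the required $\le 1/2$. It is precisely the cancellation $(\EssS_t\EssS_t^\T+\lambda I)^{-1}\EssS_t\EssS_t^\T\preceq I$, made possible by the grouping $B=(\EssS_t\EssS_t^\T)\,(\SigmaTruth^{-1}W_t^\T W_t\SigmaTruth^{-1})$, that removes the dependence on $\|\EssS_t\|$ and $\lambda$ and delivers the clean bound used above.
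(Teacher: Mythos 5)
Your proposal is correct and follows essentially the same route as the paper: both apply Lemma~\ref{prop:inv} with $A=\EssS_t\EssS_t^\T+\lambda I$, $B=\EssS_t\Misalign_t^\T\Misalign_t\EssS_t^{-1}$, use the regrouping $B=\EssS_t\EssS_t^\T(\Misalign_t\EssS_t^{-1})^\T(\Misalign_t\EssS_t^{-1})$ together with $\|(\EssS_t\EssS_t^\T+\lambda I)^{-1}\EssS_t\EssS_t^\T\|\le 1$ to verify $\|A^{-1}B\|\le(1/16)^2$, and extract the same normalized factors $Q_1,Q_2$. The "delicate point" you flag is indeed exactly the cancellation the paper relies on.
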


\begin{proof}
  In order to apply Lemma~\ref{prop:inv}, setting $A = \EssS_t\EssS_t^\T+\lambda I$ and $B=\EssS_t\Misalign_t^\T\Misalign_t\EssS_t^{-1} $, it is straightforward to verify that
  $$ \| A^{-1} B\| = \| ( \EssS_t\EssS_t^\T+\lambda I)^{-1} \EssS_t\Misalign_t^\T\Misalign_t\EssS_t^{-1} \|  \leq \|\Misalign_t\EssS_t^{-1}\|^2 \leq \|\Misalign_t\EssS_t^{-1}\SigmaTruth\|^2\|\SigmaTruth^{-1}\|^2 \leq (1/16)^2,$$
    where we use the obvious fact that $\|(\EssS_t\EssS_t^\T+\lambda I)^{-1}\EssS_t\EssS_t^\T\|\le 1$. 
    Applying Lemma~\ref{prop:inv}, we obtain
  \begin{align*}
    & (\EssS_t\EssS_t^\T+\EssS_t\Misalign_t^\T\Misalign_t\EssS_t^{-1}+\lambda I)^{-1}
    - (\EssS_t\EssS_t^\T+\lambda I)^{-1}
    \\
&    =  (\EssS_t\EssS_t^\T+\lambda I)^{-1}\EssS_t
    \Misalign_t^\T\Misalign_t\EssS_t^{-1} Q
    (\EssS_t\EssS_t^\T+\lambda I)^{-1}
    \\
  &   = (\EssS_t\EssS_t^\T+\lambda I)^{-1}\EssS_t\EssS_t^\T\SigmaTruth^{-1}
    (\Misalign_t\EssS_t^{-1}\SigmaTruth)^\T(\Misalign_t\EssS_t^{-1}\SigmaTruth)\SigmaTruth^{-1} Q
    (\EssS_t\EssS_t^\T+\lambda I)^{-1}
    \end{align*}
    for some matrix $Q$ with $\|Q\|\le 2$.  
    Since one may further write
    \begin{align*}
    & (\EssS_t\EssS_t^\T+\EssS_t\Misalign_t^\T\Misalign_t\EssS_t^{-1}+\lambda I)^{-1}
    - (\EssS_t\EssS_t^\T+\lambda I)^{-1}
    \nonumber\\
 &   =  \|\SigmaTruth^{-1}\|^2 \|\Misalign_t\EssS_t^{-1}\SigmaTruth\|(\EssS_t\EssS_t^\T+\lambda I)^{-1}\EssS_t\EssS_t^\T\frac{\SigmaTruth^{-1}}{\|\SigmaTruth^{-1}\|}
    \frac{(\Misalign_t\EssS_t^{-1}\SigmaTruth)^\T}{\|\Misalign_t\EssS_t^{-1}\SigmaTruth\|}(\Misalign_t\EssS_t^{-1}\SigmaTruth)\frac{\SigmaTruth^{-1}}{\|\SigmaTruth^{-1}\|} Q
    (\EssS_t\EssS_t^\T+\lambda I)^{-1},
    \end{align*}
    the conclusion follows by setting $\Err{\ref{prop:rearranged-inv-approx}}_t$ as in (\ref{eqn:Err-S0-def}) with 
    \begin{align*}
    Q_1 = (\EssS_t\EssS_t^\T+\lambda I)^{-1}\EssS_t\EssS_t^\T\frac{\SigmaTruth^{-1}}{\|\SigmaTruth^{-1}\|}
    \frac{(\Misalign_t\EssS_t^{-1}\SigmaTruth)^\T}{\|\Misalign_t\EssS_t^{-1}\SigmaTruth\|}, \quad Q_2 = \frac{\SigmaTruth^{-1}}{\|\SigmaTruth^{-1}\|} Q.
    \end{align*}
    The last inequality \eqref{eqn:Err-S0-bound} is then a direct consequence of \eqref{eqn:Err-S0-def}.
\end{proof}

\subsubsection{Step 2: a key recursion}

Recall the definition $\Delta_t$ in \eqref{eq:defn-Delta}, we can rewrite the update equation~\eqref{eqn:update} as
\begin{equation}\label{eqn:update-with-Delta}
  \myX_{t+1} = \myX_t - \eta(X_t X_t^\T-\Mtruth)X_t(X_t^\T X_t+\lambda I)^{-1} 
  + \eta\Delta_t X_t(X_t^\T X_t+\lambda I)^{-1}.
\end{equation}
Multiplying both sides of \eqref{eqn:update-with-Delta} 
by $\Utruth^\T$ on the left, we obtain
\begin{align}
  \Signal_{t+1} & = \Signal_t-\eta \Signal_t \myX_t^\T \myX_t(\myX_t^\T \myX_t+\lambda I)^{-1}
  + \eta\SigmaTruth^2 \Signal_t(\myX_t^\T \myX_t+\lambda I)^{-1}
  + \eta \Utruth^\T\Delta_t \myX_t(\myX_t^\T \myX_t+\lambda I)^{-1}
  \nonumber\\
  & = (1-\eta) \Signal_t 
  + \eta(\SigmaTruth^2+\lambda I+\Utruth^\T\Delta_t \Utruth)\Signal_t(\myX_t^\T \myX_t+\lambda I)^{-1} 
  + \eta \Utruth^\T\Delta_t\UperpTruth \Noise_t(\myX_t^\T \myX_t+\lambda I)^{-1}.
  \label{eqn:S-update-decomp}
\end{align}
Similarly, multiplying both sides of \eqref{eqn:update-with-Delta} by $\UperpTruth^\T$, we obtain
\begin{align}
  \Noise_{t+1}
  & = \Noise_t\left(I-\eta \myX_t^\T \myX_t(\myX_t^\T \myX_t+\lambda I)^{-1}\right)
      + \eta \UperpTruth^\T\Delta_t \myX_t(\myX_t^\T \myX_t+\lambda I)^{-1}
  \nonumber\\
  & = (1-\eta)\Noise_t 
  + \eta\lambda \Noise_t(\myX_t^\T \myX_t+\lambda I)^{-1}
  + \eta \UperpTruth^\T\Delta_t \Utruth \Signal_t(\myX_t^\T \myX_t+\lambda I)^{-1}
  + \eta \UperpTruth^\T\Delta_t \UperpTruth \Noise_t(\myX_t^\T \myX_t+\lambda I)^{-1}.
  \label{eqn:N-update-decomp}
\end{align}

These expressions motivate the need to study the terms $\Signal_t(\myX_t^\T \myX_t+\lambda I)^{-1}$ and $\Noise_t(\myX_t^\T \myX_t+\lambda I)^{-1}$, 
which we formalize in the following lemma. 

\begin{lemma}\label{lem:scaled-S-N-approx}
Under the same setting as Lemma~\ref{lem:update-approx},  we have 
\begin{subequations}
\begin{align}
  \Signal_t(\myX_t^\T \myX_t+\lambda I)^{-1} 
  & = (I+\Err{\ref{prop:rearranged-inv-approx}}_t)(\EssS_t\EssS_t^\T+\lambda I)^{-1}\EssS_t \SVDV_t^\T
  + \Err{\ref{lem:scaled-S-N-approx}.a}_t,
  \label{eqn:scaled-S-approx}\\
  \Noise_t(\myX_t^\T \myX_t+\lambda I)^{-1}
  & = \Misalign_t\EssS_t^{-1}
  (I+\Err{\ref{prop:rearranged-inv-approx}}_t)
  (\EssS_t\EssS_t^\T+\lambda I)^{-1}\EssS_t \SVDV_t^\T 
  + \lambda^{-1}\Overpar_t \Vperp{t}^\T
  + \Err{\ref{lem:scaled-S-N-approx}.b}_t,
  \label{eqn:scaled-N-approx}
\end{align}
\end{subequations}
where $\Err{\ref{prop:rearranged-inv-approx}}_t$ is given in (\ref{eqn:Err-S0-def}), and the error terms $\Err{\ref{lem:scaled-S-N-approx}.a}_t$, $\Err{\ref{lem:scaled-S-N-approx}.b}_t$ can be expressed as

\begin{subequations}
\begin{align}
  \Err{\ref{lem:scaled-S-N-approx}.a}_t &= 
  \kappa\lambda^{-1}\|\Xtruth\|^{-1}
  \|\Overpar_t\|Q_1(\Misalign_t\EssS_t^{-1}\SigmaTruth)^\T Q_2, 
  \\
  \Err{\ref{lem:scaled-S-N-approx}.b}_t &= \left(\Misalign_t(\EssS_t^\T\EssS_t+\Misalign_t^\T\Misalign_t+\lambda I)^{-1}\SVDV_t^\T
    +\lambda^{-1}\Overpar_t \Vperp{t}^\T\right)
  \Err{\ref{prop:pre-approx0}.b}_t
  \nonumber\\
  &= \lambda^{-1}(\|\Misalign_t\|Q_3+\|\Overpar_t\|Q_4)\Err{\ref{prop:pre-approx0}.b}_t.
\end{align}
\end{subequations}
for some matrices $\{Q_i\}_{1\leq i \leq 4}$ with spectral norm bounded by $2$, and $\Err{\ref{prop:pre-approx0}.b}_t$ defined in (\ref{eqn:Err0'}).
\end{lemma}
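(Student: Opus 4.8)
The plan is to expand the preconditioner $(\myX_t^\T\myX_t+\lambda I)^{-1}$ using the perturbation identities already in hand and then exploit the block structure of $\Signal_t$ and $\Noise_t$ relative to $[\SVDV_t,\Vperp{t}]$. Recall $\Signal_t=\EssS_t\SVDV_t^\T$ from \eqref{eq:signal_to_esssignal}, and since $[\SVDV_t,\Vperp{t}]$ is orthonormal, $\Noise_t=\Noise_t(\SVDV_t\SVDV_t^\T+\Vperp{t}\Vperp{t}^\T)=\Misalign_t\SVDV_t^\T+\Overpar_t\Vperp{t}^\T$. Combining Lemma~\ref{prop:pre-approx0} with the rewriting \eqref{eqn:scale-factor-approx-2} gives $(\myX_t^\T\myX_t+\lambda I)^{-1}=\big(\SVDV_t(\EssS_t^\T\EssS_t+\Misalign_t^\T\Misalign_t+\lambda I)^{-1}\SVDV_t^\T+\lambda^{-1}\Vperp{t}\Vperp{t}^\T\big)(I+\Err{\ref{prop:pre-approx0}.b}_t)$. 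Before invoking these I would check their hypotheses — $\lambda\ge 4(\|\Overpar_t\|^2\vee 2\|\Misalign_t\|\|\Overpar_t\|)$ for Lemma~\ref{prop:pre-approx0} and $\|\Misalign_t\EssS_t^{-1}\SigmaTruth\|\le\smin(\Xtruth)/16$ for Lemma~\ref{prop:rearranged-inv-approx} — both of which follow from $\lambda\asymp c_\lambda\smin^2(\Xtruth)$ and the smallness bounds \eqref{eqn:p1.5-2}--\eqref{eqn:p1.5-3} in Lemma~\ref{lem:p1.5}, using that $C_\delta$ is large and $c_{\ref{lem:p1.5}}$ small.

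For \eqref{eqn:scaled-S-approx} I would left-multiply the above expansion by $\Signal_t=\EssS_t\SVDV_t^\T$. Using $\SVDV_t^\T\SVDV_t=I$ and $\SVDV_t^\T\Vperp{t}=0$, the leading factor collapses to $\EssS_t(\EssS_t^\T\EssS_t+\Misalign_t^\T\Misalign_t+\lambda I)^{-1}\SVDV_t^\T$. The algebraic heart of the argument is the push-through identity $\EssS_t(\EssS_t^\T\EssS_t+\Misalign_t^\T\Misalign_t+\lambda I)^{-1}=(\EssS_t\EssS_t^\T+\EssS_t\Misalign_t^\T\Misalign_t\EssS_t^{-1}+\lambda I)^{-1}\EssS_t$ (valid since $\EssS_t$ is invertible), the same one recorded just before Lemma~\ref{prop:rearranged-inv-approx}; after it, Lemma~\ref{prop:rearranged-inv-approx} turns the prefactor into $(I+\Err{\ref{prop:rearranged-inv-approx}}_t)(\EssS_t\EssS_t^\T+\lambda I)^{-1}$, producing exactly the claimed main term $(I+\Err{\ref{prop:rearranged-inv-approx}}_t)(\EssS_t\EssS_t^\T+\lambda I)^{-1}\EssS_t\SVDV_t^\T$. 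What remains is $\Err{\ref{lem:scaled-S-N-approx}.a}_t=\EssS_t(\EssS_t^\T\EssS_t+\Misalign_t^\T\Misalign_t+\lambda I)^{-1}\SVDV_t^\T\Err{\ref{prop:pre-approx0}.b}_t$; to cast it in the stated form I would note that $\SVDV_t^\T$ annihilates the two $\Vperp{t}$-flanked blocks of $\Err{\ref{prop:pre-approx0}.a}_t$ in \eqref{eqn:Err0}, leaving only $\Misalign_t^\T\Overpar_t\Vperp{t}^\T$, then substitute $\Misalign_t^\T=\EssS_t^\T\SigmaTruth^{-1}(\Misalign_t\EssS_t^{-1}\SigmaTruth)^\T$ and use $\|\EssS_t(\EssS_t^\T\EssS_t+\Misalign_t^\T\Misalign_t+\lambda I)^{-1}\EssS_t^\T\|\le1$ to expose the scalar factors $\kappa\lambda^{-1}\|\Xtruth\|^{-1}=\lambda^{-1}\|\SigmaTruth^{-1}\|$ and $\|\Overpar_t\|$, absorbing the remaining norm-bounded matrices into $Q_1,Q_2$.

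For \eqref{eqn:scaled-N-approx} the argument is parallel, starting from $\Noise_t=\Misalign_t\SVDV_t^\T+\Overpar_t\Vperp{t}^\T$: the $\Overpar_t\Vperp{t}^\T$ piece meets $\lambda^{-1}\Vperp{t}\Vperp{t}^\T$ and yields the $\lambda^{-1}\Overpar_t\Vperp{t}^\T$ term, while the $\Misalign_t\SVDV_t^\T$ piece meets $\SVDV_t(\EssS_t^\T\EssS_t+\Misalign_t^\T\Misalign_t+\lambda I)^{-1}\SVDV_t^\T$; factoring out $\Misalign_t\EssS_t^{-1}$ and re-running the push-through identity together with Lemma~\ref{prop:rearranged-inv-approx} gives $\Misalign_t\EssS_t^{-1}(I+\Err{\ref{prop:rearranged-inv-approx}}_t)(\EssS_t\EssS_t^\T+\lambda I)^{-1}\EssS_t\SVDV_t^\T$. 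The leftover is $\Err{\ref{lem:scaled-S-N-approx}.b}_t=\big(\Misalign_t(\EssS_t^\T\EssS_t+\Misalign_t^\T\Misalign_t+\lambda I)^{-1}\SVDV_t^\T+\lambda^{-1}\Overpar_t\Vperp{t}^\T\big)\Err{\ref{prop:pre-approx0}.b}_t$, and writing $\Misalign_t(\EssS_t^\T\EssS_t+\Misalign_t^\T\Misalign_t+\lambda I)^{-1}=\lambda^{-1}\Misalign_t\cdot\big(\lambda(\EssS_t^\T\EssS_t+\Misalign_t^\T\Misalign_t+\lambda I)^{-1}\big)$ produces the $\lambda^{-1}\|\Misalign_t\|Q_3$ contribution and the $\lambda^{-1}\|\Overpar_t\|Q_4$ contribution.

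There is no conceptual obstacle: the whole lemma is a deterministic consequence of the decomposition \eqref{eqn:x-decomp-final} and the matrix-inverse perturbation identities of Lemmas~\ref{prop:pre-approx0} and~\ref{prop:rearranged-inv-approx}. The only delicate part — and where care is needed — is the bookkeeping: verifying the smallness preconditions, and massaging $\Err{\ref{prop:pre-approx0}.b}_t$ together with its sandwiching factors into the precise $Q$-normalized forms while keeping every absorbed factor's spectral norm at $O(1)$.
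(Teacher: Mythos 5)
Your proposal matches the paper's own proof essentially step for step: expand $(\myX_t^\T\myX_t+\lambda I)^{-1}$ via Lemma~\ref{prop:pre-approx0} and \eqref{eqn:scale-factor-approx-2}, verify the smallness preconditions from Lemma~\ref{lem:p1.5}, apply the push-through identity and Lemma~\ref{prop:rearranged-inv-approx} to produce the main terms, and massage the leftover pieces (including the annihilation of the $\Vperp{t}$-flanked blocks of $\Err{\ref{prop:pre-approx0}.a}_t$ by $\SVDV_t^\T$ and the insertion of $\SigmaTruth^{-1}\SigmaTruth$ to expose $\Misalign_t\EssS_t^{-1}\SigmaTruth$) into the stated $Q$-normalized forms. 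The argument is correct and complete at the level of detail given.
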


\begin{proof}
To begin, combining Lemma~\ref{prop:pre-approx0} and the discussion thereafter (cf. \eqref{eqn:scale-factor-approx}--\eqref{eqn:scale-factor-approx-2})
and the fact that $\EssS_t = \Signal_t\SVDV_t$, we have
for some matrix $Q$ with $\|Q\|\le 2$ that
\begin{align}
  \Signal_t(\myX_t^\T \myX_t+\lambda I)^{-1} 
  & = \EssS_t(\EssS_t^\T\EssS_t+\Misalign_t^\T\Misalign_t+\lambda I)^{-1}\SVDV_t^\T
  \left(I+\Err{\ref{prop:pre-approx0}.b}_t\right)
  \nonumber\\
  & = \EssS_t(\EssS_t^\T\EssS_t+\Misalign_t^\T\Misalign_t+\lambda I)^{-1} \SVDV_t^\T 
  + \EssS_t(\EssS_t^\T\EssS_t+\Misalign_t^\T\Misalign_t+\lambda I)^{-1}
  \lambda^{-1}\Misalign_t^\T\Overpar_t Q
  \nonumber\\
  & = (\EssS_t\EssS_t^\T+\EssS_t\Misalign_t^\T\Misalign_t\EssS_t^{-1}+\lambda I)^{-1}
  \EssS_t \SVDV_t^\T
  \nonumber\\
  &\phantom{=} 
  + \EssS_t(\EssS_t^\T\EssS_t+\Misalign_t^\T\Misalign_t+\lambda I)^{-1}\EssS_t^\T
  (\Misalign_t\EssS_t^{-1})^\T (\Overpar_t/\lambda)Q.
  \label{eqn:scaled-S-approx-tmp}
\end{align}
Note that the condition of Lemma~\ref{prop:pre-approx0} can be verified as follows: since 
\begin{align*}
\|\Overpar_t\| &\leq C_{\ref{lem:p1.5}.b}^{-C_{\ref{lem:p1.5}.b}} \kappa^{-3} \cdot \|\Xtruth\| \cdot \smin\big((\SigmaTruth^2 + \lambda I)^{-1/2}\big) \cdot \|\EssS_t\| \leq C_{\ref{lem:p1.5}.b}^{-C_{\ref{lem:p1.5}.b}}C_{\ref{lem:p1.5}.a}\smin(\Xtruth),\\
\|\Misalign_t\| &\leq \|\Misalign_t\EssS_t^{-1}\SigmaTruth\| \cdot \|\SigmaTruth^{-1}\| \cdot \|\EssS_t\| \leq c_{\ref{lem:p1.5}}\kappa^{-C_\delta/2}\|\Xtruth\| \cdot \frac{C_{\ref{lem:p1.5}.a}\kappa^3\|\Xtruth\|}{\smin(\Xtruth)} \leq c_{\ref{lem:p1.5}}C_{\ref{lem:p1.5}.a}\smin(\Xtruth)
\end{align*}
provided $C_\delta \geq 6$, the bounds $c_{\ref{lem:p1.5}} \lesssim c_\delta/c_\lambda$ and $C_{\ref{lem:p1.5}.a} \lesssim c_\lambda^{-1/2}$ imply that when we choose $C_\alpha$ to be large enough (depending on $c_\lambda$, $c_\delta$), 
\begin{align*}
2\|\Misalign_t\|\|\Overpar_t\| \vee \|\Overpar_t\|^2 \leq \lambda/4,
\end{align*}
as desired.

Now the first term in (\ref{eqn:scaled-S-approx-tmp}) can be handled by invoking Lemma~\ref{prop:rearranged-inv-approx}, since its condition is verified by $\|\Misalign_t\EssS_t^{-1}\SigmaTruth\| \leq c_{\ref{lem:p1.5}}\kappa^{-(C_\delta/2-1)}\smin(\Xtruth) \leq \smin(\Xtruth)/16$ provided $C_\delta \geq 2$ and $c_{\ref{lem:p1.5}} \leq 1/16$ by choosing $c_\delta$ sufficiently small (depending on $c_\lambda$). Namely,
\begin{align*}
(\EssS_t\EssS_t^\T+\EssS_t\Misalign_t^\T\Misalign_t\EssS_t^{-1}+\lambda I)^{-1}
  \EssS_t \SVDV_t^\T = (I+\Err{\ref{prop:rearranged-inv-approx}}_t)(\EssS_t\EssS_t^\T+\lambda I)^{-1}\EssS_t \SVDV_t^\T.
\end{align*}
For the second term, by noting that 
\[
  \|\EssS_t(\EssS_t^\T\EssS_t+\Misalign_t^\T\Misalign_t+\lambda I)^{-1}\EssS_t^\T\|
  \le \|\EssS_t(\EssS_t^\T\EssS_t+\lambda I)^{-1}\EssS_t^\T\|\le 1,
\]
it can be expressed as
\[\lambda^{-1}\|\Overpar_t\|\EssS_t(\EssS_t^\T\EssS_t+\lambda I)^{-1}\EssS_t^\T(\Misalign_t\EssS_t^{-1})^\T (\Overpar_t/\|\Overpar_t\|)Q
=\kappa\lambda^{-1}\|\Xtruth\|^{-1}
\|\Overpar_t\|Q_1(\Misalign_t\EssS_t^{-1}\SigmaTruth)^\T Q_2\]
for $Q_1 = \EssS_t(\EssS_t^\T\EssS_t+\lambda I)^{-1}\EssS_t^\T \cdot \kappa^{-1}\|\Xtruth\|\SigmaTruth^{-1}$ with $\|Q_1\|\le 1$ and 
$Q_2=(\Overpar_t/\|\Overpar_t\|)Q$ which satisfies $\|Q_2\|\le \|Q\|\le 2$.
Applying the above two bounds to (\ref{eqn:scaled-S-approx-tmp}) 
 yields \eqref{eqn:scaled-S-approx}.

Similarly, moving to \eqref{eqn:scaled-N-approx}, it follows that
\begin{align}
  \Noise_t(\myX_t^\T \myX_t+\lambda I)^{-1}
  = & \left(\Misalign_t(\EssS_t^\T\EssS_t+\Misalign_t^\T\Misalign_t+\lambda I)^{-1}\SVDV_t^\T
  +\lambda^{-1}\Overpar_t \Vperp{t}^\T\right)
  \left(I+\Err{\ref{prop:pre-approx0}.b}_t\right)
  \nonumber\\
  = & \Misalign_t(\EssS_t^\T\EssS_t+\Misalign_t^\T\Misalign_t+\lambda I)^{-1}\SVDV_t^\T
  + \lambda^{-1}\Overpar_t \Vperp{t}^\T
  + \Err{\ref{lem:scaled-S-N-approx}.b}_t,
  \label{eqn:scaled-N-approx-tmp}
\end{align}
where we have
\begin{align*}
  \Err{\ref{lem:scaled-S-N-approx}.b}_t &= \left(\Misalign_t(\EssS_t^\T\EssS_t+\Misalign_t^\T\Misalign_t+\lambda I)^{-1}\SVDV_t^\T
    +\lambda^{-1}\Overpar_t \Vperp{t}^\T\right)
  \Err{\ref{prop:pre-approx0}.b}_t
  \\
  &= \lambda^{-1}(\|\Misalign_t\|Q_3+\|\Overpar_t\|Q_4)\Err{\ref{prop:pre-approx0}.b}_t
\end{align*}
for some matrices $Q_3, Q_4$ with $\|Q_3\|, \|Q_4\|\le 1$. 
In the last line we used $\|(\EssS_t^\T\EssS_t+\Misalign_t^\T\Misalign_t+\lambda I)^{-1}\|\le\lambda^{-1}$. For the first term of \eqref{eqn:scaled-N-approx-tmp}, we use Lemma~\ref{prop:rearranged-inv-approx} and obtain
\begin{align*}
\Misalign_t(\EssS_t^\T\EssS_t+\Misalign_t^\T\Misalign_t+\lambda I)^{-1}\SVDV_t^\T &= \Misalign_t\EssS_t^{-1}(\EssS_t\EssS_t^\T+\EssS_t\Misalign_t^\T\Misalign_t\EssS_t^{-1}+\lambda I)^{-1}
  \EssS_t \SVDV_t^\T\\
  &= \Misalign_t\EssS_t^{-1}(I+\Err{\ref{prop:rearranged-inv-approx}}_t)(\EssS_t\EssS_t^\T+\lambda I)^{-1}\EssS_t \SVDV_t^\T.
\end{align*}
This yields the representation in \eqref{eqn:scaled-N-approx}.
\end{proof}

\subsubsection{Step 3: proofs associated with $\Signal_{t+1}$. }
With the help of Lemma~\ref{lem:scaled-S-N-approx}, we are ready to prove~\eqref{eqn:S-update-approx} and
 the associated norm bounds~\eqref{eqn:Err-1-bound}, \eqref{eqn:Err-2-bound}, and~\eqref{eqn:Err-2-bound-spectral}. 
To begin with, we plug~\eqref{eqn:scaled-S-approx}, \eqref{eqn:scaled-N-approx} 
into~\eqref{eqn:S-update-decomp} and use $\Signal_t = \EssS_t\SVDV_t^\T$ to obtain
\begin{equation*}
  \Signal_{t+1} = \left((1-\eta)I + \eta(\SigmaTruth^2+\lambda I+\Err{a}_t)
    (\EssS_t\EssS_t^\T +\lambda I)^{-1}
  \right)\EssS_t \SVDV_t^\T + \eta\Err{b}_t,
\end{equation*}
where the error terms $\Err{a}_t $ and $\Err{b}_t $ are
\begin{align*}
  \Err{a}_t &\defeq \Utruth^\T\Delta_t \Utruth+
  (\SigmaTruth^2+\Utruth^\T\Delta_t \Utruth+\lambda I)\Err{\ref{prop:rearranged-inv-approx}}_t 
  + \Utruth^\T\Delta_t \UperpTruth\Misalign_t\EssS_t^{-1}
  (I+\Err{\ref{prop:rearranged-inv-approx}}_t),
  \\
  \Err{b}_t &\defeq (\SigmaTruth^2+\Utruth^\T\Delta_t \Utruth+\lambda I)\Err{\ref{lem:scaled-S-N-approx}.a}_t 
  + \Utruth^\T\Delta_t \UperpTruth(\lambda^{-1}\Overpar_t \Vperp{t}^\T + \Err{\ref{lem:scaled-S-N-approx}.b}_t).
\end{align*}
This establishes the identity~\eqref{eqn:S-update-approx}. 
To control $\uinorm{\Err{a}_t}$, we observe that
\begin{align*}
  \uinorm{\Err{a}_t}
  &\le \uinorm{\Utruth^\T\Delta_t} + \|\SigmaTruth^2+\Utruth^\T\Delta_t \Utruth+\lambda I\|\cdot\uinorm{\Err{\ref{prop:rearranged-inv-approx}}_t} 
  + \uinorm{\Utruth^\T\Delta_t} \cdot \|\Misalign_t\EssS_t^{-1}\SigmaTruth\| \cdot 
  \|\SigmaTruth^{-1}\| \cdot (1+\|\Err{\ref{prop:rearranged-inv-approx}}_t\|)
  \\
  &\le \left(1+c_{\ref{lem:Delta-bound}}\kappa^{-2C_\delta/3}+c_\lambda\right)
  \|\Xtruth\|^2 \cdot \uinorm{\Err{\ref{prop:rearranged-inv-approx}}_t}
  + \uinorm{\Utruth^\T\Delta_t} 
  + c_{\ref{lem:p1.5}}\kappa^{-C_\delta/2}\|\Xtruth\| \cdot \smin^{-1}(\Xtruth) \cdot (1+\|\Err{\ref{prop:rearranged-inv-approx}}_t\|) \cdot 
  \uinorm{\Utruth^\T\Delta_t}
  \\
  &\le 2\|\Xtruth\|^2 \cdot \uinorm{\Err{\ref{prop:rearranged-inv-approx}}_t}
  + \left(1 + c_{\ref{lem:p1.5}}(1+\|\Err{\ref{prop:rearranged-inv-approx}}_t\|)\right)\uinorm{\Utruth^\T\Delta_t},
\end{align*}
where the second line follows from Lemma~\ref{lem:Delta-bound} 
and Equations \eqref{eqn:lambda-cond}, \eqref{eqn:p1.5-3};
the last line holds since $c_{\ref{lem:Delta-bound}}, c_\lambda$ are sufficiently small and $C_\delta$ is sufficiently large.
Now we invoke the bound \eqref{eqn:Err-S0-bound} in Lemma~\ref{prop:rearranged-inv-approx} 
to see
\begin{align*}
  \uinorm{\Err{\ref{prop:rearranged-inv-approx}}_t}\le 2\kappa^2\|\Xtruth\|^{-2}
  \|\Misalign_t\EssS_t^{-1}\SigmaTruth\|\uinorm{\Misalign_t\EssS_t^{-1}\SigmaTruth}
  &\le 2c_{\ref{lem:p1.5}}\kappa^2\kappa^{-C_\delta/2}\|\Xtruth\|^{-1}\uinorm{\Misalign_t\EssS_t^{-1}\SigmaTruth}
  \\
  &\le 2c_{\ref{lem:p1.5}}\kappa^{-6}\|\Xtruth\|^{-1}\uinorm{\Misalign_t\EssS_t^{-1}\SigmaTruth},
\end{align*}
where the last line follows again by choosing sufficiently large $C_\delta$.
Furthermore, since $\|\Misalign_t\EssS_t^{-1}\SigmaTruth\|\le c_{\ref{lem:p1.5}}\kappa^{-C_\delta/2} \|\Xtruth\|$ for small enough $c_{\ref{lem:p1.5}}$, we obtain $\|\Err{\ref{prop:rearranged-inv-approx}}_t\|\le 1$. Combining these inequalities yields the claimed bound
\[
  \uinorm{\Err{a}_t}\le 2c_{\ref{lem:p1.5}}
  \kappa^{-4}\|\Xtruth\|\cdot\uinorm{\Misalign_t\EssS_t^{-1}\SigmaTruth}
    +  2\uinorm{\Utruth^\T\Delta_t}.
\]

The bound of $\uinorm{\Err{b}_t}$ and $\|\Err{b}_t\|$ can be proved in a similar way, 
utilizing the bound for $\|\Overpar_t\|$ in \eqref{eqn:p1.5-2-var}. 
In fact, a computation similar to the above shows 
\begin{align*}
  \uinorm{\Err{b}_t}
  &\le 2\|\Xtruth\|^2 \cdot \uinorm{\Err{\ref{lem:scaled-S-N-approx}.a}}
  + \lambda^{-1}\|\Delta_t\|\cdot\uinorm{\Overpar_t}
  + \|\Delta_t\|\cdot\uinorm{\Err{\ref{lem:scaled-S-N-approx}.b}}
  \\
  &\le 2\kappa\lambda^{-1} \cdot \|\Xtruth\| \cdot \|\Overpar_t\| \cdot \|Q_1\| \cdot \|Q_2\|
  \cdot\uinorm{\Misalign_t\EssS_t^{-1}\SigmaTruth}
  + 100c_\lambda^{-1}\smin^{-1}(\Mtruth)c_{\ref{lem:Delta-bound}}\kappa^{-2C_\delta/3}
  \|\Xtruth\|^2 \cdot \uinorm{\Overpar_t} 
  \\
  &\phantom{\le{}} 
  + 8\lambda^{-2}c_{\ref{lem:Delta-bound}}\kappa^{-2C_\delta/3}
  (\|\Misalign_t\|+\|\Overpar_t\|)\|\Misalign_t\| \cdot \uinorm{\Overpar_t}
  \\
  &\le 800\kappa^7 c_\lambda^{-1}\|\Xtruth\|^{-1}\|\Overpar_t\| \cdot
  \uinorm{\Misalign_t\EssS_t^{-1}\SigmaTruth}
  + \frac1{48(\Cmax+1)\kappa}\uinorm{\Overpar_t}.
\end{align*}
Here, $\Cmax$ is the constant given by Lemma \ref{lem:p1.5}. Similarly, we have
\begin{align*}
\|\Err{b}_t\| \leq 800\kappa^7 c_\lambda^{-1}\|\Xtruth\|^{-1}\|\Overpar_t\| \cdot
  \|\Misalign_t\EssS_t^{-1}\SigmaTruth\|
  + \frac1{48(\Cmax+1)\kappa}\|\Overpar_t\|.
\end{align*}
The bound \eqref{eqn:Err-2-bound-spectral} now follows directly from the bound of $\|\Misalign_t\EssS_t^{-1}\SigmaTruth\|$ in Lemma \ref{lem:p1.5}, provided $c_\delta$ is sufficiently small and $C_\delta$ is sufficiently large.
To prove \eqref{eqn:Err-2-bound}, we note that 
\begin{equation}\label{eq:fact-uninorm}
\uinorm{A}\le n\|A\|
\end{equation}
for any unitarily invariant norm $\uinorm{\cdot}$ and real matrix $A\in\mathbb{R}^{p\times q}$ with $p\vee q \leq n$ (which can be easily verified 
when $\uinorm{\cdot}=\|\cdot\|$ or $\|\cdot\|_{\fro}$).
Thus 
\begin{equation}
  \label{eqn:Err-2-bound-raw}
  \uinorm{\Err{b}_t}
  \le \left(800\kappa^7 c_\lambda^{-1}c_{\ref{lem:p1.5}}\kappa^{-C_\delta/2}
  +\frac{1}{24(\Cmax+1)\kappa}\right)n\|\Overpar_t\|
  \le\left(\frac{\|\Overpar_t\|}{\smin(\EssS_t)}\right)^{3/4}\smin(\EssS_t)
\end{equation}
where the last inequality follows from the control of $\|\Overpar_t\|$ 
given by~\eqref{eqn:p1.5-1-var} provided $c_{\ref{lem:p1.5}}$ is sufficiently small 
and $C_{\ref{lem:p1.5}.b}$ therein is sufficiently large. This establishes the first inequality in \eqref{eqn:Err-2-bound}, and the second inequality therein follows directly from  \eqref{eqn:p1.5-1-var}.  

\subsubsection{Step 4: proofs associated with $\Misalign_{t+1}$. } 
Now we move on to prove the identity \eqref{eqn:N-update-approx}, and the norm 
controls~\eqref{eqn:Err-3-bound}, \eqref{eqn:Err-4-bound}, \eqref{eqn:Err-5-bound}, and~\eqref{eqn:Err-4-bound-spectral} associated with 
the misalignment term $\Misalign_{t+1}$.
Plugging \eqref{eqn:scaled-S-approx}, \eqref{eqn:scaled-N-approx} into \eqref{eqn:N-update-decomp} 
and using the decomposition $\Noise_t=\Misalign_t \SVDV_t^\T + \Overpar_t \Vperp{t}^\T$, 
we have 
\begin{align*}
  \Noise_{t+1} 
  & = \Misalign_t\EssS_t^{-1}\left((1-\eta)\EssS_t\EssS_t^\T+\lambda I
    + \eta\Err{c}_t\right)(\EssS_t\EssS_t^\T+\lambda I)^{-1}\EssS_t \SVDV_t^\T 
  \\
  &\phantom{=} 
  + \eta \Err{e}_t(\EssS_t\EssS_t^\T+\lambda I)^{-1}\EssS_t \SVDV_t^\T 
  + \Overpar_t \Vperp{t}^\T + \eta\Err{d}_t,
\end{align*}
where the error terms are defined to be 
\begin{align*}
  \Err{c}_t &\defeq \lambda\Err{\ref{prop:rearranged-inv-approx}}_t, 
  \\
  \Err{d}_t &\defeq (\lambda I + \UperpTruth^\T\Delta_t\UperpTruth) \Err{\ref{lem:scaled-S-N-approx}.b}_t 
  + \lambda^{-1}\UperpTruth^\T\Delta_t\UperpTruth \Overpar_t \Vperp{t}^\T
  + \UperpTruth^\T\Delta_t \Utruth\Err{\ref{lem:scaled-S-N-approx}.a}_t,
  \\
  \Err{e}_t &\defeq \UperpTruth^\T\Delta_t \Utruth(I+\Err{\ref{prop:rearranged-inv-approx}}_t) 
  + \UperpTruth^\T\Delta_t\UperpTruth \Misalign_t\EssS_t^{-1}
  (I+\Err{\ref{prop:rearranged-inv-approx}}_t).
\end{align*}
This establishes the decomposition~\eqref{eqn:N-update-approx}. 
The remaining norm controls follow from the expressions above and similar computation 
as we have done for $\Signal_{t+1}$. For the sake of brevity, we omit the details.

\subsection{Proof of Lemma \ref{lem:S-surrogate}}
\label{apd:sec:S-surrogate}
Use the identity~\eqref{eqn:S-update-approx} in Lemma~\ref{lem:update-approx} and the fact that $\SVDV_t$ and $\Vperp{t}$ have orthogonal columns to obtain
\begin{align}
  \Signal_{t+1}\SVDV_t+\Signal_{t+1}\Vperp{t} Q
  & = \left((1-\eta)I+\eta(\SigmaTruth^2+\lambda I+\Err{a}_t)
    (\EssS_t\EssS_t^\T +\lambda I)^{-1}\right)\EssS_t 
  + \eta\Err{b}_t(V_t + \Vperp{t} Q)
  \nonumber\\
  & = (I+\eta\Err{\ref{lem:S-surrogate}}_t)\left( (1-\eta)I +\eta(\SigmaTruth^2+\lambda I)
    (\EssS_t\EssS_t^\T +\lambda I)^{-1}\right)\EssS_t
  \nonumber\\
  & = (I+\eta\Err{\ref{lem:S-surrogate}}_t)\big( (1-\eta) \EssS_t \EssS_t^\T + \lambda I+\eta\SigmaTruth^2 \big)
  (\EssS_t\EssS_t^\T +\lambda I)^{-1}\EssS_t,
  \label{eqn:tilde-S-update}
\end{align}
where $\Err{\ref{lem:S-surrogate}}_t$ is defined to be
\begin{align*}
  \Err{\ref{lem:S-surrogate}}_t & \defeq \left(\Err{a}_t(\EssS_t\EssS_t^\T+\lambda I)^{-1}
    + \Err{b}_t(V_t + \Vperp{t} Q)\EssS_t^{-1}\right)
  \left( ( 1-\eta) I+\eta(\SigmaTruth^2+\lambda I)(\EssS_t\EssS_t^\T+\lambda I)^{-1}\right)^{-1}
  \nonumber\\
  & = \Err{a}_t\left((1-\eta)(\EssS_t\EssS_t^\T+\lambda I)+\eta(\SigmaTruth^2+\lambda I)\right)^{-1}
  \nonumber\\
  &\phantom{=} + \Err{b}_t(V_t + \Vperp{t} Q)\EssS_t^{-1}
  \left( (1-\eta) I +\eta(\SigmaTruth^2+\lambda I)(\EssS_t\EssS_t^\T+\lambda I)^{-1}\right)^{-1}
  \\
  &=: T_1 + T_2,
\end{align*}
where the invertibility of $\EssS_t$ follows from Lemma~\ref{lem:p1.5}, 
and the invertibility of $(1-\eta) I +\eta(\SigmaTruth^2+\lambda I)(\EssS_t\EssS_t^\T+\lambda I)^{-1}$ 
follows from \eqref{eqn:AB-bound}.

Since $(1-\eta)(\EssS_t\EssS_t^\T+\lambda I)+\eta(\SigmaTruth^2+\lambda I)\succeq \lambda I$ 
and $\lambda\ge\frac{1}{100}c_\lambda \smin(\Mtruth)$ by \eqref{eqn:lambda-cond}, 
we have
\begin{align*}
  \|T_1\|\le \lambda^{-1}\|\Err{a}_t\|\le 100 c_\lambda^{-1} \smin^{-1}(\Mtruth) \|\Err{a}_t\|.
\end{align*}
In view of the bound~\eqref{eqn:Err-1-bound} on $ \|\Err{a}_t\|$ in Lemma~\ref{lem:update-approx}, 
we further have 
\begin{align*}
\|T_1\| &\le 100c_\lambda^{-1}\smin^{-2}(\Xtruth)(\kappa^{-4}\|\Xtruth\|\cdot\|\Misalign_t\EssS_t^{-1}\SigmaTruth\|+\|\Delta_t\|)
  \\ 
  &\le 100c_\lambda^{-1}\kappa^2\|\Xtruth\|^{-2}(\kappa^{-4}c_{\ref{lem:p1.5}}\kappa^{-C_\delta/2}+c_{\ref{lem:Delta-bound}}\kappa^{-2C_\delta/3})\|\Xtruth\|^2
  \\
  &\le\frac{1}{400(C_{\ref{lem:p1.5}.a}+1)^4\kappa^5},
\end{align*}
where the second inequality follows from \eqref{eqn:p1.5-3} in Lemma \ref{lem:p1.5}
and Lemma~\ref{lem:Delta-bound}, 
and the last inequality holds as long as $c_{\ref{lem:p1.5}}$ and $c_{\ref{lem:Delta-bound}}$ are sufficiently small 
and  $C_\delta$ is sufficiently large (by first fixing $c_\lambda$ and then choosing $c_\delta$ to be sufficiently small). 

The term $T_2$ can be controlled in a similar way. Since $\|A B\| \leq \|A\| \cdot \|B\|$, one has
\begin{align*}
  \|T_2\|&\le \|\Err{b}_t\| \cdot (\|\SVDV_t\|+\|\Vperp{t}\|\|Q\|) \cdot \|\EssS_t^{-1}\| \cdot
  \smin^{-1}\left((1-\eta)I+\eta(\SigmaTruth^2+\lambda I)(\EssS_t\EssS_t^\T+\lambda I)^{-1}\right)
  \\
  &\stackrel{\mathrm{(i)}}{\le} 3\|\Err{b}_t\| \cdot \smin^{-1}(\EssS_t) \cdot \frac{\kappa}{1-\eta}
  \stackrel{\mathrm{(ii)}}{\le} 6\kappa\left(\frac{\|\Overpar_t\|}{\smin(\EssS_t)}\right)^{3/4} 
  \stackrel{\mathrm{(iii)}}{\le} \frac{1}{400(C_{\ref{lem:p1.5}.a}+1)^4\kappa^5}.
\end{align*}
Here, (i) follows from the bound~\eqref{eqn:AB-bound} and the facts that 
$\|\SVDV_t\| \vee \|\Vperp{t}\|\le 1$, $\|Q\|\le 2$; (ii)
arises from the control~\eqref{eqn:Err-2-bound} on $\|\Err{b}_t\|$ in Lemma~\ref{lem:update-approx} 
as well as the condition $\eta\le c_\eta\le 1/2$;
and (iii) follows from the implication~\eqref{eqn:p1.5-1-var} of Lemma~\ref{lem:p1.5}.

The proof is completed by summing up the bounds on $\| T_1 \|$ and $ \| T_2 \|$.


\subsection{Proof of Lemma \ref{lem:NS-surrogate}}
\label{apd:sec:NS-surrogate}
Similar to the proof of Lemma~\ref{lem:S-surrogate}, we can use the identity~\eqref{eqn:N-update-approx} in Lemma~\ref{lem:update-approx} 
and the fact that $\SVDV_t$ and $\Vperp{t}$ have orthogonal columns to obtain
\begin{align}
  \Noise_{t+1}\SVDV_t+\Noise_{t+1}\Vperp{t} Q
  & = \Misalign_t\EssS_t^{-1}\big((1-\eta)\EssS_t\EssS_t^\T+\lambda I
    +\eta\Err{c}_t\big)
  (\EssS_t\EssS_t^\T+\lambda I)^{-1}\EssS_t + \eta \Err{\ref{lem:NS-surrogate}.c}_t
  \nonumber\\
  & = \Misalign_t\EssS_t^{-1}(I+\eta\Err{\ref{lem:NS-surrogate}.a}_t)\big((1-\eta)\EssS_t\EssS_t^\T+\lambda I\big)
  (\EssS_t\EssS_t^\T+\lambda I)^{-1}\EssS_t 
  + \eta \Err{\ref{lem:NS-surrogate}.c}_t,
  \label{eqn:tilde-N-update}
\end{align}
where the error terms are defined to be
\begin{align}
  \Err{\ref{lem:NS-surrogate}.c}_t & \defeq  \Err{e}_t(\EssS_t\EssS_t^\T+\lambda I)^{-1}\EssS_t 
  + \eta^{-1}\Overpar_t Q + \Err{d}_t(V_t+\Vperp{t} Q), 
  \\
  \Err{\ref{lem:NS-surrogate}.a}_t & \defeq  \Err{c}_t\big((1-\eta)\EssS_t\EssS_t^\T+\lambda I\big)^{-1}.
\end{align}
Combine~\eqref{eqn:tilde-N-update} and \eqref{eqn:tilde-S-update} to arrive at 
\begin{align}
  &(\Noise_{t+1}V_t+\Noise_{t+1} \Vperp{t} Q)(\Signal_{t+1}\SVDV_t + \Signal_{t+1}\Vperp{t} Q)^{-1}
  \nonumber\\
  &\quad = \Misalign_t\EssS_t^{-1}(I+\eta\Err{\ref{lem:NS-surrogate}.a}_t)
  \big((1-\eta)\EssS_t\EssS_t^\T +\lambda I\big)
  \big((1-\eta)\EssS_t\EssS_t^\T+\lambda I+\eta\SigmaTruth^2\big)^{-1}
  (I+\eta\Err{\ref{lem:S-surrogate}}_t)^{-1}
  +\eta\Err{\ref{lem:NS-surrogate}.b}_t,
\end{align}
where, using 
$$(\EssS_t\EssS_t^\T +\lambda I)
  \big((1-\eta)\EssS_t\EssS_t^\T+\lambda I+\eta\SigmaTruth^2\big)^{-1} = \big((1-\eta)I+\eta(\SigmaTruth^2+\lambda I)(\EssS_t\EssS_t^\T+\lambda I)^{-1}\big)^{-1},$$ 
  we have 
\begin{align*}
  \Err{\ref{lem:NS-surrogate}.b}_t
  &\defeq\Err{\ref{lem:NS-surrogate}.c}_t\EssS_t^{-1}(\EssS_t\EssS_t^\T +\lambda I)
  \big((1-\eta)\EssS_t\EssS_t^\T+\lambda I+\eta\SigmaTruth^2\big)^{-1}
  (I+\eta\Err{\ref{lem:S-surrogate}}_t)^{-1}
  \\
  &=\Err{e}_t\big((1-\eta)\EssS_t\EssS_t^\T + \lambda I + \eta\SigmaTruth^2\big)^{-1}
  (I + \eta\Err{\ref{lem:S-surrogate}}_t)^{-1} 
  \\
  &\phantom{={}}+ \eta^{-1}\Overpar_t Q\EssS_t^{-1}
  \left((1-\eta)I + \eta(\SigmaTruth^2 + \lambda I)(\EssS_t\EssS_t^\T + \lambda I)^{-1}\right)^{-1}
  (I + \eta\Err{\ref{lem:S-surrogate}}_t)^{-1} 
  \\
  &\phantom{={}}+\Err{d}_t (\SVDV_t + \Vperp{t} Q) \EssS_t^{-1}
  \left((1-\eta)I + \eta(\SigmaTruth^2 + \lambda I)(\EssS_t\EssS_t^\T + \lambda I)^{-1}\right)^{-1}
  (I+\eta\Err{\ref{lem:S-surrogate}}_t)^{-1} \\
  &\eqqcolon T_1 + T_2 + T_3.
\end{align*}

It remains to bound $\|\Err{\ref{lem:NS-surrogate}.a}\|$ 
and $\uinorm{\Err{\ref{lem:NS-surrogate}.b}}$. 
By \eqref{eqn:Err-3-bound}, we have
\begin{align*}
  \|\Err{\ref{lem:NS-surrogate}.a}\|
  \le \lambda^{-1}\|\Err{c}_t\|
  &\le 100c_\lambda^{-1}\smin^{-2}(\Xtruth)
  \cdot \kappa^{-4}\|\Xtruth\|\|\Misalign_t\EssS_t^{-1}\SigmaTruth\|
  \\
  &\le 100c_\lambda^{-1}c_{\ref{lem:p1.5}}\kappa^{-2}\kappa^{-C_\delta/2}
  \\
  &\le\frac{1}{200(C_{\ref{lem:p1.5}.a}+1)^4\kappa^5},
\end{align*}
where the penultimate inequality follows from \eqref{eqn:p1.5-3} 
and the last inequality holds with the proviso that $c_{\ref{lem:p1.5}}$ is sufficiently small 
and $C_\delta$ is sufficiently large.

Now we move to bound $\uinorm{\Err{\ref{lem:NS-surrogate}.b}}$. To this end, the relation 
$\|(I+\eta\Err{\ref{lem:S-surrogate}}_t)^{-1}\|\le 2$ is quite helpful. This follows from 
Lemma~\ref{lem:S-surrogate} in which we have established that $\|\Err{\ref{lem:S-surrogate}}_t\|\le 1/2$.
As a result of this relation, we obtain  
\begin{align*}
\uinorm{T_1} & \leq 2 \lambda^{-1} \uinorm{\Err{e}_t}, \\
\uinorm{T_2} & \leq 2\uinorm{\Overpar_t} \cdot \|Q\| \cdot \| \EssS_t^{-1} \|
\cdot \left\| \left( ( 1-\eta) I +\eta(\SigmaTruth^2+\lambda I)
(\EssS_t\EssS_t^\T+\lambda I)^{-1}\right)^{-1}
\right\|, 
\\
\uinorm{T_3} & \leq 2 \uinorm{\Err{d}_t}\cdot (1+\|Q\|)\cdot \|\EssS_t^{-1}\|
\cdot \left\| \left( ( 1-\eta) I +\eta(\SigmaTruth^2+\lambda I)
  (\EssS_t\EssS_t^\T+\lambda I)^{-1}\right)^{-1}
\right\|. 
\end{align*}
Similar to the control of $T_1$ in the proof of Lemma~\ref{lem:S-surrogate}, we can take the condition $\lambda\ge\frac{1}{100} \kappa^{-4} c_\lambda \smin^2(\Xtruth)$ and the bound~\eqref{eqn:Err-5-bound} collectively 
to see that
\begin{align*}
 \uinorm{T_1} \leq 400 c_\lambda^{-1}\kappa^6 \|\Xtruth\|^{-2}\uinorm{\Utruth^\T\Delta_t}
 + \frac{1}{64(C_{\ref{lem:p1.5}.a}+1)^2\kappa^4\|\Xtruth\|}\uinorm{\Misalign_t\EssS_t^{-1}\SigmaTruth}.
\end{align*}
Regarding the terms $T_2$ and $T_3$, we see from~\eqref{eqn:AB-bound} that 
\begin{align*}
\left\| \left( (1-\eta) I + \eta(\SigmaTruth^2+\lambda I)(\EssS_t\EssS_t^\T+\lambda I)^{-1}\right)^{-1} \right\|
\leq \frac{\kappa}{1-\eta} \leq 2 \kappa,
\end{align*}
as long $\eta$ is sufficiently small. 
Recalling the assumption $\|Q\|\le 2$, 
this allows us to obtain
\begin{align*}
 \uinorm{T_2} &\leq  8 \eta^{-1} \kappa \frac{\uinorm{ \Overpar_t} }{\smin(\EssS_t)}
 \leq 8 \eta^{-1} \kappa n\frac{\|\Overpar_t\|}{\smin(\EssS_t)}, \\
  \uinorm{T_3} &\leq 12 \kappa \uinorm{\Err{d}_t}/\smin(\EssS_t),
\end{align*}
where the first inequality again uses the elementary fact $\uinorm{ \Overpar_t} \leq n \|\Overpar_t\|$ in \eqref{eq:fact-uninorm}.

The desired bounds then follow from plugging in the bounds~\eqref{eqn:Err-4-bound} 
and~\eqref{eqn:p1.5-2-var}.


\section{Proofs for Phase I}\label{sec:proof-p1}
The goal of this section is to prove Lemma~\ref{lem:p1.5} in an inductive manner.
We achieve this goal in two steps. In Section~\ref{sec:phase-1-base-case}, we 
find an 
iteration number $t_1 \leq \Tcvg / 16$ such that the claim~\eqref{subeq:condition-t-1} 
is true at $t_1$. This establishes the base case. Then in Section~\ref{sec:phase-1-induction}, we prove 
the induction step, namely if the claim~\eqref{subeq:condition-t-1} holds for some iteration $t \geq t_1$, 
we aim to show that \eqref{subeq:condition-t-1} 
continues to hold for the iteration $t+1$. 
These two steps taken collectively finishes the proof of Lemma~\ref{lem:p1.5}.

\subsection{Establishing the base case: Finding a valid $t_1$}\label{sec:phase-1-base-case}
The following lemma ensures the existence of such an iteration number $t_1$.
\begin{lemma}\label{lem:p1}
  Under the same setting as Theorem \ref{thm:main}, 
  we have for some $t_1\le \Tcvg/16$ such that \eqref{eqn:p1.5-0} holds 
  and that~\eqref{subeq:condition-t-1} hold with $t=t_1$.
\end{lemma}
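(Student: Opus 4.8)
The plan is to exploit the heuristic stressed in the text: during the first iterations \myalg is closely tracked by the \emph{damped power iteration} $\myX_t^{\mathrm{pw}}\defeq\big(I+\tfrac{\eta}{\lambda}\opAA(\Mtruth)\big)^t\myX_0$, because the initialization scale $\alpha$ is far below $\sqrt{\lambda}$ (by \eqref{eqn:alpha-cond} and \eqref{eqn:lambda-cond}), so $(\myX_t^\T\myX_t+\lambda I)^{-1}\approx\lambda^{-1}I$ as long as $\|\myX_t\|\ll\sqrt{\lambda}$. First I would run an induction on $t$ showing that, up to the first time $\myX_t$ leaves a ball of radius $\asymp\sqrt{\lambda}$, the relative deviation $\|\myX_t-\myX_t^{\mathrm{pw}}\|/\|\myX_t^{\mathrm{pw}}\|$ is negligible: in each step the error has two sources, (i) replacing $(\myX_t^\T\myX_t+\lambda I)^{-1}$ by $\lambda^{-1}I$, which by Lemma~\ref{prop:inv} costs $O(\|\myX_t\|^2/\lambda^2)$ relative error, and (ii) the nonlinear term $\eta\,\opAA(\myX_t\myX_t^\T)\myX_t(\myX_t^\T\myX_t+\lambda I)^{-1}$, of norm $O(\eta\|\myX_t\|^3/\lambda)$, where the RIP bounds of Lemma~\ref{lem:fro_rip}/\ref{lem:spectral_rip} are used to pass between $\opAA$ and the identity; both stay under control while $\|\myX_t\|$ is small, exactly as in the pre-phase analysis of \citet{stoger2021small}.

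Next I would diagonalize $\opAA(\Mtruth)$: by RIP together with Weyl's and the Davis--Kahan inequalities, its top-$r_\star$ eigenspace lies within $O(\delta)$ of $\Utruth$ (hence of $\Uspec$) with eigenvalues in $\big[(1-O(\delta\kappa^2))\smin^2(\Xtruth),\,(1+O(\delta\kappa^2))\smax^2(\Xtruth)\big]$, while the remaining eigenvalues have magnitude $O(\delta\smax^2(\Xtruth))$. On the event $\Event$ one has $\smin(\Utruth^\T\myX_0)\gtrsim\alpha(2n)^{-C_G}$ and $\|\myX_0\|\le C_G\alpha$. Feeding this through $\myX_t^{\mathrm{pw}}$ and tracking the four pieces $\Signal_t,\Noise_t,\EssS_t,\Misalign_t,\Overpar_t$ of the decomposition \eqref{eqn:x-decomp-final}, the signal directions are amplified at rate $\gtrsim 1+\tfrac{\eta}{2c_\lambda}$, strictly faster than the misalignment and overparametrization components (whose growth is paced by the $O(\delta\smax^2)$ eigenvalues and the $O(\delta)$ leakage of the dominant signal directions into $\UperpTruth$); consequently the scaled ratios $\|\Misalign_t\EssS_t^{-1}\SigmaTruth\|/\|\Xtruth\|$ and $\|\Overpar_t\|/\smin\big((\SigmaTruth^2+\lambda I)^{-1/2}\EssS_t\big)$ contract geometrically, while $\smin(\EssS_t)$ does not shrink by more than a $(\kappa n)^{-O(1)}$ factor. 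I would then \emph{define} $t_1$ to be the first iteration at which all of \eqref{eqn:p1.5-0} and \eqref{eqn:p1.5-1}--\eqref{eqn:p1.5-4} hold simultaneously.

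It then remains to bound $t_1$ and verify the five claims. Because the two ratios above decay geometrically from a value at most $\mathrm{poly}(n)\kappa^{O(1)}$ down to targets of size $(C_{\ref{lem:p1.5}.b}\kappa n)^{-C_{\ref{lem:p1.5}.b}}$ and $c_{\ref{lem:p1.5}}\kappa^{-C_\delta/2}$, and because over any $t\le\Tcvg/16$ the overparametrization error grows by at most $(\|\Xtruth\|/\alpha)^{O(\delta\kappa^2/c_\lambda)}\le(\|\Xtruth\|/\alpha)^{1/6}$ (using that $\delta$ is tiny by \eqref{eqn:delta-cond}), one obtains $t_1=O\big(\tfrac1\eta\log(\kappa n)\big)$, which is $\le\Tcvg/16$ precisely because \eqref{eqn:alpha-cond} forces $\log(\|\Xtruth\|/\alpha)\gtrsim\tfrac1\eta\log(2\kappa)\log(2\kappa n)$. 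The bound \eqref{eqn:p1.5-0} follows since $\smin(\EssS_{t_1})\gtrsim\alpha(2n)^{-C_G}(\kappa n)^{-O(1)}\ge\alpha^2/\|\Xtruth\|$, again by \eqref{eqn:alpha-cond}; \eqref{eqn:p1.5-1} and \eqref{eqn:p1.5-3} hold by the choice of $t_1$; at $t=t_1$ the exponential prefactor in \eqref{eqn:p1.5-2} equals $1$, so it reduces to $\|\Overpar_{t_1}\|\le\alpha^{5/6}\|\Xtruth\|^{1/6}$, which is the sub-polynomial growth bound just used; and \eqref{eqn:p1.5-4} holds since $\|\EssS_{t_1}\|\le\|\myX_{t_1}\|=O(\sqrt\lambda)\lesssim c_\lambda^{-1/2}\|\Xtruth\|$ throughout this pre-phase, consistent with $C_{\ref{lem:p1.5}.a}\lesssim c_\lambda^{-1/2}$.

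The hard part will be the second step: executing the power-iteration tracking \emph{uniformly} over the whole pre-phase while $\|\myX_t\|$ is itself growing, and—more delicate—showing that $\|\Misalign_t\EssS_t^{-1}\SigmaTruth\|$ genuinely \emph{contracts} rather than merely staying bounded. This forces one to couple the growth of the numerator (misalignment plus the $O(\delta)$ leakage of the fastest-growing signal direction) against the growth of the \emph{slowest} signal direction in the denominator, and this is exactly where the quantitative RIP condition \eqref{eqn:delta-cond}, with its large exponent $C_\delta$, is indispensable; the remaining bookkeeping for $\Signal_{t+1},\Noise_{t+1}$ is routine given the decompositions of Lemma~\ref{lem:update-approx}.
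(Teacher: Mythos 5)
Your proposal is correct and follows essentially the same route as the paper: couple the iterates to the damped power iteration $\hat X_t=\big(I+\tfrac{\eta}{\lambda}\opAA(\Mtruth)\big)^t X_0$ while $\|X_t\|$ stays well below $\sqrt{\lambda}$ (this is exactly Lemma~\ref{lem:approx_power_method}), then use the spectral gap $s_{r_\star}/s_{r_\star+1}\ge 1+c\eta$ of $I+\tfrac{\eta}{\lambda}\opAA(\Mtruth)$ to locate $t_1\asymp\frac1\eta\log(\kappa n)\le\Tcvg/16$ and read off \eqref{eqn:p1.5-0}--\eqref{eqn:p1.5-4}. The one structural difference is that the ``hard part'' you flag at the end---the uniform tracking of signal growth against misalignment and overparameterization leakage---is not re-derived in the paper via Weyl/Davis--Kahan but imported wholesale from \citet{stoger2021small} (their Lemma~8.5, wrapped here as Lemma~\ref{lem:Mahdi}), combined with the short conversion Lemma~\ref{lem:misalign-angle-transform} that turns the principal-angle bound $\|\UperpTruth^\T\SVDU_{\EssX_{t}}\|\lesssim\gamma+\phi$ into the scaled misalignment bound \eqref{eqn:p1.5-3}; your do-it-yourself spectral argument should go through but would duplicate that existing machinery.
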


\noindent The rest of this subsection is devoted to the proof of this lemma.
\medskip

Define an auxiliary sequence 
\begin{align}\label{def:power_seq}
\hat{X}_{t} \defeq \Big(I+\frac{\eta}{\lambda}\opAA(\Mtruth)\Big)^t X_0,
\end{align}
which can be viewed as power iterations on the matrix $\opAA(\Mtruth)$ from the initialization $X_0$. 

In what follows, we first establish that the true iterates $\{X_t\}$ stay close to the auxiliary iterates $\{ \hat{X}_t \}$ as long as the initialization size $\alpha$ is small; see Lemma~\ref{lem:approx_power_method}. This proximity then allows us to invoke the result in~\citet{stoger2021small} (see Lemma~\ref{lem:Mahdi}) to establish Lemma~\ref{lem:p1}. For the rest of the appendices, we work on the following event given in (\ref{asp:init}): 
\begin{align*}
  \Event=\{\|G\|\le C_G\}\cap\{\smin^{-1}(\Uspec^\T G)\le (2n)^{C_G}\}.
\end{align*}

\paragraph{Step 1: controlling distance between $X_t$ and $\hat{X}_t$.}

The following lemma guarantees the closeness between the two iterates $\{X_t\}$ and $\{ \hat{X}_t \}$, with the proof deferred to Appendix~\ref{sec:proof-dist-power-method}. Recall that $C_G$ is the constant defined in the event $\mathcal{E}$ in (\ref{asp:init}), and $c_\lambda$ is the constant given in Theorem \ref{thm:main}.

\begin{lemma}\label{lem:approx_power_method}
Suppose that $\lambda \geq \frac{1}{100}c_\lambda\smin^2(\Xtruth)$. For any $\theta\in(0,1)$, there exists a large enough constant $K = K(\theta,c_\lambda, C_G) > 0$ such that the following holds: As long as $\alpha$ obeys
\begin{align}\label{ineq:alpha_cond_tmp}
\log\frac{\|\Xtruth\|}{\alpha} \geq \frac{K}{\max(\eta, \kappa^{-2})}\log(2\kappa n)\cdot\Big(1+\log\Big(1+\frac{\eta}{\lambda}\|\opAA(\Mtruth)\|\Big)\Big),
\end{align}
one has for all $t \le \frac{1}{\theta\eta}\log(\kappa n)$:
\begin{align}\label{ineq:approx_power_method}
\bigpnorm{X_t-\hat{X}_t}{} &\leq t\Big(1+\frac{\eta}{\lambda}\pnorm{\opAA(\Mtruth)}{}\Big)^t\frac{\alpha^2}{\|\Xtruth\|}.
\end{align}
Moreover, $\pnorm{X_t}{} \leq \pnorm{\Xtruth}{}$ for all such $t$.
\end{lemma}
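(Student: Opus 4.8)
The plan is to control the discrepancy $X_t - \hat{X}_t$ by a discrete Grönwall-type induction on $t$. First I would write down the one-step recursions for both sequences. For the power-iteration surrogate, $\hat{X}_{t+1} = (I + \tfrac{\eta}{\lambda}\opAA(\Mtruth))\hat{X}_t$ by definition. For the true iterate, I would start from \eqref{eqn:update-with-Delta} (or equivalently \eqref{eqn:update}) and isolate the leading term: since $\alpha$ is tiny compared with $\lambda$, the preconditioner obeys $(X_t^\T X_t + \lambda I)^{-1} = \lambda^{-1}(I + \lambda^{-1}X_t^\T X_t)^{-1} = \lambda^{-1}I + O(\lambda^{-2}\|X_t\|^2)$, and $\opAA(X_tX_t^\T - \Mtruth) = \opAA(\Mtruth)\cdot(\text{small}) + \ldots$; more precisely I would write
\[
  X_{t+1} = \Big(I + \frac{\eta}{\lambda}\opAA(\Mtruth)\Big)X_t + \eta R_t,
\]
where $R_t$ collects all the error terms: the contribution of $X_tX_t^\T$ inside $\opAA$, the higher-order part of the preconditioner expansion, and the $\Delta_t X_t(X_t^\T X_t+\lambda I)^{-1}$ term. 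Using the a priori bound $\|X_t\|\le \|\Xtruth\|$ (which I must also establish — see below) together with RIP (Lemma~\ref{lem:spectral_rip}) to control $\|\opAA\|$ and $\|\Delta_t\|$, each such piece is bounded by a constant times $\|X_t\|^3/\lambda^2 \lesssim \|X_t\|\cdot(\|X_t\|/\|\Xtruth\|)^2\cdot\kappa^{c}$, hence $\|R_t\| \lesssim \kappa^{c}\|X_t\|^2/\lambda \cdot \|X_t\|/\lambda \le \text{(poly}\kappa)\,\|X_t\|^3/\smin^4(\Xtruth)$.

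Next I would set $\gamma \defeq 1 + \tfrac{\eta}{\lambda}\|\opAA(\Mtruth)\|$ and $e_t \defeq \|X_t - \hat X_t\|$, and derive the recursion $e_{t+1} \le \gamma\, e_t + \eta\|R_t\|$. Unrolling gives $e_t \le \eta\sum_{s<t}\gamma^{t-1-s}\|R_s\| \le \eta t\gamma^{t-1}\max_{s<t}\|R_s\|$. The point is that $\|R_s\|$ is cubic in $\|X_s\|$ while the target bound \eqref{ineq:approx_power_method} is quadratic in $\alpha$ divided by $\|\Xtruth\|$ — i.e. I need the extra factor $(\alpha/\|\Xtruth\|)^{?}$ to come out of the smallness of the iterates in this early phase. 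Concretely, as long as $\|X_s\| \le \gamma^s \alpha \cdot C_G$ (which holds initially and is maintained since $\|\hat X_s\|\le \gamma^s\|X_0\|$ and $e_s$ is being controlled), one gets $\|R_s\| \lesssim \text{poly}(\kappa)\,\gamma^{3s}\alpha^3/\smin^4(\Xtruth)$, so $e_t \lesssim \eta t\,\gamma^{t-1}\cdot\text{poly}(\kappa)\gamma^{3t}\alpha^3/\smin^4(\Xtruth)$. For $t \le \tfrac{1}{\theta\eta}\log(\kappa n)$ the factor $\gamma^{3t}$ is at most $(\kappa n)^{3/\theta}$ times a logarithmic correction involving $\log\gamma$; this is exactly where the hypothesis \eqref{ineq:alpha_cond_tmp} enters — it forces $\alpha$ so small (as a function of $\kappa, n, \eta, \log\gamma$) that after absorbing $\gamma^{3t}\cdot\text{poly}(\kappa)$ into one power of $\alpha/\|\Xtruth\|$, the remaining bound is $\le t\gamma^t\alpha^2/\|\Xtruth\|$ as claimed. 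I would carry out this bookkeeping carefully, choosing $K = K(\theta,c_\lambda,C_G)$ large enough to dominate all the implicit universal constants and the powers of $\kappa$.

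Finally, the bound $\|X_t\| \le \|\Xtruth\|$ for $t \le \tfrac{1}{\theta\eta}\log(\kappa n)$ should be proved \emph{simultaneously} with \eqref{ineq:approx_power_method} by a joint induction: assuming $\|X_s\|\le\|\Xtruth\|$ for $s\le t$, derive the $R_t$-bound and the distance bound $e_{t+1}$, then check $\|X_{t+1}\| \le \|\hat X_{t+1}\| + e_{t+1} \le \gamma^{t+1}C_G\alpha + (\text{tiny}) \le \|\Xtruth\|$, using $\gamma^{t+1}\alpha \le (\kappa n)^{c/\theta}\alpha \ll \|\Xtruth\|$ from \eqref{ineq:alpha_cond_tmp}. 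The main obstacle I anticipate is the constant-chasing in the exponent: one must be careful that the number of powers of $\gamma$ (and of $\kappa$, and of $n$) accumulated over $t \le \tfrac{1}{\theta\eta}\log(\kappa n)$ steps is a \emph{fixed} polynomial controlled by $\theta$, so that a single logarithmic lower bound on $\log(\|\Xtruth\|/\alpha)$ of the form \eqref{ineq:alpha_cond_tmp} — linear in $\log(\kappa n)$ and in $\log\gamma$ — suffices to kill it; getting the dependence on $\log\gamma$ right (rather than on $\gamma$ itself) is the delicate point, and it is why the hypothesis is phrased with $\log(1+\tfrac{\eta}{\lambda}\|\opAA(\Mtruth)\|)$ rather than $\tfrac{\eta}{\lambda}\|\opAA(\Mtruth)\|$.
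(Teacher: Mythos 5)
Your proposal follows essentially the same route as the paper's own proof: the same decomposition of $X_{t+1}-\hat{X}_{t+1}$ into the power-iteration propagation of the previous error plus a one-step remainder that is cubic in $\|X_t\|$ (controlled via Lemma~\ref{lem:spectral_rip} and the preconditioner expansion from Lemma~\ref{prop:inv}), the same joint induction with $\|X_t\|\le\|\Xtruth\|$, and the same use of \eqref{ineq:alpha_cond_tmp} to absorb $\gamma^{2t}\cdot\mathrm{poly}(\kappa, r)$ into a single power of $\alpha/\|\Xtruth\|$. The only blemishes are dimensional slips in your remainder bound (the preconditioner-error piece is $\eta\|\opAA(\Mtruth)\|\,\|X_t\|^3/\lambda^2$, not $\|X_t\|^3/\lambda^2$, and the combined remainder scales as $\eta\,\mathrm{poly}(\kappa)\,r\,\|X_t\|^3/\|\Mtruth\|$ rather than with $\smin^4(\Xtruth)$ in the denominator), which the careful bookkeeping you defer would catch.
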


\paragraph{Step 2: borrowing a lemma from~\citet{stoger2021small}.}

Compared to the original sequence $X_t$, the behavior of the power iterates $\hat{X}_t$ is much easier to analyze. Now that we have sufficient control over $\|X_t-\hat{X}_t\|$, it is possible to show that $X_t$ has the desired properties in Lemma~\ref{lem:p1} by first establishing the corresponding property of $\hat{X}_t$ and then invoking a standard matrix perturbation argument. Fortunately, such a strategy has been implemented by \citet{stoger2021small} and wrapped into the following helper lemma.

Denote 
\begin{align*}
s_{j} \defeq \singval_{j}\Big(I + \frac{\eta}{\lambda}\opAA(\Mtruth)\Big) = 1+ \frac{\eta}{\lambda}\singval_{j}\big(\opAA(\Mtruth)\big),
\qquad j=1,2,\ldots,n
\end{align*}
and recall that $\Uspec$ (resp. $\SVDU_{\EssX_t}$) is an orthonormal basis of the eigenspace associated with the $r_\star$ largest eigenvalues of $\opAA(\Mtruth)$ (resp. $\EssX_t$).

\begin{lemma}\label{lem:Mahdi}
There exists some small universal $c_{\ref{lem:Mahdi}} > 0$ such that the following hold. 
Assume that for some $\gamma \leq c_{\ref{lem:Mahdi}}$,
\begin{align}\label{ineq:mahdi_p1_cond1}
\pnorm{(\id-\opAA)(\Mtruth)}{} \leq \gamma\smin^2(\Xtruth),
\end{align}
and furthermore, 
\begin{align}\label{eqn:Mahdi-phi}
\phi \defeq\frac{\alpha\|G\|s_{r_\star+1}^t+\|X_t - \hat{X}_t\|}{\alpha \smin(\Uspec^\T G)s_{r_\star}^t}\le c_{\ref{lem:Mahdi}}\kappa^{-2}.
\end{align}
Then there exists some universal $C_{\ref{lem:Mahdi}} > 0$ such that the following hold: \begin{subequations}
\begin{align}
\smin(\EssS_t)&\ge\frac{\alpha}{4}\smin(\Uspec^\T G)s_{r_\star}^t, \label{ineq:mahdi_p1_1}\\
\|\Overpar_t\|&\le C_{\ref{lem:Mahdi}}\phi\alpha\smin(\Uspec^\T G)s_{r_\star}^t, \label{ineq:mahdi_p1_2}\\ 
\|\UperpTruth^\T \SVDU_{\EssX_t}\|&\le C_{\ref{lem:Mahdi}}(\gamma+\phi), \label{ineq:mahdi_p1_3}
\end{align}
where $\Ess\myX_t \defeq \myX_t \SVDV_t \in \reals^{n\times r_\star}$.
\end{subequations}
\end{lemma}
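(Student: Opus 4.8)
The plan is to follow the two-stage strategy of \citet{stoger2021small}: first establish the three conclusions for the \emph{clean} power iteration sequence $\hat{\myX}_t$ from \eqref{def:power_seq}, and then transfer them to the true iterate $\myX_t$ by a matrix perturbation argument, exploiting that hypothesis \eqref{eqn:Mahdi-phi} says precisely that the error $\|\myX_t-\hat{\myX}_t\|$ together with the growth along the ``bad'' directions is small compared with the growth of the signal. Since the preconditioned power iteration matrix $I+\frac{\eta}{\lambda}\opAA(\Mtruth)$ differs from the vanilla one $I+\eta'\opAA(\Mtruth)$ only by the rescaling $\eta'=\eta/\lambda$, one may in fact invoke the corresponding lemma of \citet{stoger2021small} essentially verbatim after this substitution; what follows outlines the underlying argument.

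\emph{Step 1 (spectral structure of the power iteration matrix).} Write $\opAA(\Mtruth)=\Mtruth+(\opAA-\id)(\Mtruth)$, where by \eqref{ineq:mahdi_p1_cond1} the perturbation has spectral norm at most $\gamma\smin^2(\Xtruth)$, i.e.\ at most $\gamma$ times the smallest nonzero eigenvalue of $\Mtruth$. Weyl's inequality then shows the top $r_\star$ eigenvalues of $\opAA(\Mtruth)$ are separated from the rest by a gap of order $\smin^2(\Xtruth)$, so that $s_{r_\star}\ge\bigl(1+c\tfrac{\eta}{\lambda}\smin^2(\Xtruth)\bigr)s_{r_\star+1}$ for a universal $c>0$, and the Davis--Kahan theorem gives $\|\UperpTruth^\T\Uspec\|\lesssim\gamma$ (recall $\Uspec$ is the leading rank-$r_\star$ eigenspace of $\opAA(\Mtruth)$).

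\emph{Step 2 (analysis of $\hat{\myX}_t$).} Decompose $\hat{\myX}_t=\Uspec\Uspec^\T\hat{\myX}_t+(I-\Uspec\Uspec^\T)\hat{\myX}_t$. The component inside $\Uspec$ is obtained by applying $s_{r_\star}^t$-type amplification to $\Uspec^\T\myX_0=\alpha\Uspec^\T G$, whose smallest singular value is at least $\alpha\smin(\Uspec^\T G)$; the component outside $\Uspec$ is amplified by at most $s_{r_\star+1}^t$ applied to $\alpha\|G\|$. Hence $\hat{\myX}_t$ has a dominant rank-$r_\star$ part aligned with $\Uspec$ (and therefore, by Step 1, nearly with $\Utruth$), while its overparameterization tail is suppressed by the ratio $(s_{r_\star+1}/s_{r_\star})^t$.

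\emph{Step 3 (perturbation transfer) and the main obstacle.} Write $\myX_t=\hat{\myX}_t+(\myX_t-\hat{\myX}_t)$ and apply Weyl's inequality for singular values together with Wedin's theorem for the associated subspaces. Because $\phi\le c_{\ref{lem:Mahdi}}\kappa^{-2}$, the numerator of $\phi$---which bundles the bad-direction tail $\alpha\|G\|s_{r_\star+1}^t$ and the perturbation $\|\myX_t-\hat{\myX}_t\|$---is small compared with the signal scale $\alpha\smin(\Uspec^\T G)s_{r_\star}^t$. This yields a lower bound of order $\alpha\smin(\Uspec^\T G)s_{r_\star}^t$ on the $r_\star$-th singular value of the dominant block of $\EssX_t=\myX_t\SVDV_t$, and hence, projecting onto $\Utruth$ and absorbing constants, \eqref{ineq:mahdi_p1_1}; the transverse part $\|\Overpar_t\|$ is bounded by $C_{\ref{lem:Mahdi}}\phi$ times the same scale, giving \eqref{ineq:mahdi_p1_2}; and the principal angle between $\SVDU_{\EssX_t}$ and $\Utruth$ is controlled by the $\Uspec$-vs-$\Utruth$ misalignment from Step 1 ($\lesssim\gamma$) plus the perturbation/tail contribution ($\lesssim\phi$), which is \eqref{ineq:mahdi_p1_3}. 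The delicate part is precisely this last step: one must disentangle the three contamination sources---$\Uspec\neq\Utruth$ (size $\gamma$), slow growth along $\Uspec_\perp$ (size $(s_{r_\star+1}/s_{r_\star})^t$), and the perturbation $\|\myX_t-\hat{\myX}_t\|$---and show each is dominated by $\alpha\smin(\Uspec^\T G)s_{r_\star}^t$, all while tracking the rotation $\SVDV_t$ that relates $\EssX_t,\EssS_t,\Misalign_t,\Overpar_t$ to the raw quantities $\Utruth^\T\myX_t,\UperpTruth^\T\myX_t$ (here one uses that $\SVDV_t$ has orthonormal columns and, by Wedin applied to the right singular subspace, essentially only reshuffles within the leading block). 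Since the statement mirrors the corresponding result of \citet{stoger2021small} after $\eta\mapsto\eta/\lambda$, the cleanest route is to verify their hypotheses in our setting and invoke their proof, leaving only the rescaling bookkeeping to the appendix.
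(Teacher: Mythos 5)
Your proposal is correct and takes essentially the same route as the paper: the paper's proof of this lemma is a direct invocation of \citet[Lemma~8.5]{stoger2021small} (whose statement is exactly the power-method-plus-perturbation analysis you outline), with the only added remark being that $\|\Overpar_t\|=\|\UperpTruth^\T \myX_t\Vperp{t}\|\le\|\myX_t\Vperp{t}\|$ so that their bound on $\|\myX_t\Vperp{t}\|$ yields \eqref{ineq:mahdi_p1_2}. Your concluding suggestion to verify their hypotheses and invoke their result after the $\eta\mapsto\eta/\lambda$ rescaling is precisely what the paper does.
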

\begin{proof}[Proof of Lemma~\ref{lem:Mahdi}]
This follows from the claims of \citet[Lemma~8.5]{stoger2021small} by noting that $\pnorm{\Overpar_t}{} = \pnorm{\UperpTruth^\T X_tV_{t,\perp}}{} \leq  \pnorm{X_tV_{t,\perp}}{}$ for (\ref{ineq:mahdi_p1_2}).\footnote{The equation (31) in \citet[Lemma~8.5]{stoger2021small} is stated in a weaker form than what they actually proved, and our~\eqref{ineq:mahdi_p1_2} indeed follows from the penultimate inequality in the proof of \citet[Lemma~8.5]{stoger2021small}.}
\end{proof}

\paragraph{Step 3: completing the proof.}
Now, with the help of Lemma~\ref{lem:Mahdi}, we are ready to prove Lemma~\ref{lem:p1}. We start with verifying the two assumptions in Lemma~\ref{lem:Mahdi}.

\paragraph{Verifying assumption~\eqref{ineq:mahdi_p1_cond1}.}
By the RIP in (\ref{eqn:rip}), Lemma \ref{lem:spectral_rip}, and the condition of $\delta$ in (\ref{eqn:delta-cond}), we have 
\begin{align}\label{ineq:opAA_bound}
\bigpnorm{(\id-\opAA)(\Mtruth)}{}\leq \sqrt{r_\star}\delta\|\Mtruth\|
\leq c_\delta\kappa^{-( C_\delta -2)}\smin^2(\Xtruth)
=:\gamma \smin^2(\Xtruth).
\end{align}
Here $\gamma=c_\delta\kappa^{-( C_\delta -2)} \leq c_{\ref{lem:Mahdi}}$, as  
$ c_\delta $ is assumed to be sufficiently small.

\paragraph{Verifying assumption~\eqref{eqn:Mahdi-phi}.}
By Weyl's inequality and (\ref{ineq:opAA_bound}), we have 
\begin{align*}
  \Big|s_j-1-\frac{\eta}{\lambda}\singval_j(\Mtruth)\Big| \leq \frac{\eta}{\lambda}\bigpnorm{(\id - \opAA)(\Mtruth)}{} \leq \frac{\eta}{\lambda}c_\delta\kappa^{-(C_\delta-2)}\smin^2(\Xtruth)
  \leq \frac{100c_\delta}{c_\lambda}\eta,
\end{align*}
where the last inequality follows from the condition $\lambda \geq\frac{1}{100}\kappa^{-4} c_\lambda\smin^2(\Xtruth)$. 
Furthermore, using the condition $\lambda \leq c_\lambda\smin^2(\Xtruth)$ assumed in \eqref{eqn:lambda-cond}, 
the above bound implies that, for some $C = C(c_\lambda, c_\delta) > 0$,
\begin{subequations} 
\begin{align}
s_1 &\le 1+\frac{\eta}{\lambda}\|\Mtruth\|+\frac{100c_\delta}{c_\lambda}\eta \leq 1+C\eta\kappa^6, 
\label{eqn:s1-bound}\\
s_{r_\star} &\geq 1 + \frac{\eta}{\lambda}\smin^2(\Xtruth) 
- \frac{100c_\delta}{c_\lambda}\eta \geq 1 + \frac{\eta}{2c_\lambda}, 
\label{eqn:sr-bound-lower}\\
s_{r_\star} &\leq 1 + \frac{\eta}{\lambda}\smin^2(\Xtruth) 
+ \frac{100c_\delta}{c_\lambda}\eta \leq 1 + \frac{2\eta}{\lambda / \smin^2(\Xtruth)},
\label{eqn:sr-bound-upper}\\
s_{r_\star+1}&\le 1 + \frac{100c_\delta}{c_\lambda}\eta \leq 1 + \frac{\eta}{4c_\lambda},
\label{eqn:sr1-bound}
\end{align}
\end{subequations}
where we use the fact that $\singval_{r_\star+1}(\Mtruth)=0$, and $c_\delta \leq 1/400$. 
Consequently we have $s_{r_\star}/s_{r_\star+1}\ge 1+c'\eta$ 
for some $c' = c'(c_\lambda) > 0$, assuming $c_\eta\le c_\lambda$. 
Thus for any large constant $L>0$, 
there is some constant $c''=c''(c')>0$ such that, 
setting $L' = c''L\log(L)$ we have 
\begin{align*}
  (s_{r_\star}/s_{r_\star+1})^t\geq (L\kappa n)^{L},
  \quad\forall t\ge\frac{L'}{\eta}\log(\kappa n).
\end{align*}
On the event $\Event$ given in (\ref{asp:init}), 
we can choose $L$ large enough so that $L\ge 2C_G$, hence $\|G\|\le L$ 
and $\smin^{-1}(\Uspec^\T G)\le (2n)^{L/2}$. 
Summarizing these inequalities, we see for $t\ge\frac{L'}{\eta}\log(\kappa n)$, 
\begin{align}
\frac{\alpha\|G\|s_{r_\star+1}^t}{\alpha\smin(\Uspec^\T G)s_{r_\star}^t} 
&\leq L\smin^{-1}(\Uspec^\T G)(s_{r_\star+1}/s_{r_\star})^t
\nonumber\\
&\leq L(2n)^{L/2}(L\kappa n)^{-L}\leq (L\kappa n)^{-L/2}.
\label{eqn:verify-mahdi-lem-tmp1}
\end{align}
Furthermore, invoking Lemma~\ref{lem:approx_power_method} with $\theta=1/(2L')$ 
(note that \eqref{ineq:alpha_cond_tmp} is implied by the assumption \eqref{eqn:alpha-cond}, 
where $C_\alpha$ is assumed sufficiently large, 
considering $\lambda\ge\frac{1}{100}c_\lambda\smin^2(\Xtruth)$ 
and $\|\opAA(\Mtruth)\|\le \|\Mtruth\| + \gamma\smin^2(\Xtruth)\le 2\|\Xtruth\|^2$ 
by~\eqref{ineq:opAA_bound}), 
we obtain for any $t\le\frac{1}{\theta\eta}\log(\kappa n)=\frac{2L'}{\eta}\log(\kappa n)$ 
that $\|X_t-\hat{X}_t\|\le ts_1^t\alpha^2/\pnorm{\Xtruth}{}$. This implies
\begin{align}
\frac{\|X_t-\hat{X}_t\|}{\alpha\smin\big(\Uspec^\T G\big)s_{r_\star}^t} 
&\leq (s_1/s_{r_\star})^t\smin^{-1}\big(\Uspec^\T G\big)\alpha/\|\Xtruth\| 
\nonumber\\
&\leq s_1^t \smin^{-1}(\Uspec^\T G)\alpha/\|\Xtruth\|
\nonumber\\
&\leq \exp(t\log(s_1)+L\log(L\kappa n))\alpha/\|\Xtruth\| \leq (L\kappa n)^{-L/2}
\label{eqn:verify-mahdi-lem-tmp2}
\end{align}
where the second inequality follows from \eqref{eqn:sr-bound-lower}, 
the penultimate inequality follows from our choice of $L$ 
which ensured $\smin^{-1}(\Uspec^\top G)\le (2n)^{L/2}$,  
and the last inequality follows from 
\eqref{eqn:s1-bound}, our choice $t\le\frac{2L'}{\eta}\log(\kappa n)$ 
and our assumption \eqref{eqn:alpha-cond} on $\alpha$ 
which implies $\alpha/\|\Xtruth\|\le (2\kappa n)^{-C_\alpha}$, 
given that $C_\alpha$ is sufficiently large, 
e.g. $C_\alpha\ge C(L, c_\lambda, c_\eta)$. 
It may also be inferred from the above arguments 
that $L$ can be made arbitrarily large by increasing $C_\alpha$.

Combining the above arguments, we conclude that 
for any $t\in [(L'/\eta)\log(\kappa n), (2L'/\eta)\log(\kappa n)]$, 
both of \eqref{eqn:verify-mahdi-lem-tmp1}, \eqref{eqn:verify-mahdi-lem-tmp2} hold,
hence the condition in (\ref{eqn:Mahdi-phi}) can be verified by
\begin{align}\label{ineq:p1_phi_bound}
\phi=\frac{\alpha\|G\|s_{r_\star+1}^t+\|X_t-\hat{X}_t\|}{\alpha \smin(\Uspec^\T G)s_{r_\star}^t} 
&\leq 2(L\kappa n)^{-L/2} \\
&\leq c_{\ref{lem:Mahdi}}\kappa^{-2},
\nonumber
\end{align}
by choosing $L$ sufficiently large.

This completes the verification of both assumptions of Lemma~\ref{lem:Mahdi}. 
Upon noting that the upper threshold of $t$ satisfies $(2L'/\eta)\log(\kappa n) \leq T_{\min}/16$, 
we will now invoke the conclusions of Lemma \ref{lem:Mahdi} to prove Lemma~\ref{lem:p1} 
for some $t\in[(L'/\eta)\log(\kappa n), T_{\min}/16]$.

\paragraph{Proof of bound~\eqref{eqn:p1.5-0}.}
This can be inferred from \eqref{ineq:mahdi_p1_1} in the following way. 
Recalling that $\smin(\Uspec^\T G)\ge(2n)^{-C_G}$ on the event $\Event$, 
and $s_{r_\star}\ge 1$ by \eqref{eqn:sr-bound-lower}, 
we obtain from \eqref{ineq:mahdi_p1_1} that
\begin{align*}
\smin(\EssS_{t_1}) \geq \frac14\alpha(2n)^{-C_G} \geq \alpha^2/\pnorm{\Xtruth}{},
\end{align*}
given the condition \eqref{eqn:alpha-cond} 
which guarantees 
\[\frac{\alpha}{\|\Xtruth\|}\le (2n)^{-C_\alpha/\eta}
\le\frac14(2n)^{-C_G},\] 
as long as $\eta\le c_\eta\le 1$ and $C_\alpha\ge C_G + 2$. The proof is complete.

\paragraph{Proof of bound~\eqref{eqn:p1.5-1}.} 
We combine (\ref{ineq:mahdi_p1_1}), (\ref{ineq:mahdi_p1_2}), 
and (\ref{ineq:p1_phi_bound}) to obtain
\begin{align*}
\frac{\pnorm{\Overpar_{t_1}}{}}{\smin(\EssS_{t_1})} 
\le 4C_{\ref{lem:Mahdi}}\phi \le 4C_{\ref{lem:Mahdi}}(L\kappa n)^{-L/2}
\le (L\kappa n/2)^{-L/2},
\end{align*}
where the last inequality follows from taking $L$ sufficiently large.
We further note that \eqref{eqn:lambda-cond} implies
\begin{align*}
  \smin(\EssS_{t_1})
  \le\|\SigmaTruth^2+\lambda I\|^{1/2}\smin\left((\SigmaTruth^2+\lambda I)^{-1/2}\EssS_{t_1}\right)
  &\le(c_\lambda+1)^{1/2}\|\Xtruth\|\smin\left((\SigmaTruth^2+\lambda I)^{-1/2}\EssS_{t_1}\right)
  \\
  &\le 2\|\Xtruth\|\smin\left((\SigmaTruth^2+\lambda I)^{-1/2}\EssS_{t_1}\right), 
\end{align*}
assuming $c_\lambda\le 1$, 
hence 
\[\frac{\pnorm{\Overpar_{t_1}}{}}{\smin\left((\SigmaTruth^2+\lambda I)^{-1/2}\EssS_{t_1}\right)}
\le 2\|\Xtruth\|(L\kappa n/2)^{-L/2}
\le (C_{\ref{lem:p1.5}.b}\kappa n)^{-C_{\ref{lem:p1.5}.b}}\|\Xtruth\|,
\]
as desired, with $C_{\ref{lem:p1.5}.b}=L/4$ as long as $L$ is sufficiently large. 
It is also clear that $C_{\ref{lem:p1.5}.b}$ can be made arbitrarily large by enlarging $C_\alpha$ 
as $L$ can be.

\paragraph{Proof of bound~\eqref{eqn:p1.5-2}.} 
We apply (\ref{ineq:mahdi_p1_2}) to yield
\begin{align*}
\pnorm{\Overpar_{t_1}}{} 
\leq C_{\ref{lem:Mahdi}}\phi\alpha\smin(\Uspec^\T G)s_{r_\star}^{t_1} 
\leq C_G C_{\ref{lem:Mahdi}}(L\kappa n)^{-L/2}
\left(1+\frac{2\eta}{c_\lambda}\right)^{t_1} \alpha 
\leq \alpha^{5/6}\pnorm{\Xtruth}{}^{1/6}, 
\end{align*}
where the second inequality follows from $\smin(\Uspec^\T G)\le\|G\|\le C_G$ by assumption 
and from \eqref{eqn:sr-bound-upper}; 
the last inequality follows from $t_1\le(2L'/\eta)\log(\kappa n)$
and from the condition (\ref{eqn:alpha-cond}) on $\alpha$,
provided that $C_\alpha$ is sufficiently large. 

\paragraph{Proof of bound~\eqref{eqn:p1.5-3}.}
We apply (\ref{ineq:mahdi_p1_3}) to yield that
\begin{align*}
\pnorm{\UperpTruth^\T \SVDU_{\EssX_{t+1}}}{} 
\leq C_{\ref{lem:Mahdi}}(\gamma + \phi) 
\leq \frac{c_\delta}{c_\lambda}\kappa^{-2 C_\delta /3},
\end{align*}
using the bounds of $\gamma$ and $\phi$ in \eqref{ineq:opAA_bound} and \eqref{ineq:p1_phi_bound},
provided that $c_\lambda\le\frac12\min(1,C_{\ref{lem:Mahdi}}^{-1})$ 
and $L\ge 2(C_\delta+1)$. 
To further bound $\|\Misalign_{t+1}\EssS_{t+1}^{-1}\SigmaTruth\|$ we need the following lemma.
\begin{lemma}
  \label{lem:misalign-angle-transform}
  Assume $\EssS_t$ is invertible, and
   at least one of the following is true: 
  (i) $\|\UperpTruth^\T \SVDU_{\Ess\myX_t}\|\le 1/4$;
  (ii) $\|\Misalign_t\EssS_t^{-1}\SigmaTruth\|\le \kappa^{-1}\|\Xtruth\|/4$.
  Then
  \[\kappa^{-1}\|\Xtruth\|\|\UperpTruth^\T \SVDU_{\Ess\myX_t}\|
    \le\|\Misalign_t\EssS_t^{-1}\SigmaTruth\|
    \le 2\|\Xtruth\|\|\UperpTruth^\T \SVDU_{\Ess\myX_t}\|.
  \]
\end{lemma}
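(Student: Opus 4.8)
The plan is to reduce the whole statement to elementary facts about the single matrix $B := \Misalign_t\EssS_t^{-1} \in \reals^{(n-r_\star)\times r_\star}$. First I would record the identity $\Ess\myX_t = \Utruth\EssS_t + \UperpTruth\Misalign_t$, which is immediate from $\Ess\myX_t = \myX_t\SVDV_t$, the orthogonal split $\myX_t = \Utruth\Signal_t + \UperpTruth\Noise_t$ (using that $[\Utruth,\UperpTruth]$ is orthonormal), and the definitions of $\EssS_t,\Misalign_t$ in~\eqref{eq:imp_not}. Since $\EssS_t$ is invertible, $\Ess\myX_t$ has full column rank $r_\star$, and its column space coincides with that of $\Ess\myX_t\EssS_t^{-1} = \Utruth + \UperpTruth B$. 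A short computation using $\Utruth^\T\Utruth = \UperpTruth^\T\UperpTruth = I$ and $\Utruth^\T\UperpTruth = 0$ shows that $W := (\Utruth + \UperpTruth B)(I + B^\T B)^{-1/2}$ has orthonormal columns; since $\SVDU_{\Ess\myX_t}$ and $W$ span the same $r_\star$-dimensional subspace, they differ by an orthogonal factor on the right, so $\|\UperpTruth^\T\SVDU_{\Ess\myX_t}\| = \|\UperpTruth^\T W\| = \|B(I+B^\T B)^{-1/2}\|$.

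Next, passing to an SVD of $B$ and using that $s\mapsto s/\sqrt{1+s^2}$ is nondecreasing on $[0,\infty)$, I would conclude that $\theta := \|\UperpTruth^\T\SVDU_{\Ess\myX_t}\| = \|B\|/\sqrt{1+\|B\|^2}$; in particular $\theta \le \|B\|$ and $\|B\| = \theta/\sqrt{1-\theta^2}$. I would also use the elementary two-sided bound $\smin(\SigmaTruth)\,\|B\| \le \|B\SigmaTruth\| \le \|\SigmaTruth\|\,\|B\|$ (the lower estimate obtained by evaluating $B\SigmaTruth$ on $\SigmaTruth^{-1}u/\|\SigmaTruth^{-1}u\|$ where $u$ is a maximizing unit vector of $B$), together with the facts that the singular values of $\Xtruth = \Utruth\SigmaTruth$ are exactly the diagonal entries of $\SigmaTruth$, so $\smin(\SigmaTruth) = \smin(\Xtruth) = \kappa^{-1}\|\Xtruth\|$ and $\|\SigmaTruth\| = \|\Xtruth\|$.

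Combining these yields both inequalities. The lower bound needs no hypothesis: $\|\Misalign_t\EssS_t^{-1}\SigmaTruth\| = \|B\SigmaTruth\| \ge \kappa^{-1}\|\Xtruth\|\,\|B\| \ge \kappa^{-1}\|\Xtruth\|\,\theta$. For the upper bound I would first observe that either hypothesis forces $\theta \le 1/4$: under (i) this is the assumption; under (ii), $\kappa^{-1}\|\Xtruth\|\,\|B\| \le \|B\SigmaTruth\| \le \kappa^{-1}\|\Xtruth\|/4$ gives $\|B\|\le 1/4$, hence $\theta \le \|B\| \le 1/4$. Then $\theta\le 1/4$ gives $\sqrt{1-\theta^2} \ge \sqrt3/2$, so $\|B\| = \theta/\sqrt{1-\theta^2} \le 2\theta$, whence $\|\Misalign_t\EssS_t^{-1}\SigmaTruth\| = \|B\SigmaTruth\| \le \|\Xtruth\|\,\|B\| \le 2\|\Xtruth\|\,\theta$, as claimed.

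The argument is essentially pure linear algebra, so I do not expect a genuine obstacle; the only points requiring care are that $\SVDU_{\Ess\myX_t}$ is defined only up to an orthogonal factor on the right (hence the angle must be extracted from the column space of $\Ess\myX_t$ via the auxiliary $W$), and keeping straight which direction of $\smin(\SigmaTruth)\,\|B\| \le \|B\SigmaTruth\| \le \|\SigmaTruth\|\,\|B\|$ is invoked for the lower versus the upper estimate.
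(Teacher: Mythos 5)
Your proof is correct, and while it shares the paper's overall strategy (relate $\|\UperpTruth^\T \SVDU_{\Ess\myX_t}\|$ to $\|B\|$ with $B=\Misalign_t\EssS_t^{-1}$ up to a factor of $2$, then sandwich $\|B\SigmaTruth\|$ between $\smin(\SigmaTruth)\|B\|$ and $\|\SigmaTruth\|\,\|B\|$), the key step is carried out by a genuinely different mechanism. The paper starts from the identity $\Misalign_t\EssS_t^{-1}=\UperpTruth^\T\SVDU_{\Ess\myX_t}(\Utruth^\T\SVDU_{\Ess\myX_t})^{-1}$ (read off from the SVD of $\Ess\myX_t$) and then shows $\smin(\Utruth^\T\SVDU_{\Ess\myX_t})\ge 1-\|\UperpTruth^\T\SVDU_{\Ess\myX_t}\|\ge 3/4$ under (i), which gives $\|(\Utruth^\T\SVDU_{\Ess\myX_t})^{-1}\|\le 2$ and hence the factor-of-$2$ comparison; it also verifies that (ii) implies (i) via the same identity, exactly as you do. You instead build an explicit orthonormal basis $W=(\Utruth+\UperpTruth B)(I+B^\T B)^{-1/2}$ of the column space of $\Ess\myX_t$ and extract the \emph{exact} relation $\theta=\|B\|/\sqrt{1+\|B\|^2}$ (the sine--tangent relation between the largest principal angle and $B$), from which $\theta\le\|B\|\le\theta/\sqrt{1-\theta^2}$ and the conclusion follow. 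Your route is slightly longer to set up but yields a sharper, quantitative identity (showing for instance that the constant $2$ could be improved to $4/\sqrt{15}$ under $\theta\le 1/4$, and that the lower inequality holds with no smallness hypothesis at all); the paper's route avoids constructing $W$ and the functional calculus on $(I+B^\T B)^{-1/2}$ at the cost of working only with one-sided perturbation bounds. Both correctly handle the ambiguity of $\SVDU_{\Ess\myX_t}$ up to right rotation, since all quantities involved are invariant under it.
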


\noindent The proof is postponed to Section~\ref{sec:misalign-transform}.
Returning to the proof of bound \eqref{eqn:p1.5-3}, 
the above lemma yields
\[\|\Misalign_{t+1}\EssS_{t+1}^{-1}\SigmaTruth\|
\le\frac{2c_\delta}{c_\lambda}\|\Xtruth\|\kappa^{-2C_\delta/3}
\le c_{\ref{lem:p1.5}}\|\Xtruth\|\kappa^{-2C_\delta/3},
\]
for some $c_{\ref{lem:p1.5}}\lesssim c_\delta/c_\lambda$, as desired.

\paragraph{Proof of bound~\eqref{eqn:p1.5-4}.}
 We have
\begin{align*}
\pnorm{\EssS_{t_1}}{} = \pnorm{\Utruth^\T X_{t_1} V_{t_1}}{} \leq \pnorm{X_{t_1}}{} \leq \pnorm{\Xtruth}{},
\end{align*}
where the last step follows from Lemma \ref{lem:approx_power_method}.

\subsubsection{Proof of Lemma \ref{lem:approx_power_method}}\label{sec:proof-dist-power-method}
We prove the claim~\eqref{ineq:approx_power_method} by induction and also show that 
$\|X_t\|\le \|\Xtruth\|$ follows from \eqref{ineq:approx_power_method}.  For the base case $t=0$, it holds by definition. 
Assume that \eqref{ineq:approx_power_method} holds for some $t\leq \frac{1}{\theta\eta}\log(\kappa n) - 1$. 
We aim to prove that (i) $\pnorm{X_t}{}\leq \pnorm{\Xtruth}{}$ 
and that (ii) the inequality \eqref{ineq:approx_power_method} continues to hold for $t+1$.

\paragraph{Proof of $\pnorm{X_t}{}\leq \pnorm{\Xtruth}{}$.} 

By the induction hypothesis we know
\[
  \bigpnorm{X_t-\hat{X}_t}{} \leq t\left(1+\frac{\eta}{\lambda}\pnorm{\opAA(\Mtruth)}{}\right)^t\frac{\alpha^2}{\|\Xtruth\|}.
\]
In view of the constraint~\eqref{ineq:alpha_cond_tmp} on $\alpha$ and the restriction $t\leq \frac{1}{\theta\eta}\log(\kappa n)$, we have 
\begin{align*}
t\frac{\alpha}{\pnorm{\Xtruth}{}} \leq \frac{1}{\theta\eta}\log(\kappa n) \cdot \frac{\eta}{K}\frac{1}{\log(\kappa n)} = \frac{1}{K\theta} \leq 1
\end{align*}
as long as $K=K(\theta,c_\lambda,C_G)$ is sufficiently large. This further implies
\[
  \|X_t-\hat{X}_t\|\leq\left(t\frac{\alpha}{\|\Xtruth\|}\right)
  \left(1+\frac{\eta}{\lambda}\|\opAA(\Mtruth)\|\right)^t\alpha
  \le \left(1+\frac{\eta}{\lambda}\|\opAA(\Mtruth)\|\right)^t\alpha
  . 
\]
On the other hand, since $\pnorm{X_0}{}\leq C_G\alpha$ under the event $\Event$ (cf.~(\ref{asp:init})), in view of \eqref{def:power_seq}, we have
\begin{align*}
\|\hat{X}_t\|\le\Big(1+\frac{\eta}{\lambda}\|\opAA(\Mtruth)\|\Big)^t\|X_0\| \leq C_G \Big(1+\frac{\eta}{\lambda}\|\opAA(\Mtruth)\|\Big)^t\alpha.
\end{align*}
Thus for a large enough $K = K(\theta, c_\lambda, C_G)$, we have 
\begin{align}\label{ineq:Xt_bound}
\|X_t\|\le\|X_t-\hat{X}_t\|+\|\hat{X}_t\|\leq \Big(1+\frac{\eta}{\lambda}\|\opAA(\Mtruth)\|\Big)^t (C_G+1)\alpha 
\leq \sqrt{c_\lambda/200}\cdot\kappa^{-4}\|\Xtruth\|,
\end{align}
where the last inequality follows from the condition on $t$ and the choice of $\alpha$ in \eqref{ineq:alpha_cond_tmp}:
\begin{align*}
\log\frac{\pnorm{\Xtruth}{}}{\alpha} \geq \log\frac{\sqrt{200}(C_G+1)\kappa^4}{\sqrt{c_\lambda}} + t\log\Big(1+\frac{\eta}{\lambda}\|\opAA(\Mtruth)\|\Big).
\end{align*}  
The inequality~(\ref{ineq:Xt_bound}) clearly implies $\pnorm{X_t}{} \leq \pnorm{\Xtruth}{}$.

\paragraph{Proof of~\eqref{ineq:approx_power_method} at the induction step.}
The proof builds on  a key recursive relation on $\bigpnorm{X_{t+1}-\hat{X}_{t+1}}{}$, from which the induction follows readily from our assumption.

\paragraph{Step 1: building a recursive relation on $\bigpnorm{X_{t+1}-\hat{X}_{t+1}}{}$.}   By definition~\eqref{def:power_seq}, we have $\hat{X}_{t+1}=\big(I+\frac{\eta}{\lambda}\opAA(\Mtruth)\big)\hat{X}_t$, which implies the following decomposition:
  \begin{align}
    X_{t+1} - \hat{X}_{t+1} 
    &= \underbrace{ \Big[X_{t+1} - \Big(I+\frac{\eta}{\lambda}\opAA(\Mtruth)\Big)X_t\Big]}_{\eqqcolon T_1} +  \underbrace{ \Big(I+\frac{\eta}{\lambda}\opAA(\Mtruth)\Big)(X_t - \hat{X}_t)}_{\eqqcolon T_2} .  \label{eqn:X-hatX}
  \end{align}
We shall control each term separately. 
\begin{itemize}
\item The second term $T_2$ can be trivially bounded as  
 \begin{equation}\label{eqn:X-hatX-2}
  \|T_2\| = \biggpnorm{\Big(I+\frac{\eta}{\lambda}\opAA(\Mtruth)\Big)(X_t-\hat{X}_t)}{}
  \leq \Big(1+\frac{\eta}{\lambda}\pnorm{\opAA(\Mtruth)}{}\Big)\bigpnorm{X_t-\hat{X}_t}{}.
\end{equation}
\item Turning to the first term $T_1$, 
by the update rule~\eqref{eqn:update} of $X_{t+1}$ and the triangle inequality, we further have 
  \begin{align}\label{eqn:T1_banana}
\|T_1\| = \left\|X_{t+1}-\Big(I+\frac{\eta}{\lambda}\opAA(\Mtruth)\Big)X_t\right\| 
&\leq \left\|\eta\opAA(X_tX_t^\T)X_t(X_t^\T X_t + \lambda I)^{-1}\right\| \nonumber \\
&\phantom{\leq{}} + \left\|\eta \opAA(\Mtruth)X_t\big((X_t^\T X_t + \lambda I)^{-1} - \lambda^{-1}I\big)\right\|.
\end{align}
Since $\|(X_t^\T X_t+\lambda I)^{-1}\|\le\lambda^{-1}$, 
it follows that the first term in \eqref{eqn:T1_banana} can be bounded by
\[
  \left\|\eta\opAA(X_tX_t^\T)X_t(X_t^\T X_t + \lambda I)^{-1}\right\|
  \le\frac{\eta}{\lambda}\|\opAA(X_t^\T X_t)\|\|X_t\|.
\]
In addition, since $\sqrt{c_\lambda/200}\cdot \kappa^{-4}\|\Xtruth\| = \sqrt{c_\lambda \smin^2(\Xtruth)/200} \leq\sqrt{\lambda/2}$ 
by the condition $\lambda \geq\frac{1}{100} \kappa^{-4} c_\lambda\smin^2(\Xtruth)$, we have by (\ref{ineq:Xt_bound}) that $\pnorm{X_t}{} \leq \sqrt{\lambda/2}$. Therefore, invoking Lemma~\ref{prop:inv} 
implies that
$$(X_t^\T X_t+\lambda I)^{-1} - \lambda^{-1} I = \lambda^{-2} X_t^\T X_t Q ,
 \quad \text{for some }Q \; \text{with }\pnorm{Q}{} \le 2.$$ 
As a result, the second term in \eqref{eqn:T1_banana} can be bounded by 
\[
  \left\|\eta \opAA(\Mtruth)X_t\big((X_t^\T X_t + \lambda I)^{-1} - \lambda^{-1}I\big)\right\|
  \leq 2 \frac{\eta}{\lambda^2}\|\opAA(\Mtruth)\|\|X_t\|^3. 
\]
Combining the above two inequalities leads to   
\[
  \left\|T_1\right\| 
   \leq \frac{\eta}{\lambda}\left(\|\opAA(X_t^\T X_t)\|+\frac{2}{\lambda}\|\opAA(\Mtruth)\|\|X_t\|^2\right)\|X_t\|.
\]
In view of Lemma \ref{lem:spectral_rip},
we know $\pnorm{\opAA(\Mtruth)}{}\lesssim r_\star\pnorm{\Mtruth}{}$ 
and $\pnorm{\opAA(X_t X_t^\T)}{}\lesssim r\pnorm{X_t}{}^2$. Plugging these relations into the previous bound leads to
\begin{equation}\label{eqn:X-hatX-1}
  \| T_1 \|
  \lesssim\frac{\eta r}{\lambda}\Big(1+\frac{\pnorm{\Mtruth}{}}{\lambda}\Big)\pnorm{X_t}{}^3
  \lesssim\frac{\eta\kappa^6 r}{\|\Mtruth\|}\kappa^6\|X_t\|^3, 
\end{equation}
where the last inequality follows from 
$\lambda\gtrsim \kappa^{-4} \smin^2(\Xtruth)=\kappa^{-6}\|\Mtruth\|$ (cf.~\eqref{eqn:lambda-cond}).
\end{itemize}
Putting the bounds on $T_1$ and $T_2$ together leads to 
\begin{equation}\label{eqn:approx-power-method-recursive-err}
  \bigpnorm{X_{t+1}-\hat{X}_{t+1}}{} \leq \Big(1+\frac{\eta}{\lambda}\|\opAA(\Mtruth)\|\Big)\bigpnorm{X_t-\hat{X}_t}{}
  +\frac{C\eta \kappa^12 r}{\|\Mtruth\|}\|X_t\|^3
\end{equation}
for some universal constant $C=C(c_\lambda)>0$.

\paragraph{Step 2: finishing the induction.}
By the bound of $\|X_t\|$ in (\ref{ineq:Xt_bound}), it suffices to prove
\begin{align*}
&t\Big(1+\frac{\eta}{\lambda}\pnorm{\opAA(\Mtruth)}{}\Big)^{t+1}\frac{\alpha^2}{\|\Xtruth\|} + \frac{C(C_G+1)^3\eta\kappa^{12} r}{\|\Xtruth\|^2} \Big(1+\frac{\eta}{\lambda}\pnorm{\opAA(\Mtruth)}{}\Big)^{3t}\alpha^3\\
&\leq (t+1)\Big(1+\frac{\eta}{\lambda}\pnorm{\opAA(\Mtruth)}{}\Big)^{t+1}\frac{\alpha^2}{\|\Xtruth\|}.
\end{align*}
This is equivalent to
\begin{align*}
C(C_G+1)^3\eta\kappa^{12} r\left(1+\frac{\eta}{\lambda}\|\opAA(\Mtruth)\|\right)^{2t-1}\le\frac{\|\Xtruth\|}{\alpha},
\end{align*}
which again follows readily from our assumption $t\le\frac{1}{\theta\eta}\log(\kappa n)$ 
and the assumption \eqref{ineq:alpha_cond_tmp} on $\alpha$ which implies
\begin{align*}
  \log\left(\frac{\|\Xtruth\|}{\alpha}\right)
  &\ge (2t-1)\log\left(1+\frac{\eta}{\lambda}\|\opAA(\Mtruth)\|\right) + 12\log\kappa + \log n + K
  \\
  &\ge (2t-1)\log\left(1+\frac{\eta}{\lambda}\|\opAA(\Mtruth)\|\right) + 12\log(n \kappa r) + \log(C(C_G+1)^3)
\end{align*}
provided $K=K(\theta, c_\lambda, C_G)$ is sufficiently large. The proof is complete.

\subsubsection{Proof of Lemma \ref{lem:misalign-angle-transform}}
\label{sec:misalign-transform}

We begin with the following observation: 
\begin{align}
  \Misalign_t\EssS_t^{-1}
  &=\UperpTruth^\T \SVDU_{\Ess\myX_t} \SVDSigma_{\Ess\myX_t} \SVDV_{\Ess\myX_t}^\T 
  \SVDV_{\Ess\myX_t} \SVDSigma_{\Ess\myX_t}^{-1} (\Utruth^\T \SVDU_{\Ess\myX_t})^{-1} 
  \nonumber\\
  &=\UperpTruth^\T \SVDU_{\Ess\myX_t} (\Utruth^\T \SVDU_{\Ess\myX_t})^{-1}
  \label{eqn:misalign-angle-transform} 
\end{align}
where we use: (i) $\Misalign_t = \UperpTruth^\T (\SVDU_{\Ess\myX_t}\Sigma_{\Ess\myX_t}\SVDV_{\Ess\myX_t}^\T)$ and $\EssS_t = \Utruth^\T \SVDU_{\Ess\myX_t}\Sigma_{\Ess\myX_t}\SVDV_{\Ess\myX_t}^\T$; (ii) $\Ess\myX_t$ is invertible since $\EssS_t$ is invertible, and hence $\SVDV_{\Ess\myX_t}$ has rank $r_\star$ and $\Sigma_{\Ess\myX_t}, \Utruth^\T \SVDU_{\Ess\myX_t}$ are also invertible. 

We will show that the above quantity is small if (and only if) 
$\UperpTruth^\T \SVDU_{\Ess\myX_t}$ is small.   

Turning to the proof, we first show that (ii) implies (i), 
  thus it suffices to prove the lemma under the condition (i). 
  In fact, in virtue of \eqref{eqn:misalign-angle-transform} we have
  \[\|\UperpTruth^\T \SVDU_{\Ess\myX_t}\|
  \le\|\Misalign_t\EssS_t^{-1}\|\|\Utruth^\T \SVDU_{\Ess\myX_t}\|
  \le\|\Misalign_t\EssS_t^{-1}\| \leq \smin(\Xtruth)^{-1} \|\Misalign_t\EssS_t^{-1} \SigmaTruth \|, \]
  where we used $\|\Utruth^\T \SVDU_{\Ess\myX_t}\|\le \|\Utruth\|\|\SVDU_{\Ess\myX_t}\|\le 1$.
  Consequently, $\|\UperpTruth^\T \SVDU_{\Ess\myX_t}\|\le 1/4$ 
  if $\|\Misalign_t\EssS_t^{-1}\SigmaTruth\|\le \kappa^{-1}\|\Xtruth\|/4$, as claimed.

  We proceed to show that the conclusion holds assuming condition (i).
  The first inequality has already been established above.
  For the second inequality, using \eqref{eqn:misalign-angle-transform} again, 
  it suffices to prove $\|(\Utruth^\T \SVDU_{\Ess\myX_t})^{-1}\|\le 2$, 
  which is in turn equivalent to
  $\smin(\Utruth^\T \SVDU_{\Ess\myX_t})\ge 1/2$.
  Now note that 
  $\SVDU_{\Ess\myX_t}=\Utruth \Utruth^\T \SVDU_{\Ess\myX_t}+\UperpTruth\UperpTruth^\T \SVDU_{\Ess\myX_t}$,
  thus
  \begin{align*}
    \smin(\Utruth^\T \SVDU_{\Ess\myX_t})
    &=\singval_{r_\star}(\Utruth^\T \SVDU_{\Ess\myX_t})\\
    &\ge\singval_{r_\star}(\Utruth \Utruth^\T \SVDU_{\Ess\myX_t})\\
    &\ge\singval_{r_\star}(\SVDU_{\Ess\myX_t})-\|\UperpTruth\UperpTruth^\T \SVDU_{\Ess\myX_t}\|\\
    &\ge 1-\|\UperpTruth^\T \SVDU_{\Ess\myX_t}\|\ge 3/4.
  \end{align*}
  In the last line, we used $\singval_{r_\star}(\SVDU_{\Ess\myX_t})=1$, 
  which follows from $\SVDU_{\Ess\myX_t}$ being a $n\times r_\star$ orthonormal matrix, 
  and the assumption (i). 
  This completes the proof.

\subsection{Establishing the induction step}\label{sec:phase-1-induction}
The claimed invertibility of $\EssS_t$ follows from induction and from Lemma~\ref{lem:p2}. 
In fact, by \eqref{eqn:p1.5-0} we know $\EssS_{t_1}$ is invertible, 
and by Lemma~\ref{lem:p2} we know that if $\EssS_t$ is invertible, 
$\EssS_{t+1}$ would also be invertible since $\EssS_t$ (resp. $\EssS_{t+1}$)
has the same invertibility as $(\SigmaTruth^2+\lambda I)^{-1}\EssS_t$
(resp. $(\SigmaTruth^2+\lambda I)^{-1}\EssS_{t+1}$).
For the rest of the proof we focus on establishing \eqref{subeq:condition-t-1} by induction. 

For the induction step we need to understand the one-step behaviors of 
$\|\Overpar_t\|$, $\|\Misalign_t\EssS_t^{-1}\SigmaTruth\|$, and $\|\EssS_t\|$, which are supplied by
 the following lemmas.

\begin{lemma}\label{lem:overpar-update}
  For any $t$ such that \eqref{subeq:condition-t-1} holds, 
  \begin{equation}
    \|\Overpar_{t+1}\|
    \le \left(1+\frac{1}{12\Cmax\kappa}\eta\right)\|\Overpar_t\|.
  \end{equation}
\end{lemma}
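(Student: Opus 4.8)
\textbf{Proof plan for Lemma~\ref{lem:overpar-update}.}
The plan is to track the overparametrization component $\Overpar_{t+1}=\UperpTruth^\T\myX_{t+1}\Vperp{t+1}$ through the update rule, using the surrogate matrices introduced in Lemma~\ref{lem:NS-surrogate}. The key observation is that $\Vperp{t+1}$ can be written as $V_{t,\perp}$ post-composed with a small rotation, since $\EssS_t$ being invertible (guaranteed by Lemma~\ref{lem:p1.5}) means the signal directions $\SVDV_{t+1}$ stay close to $\SVDV_t$; concretely, one chooses a matrix $Q\in\reals^{(r-r_\star)\times r_\star}$ with $\|Q\|\le 2$ such that the column span of $V_{t+1}$ agrees with that of $V_t+V_{t,\perp}Q$ up to the action of an orthogonal matrix, so that $\Overpar_{t+1}$ is (up to right-multiplication by an orthogonal matrix, which does not affect the spectral norm) expressible through the quantities appearing in Lemma~\ref{lem:update-approx} and Lemma~\ref{lem:NS-surrogate}.

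First I would apply the decomposition~\eqref{eqn:N-update-approx} of $\Noise_{t+1}$ from Lemma~\ref{lem:update-approx}. Multiplying $\Noise_{t+1}$ on the right by the appropriate orthonormal completion and isolating the component orthogonal to the signal space, the terms $\Misalign_t\EssS_t^{-1}(\cdots)\EssS_t\SVDV_t^\T$ and $\Err{e}_t(\cdots)\EssS_t\SVDV_t^\T$ contribute nothing to $\Overpar_{t+1}$ after projecting onto $V_{t+1,\perp}$ (they live in the $\SVDV_t$-row space, which is nearly orthogonal to $\Vperp{t+1}$; the small misalignment between $\SVDV_t$ and $\SVDV_{t+1}$ produces a lower-order cross term controlled via~\eqref{eqn:p1.5-1}). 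This leaves the dominant contribution $\Overpar_t\Vperp{t}^\T\Vperp{t+1}$ plus the error term $\eta\Err{d}_t$. The first of these has spectral norm at most $\|\Overpar_t\|$ since $\Vperp{t}^\T\Vperp{t+1}$ is a contraction; and the error term is bounded via~\eqref{eqn:Err-4-bound-spectral}, which gives exactly $\|\Err{d}_t\|\le\frac{1}{24\Cmax\kappa}\|\Overpar_t\|$. Summing, $\|\Overpar_{t+1}\|\le\|\Overpar_t\|+\eta\cdot\frac{1}{24\Cmax\kappa}\|\Overpar_t\|+(\text{lower order})\le\big(1+\frac{\eta}{12\Cmax\kappa}\big)\|\Overpar_t\|$, where the lower-order cross terms from the span-mismatch between $\SVDV_t$ and $\SVDV_{t+1}$ are absorbed into the slack between $\frac{1}{24}$ and $\frac{1}{12}$.

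The main obstacle I anticipate is rigorously controlling the ``span-mismatch'' terms: the subspaces spanned by $\SVDV_t$ and $\SVDV_{t+1}$ differ slightly, so the projection of $\Misalign_t\EssS_t^{-1}(\cdots)\EssS_t\SVDV_t^\T$ onto $\Vperp{t+1}$ is not exactly zero, and one must show its spectral norm is $O(\eta)$ times something much smaller than $\|\Overpar_t\|$. This requires a perturbation bound relating $\|\SVDV_t^\T\Vperp{t+1}\|$ (equivalently $\|\SVDV_{t+1,\perp}^\T V_t\|$) to the size of $\Overpar_t$ relative to $\smin(\EssS_t)$ — precisely the kind of estimate that~\eqref{eqn:p1.5-1-var} is designed to supply, since $\Overpar_t$ is the only obstruction to $V_{t+1}$ coinciding with $V_t$. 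The bookkeeping of which error bound (Frobenius versus spectral) to invoke, and ensuring the accumulated constants fit under $\frac{1}{12\Cmax\kappa}$, is the delicate part; everything else is a routine application of Lemmas~\ref{lem:update-approx}--\ref{lem:NS-surrogate} together with the Phase~I bounds~\eqref{subeq:condition-t-1}.
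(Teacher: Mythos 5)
Your skeleton matches the paper's proof: decompose $\Overpar_{t+1}=\Noise_{t+1}\Vperp{t+1}$ into the component through $\Vperp{t}\Vperp{t}^\T$, which by \eqref{eqn:N-update-approx} equals $\Overpar_t\Vperp{t}^\T\Vperp{t+1}+\eta\Err{d}_t\Vperp{t}\Vperp{t}^\T\Vperp{t+1}$ and is handled exactly as you say via \eqref{eqn:Err-4-bound-spectral}, plus a cross term through $\SVDV_t\SVDV_t^\T$. The gap is in the cross term, precisely where you flag the difficulty. Your plan is to bound $\|\SVDV_t^\T\Vperp{t+1}\|$ as a scalar (proportional to $\|\Overpar_t\|/\smin(\EssS_t)$) and multiply it by $\|\Noise_{t+1}\SVDV_t\|$. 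This does not close: $\|\Noise_{t+1}\SVDV_t\|$ is of order $\|\Misalign_t\EssS_t^{-1}\|\cdot\|\EssS_t\|$, so the product picks up a factor $\|\EssS_t\|/\smin(\EssS_t)$ — the condition number of the signal block — which is completely uncontrolled during Phase~I (different singular values of $\EssS_t$ grow at different exponential rates from a scale of order $\alpha$, so this ratio can be as large as a power of $\|\Xtruth\|/\alpha$). Hence the cross term is not bounded by any fixed multiple of $\eta\|\Overpar_t\|/(\Cmax\kappa)$ along this route, and there is no ``slack between $\tfrac{1}{24}$ and $\tfrac{1}{12}$'' to absorb it into — in the paper's accounting the cross term consumes exactly the other $\tfrac{1}{24\Cmax\kappa}$ half of the budget.

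The missing idea is to exploit the exact identity coming from $\Signal_{t+1}\Vperp{t+1}=0$, namely $\SVDV_t^\T\Vperp{t+1}=-(\Signal_{t+1}\SVDV_t)^{-1}\Signal_{t+1}\Vperp{t}\Vperp{t}^\T\Vperp{t+1}$, rather than a norm bound on $\SVDV_t^\T\Vperp{t+1}$ alone. With this substitution the cross term becomes $-\Noise_{t+1}\SVDV_t(\Signal_{t+1}\SVDV_t)^{-1}\cdot\Signal_{t+1}\Vperp{t}\cdot\Vperp{t}^\T\Vperp{t+1}$, and the correct grouping is now visible: the ratio $\Noise_{t+1}\SVDV_t(\Signal_{t+1}\SVDV_t)^{-1}$ is an $O(1)$ matrix (this is exactly what Lemma~\ref{lem:NS-surrogate} with $Q=0$, together with the misalignment bound \eqref{eqn:p1.5-3}, delivers — it is bounded by $1$ regardless of how small $\smin(\EssS_t)$ is), while $\Signal_{t+1}\Vperp{t}=\eta\Err{b}_t\Vperp{t}$ by \eqref{eqn:S-update-approx} (the main term of $\Signal_{t+1}$ carries a right factor $\SVDV_t^\T$ and is annihilated by $\Vperp{t}$), with $\|\Err{b}_t\|\le\tfrac{1}{24\Cmax\kappa}\|\Overpar_t\|$ by \eqref{eqn:Err-2-bound-spectral}. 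So the cross term is directly $\le\eta\cdot\tfrac{1}{24\Cmax\kappa}\|\Overpar_t\|$, and summing with the $\Err{d}_t$ contribution gives the claimed $\tfrac{1}{12\Cmax\kappa}$. Note also the degenerate case $r=r_\star$, where $\Overpar_t\equiv 0$ and the claim is vacuous, should be dispatched first since $\Vperp{t}$ is then trivial.
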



\begin{lemma}\label{lem:misalign-update}
  For any $t$ such that \eqref{subeq:condition-t-1} holds, 
  setting $Z_t=\SigmaTruth^{-1}(\EssS_t\EssS_t^\T+\lambda I)\SigmaTruth^{-1}$, 
  there exists some universal constant $C_{\ref{lem:misalign-update}}>0$ such that
  \begin{equation}\label{eqn:misalign-update}
    \uinorm{\Misalign_{t+1}\EssS_{t+1}^{-1}\SigmaTruth}
    \le \left(1-\frac{\eta}{3(\|Z_t\|+\eta)}\right)\uinorm{\Misalign_t\EssS_t^{-1}\SigmaTruth}
      + \eta \frac{C_{\ref{lem:misalign-update}}\kappa^6}{c_\lambda\|\Xtruth\|}\uinorm{\Utruth^\T\Delta_t}
      + \eta \left(\frac{\|\Overpar_t\|}{\smin(\EssS_t)}\right)^{1/2}\|\Xtruth\|.
  \end{equation}
  In particular, if $c_{\ref{lem:p1.5}}= 100C_{\ref{lem:misalign-update}}(C_{\ref{lem:p1.5}.a}+1)^4c_\delta/c_\lambda$, then 
  $\|\Misalign_{t}\EssS_{t}^{-1}\SigmaTruth\|\le c_{\ref{lem:p1.5}}\kappa^{-C_\delta/2}\|\Xtruth\|$ implies
  $\|\Misalign_{t+1}\EssS_{t+1}^{-1}\SigmaTruth\|\le c_{\ref{lem:p1.5}}\kappa^{-C_\delta/2}\|\Xtruth\|$.
\end{lemma}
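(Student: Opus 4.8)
The plan is to read off the one-step recursion from the surrogate identity in Lemma~\ref{lem:NS-surrogate}, after first arranging that the surrogate coincides \emph{exactly} with $\Misalign_{t+1}\EssS_{t+1}^{-1}$. The key observation is that $\Misalign_{t+1}\EssS_{t+1}^{-1}$ is unchanged if the rotation factor $\SVDV_{t+1}$ of $\Signal_{t+1}$ is replaced by any other basis of its row space: if $\SVDV_t + \Vperp{t}Q$ and $\SVDV_{t+1}$ span the same $r_\star$-dimensional subspace of $\reals^r$, then $(\Noise_{t+1}\SVDV_t+\Noise_{t+1}\Vperp{t} Q)(\Signal_{t+1}\SVDV_t + \Signal_{t+1}\Vperp{t} Q)^{-1} = \Noise_{t+1}\SVDV_{t+1}(\Signal_{t+1}\SVDV_{t+1})^{-1} = \Misalign_{t+1}\EssS_{t+1}^{-1}$. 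To exhibit such a $Q$ with $\|Q\|\le 2$, I would invoke \eqref{eqn:S-update-approx}: $\Signal_{t+1}$ differs from $\big((1-\eta)I+\eta(\SigmaTruth^2+\lambda I)(\EssS_t\EssS_t^\T+\lambda I)^{-1}\big)\EssS_t\SVDV_t^\T$, whose row space is exactly $\mathrm{colspan}(\SVDV_t)$, by the term $\eta\Err{b}_t$, which is of size $O(\kappa^{-10})$ relative to the smallest singular value of the leading factor by \eqref{eqn:Err-2-bound}; a standard $\sin\Theta$-perturbation bound then gives $\|\Vperp{t}^\T\SVDV_{t+1}\|\le 1/2$, so $Q\defeq \Vperp{t}^\T\SVDV_{t+1}(\SVDV_t^\T\SVDV_{t+1})^{-1}$ satisfies $\|Q\|\le 2$ and does the job.

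With this $Q$, Lemma~\ref{lem:NS-surrogate} yields $\Misalign_{t+1}\EssS_{t+1}^{-1}\SigmaTruth = \Misalign_t\EssS_t^{-1}\SigmaTruth\cdot N_t + \eta\,\Err{\ref{lem:NS-surrogate}.b}_t\SigmaTruth$ with $N_t \defeq \SigmaTruth^{-1}(I+\eta\Err{\ref{lem:NS-surrogate}.a}_t)\big((1-\eta)\EssS_t\EssS_t^\T+\lambda I\big)\big((1-\eta)\EssS_t\EssS_t^\T+\lambda I+\eta\SigmaTruth^2\big)^{-1}(I+\eta\Err{\ref{lem:S-surrogate}}_t)^{-1}\SigmaTruth$. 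Using $\uinorm{ABC}\le\|A\|\uinorm{B}\|C\|$ for unitarily invariant norms, the recursion follows once I show $\|N_t\|\le 1-\tfrac{\eta}{3(\|Z_t\|+\eta)}$ and control $\uinorm{\Err{\ref{lem:NS-surrogate}.b}_t\SigmaTruth}$. For the dominant factor of $N_t$, rewrite $\SigmaTruth^{-1}\big((1-\eta)\EssS_t\EssS_t^\T+\lambda I\big)\big((1-\eta)\EssS_t\EssS_t^\T+\lambda I+\eta\SigmaTruth^2\big)^{-1}\SigmaTruth = \big(I+\eta\,\SigmaTruth\big((1-\eta)\EssS_t\EssS_t^\T+\lambda I\big)^{-1}\SigmaTruth\big)^{-1}$; since $\SigmaTruth^{-1}\big((1-\eta)\EssS_t\EssS_t^\T+\lambda I\big)\SigmaTruth^{-1} = Z_t - \eta\SigmaTruth^{-1}\EssS_t\EssS_t^\T\SigmaTruth^{-1}\preceq Z_t$, the inverted matrix inside has smallest eigenvalue $\ge \|Z_t\|^{-1}$, so this factor has operator norm at most $\tfrac{\|Z_t\|}{\|Z_t\|+\eta}=1-\tfrac{\eta}{\|Z_t\|+\eta}$. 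The conjugated error factors $\SigmaTruth^{-1}(I+\eta\Err{\ref{lem:NS-surrogate}.a}_t)\SigmaTruth$ and $\SigmaTruth^{-1}(I+\eta\Err{\ref{lem:S-surrogate}}_t)^{-1}\SigmaTruth$ each have norm $1+O(\eta\kappa^{-4})$ by the bounds $\|\Err{\ref{lem:NS-surrogate}.a}_t\|,\|\Err{\ref{lem:S-surrogate}}_t\|\le\tfrac1{200(C_{\ref{lem:p1.5}.a}+1)^4\kappa^5}$ and $\|\SigmaTruth^{-1}X\SigmaTruth\|\le\kappa\|X\|$; since $\|Z_t\|\lesssim\kappa^4$ (from $\|\EssS_t\|\le C_{\ref{lem:p1.5}.a}\kappa\|\Xtruth\|$ via \eqref{eqn:p1.5-4} and $\lambda\le c_\lambda\smin^2(\Xtruth)$), these perturbations are tiny compared with $\tfrac{\eta}{\|Z_t\|+\eta}$, leaving enough slack that $\|N_t\|\le 1-\tfrac{2\eta}{3(\|Z_t\|+\eta)}$ and that the $\uinorm{\Misalign_t\EssS_t^{-1}\SigmaTruth}$-proportional part of $\eta\,\Err{\ref{lem:NS-surrogate}.b}_t\SigmaTruth$ can be absorbed to leave the stated contraction factor $1-\tfrac{\eta}{3(\|Z_t\|+\eta)}$.

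For the remaining additive terms, multiply the bound on $\uinorm{\Err{\ref{lem:NS-surrogate}.b}_t}$ from Lemma~\ref{lem:NS-surrogate} by $\|\SigmaTruth\|=\|\Xtruth\|$ and use $\|\Overpar_t\|/\smin(\EssS_t)\le 1$ (a consequence of \eqref{eqn:p1.5-1-var}) to replace $(\cdot)^{2/3}$ by $(\cdot)^{1/2}$, giving exactly the $\uinorm{\Utruth^\T\Delta_t}$ and $\Overpar_t$ terms of \eqref{eqn:misalign-update} with $C_{\ref{lem:misalign-update}}$ a fixed multiple of $400$. The ``in particular'' assertion is then obtained by inserting $\|\Misalign_t\EssS_t^{-1}\SigmaTruth\|\le c_{\ref{lem:p1.5}}\kappa^{-C_\delta/2}\|\Xtruth\|$ into the recursion and checking that each of the two error terms is at most $\tfrac{\eta}{6(\|Z_t\|+\eta)}\,c_{\ref{lem:p1.5}}\kappa^{-C_\delta/2}\|\Xtruth\|$: for the $\Delta_t$ term combine $\|\Utruth^\T\Delta_t\|\le\|\Delta_t\|\lesssim c_\delta\kappa^{-C_\delta+2}\|\Xtruth\|^2$ (Lemma~\ref{lem:Delta-bound}), $\|Z_t\|+\eta\lesssim\kappa^4$, and the prescribed $c_{\ref{lem:p1.5}}=100C_{\ref{lem:misalign-update}}(C_{\ref{lem:p1.5}.a}+1)^4c_\delta/c_\lambda$, which closes once $C_\delta$ is large enough; for the $\Overpar_t$ term use \eqref{eqn:p1.5-1-var}, whose exponent $C_{\ref{lem:p1.5}.b}'$ can be made as large as needed by enlarging $C_\alpha$. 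Then $1-\tfrac{\eta}{3(\|Z_t\|+\eta)}+\tfrac{\eta}{6(\|Z_t\|+\eta)}+\tfrac{\eta}{6(\|Z_t\|+\eta)}=1$, giving $\|\Misalign_{t+1}\EssS_{t+1}^{-1}\SigmaTruth\|\le c_{\ref{lem:p1.5}}\kappa^{-C_\delta/2}\|\Xtruth\|$.

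I expect the main obstacle to be the first step: justifying that a valid rotation $Q$ with $\|Q\|\le 2$ exists so that the surrogate equals $\Misalign_{t+1}\EssS_{t+1}^{-1}$ exactly, which requires a quantitative stability bound for the right-singular subspace of $\Signal_{t+1}$ relative to that of $\Signal_t$; the rest is careful—but routine—tracking of the $O(\eta\kappa^{-4})$ perturbation factors so that the contraction constant comes out to precisely $1/3$.
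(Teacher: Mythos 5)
Your proposal is correct and follows essentially the same route as the paper's proof: the identical choice $Q=\Vperp{t}^\T\SVDV_{t+1}(\SVDV_t^\T\SVDV_{t+1})^{-1}$ (the paper verifies $\|Q\|\le 2$ by right-multiplying \eqref{eqn:S-update-approx} with $\Vperp{t+1}$ and invoking \eqref{eqn:AB-bound}, rather than quoting a $\sin\Theta$ theorem, but the underlying perturbation estimate is the same), followed by Lemma~\ref{lem:NS-surrogate}, absorption of the $O(\eta\kappa^{-4})$ multiplicative error factors, and the same bookkeeping for the ``in particular'' claim. Your rewriting of the contraction factor as $\bigl(I+\eta\SigmaTruth A_t^{-1}\SigmaTruth\bigr)^{-1}$ with $A_t=(1-\eta)\EssS_t\EssS_t^\T+\lambda I$ and $\SigmaTruth A_t^{-1}\SigmaTruth\succeq\|Z_t\|^{-1}I$ is a marginally cleaner path to $1-\eta/(\|Z_t\|+\eta)$ than the paper's two-step bound via $\|H_t(H_t+\eta I)^{-1}\|\le 1-\eta/(\|H_t\|+\eta)$ and $\|H_t\|\le\|Z_t\|$, but this difference is cosmetic.
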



\begin{lemma}\label{lem:bounded-S}
  For any $t$ such that \eqref{subeq:condition-t-1} holds, 
  \begin{equation}
    \|\EssS_{t+1}\|\le \left(1-\frac{\eta}{2}\right)\|\EssS_t\|
    + 100c_\lambda^{-1/2}\eta\kappa^3\|\Xtruth\|.
  \end{equation}
  In particular, if $C_{\ref{lem:p1.5}.a} = 200c_\lambda^{-1/2}$, 
  then $\|\EssS_{t}\|\le C_{\ref{lem:p1.5}.a}\kappa^3\|\Xtruth\|$ implies $\|\EssS_{t+1}\|\le C_{\ref{lem:p1.5}.a}\kappa^3\|\Xtruth\|$.
\end{lemma}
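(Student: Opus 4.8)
The plan is to reduce the claim to the approximate one-step update of the signal component furnished by Lemma~\ref{lem:update-approx}, and then to control the resulting matrix product by the ``push-through'' identity combined with an AM--GM estimate tailored to the choice $\lambda\asymp\smin^2(\Xtruth)$. First I would record that $\|\EssS_{t+1}\|=\|\Signal_{t+1}\|$: writing the SVD $\Signal_{t+1}=\SVDU_{t+1}\SVDSigma_{t+1}\SVDV_{t+1}^\T$, one has $\EssS_{t+1}=\Signal_{t+1}\SVDV_{t+1}=\SVDU_{t+1}\SVDSigma_{t+1}$ as in the proof of Proposition~\ref{prop:decomposition}, and $\SVDU_{t+1},\SVDV_{t+1}$ have orthonormal columns. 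Invoking~\eqref{eqn:S-update-approx} and using that $\SVDV_t^\T$ has orthonormal rows, the triangle inequality then gives
\begin{align*}
  \|\EssS_{t+1}\|
  &\le (1-\eta)\|\EssS_t\|
  + \eta\big\|(\SigmaTruth^2+\lambda I)(\EssS_t\EssS_t^\T+\lambda I)^{-1}\EssS_t\big\|
  \\
  &\qquad + \eta\big\|\Err{a}_t(\EssS_t\EssS_t^\T+\lambda I)^{-1}\EssS_t\big\|
  + \eta\|\Err{b}_t\|.
\end{align*}

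The key observation is that for \emph{any} matrix $Y$ of compatible size,
\[
  \big\|Y(\EssS_t\EssS_t^\T+\lambda I)^{-1}\EssS_t\big\|
  = \big\|Y\EssS_t(\EssS_t^\T\EssS_t+\lambda I)^{-1}\big\|
  \le \|Y\|\cdot\big\|\EssS_t(\EssS_t^\T\EssS_t+\lambda I)^{-1}\big\|
  \le \frac{\|Y\|}{2\sqrt{\lambda}},
\]
where the first equality is the push-through identity and the last step follows by passing to the SVD of $\EssS_t$: a singular value $\sigma_i$ of $\EssS_t$ contributes $\sigma_i/(\sigma_i^2+\lambda)\le 1/(2\sqrt\lambda)$ by AM--GM. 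This is exactly where damping pays off, since the naive bound $\|(\EssS_t\EssS_t^\T+\lambda I)^{-1}\|\le\lambda^{-1}$ together with $\|\EssS_t\|$ would cost an extra factor of $\kappa^2$. Taking $Y=\SigmaTruth^2+\lambda I$, using $\|\SigmaTruth^2+\lambda I\|\le\|\Xtruth\|^2+\lambda\le 2\|\Xtruth\|^2$ and $\lambda\ge\frac1{100}c_\lambda\smin^2(\Xtruth)=\frac1{100}c_\lambda\kappa^{-2}\|\Xtruth\|^2$ from~\eqref{eqn:lambda-cond}, I obtain $\big\|(\SigmaTruth^2+\lambda I)(\EssS_t\EssS_t^\T+\lambda I)^{-1}\EssS_t\big\|\le 10\,c_\lambda^{-1/2}\kappa\|\Xtruth\|$.

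It then remains to absorb the two error terms into the budget $100\,c_\lambda^{-1/2}\eta\kappa\|\Xtruth\|$. For the first, apply the same push-through estimate with $Y=\Err{a}_t$ and bound $\|\Err{a}_t\|$ via~\eqref{eqn:Err-1-bound}, controlling $\|\Misalign_t\EssS_t^{-1}\SigmaTruth\|$ with~\eqref{eqn:p1.5-3} and $\|\Utruth^\T\Delta_t\|\le\|\Delta_t\|\le c_{\ref{lem:Delta-bound}}\kappa^{-2C_\delta/3}\|\Xtruth\|^2$ with Lemma~\ref{lem:Delta-bound}; both contributions carry a factor $\kappa^{-\Theta(C_\delta)}$, so this term is $o\big(c_\lambda^{-1/2}\kappa\|\Xtruth\|\big)$ once $C_\delta$ is large. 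For the second, use~\eqref{eqn:Err-2-bound-spectral}, $\|\Err{b}_t\|\le\frac1{24\Cmax\kappa}\|\Overpar_t\|$, together with $\|\Overpar_t\|\le\alpha^{3/4}\|\Xtruth\|^{1/4}$ from~\eqref{eqn:p1.5-2-var}, which is $\ll\|\Xtruth\|$ since $\alpha\ll\|\Xtruth\|$; hence $\eta\|\Err{b}_t\|$ is negligible. Collecting the bounds and using the trivial $1-\eta\le 1-\tfrac\eta2$ yields $\|\EssS_{t+1}\|\le(1-\tfrac\eta2)\|\EssS_t\|+100\,c_\lambda^{-1/2}\eta\kappa\|\Xtruth\|$. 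The ``in particular'' clause then follows by substituting the induction hypothesis $\|\EssS_t\|\le C_{\ref{lem:p1.5}.a}\kappa\|\Xtruth\|$ with $C_{\ref{lem:p1.5}.a}=200\,c_\lambda^{-1/2}$: the right-hand side equals $\big(1-\tfrac\eta2\big)200\,c_\lambda^{-1/2}\kappa\|\Xtruth\|+100\,c_\lambda^{-1/2}\eta\kappa\|\Xtruth\|=200\,c_\lambda^{-1/2}\kappa\|\Xtruth\|$, closing the induction.

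I do not anticipate a genuine obstacle: the one substantive point is to resist bounding $(\EssS_t\EssS_t^\T+\lambda I)^{-1}$ and $\EssS_t$ separately, and instead keep them together so that $(\SigmaTruth^2+\lambda I)(\EssS_t\EssS_t^\T+\lambda I)^{-1}\EssS_t$ is controlled \emph{uniformly} in $\EssS_t$ by $O(\kappa\|\Xtruth\|/\sqrt{c_\lambda})$; the remainder is routine bookkeeping with estimates already supplied by Lemmas~\ref{lem:update-approx} and~\ref{lem:Delta-bound}.
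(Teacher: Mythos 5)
Your proposal is correct and follows essentially the same route as the paper: both hinge on the one-step signal update and on the observation that the damped preconditioner yields $\|(\SigmaTruth^2+\lambda I)(\EssS_t\EssS_t^\T+\lambda I)^{-1}\EssS_t\|\lesssim \|\Xtruth\|^2/\sqrt{\lambda}\lesssim c_\lambda^{-1/2}\kappa\|\Xtruth\|$, which is exactly the paper's key step (there obtained by splitting off $(\EssS_t\EssS_t^\T+\lambda I)^{-1/2}\EssS_t$ and using monotonicity of $\sigma\mapsto\sigma/\sqrt{\sigma^2+\lambda}$, versus your push-through plus AM--GM). The only cosmetic difference is that you work from the additive error form of Lemma~\ref{lem:update-approx} while the paper invokes the multiplicative surrogate of Lemma~\ref{lem:S-surrogate}; both absorb the error terms the same way.
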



We now return to the induction step. 
Recall that we need to show 
\eqref{eqn:p1.5-1}--\eqref{eqn:p1.5-4} hold for $t+1$. 
It is obvious that \eqref{eqn:p1.5-2}--\eqref{eqn:p1.5-4} 
hold for $t+1$ by the induction hypothesis and the above lemmas. 
It remains to prove \eqref{eqn:p1.5-1}. 
To this end we distinguish two cases: 
$\smin((\SigmaTruth^2+\lambda I)^{-1/2}\EssS_t)\le 1/3$
and $\smin((\SigmaTruth^2+\lambda I)^{-1/2}\EssS_t)> 1/3$. 
In the former case, \eqref{eqn:p1.5-1} for $t+1$ follows from 
Lemma~\ref{lem:overpar-update} and Lemma~\ref{lem:p2} 
(to be proved in Appendix \ref{pf:lem:p2}), 
which imply (provided $\Cmax\ge 2$)
\[\frac{\|\Overpar_{t+1}\|}{\smin((\SigmaTruth^2+\lambda I)^{-1/2}\EssS_{t+1})}
\le \frac{\left(1+\frac{\eta}{4\Cmax\kappa}\right)}{(1+\eta/8)}
\frac{\|\Overpar_{t}\|}{\smin((\SigmaTruth^2+\lambda I)^{-1/2}\EssS_{t})}
\le \frac{\|\Overpar_{t}\|}{\smin((\SigmaTruth^2+\lambda I)^{-1/2}\EssS_{t})},
\]
as desired. In the latter case where 
$\smin((\SigmaTruth^2+\lambda I)^{-1/2}\EssS_t)> 1/3$, 
one may apply the first part of Lemma~\ref{lem:p2}
to deduce that $\smin((\SigmaTruth^2+\lambda I)^{-1/2}\EssS_{t+1})\ge 1/10$
(given that $\eta\le c_\eta$ for some sufficiently small constant $c_\eta$). 
This combined with \eqref{eqn:p1.5-2} for $t+1$ (already proved) 
yields desired inequality \eqref{eqn:p1.5-1} for $t+1$, 
given our assumption \eqref{eqn:alpha-cond} on the smallness of $\alpha$.  
This completes the proof.


\subsubsection{Proof of Lemma \ref{lem:overpar-update}}
\label{pf:lem:overpar-update}
If $r=r_\star$, then we have $\|\Overpar_t\|=0$ for all $t\ge 0$. 
The conclusion follows trivially.
Therefore, we only consider the case when $r>r_\star$.
By definition, we have 
\begin{align*}
    \Overpar_{t+1} = \Noise_{t+1}\Vperp{t+1}     & = \Noise_{t+1}\SVDV_t\SVDV_t^\T  \Vperp{t+1}+\Noise_{t+1}\Vperp{t}\Vperp{t}^\T \Vperp{t+1} \\
    &= -\Noise_{t+1}\SVDV_t(\Signal_{t+1}\SVDV_t)^{-1}\Signal_{t+1}\Vperp{t}\Vperp{t}^\T \Vperp{t+1}+\Noise_{t+1}\Vperp{t}\Vperp{t}^\T \Vperp{t+1},
\end{align*} 
where the last inequality uses the fact that 
$\SVDV_t^\T \Vperp{t+1}=-(\Signal_{t+1}\SVDV_t)^{-1}\Signal_{t+1}\Vperp{t}\Vperp{t}^\T \Vperp{t+1}$.
To see this, note that 
$$\Signal_{t+1}\Vperp{t+1}=0 \qquad \Longrightarrow  \qquad \Signal_{t+1}\SVDV_t\SVDV_t^\T \Vperp{t+1}=-\Signal_{t+1}\Vperp{t}\Vperp{t}^\T \Vperp{t+1}.$$
Left-multiplying both sides by $(\Signal_{t+1}\SVDV_t)^{-1}$ yields the desired identity. Note that the invertibility of $\Signal_{t+1}\SVDV_t$ follows from the invertibility of $\EssS_t$ by inserting $Q=0$ in Lemma~\ref{lem:S-surrogate}.

By Lemma~\ref{lem:update-approx}, we immediately 
obtain that $\Signal_{t+1}\Vperp{t}=\eta\Err{b}_t\Vperp{t}$, 
and $\Noise_{t+1}\Vperp{t}=\Overpar_t + \eta \Err{d}_t\Vperp{t}$,
where $\| \Err{b}_t \| \vee \| \Err{d}_t \| \leq \frac{1}{24\Cmax\kappa}\|\Overpar_t\|$. 
Assume for now that 
\begin{equation}
  \label{eqn:NSinv-tmp}
  \|\Noise_{t+1}\SVDV_t(\Signal_{t+1}\SVDV_t)^{-1}\|\le 1.
\end{equation}
In addition, notice that $\|\Vperp{t}^\T \Vperp{t+1}\|\leq 1$ since both factors are orthonormal matrices, we have
\begin{align*}
  \|\Overpar_{t+1}\|
  &\le \|\Overpar_t\| + \eta\|\Noise_{t+1}\SVDV_t(\Signal_{t+1}\SVDV_t)^{-1}\|\|\Err{b}_t\| + \eta\|\Err{d}_t\| 
  \\
  &\le \left(1+\frac{1}{12\Cmax\kappa}\eta\right)\|\Overpar_t\|,
\end{align*}
as desired. It remains to prove~\eqref{eqn:NSinv-tmp}.

\paragraph{Proof of bound \eqref{eqn:NSinv-tmp}.}
This can be done by plugging $Q=0$ into Lemma~\ref{lem:NS-surrogate} and bounding the resulting expression.
This (in fact, a much stronger inequality) will be done in detail in the proof of Lemma~\ref{lem:misalign-update}, to be presented soon in Section~\ref{pf:lem:misalign-update}. 
In fact, the resulting expression is the same as~\eqref{eqn:misalign-update-tmp2} there (albeit with different values of $\Err{\ref{lem:S-surrogate}.a}_t$, $\Err{\ref{lem:NS-surrogate}.a}_t$, $\Err{\ref{lem:NS-surrogate}.b}_t$, 
which do not affect the proof).
Following the same strategy to control~\eqref{eqn:misalign-update-tmp2} there, we may show that $\|\Noise_{t+1}\SVDV_t(\Signal_{t+1}\SVDV_t)^{-1}\SigmaTruth\|$ enjoys the same bound~\eqref{eqn:misalign-update-simplified} as $\|\Misalign_{t+1}\EssS_{t+1}^{-1}\SigmaTruth\|$, 
the right hand side of which is less than $\kappa^{-1}\|\Xtruth\|=\|\SigmaTruth^{-1}\|^{-1}$ given~\eqref{eqn:p1.5-3} and~\eqref{eqn:p1.5-4}. 
Thus $\|\Noise_{t+1}\SVDV_t(\Signal_{t+1}\SVDV_t)^{-1}\| \le \|\Noise_{t+1}\SVDV_t(\Signal_{t+1}\SVDV_t)^{-1}\SigmaTruth\| \|\SigmaTruth^{-1}\| \le 1$ as claimed.


\subsubsection{Proof of Lemma \ref{lem:misalign-update}}
\label{pf:lem:misalign-update}

Denoting $\EssX_t\defeq\myX_t\SVDV_t$, we have $\Misalign_t=\UperpTruth^\T\Ess\myX_t$ and $\EssS_t=\Utruth^\T\Ess\myX_t$. 
Suppose for the moment that 
\begin{align}\label{eq:invertibility}
\|(V_t^\T V_{t+1})^{-1}\|\le 2,
\end{align}
whose proof is deferred to the end of this section.
We can write the update equation of $\Ess\myX_t$ as
\begin{align}
  \Ess\myX_{t+1}
  &=\myX_{t+1}\SVDV_{t+1}=\myX_{t+1}\SVDV_t\SVDV_t^\T \SVDV_{t+1}+\myX_{t+1}\Vperp{t} \Vperp{t}^\T \SVDV_{t+1} 
  \nonumber \\
  &=\left(\myX_{t+1}\SVDV_t+\myX_{t+1}\Vperp{t}\Vperp{t}^\T \SVDV_{t+1}(\SVDV_t^\T \SVDV_{t+1})^{-1}\right)\SVDV_t^\T \SVDV_{t+1}.
  \label{eqn:EssX-update}
\end{align}
Left-multiplying both sides of 
\eqref{eqn:EssX-update} with $\UperpTruth$ (or $\Utruth$),
we obtain
\begin{subequations}
\begin{align}
  \Misalign_{t+1} &= (\Noise_{t+1}\SVDV_t+\Noise_{t+1}\Vperp{t} Q)\SVDV_t^\T \SVDV_{t+1}, \\
  \EssS_{t+1} &= (\Signal_{t+1}\SVDV_t+\Signal_{t+1}\Vperp{t} Q)\SVDV_t^\T \SVDV_{t+1},
  \label{eqn:tilde-S-update-tmp}
\end{align}
\end{subequations}
where we define $Q \defeq \Vperp{t}^\T V_{t+1}(\SVDV_t^\T \SVDV_{t+1})^{-1}$. 
Consequently, we arrive at 
\begin{equation}
  \Misalign_{t+1}\EssS_{t+1}^{-1}
  = (\Noise_{t+1}\SVDV_t+\Noise_{t+1}\Vperp{t} Q)(\Signal_{t+1}\SVDV_t+\Signal_{t+1}\Vperp{t} Q)^{-1}.
  \label{eqn:misalign-update-tmp}
\end{equation}
Since $\|Q\| \leq 2$ (which is an immediate  implication of~\eqref{eq:invertibility}), 
we can invoke Lemma~\ref{lem:NS-surrogate} to obtain
\begin{align}
  \Misalign_{t+1}\EssS_{t+1}^{-1}\SigmaTruth
  = & \Misalign_t\EssS_t^{-1}(I+\eta\Err{\ref{lem:NS-surrogate}.a}_t)
  A_t(A_t+\eta\SigmaTruth^2)^{-1}(I+\eta\Err{\ref{lem:S-surrogate}}_t)^{-1}\SigmaTruth + \eta\Err{\ref{lem:NS-surrogate}.b}_t\SigmaTruth
  \nonumber\\
  = & \Misalign_t\EssS_t^{-1}\SigmaTruth(I+\eta\SigmaTruth^{-1}\Err{\ref{lem:NS-surrogate}.a}_t\SigmaTruth) 
  H_t(H_t+\eta I)^{-1}(I+\eta\SigmaTruth^{-1}\Err{\ref{lem:S-surrogate}}_t\SigmaTruth)^{-1}
  + \eta\Err{\ref{lem:NS-surrogate}.b}_t\SigmaTruth,
  \label{eqn:misalign-update-tmp2}
\end{align}
where for simplicity of notation, we denote
\begin{align*}
  A_t \defeq (1-\eta)\EssS_t\EssS_t^\T+\lambda I, \quad 
  \text{and} \quad 
  H_t \defeq \SigmaTruth^{-1}A_t\SigmaTruth^{-1}.
\end{align*}
In addition, we have
\begin{align*}
  \|\Err{\ref{lem:S-surrogate}}_t\|+\|\Err{\ref{lem:NS-surrogate}.a}_t\|
  &\le \frac{1}{64\kappa^5},
  \\
  \uinorm{\Err{\ref{lem:NS-surrogate}.b}_t}
  &\le 800c_\lambda^{-1}\kappa^2\|\Xtruth\|^{-2}\uinorm{\Utruth^\T\Delta_t}
  +\frac{1}{64(C_{\ref{lem:p1.5}.a}+1)^2\kappa^5\|\Xtruth\|}\uinorm{\Misalign_t\EssS_t^{-1}\SigmaTruth}
  +\frac{1}{64}\left(\frac{\|\Overpar_t\|}{\smin(\EssS_t)}\right)^{2/3}.
\end{align*}
Moreover, it is clear that $\eta\le c_\eta\le 1\le \kappa^4$ since $\kappa\ge 1$, 
and that $\|H_t\|\le \kappa^{2}(1+\|\EssS_t\|^2/\|\Xtruth\|^2)\le (C_{\ref{lem:p1.5}.a}+1)^2\kappa^4$. 
Hence we have
\[\|H_t\|+\eta\le 2(C_{\ref{lem:p1.5}.a}+1)^2\kappa^4\]
which implies 
\begin{equation}\label{eqn:misalign-err1}
  \|\Err{\ref{lem:S-surrogate}}_t\|+\|\Err{\ref{lem:NS-surrogate}.a}_t\|\le \frac{1}{24\kappa}\frac{1}{\|H_t\|+\eta}.
\end{equation}
Similarly we may also show
\begin{equation}\label{eqn:misalign-err2}
  \uinorm{\Err{\ref{lem:NS-surrogate}.b}_t}
  \le 800c_\lambda^{-1}\kappa^2\|\Xtruth\|^{-2}\uinorm{\Utruth^\T\Delta_t}
  +\frac{1}{12(\|H_t\|+\eta)\|\Xtruth\|}\uinorm{\Misalign_t\EssS_t^{-1}\SigmaTruth}
  +\frac12\left(\frac{\|\Overpar_t\|}{\smin(\EssS_t)}\right)^{2/3}.
\end{equation}

Since $H_t$ is obviously positive definite, we have
\begin{equation}
  \|H_t(H_t+\eta I)^{-1}\|\le 1-\frac{\eta}{\|H_t\|+\eta}.
\end{equation}
Thus
\begin{align}
  \uinorm{\Misalign_{t+1}\EssS_{t+1}^{-1}\SigmaTruth}
  \le & \left(1-\frac{\eta}{\|H_t\|+\eta}\right)
  (1-\eta\kappa\|\Err{\ref{lem:S-surrogate}}_t\|)^{-1}(1+\eta\kappa\|\Err{\ref{lem:NS-surrogate}.a}_t\|)
  \uinorm{\Misalign_{t}\EssS_{t}^{-1}\SigmaTruth} + \eta\uinorm{\Err{\ref{lem:NS-surrogate}.b}_t}\|\Xtruth\|.
  \nonumber\\
  \le & \left(1-\frac{\eta}{\|H_t\|+\eta}\right)
  \left(1+\frac{1}{12}\frac{\eta}{\|H_t\|+\eta}\right)^2
  \uinorm{\Misalign_t\EssS_t^{-1}\SigmaTruth}
  \nonumber\\
  & + \eta\frac{800\kappa^2}{c_\lambda\|\Xtruth\|}\uinorm{\Utruth^\T\Delta_t}
  + \frac{1}{12}\frac{\eta}{\|H_t\|+\eta}\uinorm{\Misalign_{t}\EssS_{t}^{-1}\SigmaTruth}
  + \frac12\eta\left(\frac{\|\Overpar_t\|}{\smin(\EssS_t)}\right)^{2/3}\|\Xtruth\|
  \nonumber\\
  \le & \left(1-\frac56\frac{\eta}{\|H_t\|+\eta}\right)
  \uinorm{\Misalign_t\EssS_t^{-1}\SigmaTruth}
  + \frac{1}{12}\frac{\eta}{\|H_t\|+\eta}\uinorm{\Misalign_{t}\EssS_{t}^{-1}\SigmaTruth}
  \nonumber\\
  & + \eta\frac{800\kappa^2}{c_\lambda\|\Xtruth\|}\uinorm{\Utruth^\T\Delta_t}
  + \frac12\eta\left(\frac{\|\Overpar_t\|}{\smin(\EssS_t)}\right)^{2/3}\|\Xtruth\|
  \nonumber\\
  \le & \left(1-\frac34\frac{\eta}{\|H_t\|+\eta}\right)
  \uinorm{\Misalign_t\EssS_t^{-1}\SigmaTruth}
  + \eta\frac{800\kappa^2}{c_\lambda\|\Xtruth\|}\uinorm{\Utruth^\T\Delta_t}
  + \frac12\eta\left(\frac{\|\Overpar_t\|}{\smin(\EssS_t)}\right)^{2/3}\|\Xtruth\|\
  \nonumber\\
  \le & \left(1-\frac34\frac{\eta}{\|Z_t\|+\eta}\right)
  \uinorm{\Misalign_t\EssS_t^{-1}\SigmaTruth}
  + \eta\frac{800\kappa^2}{c_\lambda\|\Xtruth\|}\uinorm{\Utruth^\T\Delta_t}
  + \frac12\eta\left(\frac{\|\Overpar_t\|}{\smin(\EssS_t)}\right)^{2/3}\|\Xtruth\|,
  \label{eqn:pf:misalign-update}
\end{align}
where in the second inequality we used $(1-x)^{-1}\le 1+x$ for $x<1$, 
in the penultimate inequality we used the elementary fact 
$(1-x)(1+\frac{1}{16}x)^2\le1-\frac56 x$ for $x\in[0,1]$, 
and in the last inequality we used the obvious fact 
\[\|H_t\|=\|\SigmaTruth^{-1} ((1-\eta)\EssS_t\EssS_t^\T + \lambda I) \SigmaTruth^{-1}\|
\le \|\SigmaTruth^{-1} (\EssS_t\EssS_t^\T + \lambda I) \SigmaTruth^{-1}\|
=\|Z_t\|.\]
The desired inequality~\eqref{eqn:misalign-update} follows from the above inequality by setting $C_{\ref{lem:misalign-update}}=800$.

For the remaining claim, we need to apply the conclusion 
  of the first part with $\uinorm{\cdot}=\|\cdot\|$. 
  Then we note the following bounds:
  \begin{enumerate}[label=(\roman*), leftmargin=*, align=left]
    \item $\|Z_t\|\le\|\SigmaTruth^{-1}\|^2(\|\EssS_t\|^2+\lambda)\le(C_{\ref{lem:p1.5}.a}+1)^2\kappa^4$
    by \eqref{eqn:p1.5-4} and \eqref{eqn:lambda-cond} (since we may choose $c_\lambda\le 1$);
    \item $\eta\le c_\eta\le (C_{\ref{lem:p1.5}.a}+1)^2\kappa^4$;
    \item $\|\Utruth^\T\Delta_t\|\le\|\Delta_t\|\le 16(C_{\ref{lem:p1.5}.a}+1)^2c_\delta\kappa^{-2C_\delta/3}\|\Xtruth\|^2$
    by Lemma~\ref{lem:Delta-bound};
    \item $(\|\Overpar_t\|/\smin(\EssS_t))^{1/2}\le c_\delta\kappa^{-2C_\delta/3}$ 
    by \eqref{eqn:p1.5-1}, if we choose $C_\alpha\ge 3c_\delta^{-1}+3C_\delta+3$.
  \end{enumerate}
  These together imply
  \begin{equation}
  \label{eqn:misalign-update-simplified}
    \|\Misalign_{t+1}\EssS_{t+1}^{-1}\SigmaTruth\|
    \le \left(1-\frac{\eta}{6(C_{\ref{lem:p1.5}.a}+1)^2\kappa^4}\right)\|\Misalign_t\EssS_t^{-1}\SigmaTruth\|
      + \eta \frac{16C_{\ref{lem:misalign-update}}\kappa^2}{c_\lambda}(C_{\ref{lem:p1.5}.a}+1)^2 c_\delta\kappa^{-2C_\delta/3}\|\Xtruth\|
      + \eta c_\delta\kappa^{-2C_\delta/3}\|\Xtruth\|.
  \end{equation}
  The conclusion follows easily by plugging in 
  $\|\Misalign_{t}\EssS_{t}^{-1}\SigmaTruth\|\le c_{\ref{lem:p1.5}}\kappa^{-C_\delta/2}\|\Xtruth\|$
  and using $\kappa^6\kappa^{-2C_\delta/3}\le\kappa^{-C_\delta/2}$
  when $C_\delta$ is sufficiently large.

\paragraph{Proof of bound~\eqref{eq:invertibility}.}
First, we observe that it is equivalent to show that  $\smin(\SVDV_t^\T\SVDV_{t+1})\ge 1/2$. 
But from $\SVDV_{t+1}\SVDV_{t+1}^\T+\Vperp{t+1}\Vperp{t+1}^\T=I$ we have
\begin{align*}
  \smin(\SVDV_t^\T\SVDV_{t+1})
  &=\singval_{r_\star}(\SVDV_t^\T\SVDV_{t+1})
  \ge \singval_{r_\star}(\SVDV_t^\T\SVDV_{t+1}\SVDV_{t+1}^\T)
  = \singval_{r_\star}(\SVDV_t^\T-\SVDV_t^\T\Vperp{t+1}\Vperp{t+1}^\T)
  \\
  &\ge \singval_{r_\star}(V_t^\T) - \|\SVDV_t^\T\Vperp{t+1}\Vperp{t+1}^\T\|
  \\
  &\ge 1 - \|\SVDV_t^\T\Vperp{t+1}\|,
\end{align*}
where the last inequality follows from $\singval_{r_\star}(V_t^\T)=1$ 
(since $\SVDV_t\in\reals^{r\times r_\star}$ is orthonormal)
and from that $\|\SVDV_t^\T\Vperp{t+1}\Vperp{t+1}^\T\|\le\|\SVDV_t^\T\Vperp{t+1}\|$.
This implies that, to show $\smin(\SVDV_t^\T\SVDV_{t+1})\ge 1/2$, 
it suffices to prove $\|\SVDV_t^\T\Vperp{t+1}\|\le 1/2$. 

Next we prove that $\|\SVDV_t^\T\Vperp{t+1}\|\le 1/2$.
Recall that by definition we have $\Signal_{t+1}\Vperp{t+1}=0$.
Right-multiplying both sides of \eqref{eqn:S-update-approx} 
by $\Vperp{t+1}$, we obtain
\begin{equation*}
  0=\left((1-\eta)I+\eta(\SigmaTruth^2+\lambda I+\Err{a}_t)
      (\EssS_t\EssS_t^\T +\lambda I)^{-1}
    \right)\EssS_t (\SVDV_t^\T\Vperp{t+1}) 
    + \eta\Err{b}_t\Vperp{t+1},
\end{equation*}
hence
\begin{equation*}
  \|\SVDV_t^\T\Vperp{t+1}\|
  \le \eta\|\Err{b}_t\Vperp{t+1}\|\|\EssS_t^{-1}\|
  \left\|\left((1-\eta)I+\eta(\SigmaTruth^2+\lambda I+\Err{a}_t)
    (\EssS_t\EssS_t^\T +\lambda I)^{-1}\right)^{-1}
  \right\|.
\end{equation*}
By \eqref{eqn:Err-2-bound} we have
\begin{equation*}
  \|\Err{b}_t\Vperp{t+1}\|\|\EssS_t^{-1}\|
  \le \frac{\|\Err{b}_t\|}{\smin(\EssS_t)}
  \le \frac{1}{10\kappa},
\end{equation*}
thus it suffices to show 
\begin{equation}
  \eta\left\|\left((1-\eta)I+\eta(\SigmaTruth^2+\lambda I+\Err{a}_t)
    (\EssS_t\EssS_t^\T +\lambda I)^{-1}\right)^{-1}
  \right\|\le 5\kappa,
\end{equation}
or equivalently, 
\begin{equation}
  \smin\left((1-\eta)I + \eta(\SigmaTruth^2+\lambda I+\Err{a}_t)
    (\EssS_t\EssS_t^\T +\lambda I)^{-1}
  \right)\ge \frac{\eta}{5\kappa}.
  \label{eqn:Vinv-reduction}
\end{equation}
To this end, we write 
\begin{align}
  & (1-\eta)I+\eta(\SigmaTruth^2+\lambda I+\Err{a}_t)
    (\EssS_t\EssS_t^\T +\lambda I)^{-1}
  \nonumber\\
  &=\left( I +\eta\Err{a}_t\left(
    (1-\eta)(\EssS_t\EssS_t^\T+\lambda I)
    +\eta(\SigmaTruth^2+\lambda I)\right)^{-1}
  \right)
  \left( (1-\eta)I+\eta(\SigmaTruth^2+\lambda I)(\EssS_t\EssS_t^\T +\lambda I)^{-1}\right)
  \label{eqn:Vinv-factors}
\end{align}
and control the two terms separately.
\begin{itemize}
\item To control the first factor,
starting from \eqref{eqn:Err-1-bound} we may deduce
\begin{align*}
  \|\Err{a}_t\|&\le\kappa^{-4}\|\Xtruth\|\|\Misalign_t\EssS_t^{-1}\SigmaTruth\|
  +\|\Utruth^\top\Delta_t\|
  \\
  &\le\kappa^{-4}\|\Xtruth\|c_{\ref{lem:p1.5}}\kappa^{-C_\delta/2}\|\Xtruth\|
  +c_{\ref{lem:Delta-bound}}\kappa^{-2C_\delta/3}\|\Xtruth\|^2
  \\
  &\le \kappa^{-2}\|\Xtruth\|^2/2=\smin^2(\Xtruth)/2,
\end{align*}
where the second inequality follows from \eqref{eqn:p1.5-3}
and Lemma~\ref{lem:Delta-bound}; 
the last inequality follows from choosing $c_\delta$ sufficiently small 
(recall that $c_{\ref{lem:p1.5}}, c_{\ref{lem:Delta-bound}}\lesssim c_\delta/c_\lambda$) 
and $C_\delta$ sufficiently large.
Furthermore, since $\EssS_t\EssS_t^\T$ is positive semidefinite, we have
\[
  \left\|\left(
    (1-\eta)(\EssS_t\EssS_t^\T+\lambda I)
    +\eta(\SigmaTruth^2+\lambda I)\right)^{-1}
  \right\|\le \eta^{-1}\smin^{-2}(\SigmaTruth)
  =\eta^{-1}\smin^{-2}(\Xtruth), 
\] 
hence
\begin{align}\label{eqn:Vinv-tmp1}
  & \smin\left(1+\eta\Err{a}_t\left(
    (1-\eta)(\EssS_t\EssS_t^\T+\lambda I)
    +\eta(\SigmaTruth^2+\lambda I)\right)^{-1}
  \right)
  \nonumber\\
  &\ge 1 - \eta\|\Err{a}_t\|\left\|\left(
    (1-\eta)(\EssS_t\EssS_t^\T+\lambda I)
    +\eta(\SigmaTruth^2+\lambda I)\right)^{-1}
  \right\|
  \nonumber\\
  &\ge 1 - \eta\cdot\frac{\smin^2(\Xtruth)}{2}\cdot\eta^{-1}\smin^{-2}(\Xtruth)
  =1/2. 
\end{align}
\item Now we control the second factor. 
By Lemma~\ref{prop:positive-AB} we have
\begin{align*}
\smin\left(1-\eta+\eta(\SigmaTruth^2+\lambda I)
    (\EssS_t\EssS_t^\T +\lambda I)^{-1}\right)
  &=(1-\eta)\smin\left(I+\frac{\eta}{1-\eta}(\SigmaTruth^2+\lambda I)
  (\EssS_t\EssS_t^\T +\lambda I)^{-1}\right)
  \\
  &\ge(1-\eta)\left(\frac{\|\SigmaTruth^2+\lambda I\|}{\smin(\SigmaTruth^2+\lambda I)}\right)^{-1/2}
  \\
  &=(1-\eta)\left(\frac{\|\Xtruth\|^2+\lambda}{\smin^2(\Xtruth)+\lambda}\right)^{-1/2}.
\end{align*}
It is easy to check that the function $\lambda\mapsto(a+\lambda)/(b+\lambda)$ 
is decreasing on $[0,\infty)$ for $a\ge b>0$, 
thus 
\[
  \frac{\|\Xtruth\|^2+\lambda}{\smin^2(\Xtruth)+\lambda}
  \le \frac{\|\Xtruth\|^2}{\smin^2(\Xtruth)}=\kappa^2,
\]
which implies
\begin{equation}\label{eqn:AB-bound}
  \smin\left((1-\eta)I + \eta(\SigmaTruth^2+\lambda I)
    (\EssS_t\EssS_t^\T +\lambda I)^{-1}\right)
  \ge\frac{1-\eta}{\kappa}.
\end{equation} 
\end{itemize}
Plugging \eqref{eqn:AB-bound} and \eqref{eqn:Vinv-tmp1}  into \eqref{eqn:Vinv-factors} yields 
\begin{equation}
  \smin\left((1-\eta)I + \eta(\SigmaTruth^2+\lambda I+\Err{a}_t)
    (\EssS_t\EssS_t^\T +\lambda I)^{-1}
  \right)\ge \frac{1-\eta}{2\kappa}\ge\frac{\eta}{5\kappa},
\end{equation}
where the last inequality follows from the assumption $\eta\le c_\eta$.
This shows \eqref{eqn:Vinv-reduction} as desired, 
thereby completing the proof.

\subsubsection{Proof of Lemma \ref{lem:bounded-S}}
\label{pf:lem:bounded-S}
Combine \eqref{eqn:tilde-S-update-tmp} and Lemma~\ref{lem:S-surrogate} to see that 
  \begin{align}
    \|\EssS_{t+1}\|
    &\le \|\Signal_{t+1}\SVDV_t+\Signal_{t+1}\Vperp{t} Q\|
    \nonumber\\
    &\le\|1+\eta\Err{\ref{lem:S-surrogate}}_t\| \cdot \left\|(1-\eta)(\EssS_t\EssS_t^\T+\lambda I)^{1/2}
      +\eta(\SigmaTruth^2 + \lambda I)
      (\EssS_t\EssS_t^\T+\lambda I)^{-1/2}
    \right\| \cdot 
    \left\|(\EssS_t\EssS_t^\T+\lambda I)^{-1/2}\EssS_t\right\|
    \nonumber\\
    &\le(1+\eta\|\Err{\ref{lem:S-surrogate}}_t\|)\left((1-\eta)(\|\EssS_t\|^2+\lambda )^{1/2}
      +4\eta\lambda^{-1/2}\|\Xtruth\|^2\right)
    (\|\EssS_t\|^2+\lambda)^{-1/2}\|\EssS_t\| 
    \nonumber\\
    &\le \left(1+\frac{\eta}4\right)\left((1-\eta)\|\EssS_t\| 
      + 4\eta\frac{\|\Xtruth\|^2\|\EssS_t\|}{\sqrt{\lambda(\|\EssS_t\|^2+\lambda)}}
    \right)
    \nonumber\\
    &\le \left(1-\frac{\eta}{2}\right)\|\EssS_t\|
    + 5\eta\frac{\|\Xtruth\|^2}{\sqrt{\lambda}},
  \end{align}
  where the third line follows from $\|\SigmaTruth^2 + \lambda I\|\le(1+\lambda)\|\Xtruth\|^2\le 2\|\Xtruth\|^2$ assuming $c_\lambda\le 1$ 
  and from the fact that the singular values of $(\EssS_t\EssS_t^\T+\lambda I)^{-1/2}\EssS_t$ are 
  $(\sigma_j^2(\EssS_t)+\lambda)^{-1/2}\sigma_j(\EssS_t)$, $j=1,\ldots,r_\star$,\footnote{This can be seen from plugging in $\EssS_t=\SVDU_t\SVDSigma_t$ by definition which implies $(\EssS_t\EssS_t^\T+\lambda I)^{-1/2}\EssS_t=\SVDU_t(\SVDSigma_t+\lambda I)^{-1/2}\SVDSigma_t$.}
  which is bounded by $(\|\EssS_t\|^2+\lambda)^{-1/2}\|\EssS_t\|$ since $\sigma\mapsto(\sigma^2+\lambda)^{-1/2}\sigma$ is increasing and since $\|\EssS_t\|$ is the largest singular value of $\EssS_t$. In the fourth line,
   we used the error bound $\|\Err{\ref{lem:S-surrogate}}_t\|\le1/4$ 
  and the last line follows from the elementary inequalities $1+\eta/4\le(1-\eta/2)(1-\eta)^{-1}\le 5/4$ 
  given that $\eta\le c_\eta$ for sufficiently small constant $c_\eta>0$.
  The conclusion readily follows from the above inequality and the assumption 
  $\lambda\ge\frac1{100} \kappa^{-4} c_\lambda\smin^2(\Xtruth)$.

\section{Proofs for Phase II}
\label{sec:proof_phaseII}
This section collects the proofs for Phase II. 
\subsection{Proof of Lemma~\ref{lem:p2}}
\label{pf:lem:p2}
Since $\| \SVDV_{t+1}^\T \SVDV_t \| \leq 1$, we have 
\begin{align*}
  \smin((\SigmaTruth^2+\lambda I)^{-1/2}\EssS_{t+1})
  &\ge\smin((\SigmaTruth^2+\lambda I)^{-1/2}\EssS_{t+1}\SVDV_{t+1}^\T \SVDV_t)
  \\
  &=\smin((\SigmaTruth^2+\lambda I)^{-1/2}\Signal_{t+1} \SVDV_t),
\end{align*}
where the second equality follows from  $\Signal_{t+1} = \EssS_{t+1}\SVDV_{t+1}^\T$ (cf.~\eqref{eq:signal_to_esssignal}). 
Apply Lemma~\ref{lem:S-surrogate} with $Q=0$ to see that 
\begin{equation}
  \Signal_{t+1}\SVDV_t 
  = (I+\eta\Err{\ref{lem:S-surrogate}}_t)\left((1-\eta)I+\eta(\SigmaTruth^2+\lambda I)
    (\EssS_t\EssS_t^\T +\lambda I)^{-1}\right)\EssS_t,
  \label{eqn:EssS-update-approx}
\end{equation}
where $\Err{\ref{lem:S-surrogate}}_t\in\mathbb{R}^{r_\star\times r_\star}$ satisfies $ \|\Err{\ref{lem:S-surrogate}}_t\|\le\frac{ 1}{200(C_{\ref{lem:p1.5}.a}+1)^4\kappa^5}$. 
To simplify the notation, we denote 
\[Y_t \defeq (\SigmaTruth^2+\lambda I)^{-1/2}\EssS_t,\]
which allows us to write~\eqref{eqn:EssS-update-approx} as
\begin{align}
  &(\SigmaTruth^2+\lambda I)^{-1/2}\Signal_{t+1}\SVDV_t
  \nonumber\\
  &\quad =  
  \left(I+\eta(\SigmaTruth^2+\lambda I)^{-1/2}\Err{\ref{lem:S-surrogate}}_t(\SigmaTruth^2+\lambda I)^{1/2}\right)
  \Big((1-\eta)I + \eta\big(Y_tY_t^\T+\lambda(\SigmaTruth^2+\lambda I)^{-1}\big)^{-1}
  \Big)Y_t.
  \label{eqn:pf-lem-p2-main}
\end{align}
Note that
\begin{align}
  \|(\SigmaTruth^2+\lambda I)^{-1/2}\Err{\ref{lem:S-surrogate}}_t(\SigmaTruth^2+\lambda I)^{1/2}\|
  &\le \|(\SigmaTruth^2+\lambda I)^{-1/2}\| \cdot \|(\SigmaTruth^2+\lambda I)^{1/2}\| \cdot \|\Err{\ref{lem:S-surrogate}}_t\|
  \nonumber\\
  &\le \kappa\|\Xtruth\|^{-1}\cdot(2\|\Xtruth\|) \cdot \|\Err{\ref{lem:S-surrogate}}_t\| \nonumber \\
  & \le 2\kappa \cdot \frac{ 1}{200(C_{\ref{lem:p1.5}.a}+1)^4\kappa^5} \le 1/32,
  \label{eqn:pf-lem-p2-aux}
\end{align}
where in the second inequality we used $\lambda\le c_\lambda\|\Mtruth\|\le\|\Xtruth\|^2$
as $c_\lambda\le 1$, and in the third inequality we used the claimed bound of $\|\Err{\ref{lem:S-surrogate}}_t\|$. Therefore, it follows that
\begin{equation}\label{eq:pumpkin_pie}
\smin\left(I+\eta(\SigmaTruth^2+\lambda I)^{-1/2}\Err{\ref{lem:S-surrogate}}_t(\SigmaTruth^2+\lambda I)^{1/2}\right)
\ge 1-\eta/32  .
\end{equation}
On the other hand, using $\smin(AB) \geq \smin(A)\smin(B)$ for any matrices $A,B$, it is obvious that 
\[
  \smin\Big(\!\left((1-\eta)I + 
      \eta(Y_tY_t^\T+\lambda(\SigmaTruth^2+\lambda I)^{-1})^{-1}
    \right)Y_t
  \Big) \ge (1-\eta)\smin(Y_t),
\]
which in turn implies that 
\[\smin\big((\SigmaTruth^2+\lambda I)^{-1/2}\Signal_{t+1}\SVDV_t\big)
  \ge (1-\eta/32)(1-\eta) \smin(Y_t) 
  \ge (1-2\eta) \smin(Y_t),
\]
as long as $\eta\le c_\eta$ for  some sufficiently small constant $c_\eta$. 
This proves the first part of Lemma~\ref{lem:p2}.

Now we move to the second part assuming $\smin(Y_t)\le 1/3$.
Using the assumption $\lambda\le  c_\lambda \smin(\Mtruth)$, we see that 
\begin{equation*}
  \|\lambda(\SigmaTruth^2+\lambda I)^{-1}\|\le  c_\lambda.
\end{equation*}
Given that $c_\lambda$ is sufficiently small (such that $c_\lambda\le c_{\ref{lem:la-aux}}$, where $c_{\ref{lem:la-aux}}$ is the positive constant  in Lemma \ref{lem:la-aux}), 
one may apply Lemma \ref{lem:la-aux} with $Y=Y_t$ and
$\Lambda=\lambda(\SigmaTruth^2+\lambda I)^{-1}$ to obtain
\begin{align*}
  \smin\big((\SigmaTruth^2+\lambda I)^{-1/2}\Signal_{t+1}\SVDV_t\big)
 & \ge  \smin\left(I+\eta(\SigmaTruth^2+\lambda I)^{-1/2}\Err{\ref{lem:S-surrogate}}_t(\SigmaTruth^2+\lambda I)^{1/2}\right)
  \left(1+\frac16\eta\right)\smin(Y_t) \\
  & \overset{\mathrm{(i)}}{\ge} (1-\eta/32)\left(1+\frac16\eta\right)\smin(Y_t) \overset{\mathrm{(ii)}}{\ge} \left(1+\frac18\eta \right)\smin(Y_t),
\end{align*}
where (i) uses \eqref{eq:pumpkin_pie}, and (ii) follows as long as $\eta\le c_\eta$ for  some sufficiently small constant $c_\eta$.  The desired conclusion follows.

\subsection{Proof of Corollary~\ref{cor:p2}}\label{sec:proof-cor-1}
We will prove a strengthened version of \eqref{eqn:cor-p2}, that is
\begin{equation}\label{eqn:cor-p2-strengthen}
  \smin\left((\SigmaTruth^2+\lambda I)^{-1/2}\EssS_t\right)
  \ge 1/\sqrt{10}.
\end{equation}
It is clear that~\eqref{eqn:cor-p2-strengthen} implies~\eqref{eqn:cor-p2}. Indeed, for each $u\in\mathbb{R}^{r_\star}$, by taking $v = (\SigmaTruth^2 + \lambda I)^{1/2}u$, we have
\begin{align*}
u^\T \EssS_t\EssS_t^\T u = v^\top (\SigmaTruth^2+\lambda I)^{-1/2}\EssS_t\EssS_t^\T (\SigmaTruth^2+\lambda I)^{-1/2}v \geq \frac{1}{10}\|v\|^2 \geq \frac{1}{10}u^\T \SigmaTruth^2 u,
\end{align*}
which implies \eqref{eqn:cor-p2}. It then boils down to establish \eqref{eqn:cor-p2-strengthen}. 

\paragraph{Step 1: establishing the claim for a midpoint $t_2$.}
From Lemma~\ref{lem:p1.5} we know that
\[
  \smin\left((\SigmaTruth^2+\lambda I)^{-1/2}\EssS_{t_1}\right)
  \ge \|\SigmaTruth^2+\lambda I\|^{-1/2}\smin(\EssS_{t_1})
 \overset{\mathrm{(i)}}{ \ge} (c_\lambda+1)^{-1/2}\|\Xtruth\|^{-1}\cdot \alpha^2/\|\Xtruth\|
  \ge \frac{1}{3}(\alpha/\|\Xtruth\|)^2,
\]
where (i) follows from 
the assumption~\eqref{eqn:lambda-cond} and Lemma~\ref{lem:p1.5}, 
and the last inequality follows by choosing $c_\lambda\le 1$. 
By the second part of Lemma~\ref{lem:p2}, starting from $t_1$,
whenever $\smin((\SigmaTruth^2+\lambda I)^{-1/2}\EssS_{t})< 1/\sqrt{10}<1/3$, it would increase exponentially with rate at least $(1+\frac{\eta}8)$. On the other end, it is easy to verify, 
given that $\eta\le c_\eta$ is sufficiently small,
\[
\left(1+\frac{\eta}8\right)^{\frac{16}{\eta}\log\big(\frac{3}{\sqrt{10}}\frac{\|\Xtruth\|^2}{\alpha^2}\big)} 
\ge \frac{3\|\Xtruth\|^2}{\sqrt{10}\alpha^2} 
\ge \frac{1}{\sqrt{10}} \frac{1}{\smin\left((\SigmaTruth^2+\lambda I)^{-1/2}\EssS_{t_1}\right)}.
\]
Therefore, it takes at most $\frac{16}{\eta}\log\Big(\frac{3}{\sqrt{10}}\frac{\|\Xtruth\|^2}{\alpha^2}\Big)\le \Tcvg/16$  more iterations to make $\smin((\SigmaTruth^2+\lambda I)^{-1/2}\EssS_{t})$ 
grow to at least $1/\sqrt{10}$. Equivalent, for some $t_2: t_1\le t_2\le t_1 + \Tcvg/16$, 
we have
\[\smin\left((\SigmaTruth^2+\lambda I)^{-1/2}\EssS_{t_2}\right)\ge 1/\sqrt{10}.\]

\paragraph{Step 2: establishing the claim for all $t\in[t_2,\Tmax]$.}
It remains to show that \eqref{eqn:cor-p2-strengthen} 
continues to hold for all $t\in[t_2,\Tmax]$. 
We prove this by induction on $t$.

Assume that \eqref{eqn:cor-p2-strengthen} holds for some $t\in[t_2,\Tmax-1]$. 
We show that it will also hold for $t+1$. 
We divide the proof into two cases. 

\paragraph{Case 1.} If $\smin((\SigmaTruth^2+\lambda I)^{-1/2}\EssS_{t})\le 1/3$, 
we deduce from the second part of Lemma~\ref{lem:p2} that 
\[\smin\left((\SigmaTruth^2+\lambda I)^{-1/2}\EssS_{t+1}\right)
\ge\left(1+\frac{\eta}{8}\right)
\smin\left((\SigmaTruth^2+\lambda I)^{-1/2}\EssS_{t}\right)
\ge\smin\left((\SigmaTruth^2+\lambda I)^{-1/2}\EssS_{t}\right),\] 
which by the induction hypothesis is no less than $1/\sqrt{10}$, as desired.

\paragraph{Case 2.} If $\smin((\SigmaTruth^2+\lambda I)^{-1/2}\EssS_{t})>1/3$,
the first part of Lemma~\ref{lem:p2} yields 
\[\smin\left((\SigmaTruth^2+\lambda I)^{-1/2}\EssS_{t+1}\right)
\ge(1-2\eta)\smin\left((\SigmaTruth^2+\lambda I)^{-1/2}\EssS_{t}\right)
\ge(1-2\eta)/3,\] 
which is greater than $1/\sqrt{10}$ provided $\eta\le c_\eta\le 1/100$,
as desired. 

Combining the two cases completes the proof.

\subsection{Proof of Lemma \ref{lem:p2.5}}
\label{pf:lem:p2.5} 

For simplicity, in this section we denote 
\begin{equation}\label{eqn:Gamma-def}
  \Gamma_t \defeq \SigmaTruth^{-1}\EssS_t\EssS_t^\T\SigmaTruth^{-1}-I =\SigmaTruth^{-1} ( \EssS_t\EssS_t^\T -\SigmaTruth^2)   \SigmaTruth^{-1} .
\end{equation}
It turns out that Lemma \ref{lem:p2.5} follows naturally from the following 
technical lemma, whose proof is deferred to the end of this section. 
\begin{lemma}\label{lem:local-unify}
  For any $t : t_2\le t\le\Tmax$, one has
  \begin{equation}\label{eqn:local-Gamma-bound}
    \uinorm{\Gamma_{t+1}}
    \le (1-\eta)\uinorm{\Gamma_t}
    +\eta\frac{C_{\ref{lem:local-unify}}\kappa^6}{\|\Xtruth\|^{2}}\uinorm{\Utruth^\T\Delta_t}
    +\frac{1}{16}\eta\|\Xtruth\|^{-1}\uinorm{\Misalign_t\EssS_t^{-1}\SigmaTruth}
    +\eta\left(\frac{\|\Overpar_t\|}{\|\Xtruth\|}\right)^{7/12},
  \end{equation}
  where $C_{\ref{lem:local-unify}}\lesssim c_\lambda^{-1/2}$ is some positive constant and $\uinorm{\cdot}$ can either be the Frobenius norm or the spectral norm.
\end{lemma}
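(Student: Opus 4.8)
The plan is to run a one‑step contraction argument on $\Gamma_t=\SigmaTruth^{-1}(\EssS_t\EssS_t^\T-\SigmaTruth^2)\SigmaTruth^{-1}$, feeding in the surrogate update for the signal block from Lemma~\ref{lem:S-surrogate}. Since $\SVDV_{t+1}$ consists of the right singular vectors of $\Signal_{t+1}$, one has $\EssS_{t+1}\EssS_{t+1}^\T=\Signal_{t+1}\Signal_{t+1}^\T=(\Signal_{t+1}\SVDV_t)(\Signal_{t+1}\SVDV_t)^\T+(\Signal_{t+1}\Vperp{t})(\Signal_{t+1}\Vperp{t})^\T$; by Lemma~\ref{lem:update-approx} the second block is $\eta^2(\Err{b}_t\Vperp{t})(\Err{b}_t\Vperp{t})^\T$, whose $\SigmaTruth^{-1}$‑scaled norm is of order $\eta^2\kappa^{O(1)}(\|\Overpar_t\|/\|\Xtruth\|)^{3/2}$ after invoking \eqref{eqn:Err-2-bound} and the lower bound $\smin(\EssS_t)\gtrsim\kappa^{-1}\|\Xtruth\|$ (a consequence of Corollary~\ref{cor:p2}); by the super‑polynomial smallness of $\|\Overpar_t\|/\|\Xtruth\|$ from \eqref{eqn:p1.5-1-var} this is dominated by $\eta(\|\Overpar_t\|/\|\Xtruth\|)^{7/12}$. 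So the real work lies in the first block.

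By Lemma~\ref{lem:S-surrogate} with $Q=0$, $\Signal_{t+1}\SVDV_t=(I+\eta\Err{\ref{lem:S-surrogate}}_t)P_0$ where, with $B_t\defeq\EssS_t\EssS_t^\T+\lambda I$, $P_0=\EssS_t-\eta(\EssS_t\EssS_t^\T-\SigmaTruth^2)B_t^{-1}\EssS_t$. Conjugating by $\SigmaTruth^{-1}$ and writing $W\defeq\SigmaTruth^{-1}\EssS_t\EssS_t^\T\SigmaTruth^{-1}=I+\Gamma_t$ and $\widetilde W\defeq\SigmaTruth^{-1}B_t\SigmaTruth^{-1}=W+\lambda\SigmaTruth^{-2}$ (so $\SigmaTruth B_t^{-1}\SigmaTruth=\widetilde W^{-1}$), a direct expansion gives
\[
  \SigmaTruth^{-1}(P_0P_0^\T-\SigmaTruth^2)\SigmaTruth^{-1}
  =(1-2\eta)\Gamma_t+\eta\lambda\big(\SigmaTruth^{-2}\widetilde W^{-1}\Gamma_t+\Gamma_t\widetilde W^{-1}\SigmaTruth^{-2}\big)+\eta^2\,\Gamma_t\widetilde W^{-1}W\widetilde W^{-1}\Gamma_t ,
\]
where I used $W\widetilde W^{-1}=I-\lambda\SigmaTruth^{-2}\widetilde W^{-1}$ to collapse the two first‑order terms into $(1-2\eta)\Gamma_t$. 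Corollary~\ref{cor:p2} gives $W\succeq\tfrac1{10}I$, hence $\widetilde W\succeq\tfrac1{10}I$ and $\|\widetilde W^{-1}\|\le 10$; together with $\lambda\le c_\lambda\smin^2(\Xtruth)$ this bounds the $\eta\lambda$‑terms by $20c_\lambda\,\eta\,\uinorm{\Gamma_t}$. The $\eta^2$‑term looks quadratic in $\Gamma_t$, but the identity $\widetilde W^{-1}\Gamma_t=I-\widetilde W^{-1}-\lambda\widetilde W^{-1}\SigmaTruth^{-2}$ (from $\Gamma_t=\widetilde W-I-\lambda\SigmaTruth^{-2}$) cancels one factor of $\Gamma_t$, making it $\le C\eta^2\uinorm{\Gamma_t}$ for an absolute constant $C$. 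Choosing $c_\lambda$ and $c_\eta$ small then yields $\uinorm{\SigmaTruth^{-1}(P_0P_0^\T-\SigmaTruth^2)\SigmaTruth^{-1}}\le(1-\tfrac32\eta)\uinorm{\Gamma_t}$.

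To propagate the surrogate error, set $\hat E\defeq\SigmaTruth^{-1}\Err{\ref{lem:S-surrogate}}_t\SigmaTruth$ with $\|\hat E\|\le\kappa\|\Err{\ref{lem:S-surrogate}}_t\|$, so $\SigmaTruth^{-1}(\Signal_{t+1}\SVDV_t)(\Signal_{t+1}\SVDV_t)^\T\SigmaTruth^{-1}=(I+\eta\hat E)\big(I+\SigmaTruth^{-1}(P_0P_0^\T-\SigmaTruth^2)\SigmaTruth^{-1}\big)(I+\eta\hat E)^\T$. Expanding, the deviation from $\SigmaTruth^{-1}(P_0P_0^\T-\SigmaTruth^2)\SigmaTruth^{-1}$ is $\eta(\hat E+\hat E^\T)$ plus terms of the form $\eta\hat E\cdot(\text{factor bounded by }\uinorm{\Gamma_t})$; the latter are $\lesssim\eta\|\hat E\|\uinorm{\Gamma_t}\lesssim\eta\kappa^{-4}\uinorm{\Gamma_t}$ and are absorbed into the contraction. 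For $\eta(\hat E+\hat E^\T)$ one must open up $\Err{\ref{lem:S-surrogate}}_t$: from the proof of Lemma~\ref{lem:S-surrogate} it decomposes as $T_1+T_2$ with $\uinorm{T_1}\lesssim c_\lambda^{-1}\kappa^2\|\Xtruth\|^{-2}\uinorm{\Utruth^\T\Delta_t}+c_\lambda^{-1}c_{\ref{lem:p1.5}}\kappa^{-2}\|\Xtruth\|^{-1}\uinorm{\Misalign_t\EssS_t^{-1}\SigmaTruth}$ and $\uinorm{T_2}\lesssim\kappa(\|\Overpar_t\|/\smin(\EssS_t))^{3/4}$, valid both in the spectral and the Frobenius norms since \eqref{eqn:Err-1-bound} and \eqref{eqn:Err-2-bound} are. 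Multiplying by $\kappa$ and collecting: the $T_1$‑part feeds exactly the $\eta C_{\ref{lem:local-unify}}\kappa^4\|\Xtruth\|^{-2}\uinorm{\Utruth^\T\Delta_t}$ slot (any $C_{\ref{lem:local-unify}}\gtrsim c_\lambda^{-1}$) and the $\tfrac1{16}\eta\|\Xtruth\|^{-1}\uinorm{\Misalign_t\EssS_t^{-1}\SigmaTruth}$ slot (using $c_{\ref{lem:p1.5}}\lesssim c_\delta/c_\lambda^3$ and $c_\delta$ small relative to $c_\lambda$ to make $c_\lambda^{-1}c_{\ref{lem:p1.5}}\kappa^{-1}\le\tfrac1{16}$); the $T_2$‑part gives $\eta\kappa^{O(1)}(\|\Overpar_t\|/\smin(\EssS_t))^{3/4}$, which by $\smin(\EssS_t)\gtrsim\kappa^{-1}\|\Xtruth\|$ and the smallness of $\|\Overpar_t\|/\|\Xtruth\|$ from \eqref{eqn:p1.5-1-var} is at most $\eta(\|\Overpar_t\|/\|\Xtruth\|)^{7/12}$. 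Summing all contributions gives \eqref{eqn:local-Gamma-bound}.

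The main obstacle is the algebraic cancellation above that keeps the $\eta^2$‑term proportional to $\uinorm{\Gamma_t}$ rather than $\uinorm{\Gamma_t}^2$: in the window $t\in[t_2,\Tmax]$ only $W\succeq\tfrac1{10}I$ is available while $\|\Gamma_t\|$ may be as large as $\kappa^4$ (Lemma~\ref{lem:p1.5}), so a naive quadratic bound would be worthless; rewriting $\widetilde W^{-1}\Gamma_t=I-\widetilde W^{-1}-\lambda\widetilde W^{-1}\SigmaTruth^{-2}$ to cancel a factor of $\Gamma_t$ is what makes the argument close. The second, more mechanical, difficulty is matching every error contribution to one of the three prescribed slots in \eqref{eqn:local-Gamma-bound}, which forces the finer split $\Err{\ref{lem:S-surrogate}}_t=T_1+T_2$ instead of the coarse bound of Lemma~\ref{lem:S-surrogate}, and leans on the relative smallness of $c_\lambda$, $c_\delta$, $c_\eta$ and on the super‑polynomial smallness of $\|\Overpar_t\|/\|\Xtruth\|$.
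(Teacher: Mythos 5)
Your argument is correct and follows essentially the same route as the paper's: the paper likewise reduces to the identity $\Gamma_{t+1}=(1-2\eta)\Gamma_t+\eta^2\Gamma_t^2(I+\Gamma_t)^{-1}+\text{errors}$ and exploits $I+\Gamma_t\succeq\frac{1}{10}I$ (Corollary~\ref{cor:p2}) to linearize the quadratic term — this is exactly your cancellation $\widetilde W^{-1}\Gamma_t=I-\widetilde W^{-1}-\lambda\widetilde W^{-1}\SigmaTruth^{-2}$, which is indeed the crux given that $\|\Gamma_t\|$ may be as large as $\kappa^4$ in this window. The only organizational difference is that the paper works directly with $\EssS_{t+1}\EssS_{t+1}^\T=\Signal_{t+1}\Signal_{t+1}^\T$ using the additive error form of Lemma~\ref{lem:update-approx} (writing $\Signal_{t+1}=A_t+\eta\Err{g}_t$), so it never needs your $\SVDV_t/\Vperp{t}$ block split or the multiplicative $(I+\eta\Err{\ref{lem:S-surrogate}}_t)$ bookkeeping; both routes lead to the same error slots.

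One quantitative shortfall worth fixing: routing the $\Delta_t$-contribution through $T_1$ of Lemma~\ref{lem:S-surrogate} bounds the relevant resolvent by $\lambda^{-1}\asymp c_\lambda^{-1}\kappa^2\|\Xtruth\|^{-2}$, which yields $C_{\ref{lem:local-unify}}\asymp c_\lambda^{-1}$ rather than the stated $C_{\ref{lem:local-unify}}\lesssim c_\lambda^{-1/2}$. For $t\ge t_2$ you have $\smin(\EssS_t)\gtrsim\smin(\Xtruth)$ from Corollary~\ref{cor:p2}, so you can replace the crude $\lambda^{-1}$ bound by $\|(\EssS_t\EssS_t^\T+\lambda I)^{-1}\EssS_t\|\lesssim\smin^{-1}(\EssS_t)\lesssim\kappa\|\Xtruth\|^{-1}$ (this is precisely how the paper bounds $\Err{g}_t$) and recover the claimed constant. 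The weaker constant would in any case be absorbed downstream by taking $c_\delta$ sufficiently small relative to $c_\lambda$, so this is a repair of the stated dependence, not of the inequality itself.
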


From Lemma~\ref{lem:Delta-bound}, we know that 
$\|\Utruth^\T\Delta_t\|\le\|\Delta_t\|\le\frac{\|\Xtruth\|^2}{300C_{\ref{lem:local-unify}}\kappa^4}$ 
as $c_\delta$ is sufficiently small. 
Similarly, $\|\Misalign_t\EssS_t^{-1}\SigmaTruth\|\le\|\Xtruth\|/100$ 
and $(\|\Overpar_t\|/\|\Xtruth\|)^{7/12}\le 1/300$ by Lemma \ref{lem:p1.5}.
Applying Lemma~\ref{lem:local-unify} with the spectral norm, 
we prove Lemma~\ref{lem:p2.5} as desired.

\paragraph{Proof of Lemma \ref{lem:local-unify}.}
We start by rewriting \eqref{eqn:S-update-approx} as
\begin{align}
   \Signal_{t+1}  
  &=\big((1-\eta)I +\eta(\SigmaTruth^2+\lambda I)(\EssS_t\EssS_t^\T+\lambda I)^{-1}\big)\EssS_t\SVDV_t^\T  
  +\eta\Err{g}_t 
  \nonumber\\
    &=\big(I  - \eta(\EssS_t\EssS_t^\T+\lambda I)(\EssS_t\EssS_t^\T+\lambda I)^{-1}+\eta(\SigmaTruth^2+\lambda I)(\EssS_t\EssS_t^\T+\lambda I)^{-1}\big)\EssS_t \SVDV_t^\T  
  +\eta\Err{g}_t  
  \nonumber\\
  &=\big(I-\eta(\EssS_t\EssS_t^\T-\SigmaTruth^2)(\EssS_t\EssS_t^\T+\lambda I)^{-1}\big)\EssS_t \SVDV_t^\T  
  +\eta\Err{g}_t  , \label{eq:pineapple}
\end{align}
where
\begin{align}
\Err{g}_t=\Err{a}_t(\EssS_t\EssS_t^\T+\lambda I)^{-1}\EssS_t\SVDV_t^\T + \Err{b}_t.
\end{align}
By Corollary \ref{cor:p2}, we have $\smin(\EssS_t)^2\ge\frac{1}{100}\smin(\Mtruth)$ for $t \in[t_2,\Tmax]$,
so 
\begin{align*}
\|(\EssS_t\EssS_t^\T+\lambda I)^{-1}\EssS_t \SVDV_t^\T\|
\le\|(\EssS_t\EssS_t^\T+\lambda I)^{-1/2}\|
\|(\EssS_t\EssS_t^\T+\lambda I)^{-1/2}\EssS_t\|\le\smin^{-1}(\EssS_t)\lesssim 1/\smin(\Xtruth).
\end{align*}
Combined with the error bounds~\eqref{eqn:Err-1-bound},~\eqref{eqn:Err-2-bound}, we have 
for some universal constant $C>0$ that
\begin{equation}
  \label{eqn:Err-S2-bound}
  \uinorm{\Err{g}_t}\le   \uinorm{\Err{a}_t} + \eta  \uinorm{\Err{b}_t} \leq 
  \frac{C\kappa}{\|\Xtruth\|}\uinorm{\Utruth^\T\Delta_t}
  +Cc_{\ref{lem:update-approx}}\kappa^{-5}\uinorm{\Misalign_t\EssS_t^{-1}\SigmaTruth}
  +C\|\Overpar_t\|^{3/4}\|\Xtruth\|^{1/4}.
\end{equation}

\paragraph{Step 1: deriving a recursion of $\Gamma_t$.}
Define
\[
A_t := \big(I-\eta(\EssS_t\EssS_t^\T-\SigmaTruth^2)(\EssS_t\EssS_t^\T+\lambda I)^{-1}\big)\EssS_t \SVDV_t^\T.
\] 
Then we can rewrite \eqref{eq:pineapple} as $A_t=\Signal_{t+1}-\eta\Err{g}_t$, 
and by rearranging $A_tA_t^\T=(\Signal_{t+1}-\eta\Err{g}_t)(\Signal_{t+1}-\eta\Err{g}_t)^\T$ in view of \eqref{eq:signal_to_esssignal}, it follows that
\begin{align*}
 \EssS_{t+1}\EssS_{t+1}^\T= \Signal_{t+1}\Signal_{t+1}^\T & =A_t A_t^\T + \eta(\|\Signal_{t+1}\|+\|\Err{g}_t\|)
  (\Err{g}_t Q_1 + Q_2 {\Err{g}_t}^\T)   \\
  & =: A_t A_t^\T + \eta \Err{f}_t
\end{align*}
for some matrices $Q_1, Q_2$ with $\|Q_1\|, \|Q_2\|\le 1  $.
By mapping both sides of the above equation by $(\cdot)\mapsto\SigmaTruth^{-1}(\cdot)\SigmaTruth^{-1} - I$,
we obtain 
\begin{align}\label{eqn:local-update}
  \Gamma_{t+1}
  & = \big(I-\eta\Gamma_t(I+\Gamma_t+\lambda\SigmaTruth^{-2})^{-1}\big)(\Gamma_t+I)
      \big(I-\eta(I+\Gamma_t+\lambda\SigmaTruth^{-2})^{-1}\Gamma_t\big)
      - I + \eta\SigmaTruth^{-1}\Err{f}_t\SigmaTruth^{-1},
\end{align}
where we recall the definition of $\Gamma_t$ in \eqref{eqn:Gamma-def}.

\paragraph{Step 2: simplify the recursion.} Note that $\smin(\SigmaTruth^{-1}\EssS_t)\ge1/10$ implies $I+\Gamma_t\succeq \frac{1}{100}I$. 
From our assumption $\lambda\le c_\lambda\smin(\Mtruth)$, it follows that
$\|\lambda\SigmaTruth^{-2}\|\le c_\lambda\le 1/200\le \frac12\smin(I+\Gamma_t)$, 
thus in virtue of Lemma~\ref{prop:inv} we have
\begin{equation*}
  (I+\Gamma_t+\lambda\SigmaTruth^{-2})^{-1}=(I+\Gamma_t)^{-1}+(I+\Gamma_t)^{-1}(c_\lambda Q')(I+\Gamma_t)^{-1},
\end{equation*}
for some matrix $Q'$ with $\|Q'\|\le 2$.
Plugging this into~\eqref{eqn:local-update} yields
\begin{align}
  \Gamma_{t+1} 
  & = \big(I - \eta \Gamma_t(I+\Gamma_t)^{-1}\big) (\Gamma_t + I) \big(I - \eta (I+\Gamma_t)^{-1}\Gamma_t\big)
  + \eta\Err{h}_t + \eta\SigmaTruth^{-1}\Err{f}_t\SigmaTruth^{-1}
  \nonumber\\
  & = (1-2\eta)\Gamma_t+\eta^2\Gamma_t^2(1+\Gamma_t)^{-1} 
  + \eta \Err{h}_t + \eta\SigmaTruth^{-1}\Err{f}_t\SigmaTruth^{-1},
  \label{eqn:Gamma-update-approx}
\end{align} 
where the additional error term $\Err{h}_t$ is defined by
\begin{align}
  \Err{h}_t:=&\Gamma_t(I+\Gamma_t)^{-1}(c_\lambda Q')(1-\eta\Gamma_t(I+\Gamma_t)^{-1})
  +(1-\eta\Gamma_t(I+\Gamma_t)^{-1})(c_\lambda Q')(I+\Gamma_t)^{-1}\Gamma_t
  \nonumber\\
  &+\eta\Gamma_t(I+\Gamma_t)^{-1}(c_\lambda Q')
  (I+\Gamma_t)^{-2}(c_\lambda Q')(I+\Gamma_t)^{-1}\Gamma_t.
\end{align}

\paragraph{Step 3: controlling the error terms.} We now control the error terms in \eqref{eqn:Gamma-update-approx} separately.
\begin{itemize}
\item By~\eqref{eqn:p1.5-4} we have $\|\Signal_{t+1}\|\le  C_{\ref{lem:p1.5}.a} \kappa\|\Xtruth\|$, 
and by controlling the right hand side of \eqref{eqn:Err-S2-bound} using~\eqref{eqn:p1.5-3},~\eqref{eqn:p1.5-2-var}, 
and~\eqref{eqn:Delta-bound-coarse} in Lemma~\ref{lem:Delta-bound}, 
it is evident that $\|\Err{g}_t\|\le\kappa\|\Xtruth\|$. Hence,
the term $\Err{f}_t$ obeys
\begin{align}
  \uinorm{\Err{f}_t}  
  &\le (C_{\ref{lem:p1.5}.a}+1)\kappa^3 \|\Xtruth\|\cdot\uinorm{\Err{g}_t}
  \nonumber\\
  &\le C'C_{\ref{lem:p1.5}.a}
  \left(\kappa^4\uinorm{\Utruth^\T\Delta_t} 
  + c_{\ref{lem:update-approx}}\kappa^{-2}\|\Xtruth\|\uinorm{\Misalign_t\EssS_t^{-1}\SigmaTruth}+\kappa\|\Overpar_t\|^{3/4}\|\Xtruth\|^{5/4}\right)
  ,
  \label{eqn:Err-f-bound}
\end{align}
where $C'>0$ is again some universal constant.

\item Since $\Gamma_t\succeq \frac{1}{100}I-I=-\frac{99}{100}I$ as already proved, 
it is easy to see that $\|(1+\Gamma_t)^{-1}\|\le C$ 
and $\|\Gamma_t(1+\Gamma_t)^{-1}\|\le C$ for some universal constant $C>0$. Thus,
\begin{align}\label{eqn:local-err-2}
  \uinorm{\Err{h}_t}
  \le 2c_\lambda C(1+\eta C)\|Q'\|\cdot\uinorm{\Gamma_t}
  + \eta c_\lambda^2 C^4 \|Q'\|^2 \uinorm{\Gamma_t}
  \le \frac12 \uinorm{\Gamma_t},
\end{align}
where the last line follows by using $\|Q'\|\le 2$ and by choosing $c_\lambda$, $c_\eta$ sufficiently small. 

\item We still need to control $\eta^2\Gamma_t^2(1+\Gamma_t)^{-1}$. This can be accomplished by invoking  $\|\Gamma_t(1+\Gamma_t)^{-1}\|\le C$ again. In fact, we have
\begin{equation}
  \label{eqn:local-err-1}
  \eta^2\uinorm{\Gamma_t^2(1+\Gamma_t)^{-1}}
  \le \eta\cdot\eta\|\Gamma_t(1+\Gamma_t)^{-1}\|\cdot\uinorm{\Gamma_t}
  \le \eta\cdot\eta C\uinorm{\Gamma_t}
  \le \frac{\eta}{2}\uinorm{\Gamma_t}
\end{equation}
provided that $\eta\le c_\eta$ is sufficiently small. 
\end{itemize}

Plugging~\eqref{eqn:Err-f-bound},~\eqref{eqn:local-err-2},~\eqref{eqn:local-err-1} into \eqref{eqn:Gamma-update-approx}, 
we readily obtain

\begin{align*}
  \uinorm{\Gamma_{t+1}}
  & \le (1-2\eta)\uinorm{\Gamma_t} + \frac{\eta}{2}\uinorm{\Gamma_t}
  + \frac{\eta}{2}\uinorm{\Gamma_t} + \eta\kappa^2\|\Xtruth\|^{-2}\uinorm{\Err{f}_t}
  \\
  & \le (1-\eta)\uinorm{\Gamma_t} + 
  \eta\frac{C'C_{\ref{lem:p1.5}.a}\kappa^4}{\|\Xtruth\|^2}\uinorm{\Utruth^\T\Delta_t}
  + \eta c_{\ref{lem:update-approx}}C'C_{\ref{lem:p1.5}.a} \|\Xtruth\|^{-1} \uinorm{\Misalign_t\EssS_t^{-1}\SigmaTruth} 
  + \eta C'C_{\ref{lem:p1.5}.a}\kappa^3\|\Overpar_t\|^{3/4}\|\Xtruth\|^{-3/4}
  \\
  & \le (1-\eta)\uinorm{\Gamma_t}
  +\eta\frac{C_{\ref{lem:local-unify}}\kappa^4}{\|\Xtruth\|^{2}}\uinorm{\Utruth^\T\Delta_t}
  +\frac{1}{16}\eta\|\Xtruth\|^{-1}\uinorm{\Misalign_t\EssS_t^{-1}\SigmaTruth}
  + \eta \left(\frac{\|\Overpar_t\|}{\|\Xtruth\|}\right)^{7/12},
\end{align*}
where in the last line we set $C_{\ref{lem:local-unify}}=C'C_{\ref{lem:p1.5}.a}$, chose $c_{\ref{lem:update-approx}}$ sufficiently small and used~\eqref{eqn:p1.5-2-var}.
Finally note that $C_{\ref{lem:local-unify}}\lesssim C_{\ref{lem:p1.5}.a}\lesssim c_\lambda^{-1/2}$ as desired.

\subsection{Proof of Corollary \ref{cor:p2.5}}
\label{sec:pf:cor:p2.5}
From Lemma~\ref{lem:p2.5}, it is elementary (e.g.,~by induction on $t$) to show that 
\begin{equation}\label{eqn:pf:cor:p2.5-bound}
  \big\|\SigmaTruth^{-1}(\EssS_{t}\EssS_{t}^\T-\SigmaTruth^2)\SigmaTruth^{-1} \big\|
  \le (1-\eta)^{t-t_2}
  \big\|\SigmaTruth^{-1}(\EssS_{t_2}\EssS_{t_2}^\T-\SigmaTruth^2)\SigmaTruth^{-1} \big\|
  +\frac{1}{100},
  \quad\forall t\in[t_2,\Tmax].
\end{equation}
Suppose for the moment that 
\begin{equation}\label{eqn:pf:cor:p2.5-init}
  \big\|\SigmaTruth^{-1}(\EssS_{t_2}\EssS_{t_2}^\T-\SigmaTruth^2)\SigmaTruth^{-1} \big\|
  \le C_{\ref{lem:p1.5}.a}^2\kappa^4,
\end{equation}
where $C_{\ref{lem:p1.5}.a}$ is given in Lemma \ref{lem:p1.5}.
Then given that $\eta\le c_\eta$ for some sufficiently small $c_\eta$, we have 
$\log(1-\eta)\ge-\eta/2$. As a result,  
if $t_3-t_2\ge 8\log(10C_{\ref{lem:p1.5}.a}\kappa)/\eta\ge\log(C_{\ref{lem:p1.5}.a}^{-2}\kappa^{-4}/100)/\log(1-\eta)$, 
we have $(1-\eta)^{t_3-t_2}\le C_{\ref{lem:p1.5}.a}^{-2}\kappa^{-4}/100$. 
When $\Ccvg$ is sufficiently large we may choose such $t_3$ which simultaneously satisfies 
$t_3\le t_2+\Tcvg/16\le\Tmax$ since 
$8\log(10C_{\ref{lem:p1.5}.a}\kappa)/\eta \le\frac{\Ccvg}{32\eta}\log(\|\Xtruth\|/\alpha)=\Tcvg/32$.
Invoking \eqref{eqn:pf:cor:p2.5-bound}, we obtain
\begin{equation}
  \big\|\SigmaTruth^{-1}(\EssS_{t_3}\EssS_{t_3}^\T-\SigmaTruth^2)\SigmaTruth^{-1} \big\|
  \le (C_{\ref{lem:p1.5}.a}^{-2}\kappa^{-4}/100)(C_{\ref{lem:p1.5}.a}^2\kappa^4) + \frac{1}{100} 
  =\frac{1}{50}\le\frac{1}{10},
\end{equation}
which implies the desired bound \eqref{eqn:cor:p2.5}.

\paragraph{Proof of inequality~\eqref{eqn:pf:cor:p2.5-init}.}
It is straightforward to verify that 
\begin{equation*}
  \big\|\SigmaTruth^{-1}(\EssS_{t_2}\EssS_{t_2}^\T-\SigmaTruth^2)\SigmaTruth^{-1} \big\|
  \le \max\left(\|\SigmaTruth^{-1}\EssS_{t_2}\|^2-1, 1-\smin^2(\SigmaTruth^{-1}\EssS_{t_2})\right),
\end{equation*}
which combined with~\eqref{eqn:p1.5-4} implies that 
\[\|\SigmaTruth^{-1}\EssS_{t_2}\|^2-1
\le \|\SigmaTruth^{-1}\|^2\|\EssS_{t_2}\|^2
\le \smin^{-2}(\Xtruth)C_{\ref{lem:p1.5}.a}^2\kappa^2\|\Xtruth\|^2
=C_{\ref{lem:p1.5}.a}^2\kappa^4.
\]
In addition, by Corollary~\ref{cor:p2} we have
\begin{equation*}
  1-\smin^2(\SigmaTruth^{-1}\EssS_{t_2})
  \le 1-\frac{1}{10}=\frac{9}{10}.
\end{equation*}
Choosing $C_{\ref{lem:p1.5}.a}$ sufficiently large (say $C_{\ref{lem:p1.5}.a}\ge 1$) yields $C_{\ref{lem:p1.5}.a}^2\kappa^4\ge 9/10$, and hence 
the claim~\eqref{eqn:pf:cor:p2.5-init}.

\section{Proofs for Phase III}
\label{sec:pf:lem:p3}
To characterize the behavior of $\|\myX_t\myX_t^\T - \Mtruth\|_{\fro}$, 
it is particularly helpful to consider the following decomposition 
into three error terms related to the signal term, the misalignment term, 
and the overparametrization term. 

\begin{lemma}\label{lem:loss-decomp}
For all $t\ge t_3$, as long as 
$\|\SigmaTruth^{-1}(\EssS_{t}\EssS_{t}^\T-\SigmaTruth^2)\SigmaTruth^{-1}\|\le 1/10$,
one has
\begin{equation*}
  \|\myX_t\myX_t^\T - \Mtruth\|_{\fro}
  \le 4\|\Xtruth\|^2
  \left(
    \|\SigmaTruth^{-1}(\EssS_{t}\EssS_{t}^\T-\SigmaTruth^2)\SigmaTruth^{-1}\|_{\fro} 
    + \|\Xtruth\|^{-1}\|\Misalign_{t}\EssS_{t}^{-1}\SigmaTruth\|_{\fro}
  \right)
  + 4\|\Xtruth\|\|\Overpar_t\|.
\end{equation*}
\end{lemma}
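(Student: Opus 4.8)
The plan is to use the decomposition of $\myX_t$ from Proposition~\ref{prop:decomposition}, which yields the explicit expansion of $\myX_t\myX_t^\T-\Mtruth$ already recorded in~\eqref{eqn:loss-decomp-primal}:
\begin{equation*}
  \myX_t\myX_t^\T-\Mtruth = \underbrace{\Utruth(\EssS_t\EssS_t^\T-\SigmaTruth^2)\Utruth^\T}_{T_1}
  + \underbrace{\Utruth\EssS_t\Misalign_t^\T\UperpTruth^\T + \UperpTruth\Misalign_t\EssS_t^\T\Utruth^\T}_{T_2}
  + \underbrace{\UperpTruth\Misalign_t\Misalign_t^\T\UperpTruth^\T}_{T_3}
  + \underbrace{\UperpTruth\Overpar_t\Overpar_t^\T\UperpTruth^\T}_{T_4}.
\end{equation*}
By the triangle inequality for $\|\cdot\|_{\fro}$ it suffices to bound each of the four pieces. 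Since $\Utruth$ and $\UperpTruth$ have orthonormal columns, $\|T_1\|_{\fro}=\|\EssS_t\EssS_t^\T-\SigmaTruth^2\|_{\fro}$, and inserting $\SigmaTruth\SigmaTruth^{-1}$ on both sides gives $\|T_1\|_{\fro}\le\|\SigmaTruth\|^2\,\|\SigmaTruth^{-1}(\EssS_t\EssS_t^\T-\SigmaTruth^2)\SigmaTruth^{-1}\|_{\fro}=\|\Xtruth\|^2\,\|\SigmaTruth^{-1}(\EssS_t\EssS_t^\T-\SigmaTruth^2)\SigmaTruth^{-1}\|_{\fro}$, which is well within the claimed budget.

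For $T_2$, I would write $\|T_2\|_{\fro}\le 2\|\EssS_t\Misalign_t^\T\|_{\fro}=2\|\Misalign_t\EssS_t^\T\|_{\fro}$ and then factor $\Misalign_t\EssS_t^\T = (\Misalign_t\EssS_t^{-1}\SigmaTruth)(\SigmaTruth^{-1}\EssS_t\EssS_t^\T)$, so that $\|T_2\|_{\fro}\le 2\|\Misalign_t\EssS_t^{-1}\SigmaTruth\|_{\fro}\,\|\SigmaTruth^{-1}\EssS_t\EssS_t^\T\|$. The spectral factor is controlled using the hypothesis $\|\SigmaTruth^{-1}(\EssS_t\EssS_t^\T-\SigmaTruth^2)\SigmaTruth^{-1}\|\le 1/10$: this gives $\|\SigmaTruth^{-1}\EssS_t\EssS_t^\T\SigmaTruth^{-1}\|\le 11/10$, hence $\|\SigmaTruth^{-1}\EssS_t\EssS_t^\T\|\le (11/10)\|\SigmaTruth\|\le 2\|\Xtruth\|$, so $\|T_2\|_{\fro}\le 4\|\Xtruth\|\,\|\Misalign_t\EssS_t^{-1}\SigmaTruth\|_{\fro}$. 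For $T_3$, similarly $\|T_3\|_{\fro}=\|\Misalign_t\Misalign_t^\T\|_{\fro}\le\|\Misalign_t\EssS_t^{-1}\SigmaTruth\|_{\fro}\cdot\|\Misalign_t\EssS_t^{-1}\SigmaTruth\|\cdot\|\SigmaTruth^{-1}\EssS_t\EssS_t^{\T}\SigmaTruth^{-1}\|\cdot\|\SigmaTruth\|$ after inserting $\SigmaTruth^{-1}\EssS_t\EssS_t^\T\SigmaTruth^{-1}$ appropriately; since $\|\Misalign_t\EssS_t^{-1}\SigmaTruth\|$ is tiny (it is $\le c_{\ref{lem:p1.5}}\kappa^{-C_\delta/2}\|\Xtruth\|$ by~\eqref{eqn:p1.5-3}, in particular $\le\|\Xtruth\|$ for small $c_{\ref{lem:p1.5}}$), this term is dominated by the $T_2$ bound and can be absorbed. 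Finally $\|T_4\|_{\fro}\le\sqrt{n}\|\Overpar_t\Overpar_t^\T\|\le\sqrt n\|\Overpar_t\|^2$; but $\|\Overpar_t\|$ is controlled by $\smin(\EssS_t)$ via~\eqref{eqn:p1.5-1-var} and, in the regime $t\ge t_3$, $\smin(\EssS_t)\asymp\|\Xtruth\|$ up to constants (from Corollary~\ref{cor:p2.5}), so $\sqrt n\|\Overpar_t\|^2\le\|\Overpar_t\|\cdot\|\Xtruth\|$ after using the polynomial gap in~\eqref{eqn:p1.5-1-var}; this yields the $4\|\Xtruth\|\|\Overpar_t\|$ term (with room to spare for the absorbed $T_3$ contribution).

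The main (and only mildly nontrivial) obstacle is bookkeeping the constants so that $T_3$ genuinely folds into the stated right-hand side rather than producing a stray term, and making sure the spectral-norm bound $\|\SigmaTruth^{-1}\EssS_t\EssS_t^\T\|\le 2\|\Xtruth\|$ is derived cleanly from the hypothesis alone (so that the lemma is self-contained for $t=t_3$, where Corollary~\ref{cor:p2.5} is exactly the hypothesis). Everything else is a direct application of submultiplicativity of the operator norm against the Frobenius norm together with the unitary invariance of both norms under left/right multiplication by $\Utruth,\UperpTruth$. I would present the four bounds in the order $T_1, T_2, T_3, T_4$ and then sum, noting that the coefficient $4$ in the statement comfortably accommodates the $2$ from $T_2$, the negligible $T_3$, and the constant loss in the $T_4$ estimate.
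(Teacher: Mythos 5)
Your proposal follows essentially the same route as the paper's proof: the same decomposition \eqref{eqn:loss-decomp-primal}, the same insertion of $\SigmaTruth^{-1}$ and $\EssS_t^{-1}\SigmaTruth$ factors to pass to the scaled quantities, the hypothesis to get $\|\SigmaTruth^{-1}\EssS_t\EssS_t^\T\SigmaTruth^{-1}\|\le 11/10$, the smallness of $\|\Misalign_t\EssS_t^{-1}\SigmaTruth\|$ from \eqref{eqn:p1.5-3} to absorb $T_3$, and the smallness of $\|\Overpar_t\|$ (via \eqref{eqn:p1.5-1-var}/\eqref{eqn:p1.5-2-var}) to convert $\sqrt{n}\|\Overpar_t\|^2$ into $\|\Xtruth\|\|\Overpar_t\|$. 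The only slip is the stray factor $\|\SigmaTruth\|$ in your displayed $T_3$ bound---the correct factorization is $\Misalign_t\Misalign_t^\T=(\Misalign_t\EssS_t^{-1}\SigmaTruth)(\SigmaTruth^{-1}\EssS_t\EssS_t^\T\SigmaTruth^{-1})(\Misalign_t\EssS_t^{-1}\SigmaTruth)^\T$, with no extra $\|\SigmaTruth\|$---which is harmless since the corrected version folds into the budget exactly as you intend.
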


Note that the overparametrization error $\|\Overpar_t\|$ stays small, 
as stated in~\eqref{eqn:p1.5-2} and \eqref{eqn:p1.5-2-var}.
Therefore we only need to focus on the shrinkage of the first two terms 
$\|\SigmaTruth^{-1}(\EssS_{t}\EssS_{t}^\T-\SigmaTruth^2)\SigmaTruth^{-1}\|_{\fro}
  + \|\Xtruth\|^{-1}\|\Misalign_{t}\EssS_{t}^{-1}\SigmaTruth\|_{\fro}$, which is the focus of the lemma below. 
      
\begin{lemma}\label{lem:p3-aux}
  For any $t: t_3\le t\le\Tmax$, one has
  \begin{align}
    &\|\SigmaTruth^{-1}(\EssS_{t+1}\EssS_{t+1}^\T-\SigmaTruth^2)\SigmaTruth^{-1}\|_{\fro} 
    + \|\Xtruth\|^{-1}\|\Misalign_{t+1}\EssS_{t+1}^{-1}\SigmaTruth\|_{\fro}
    \nonumber\\
    & \quad \le \left(1-\frac{\eta}{10}\right)
    \left(\|\SigmaTruth^{-1}(\EssS_{t}\EssS_{t}^\T-\SigmaTruth^2)\SigmaTruth^{-1}\|_{\fro}
        + \|\Xtruth\|^{-1}\|\Misalign_{t}\EssS_{t}^{-1}\SigmaTruth\|_{\fro}
    \right)
    + \eta\left(\frac{\|\Overpar_t\|}{\|\Xtruth\|}\right)^{1/2}.
    \label{eqn:p3-aux}
  \end{align}
  In particular, $\|\SigmaTruth^{-1}(\EssS_{t+1}\EssS_{t+1}^\T-\SigmaTruth^2)\SigmaTruth^{-1}\|\le 1/10$
  for all $t$ such that $t_3\le t\le \Tmax$.
\end{lemma}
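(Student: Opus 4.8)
The plan is to prove Lemma~\ref{lem:p3-aux} by induction on $t$, treating the two displayed conclusions together: the recursion~\eqref{eqn:p3-aux} for each $t$ and the invariant $\|\SigmaTruth^{-1}(\EssS_{t}\EssS_{t}^\T-\SigmaTruth^2)\SigmaTruth^{-1}\|\le 1/10$ for $t\in[t_3,\Tmax]$. The base case $t=t_3$ is exactly Corollary~\ref{cor:p2.5}, which gives $\|\SigmaTruth^{-1}(\EssS_{t_3}\EssS_{t_3}^\T-\SigmaTruth^2)\SigmaTruth^{-1}\|\le 1/10$ (since $\frac{9}{10}\SigmaTruth^2\preceq\EssS_{t_3}\EssS_{t_3}^\T\preceq\frac{11}{10}\SigmaTruth^2$). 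For the inductive step, assuming the spectral-norm bound $\le 1/10$ holds at time $t$, I would combine two earlier one-step recursions: Lemma~\ref{lem:p2.5} controls $\|\SigmaTruth^{-1}(\EssS_{t}\EssS_{t}^\T-\SigmaTruth^2)\SigmaTruth^{-1}\|$, and Lemma~\ref{lem:misalign-update} (in the form~\eqref{eqn:misalign-update}, or more precisely its refinement that leverages $\|Z_t\|=\|\SigmaTruth^{-1}(\EssS_t\EssS_t^\T+\lambda I)\SigmaTruth^{-1}\|$ being close to $1$ once we are in the local regime) controls $\|\Misalign_{t}\EssS_{t}^{-1}\SigmaTruth\|$. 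The key point is that in the local regime $\EssS_t\EssS_t^\T\approx\SigmaTruth^2$, so $\|Z_t\|\le 1+\lambda\|\SigmaTruth^{-2}\|+\tfrac1{10}\le 1+c_\lambda+\tfrac1{10}$, which makes the contraction factor $1-\frac{\eta}{3(\|Z_t\|+\eta)}$ in Lemma~\ref{lem:misalign-update} at least $1-\frac{\eta}{4}$ (say), comfortably better than $1-\frac{\eta}{10}$.

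The arithmetic of the induction goes as follows. Write $a_t \defeq \|\SigmaTruth^{-1}(\EssS_{t}\EssS_{t}^\T-\SigmaTruth^2)\SigmaTruth^{-1}\|_{\fro}$ and $b_t\defeq\|\Xtruth\|^{-1}\|\Misalign_{t}\EssS_{t}^{-1}\SigmaTruth\|_{\fro}$. From the Frobenius-norm version of Lemma~\ref{lem:p2.5} we get a bound of the shape $a_{t+1}\le(1-\eta)a_t + \eta\cdot\tfrac1{16}b_t + \eta(\cdot)$ — actually I should use the finer Lemma~\ref{lem:local-unify}, which (after bounding $\uinorm{\Utruth^\T\Delta_t}$ via Lemma~\ref{lem:Delta-bound} in terms of $a_t$, $b_t$, and $\|\Overpar_t\|$, since $\Delta_t$ itself is governed by those same three quantities) yields $a_{t+1}\le(1-\eta)a_t + \eta\,C\kappa^{-c}(a_t+b_t) + \eta(\|\Overpar_t\|/\|\Xtruth\|)^{7/12}$; taking $C_\delta$ large and $c_\delta$ small makes the cross term $\le\frac{\eta}{100}(a_t+b_t)$. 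From Lemma~\ref{lem:misalign-update} (Frobenius version), using $\|Z_t\|+\eta\le 1+\frac{1}{5}$ in the local regime, we similarly get $b_{t+1}\le(1-\frac{\eta}{4})b_t + \eta\,C\kappa^{-c}(a_t+b_t) + \eta(\|\Overpar_t\|/\|\Xtruth\|)^{1/2}$, and again the cross term is absorbed. Adding the two inequalities, the dominant behavior is $a_{t+1}+b_{t+1}\le(1-\frac{\eta}{5})(a_t+b_t) + \eta(\|\Overpar_t\|/\|\Xtruth\|)^{1/2}$ after absorbing all $\kappa^{-c}$ cross terms and noting $(\|\Overpar_t\|/\|\Xtruth\|)^{7/12}\le(\|\Overpar_t\|/\|\Xtruth\|)^{1/2}$ when $\|\Overpar_t\|\le\|\Xtruth\|$; the constant $\frac{1}{5}$ is wasteful enough to downgrade to $\frac{1}{10}$ as stated, which gives some slack for the final term coefficient. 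That establishes~\eqref{eqn:p3-aux}.

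It remains to re-establish the invariant at $t+1$, namely $a_{t+1}\le 1/10$ in spectral norm (note the statement uses the unsubscripted norm $\|\cdot\|$ here, bounded by the Frobenius norm so it suffices to keep the $\fro$ version small, or one runs the spectral-norm version of the same recursion). Unrolling~\eqref{eqn:p3-aux} from $t_3$: $a_t+b_t\le(1-\frac{\eta}{10})^{t-t_3}(a_{t_3}+b_{t_3}) + \sum_{\tau=t_3}^{t-1}\eta(1-\frac{\eta}{10})^{t-1-\tau}(\|\Overpar_\tau\|/\|\Xtruth\|)^{1/2}$. The geometric series is at most $\frac{\eta}{1-(1-\eta/10)}\max_\tau(\|\Overpar_\tau\|/\|\Xtruth\|)^{1/2}\le 10\max_\tau(\|\Overpar_\tau\|/\|\Xtruth\|)^{1/2}$, and by~\eqref{eqn:p1.5-1-var}–\eqref{eqn:p1.5-2-var} in Lemma~\ref{lem:p1.5} this is $\le(\alpha/\|\Xtruth\|)^{3/8}$ or better, which is $\ll 1/100$ by~\eqref{eqn:alpha-cond}; and the initial term is bounded since $a_{t_3}\le\sqrt{2r_\star}\cdot\tfrac1{10}$ and $b_{t_3}\le\sqrt{r_\star}\|\Xtruth\|^{-1}\|\Misalign_{t_3}\EssS_{t_3}^{-1}\SigmaTruth\|\le\sqrt{r_\star}c_{\ref{lem:p1.5}}\kappa^{-C_\delta/2}$ by Lemma~\ref{lem:p1.5}, and this product with $(1-\frac{\eta}{10})^{t-t_3}$ is still at most a small constant times $\sqrt{r_\star}$ — however I need $a_t\le1/10$, so I must be careful: actually the cleanest route is to run the \emph{spectral-norm} recursion (both Lemma~\ref{lem:p2.5} and Lemma~\ref{lem:misalign-update} hold with $\uinorm{\cdot}=\|\cdot\|$), for which $a_{t_3}=\|\SigmaTruth^{-1}(\EssS_{t_3}\EssS_{t_3}^\T-\SigmaTruth^2)\SigmaTruth^{-1}\|\le\tfrac1{10}$ directly, and the additive perturbation is $\le\frac{1}{100}$ as in Lemma~\ref{lem:p2.5}, so $a_{t+1}\le(1-\eta)\cdot\frac1{10}+\frac1{100}<\frac1{10}$, closing the induction. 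The main obstacle, and the place requiring the most care, is exactly this bookkeeping: one must verify that the spectral-norm invariant $a_t\le 1/10$ (needed to certify "local regime", i.e.\ that $\EssS_t\EssS_t^\T\succeq\frac1{100}\SigmaTruth^2$, so that $\EssS_t$ is invertible and $\|Z_t\|$ is $O(1)$) propagates using the spectral recursion, \emph{while simultaneously} the Frobenius recursion~\eqref{eqn:p3-aux} runs with its $(1-\eta/10)$ rate — the two must be interleaved, and one has to confirm that Lemmas~\ref{lem:p2.5}, \ref{lem:misalign-update}, \ref{lem:local-unify}, \ref{lem:Delta-bound} are all applicable at every $t\in[t_3,\Tmax]$, which is precisely guaranteed by the already-proven conditions~\eqref{subeq:condition-t-1} from Lemma~\ref{lem:p1.5} together with Corollary~\ref{cor:p2}.
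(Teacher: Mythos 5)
Your proposal is correct and follows essentially the same route as the paper: the spectral-norm invariant $\|\Gamma_t\|\le 1/10$ comes from Corollary~\ref{cor:p2.5} (equivalently, propagating Lemma~\ref{lem:p2.5}), and the Frobenius recursion~\eqref{eqn:p3-aux} is obtained by summing Lemma~\ref{lem:local-unify} and Lemma~\ref{lem:misalign-update} with $\|Z_t\|=O(1)$ in the local regime and then eliminating $\uinorm{\Utruth^\T\Delta_t}$ via Lemma~\ref{lem:Delta-bound}. The only nit is a dropped factor of $\eta$ in your closing step (Lemma~\ref{lem:p2.5} gives $(1-\eta)\cdot\tfrac1{10}+\tfrac{\eta}{100}<\tfrac1{10}$, not $+\tfrac1{100}$), which does not affect the argument.
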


We now show how Lemma~\ref{lem:p3} is implied by the above two lemmas.
To begin with, we apply Lemma~\ref{lem:p3-aux} repeatedly to obtain the following bound 
for all $t\in[t_3,\Tmax]$:
\begin{align}
  &\|\SigmaTruth^{-1}(\EssS_{t}\EssS_{t}^\T-\SigmaTruth^2)\SigmaTruth^{-1}\|_{\fro}
  + \|\Xtruth\|^{-1}\|\Misalign_{t}\EssS_{t}^{-1}\SigmaTruth\|_{\fro} \nonumber
  \\
  &\quad \le \left(1-\frac{\eta}{10}\right)^{t-t_3}\left(
    \|\SigmaTruth^{-1}(\EssS_{t_3}\EssS_{t_3}^\T-\SigmaTruth^2)\SigmaTruth^{-1}\|_{\fro}
  + \|\Xtruth\|^{-1}\|\Misalign_{t_3}\EssS_{t_3}^{-1}\SigmaTruth\|_{\fro}
  \right) + 10\max_{t_3\le\tau\le t}\left(\frac{\|\Overpar_\tau\|}{\|\Xtruth\|}\right)^{1/2}, \label{eq:iterative-decay}
\end{align}
which motivates us to control the error at time $t_3$.

We know from Corollary~\ref{cor:p2.5} that 
$\|\SigmaTruth^{-1}(\EssS_{t_3}\EssS_{t_3}^\T-\SigmaTruth^2)\SigmaTruth^{-1}\|\le 1/10$. 
Since $\SigmaTruth^{-1}(\EssS_{t_3}\EssS_{t_3}^\T-\SigmaTruth^2)\SigmaTruth^{-1}$
is a $r_\star\times r_\star$ matrix, we have
$\|\SigmaTruth^{-1}(\EssS_{t_3}\EssS_{t_3}^\T-\SigmaTruth^2)\SigmaTruth^{-1}\|_{\fro}\le\sqrt{r_\star}/10$. 
In addition, we infer from~\eqref{eqn:p1.5-3} that 
\[
  \|\Misalign_{t_3}\EssS_{t_3}^{-1}\SigmaTruth\|_{\fro}
  \le \sqrt{r_\star}\|\Misalign_{t_3}\EssS_{t_3}^{-1}\SigmaTruth\|
  \le \sqrt{r_\star}c_{\ref{lem:p1.5}}\kappa^{-C_\delta/2}\|\Xtruth\|
  \le \sqrt{r_\star}\|\Xtruth\|/10,
\]
as long as $c_{\ref{lem:p1.5}}$ is sufficiently small.
Combine the above two bounds to arrive at the conclusion that  
\begin{equation}
  \|\SigmaTruth^{-1}(\EssS_{t_3}\EssS_{t_3}^\T-\SigmaTruth^2)\SigmaTruth^{-1}\|_{\fro}
  + \|\Xtruth\|^{-1}\|\Misalign_{t_3}\EssS_{t_3}^{-1}\SigmaTruth\|_{\fro}
  \le \frac{\sqrt{r_\star}}{10}+\|\Xtruth\|^{-1}\frac{\sqrt{r_\star}\|\Xtruth\|}{10}
  = \frac{\sqrt{r_\star}}{5}. \label{eq:t-3-condition}
\end{equation}
Combining the two inequalities~\eqref{eq:iterative-decay} and~\eqref{eq:t-3-condition} yields 
for all $t \in [t_3, \Tmax]$
\begin{equation*}
  \|\SigmaTruth^{-1}(\EssS_{t}\EssS_{t}^\T-\SigmaTruth^2)\SigmaTruth^{-1}\|_{\fro}
  + \|\Xtruth\|^{-1}\|\Misalign_{t}\EssS_{t}^{-1}\SigmaTruth\|_{\fro}
  \le \frac15\left(1-\frac{\eta}{10}\right)^{t-t_3}\sqrt{r_\star}
  + 10\max_{t_3\le\tau\le t}\left(\frac{\|\Overpar_\tau\|}{\|\Xtruth\|}\right)^{1/2}.
\end{equation*}
We can then invoke Lemma~\ref{lem:loss-decomp} to see that 
\begin{align*}
  \|\myX_t\myX_t^\T-\Mtruth\|_{\fro}
  &\le \frac{4\|\Xtruth\|^2}{5}\left(1-\frac{\eta}{10}\right)^{t-t_3}\sqrt{r_\star}
  + 40\|\Xtruth\|^2\max_{t_3\le\tau\le t}\left(\frac{\|\Overpar_\tau\|}{\|\Xtruth\|}\right)^{1/2}
  + 4\|\Xtruth\|\|\Overpar_t\|
  \\
  &\le \left(1-\frac{\eta}{10}\right)^{t-t_3}\sqrt{r_\star}\|\Mtruth\|
  + 80\|\Mtruth\|\max_{t_3\le\tau\le t}\left(\frac{\|\Overpar_{\tau}\|}{\|\Xtruth\|}\right)^{1/2},
\end{align*}
where in the last line we use $\|\Overpar_t\|\le\|\Xtruth\|$---an implication of~\eqref{eqn:p1.5-2-var}. 
To see this, the assumption \eqref{eqn:alpha-cond} implies that 
$\alpha\le\|\Xtruth\|$ as long as $\eta\le 1/2$ and $C_\alpha\ge 4$, 
which in turn implies $\|\Overpar_t\|\le\alpha^{2/3}\|\Xtruth\|^{1/3}\le\|\Xtruth\|$.
This completes the proof for the first part of Lemma~\ref{lem:p3} with $c_{\ref{lem:p3}}=1/10$.

For the second part of Lemma~\ref{lem:p3}, 
notice that 
\[
8c_{\ref{lem:p3}}^{-1}\max_{t_3\le\tau\le\Tmax}(\|\Overpar_\tau\|/\|\Xtruth\|)^{1/2}
\le \frac12\left(\frac{\alpha}{\|\Xtruth\|}\right)^{1/3}
\]
by~\eqref{eqn:p1.5-2-var}, thus
\begin{equation*}
  \|X_t X_t^\T - \Mtruth\|_{\fro}
  \le (1-c_{\ref{lem:p3}}\eta)^{t-t_3} \sqrt{r_\star} \|\Mtruth\| 
  + \frac12\left(\frac{\alpha}{\|\Xtruth\|}\right)^{1/3}
\end{equation*}
for $t_3\le t\le\Tmax$. 
There exists some iteration number 
$t_4: t_3\le t_4\le t_3 + \frac{2}{c_{\ref{lem:p3}}\eta} \log(\|\Xtruth\|/\alpha) \le t_3 + \Tcvg/16$ 
such that
\[
(1 - c_{\ref{lem:p3}}\eta) ^ {t_4-t_3}
\le \left(\frac{\alpha}{\|\Xtruth\|}\right)^2
\le \frac{1}{2\sqrt{r_\star}} \left(\frac{\alpha}{\|\Xtruth\|}\right)^{1/3},
\]
where the last inequality is due to~\eqref{eqn:alpha-cond}.
It is then clear that $t_4$ has the property claimed in the lemma.

\subsection{Proof of Lemma \ref{lem:loss-decomp}}
Starting from~\eqref{eqn:loss-decomp-primal}, 
we may deduce
\begin{align}
  \|\myX_t\myX_t^\T - \Mtruth\|_{\fro}
  &\le \| \EssS_t\EssS_t^\T - \SigmaTruth^2 \|_{\fro}
  + 2\|\EssS_t\| \|\Misalign_t\|_{\fro} + \|\Misalign_t\| \|\Misalign_t\|_{\fro} + \|\Overpar_t\| \|\Overpar_t\|_{\fro}
  \nonumber\\
  &\le \|\Xtruth\|^2 
  \left(\|\SigmaTruth^{-1}\EssS_t\EssS_t^\T\SigmaTruth^{-1}-I\|_{\fro}
    + 2\|\SigmaTruth^{-1}\EssS_t\|^2 \|\Xtruth\|^{-1} \|\Misalign_t\EssS_t^{-1}\SigmaTruth\|_{\fro}
    + \sqrt{n}\left(\frac{\|\Overpar_t\|}{\|\Xtruth\|}\right)^2
  \right)
  \nonumber\\
  &\le 4\|\Xtruth\|^2
  \left(\|\SigmaTruth^{-1}\EssS_t\EssS_t^\T\SigmaTruth^{-1}-I\|_{\fro}
    + \|\Xtruth\|^{-1} \|\Misalign_t\EssS_t^{-1}\SigmaTruth\|_{\fro}
    + \frac{\|\Overpar_t\|}{\|\Xtruth\|}
  \right),
  \label{eqn:pf:loss-decomp}
\end{align}
where the penultimate line used $\|\Overpar_t\|_{\fro} \le \sqrt{n}\|\Overpar_t\|$, 
and the last line follows from $\|\SigmaTruth^{-1}\EssS_t\|^2
=\|\SigmaTruth^{-1}\EssS_t\EssS_t^\T\SigmaTruth^{-1}\|
\le 1+\|\SigmaTruth^{-1}\EssS_t\EssS_t^\T\SigmaTruth^{-1}-I\|\le 2$ 
(recall that $\|\SigmaTruth^{-1}\EssS_t\EssS_t^\T\SigmaTruth^{-1}-I\|\le 1/10$ 
by assumption) 
and from~\eqref{eqn:p1.5-2-var}.

\subsection{Proof of Lemma \ref{lem:p3-aux}}

Recall the definition of $\Gamma_t$ from~\eqref{eqn:Gamma-def}:
\begin{align*}
\Gamma_t \defeq \SigmaTruth^{-1}\EssS_t\EssS_t^\T\SigmaTruth^{-1}-I.
\end{align*}
Fix any $t\in[t_3,\Tmax]$, if~\eqref{eqn:p3-aux} were true for all $\tau\in[t_3,t]$, 
taking into account that $\|\Overpar_\tau\|/\|\Xtruth\|\le 1/10000$ 
for all $\tau\in[t_3,\Tmax]$ by~\eqref{eqn:p1.5-2-var}, 
we could show by induction that $\|\Gamma_\tau\|\le 1/10$ for all $\tau\in[t_3,t]$. 
Thus it suffices to assume $\|\Gamma_t\|\le 1/10$ and prove \eqref{eqn:p3-aux}.

Apply Lemma~\ref{lem:local-unify} with Frobenius norm to obtain
 \begin{equation}\label{eq:Gamma-local}
    \| {\Gamma_{t+1}} \|_{\fro}
    \le (1-\eta)\| {\Gamma_t} \|_{\fro}
    +\eta\frac{C_{\ref{lem:local-unify}}\kappa^4}{\|\Xtruth\|^{2}} \| {\Utruth^\T\Delta_t} \|_{\fro}
    +\frac{1}{16}\eta\|\Xtruth\|^{-1} \| {\Misalign_t\EssS_t^{-1}\SigmaTruth} \|_{\fro}
    +\eta\left(\frac{\|\Overpar_t\|}{\|\Xtruth\|}\right)^{7/12},
  \end{equation}
In addition,  Lemma~\ref{lem:misalign-update} tells us that 
\begin{equation*}
  \|{\Misalign_{t+1}\EssS_{t+1}^{-1}\SigmaTruth}\|_{\fro}
  \le \left(1-\frac{\eta}{3(\|Z_t\|+\eta)}\right) \|{\Misalign_t\EssS_t^{-1}\SigmaTruth}\|_{\fro}
    + \eta \frac{C_{\ref{lem:misalign-update}}\kappa^6}{c_\lambda\|\Xtruth\|}\|{\Utruth^\T\Delta_t}\|_{\fro}
    + \eta \left(\frac{\|\Overpar_t\|}{\smin(\EssS_t)}\right)^{2/3}\|\Xtruth\|,
\end{equation*}
where $Z_t = \SigmaTruth^{-1}(\EssS_t\EssS_t^\T+\lambda I)\SigmaTruth^{-1}$. 
It is easy to check that $\|Z_t\| \leq 1+\|\Gamma_t\| + c_\lambda\le 2$ as $\| \Gamma_t \| \leq 1/10$ and $c_\lambda$ is sufficiently small.
In addition, one has $\smin(\EssS_t)^2\ge(1-\|\Gamma_t\|)\smin(\Xtruth)^2$ 
and $\|\Overpar_t\|/\smin(\EssS_t)\le (2\kappa)^{-24}$. 
Combine these relationships together to arrive at
\begin{equation}\label{eqn:local-misalign-update}
  \|{\Misalign_{t+1}\EssS_{t+1}^{-1}\SigmaTruth}\|_{\fro}
  \le \left(1-\frac{\eta}{8}\right) \| \Misalign_t\EssS_t^{-1}\SigmaTruth \|_{\fro}
  + \eta \frac{C_{\ref{lem:misalign-update}}\kappa^6}{c_\lambda\|\Xtruth\|}
    \| \Utruth^\T\Delta_t \|_{\fro}
  + \frac12\eta \|\Xtruth\|\left(\frac{\|\Overpar_t\|}{\|\Xtruth\|}\right)^{7/12}.
\end{equation}

Summing up~\eqref{eq:Gamma-local},~\eqref{eqn:local-misalign-update}, 
we obtain
\begin{align}
  &\| \Gamma_{t+1} \|_{\fro} 
  + \|\Xtruth\|^{-1} \| \Misalign_{t+1}\EssS_{t+1}^{-1}\SigmaTruth \|_{\fro}
  \nonumber\\
  &\quad 
  \le \left(1 - \frac{\eta}{8}\right) 
  (\| \Gamma_{t} \|_{\fro} 
  + \|\Xtruth\|^{-1} \| \Misalign_{t}\EssS_{t}^{-1}\SigmaTruth \|_{\fro})
  + \eta\frac{2(C_{\ref{lem:misalign-update}}+C_{\ref{lem:local-unify}}c_\lambda)\kappa^8}{c_\lambda \|\Xtruth\|^2} 
  \| \Utruth^\T\Delta_t \|_{\fro}
  + 2\eta\left(\frac{\|\Overpar_t\|}{\|\Xtruth\|}\right)^{7/12}.
  \label{eqn:loc-with-Delta}
\end{align}
This is close to our desired conclusion, 
but we would need to eliminate $\| \Utruth^\T\Delta_t \|_{\fro}$. 
To this end we observe
\begin{align*}
  \| \Utruth^\T\Delta_t \|_{\fro}
  &\le \sqrt{r_\star}\|\Delta_t\|
  \\
  &\le 8\delta\sqrt{r_\star}
  \left(\|\EssS_t\EssS_t^\T-\SigmaTruth^2\|_{\fro}
    +\|\EssS_t\|\|\Misalign_t\|_{\fro}
  +n\|\Overpar_t\|^2\right)
  \\
  &\le 16c_\delta\kappa^{-4}\|\Xtruth\|^2\left(\| \Gamma_t \|_{\fro}
    +\|\Xtruth\|^{-1} \| \Misalign_t\EssS_t^{-1}\SigmaTruth \|_{\fro}
    +\left(\frac{\|\Overpar_t\|}{\|\Xtruth\|}\right)^{2/3}
  \right),
\end{align*}
where the first line follows from $\Utruth$ being of rank $r_\star$, 
the second line follows from Lemma~\ref{lem:Delta-bound}, 
and the last line follows from~\eqref{eqn:delta-cond} 
and from controlling the sum inside the brackets 
in a similar way as~\eqref{eqn:pf:loss-decomp}.

The conclusion follows from plugging 
the above inequality into~\eqref{eqn:loc-with-Delta}, 
noting that $c_\delta$ can be chosen sufficiently small 
and that $\|\Overpar_t\|/\|\Xtruth\|$ 
is sufficiently small due to~\eqref{eqn:p1.5-2-var}.

\subsection{Proof of Proposition \ref{prop:Ot}}
Recall that in the proof of Lemma~\ref{lem:overpar-update} (Appendix~\ref{pf:lem:overpar-update}), we have shown
\begin{align}
\label{eqn:overpar-update}
\| \Overpar_t \| \le \| \Overpar_t \| + \eta \| \Noise_{t+1} \SVDV_t (\Signal_{t+1} \SVDV_t)^{-1} \| \cdot \|\Err{b}_t\| + \eta \| \Err{d}_t \|.
\end{align}
This, along with all the conclusions in Section~\ref{sec:main} (Lemma~\ref{lem:p1.5}, Lemma~\ref{lem:p2}, Lemma~\ref{lem:p3}) and in the proof, hold for all $t \le \Tmax$. However, it is clear from the proof that these continue to hold for $t \le \tau$, where $\tau$ is the minimal number such that
\begin{align}
\label{eqn:stopping-defn}
\| \Overpar_{\tau + 1} \| > \alpha^{7/10} \|\Xtruth\|^{3/10},
\end{align}
cf. \eqref{eqn:p1.5-2-var}.
In other words, $\| \Overpar_{t} \| \le  \alpha^{7/10} \|\Xtruth\|^{3/10}$ for all $t \le \tau$. 
By Lemma~\ref{lem:p3} extended to the stopping time $\tau$, we have for $t_4 \le t \le \tau$ that
\begin{align}
\label{eqn:loss-ub-extended}
\| X_t X_t^\T - \Mtruth \|_{\fro} \le \alpha^{1/3} \|X_\star\|^{5/3}.
\end{align}
We recall that Lemma~\ref{lem:p3} was derived from Lemma~\ref{lem:p3-aux}. Following the same derivation, this time controlling the term
\[
\|\SigmaTruth^{-1}(\EssS_{t}\EssS_{t}^\T-\SigmaTruth^2)\SigmaTruth^{-1}\|_{\fro}
        + \|\Xtruth\|^{-1}\|\Misalign_{t}\EssS_{t}^{-1}\SigmaTruth\|_{\fro}
\]
directly using lemma~\ref{lem:p3-aux} instead of passing to $\| X_t X_t^\T - \Mtruth\|$, we find that for $t_4 \le t \le \tau$, the following stronger conclusion holds:
\begin{equation}
  \label{eqn:misalign-bound-extended}
  \|\Xtruth\|^{-1} \| \Misalign_t \EssS_t^{-1} \SigmaTruth \| \le \left(\frac{\alpha}{\|\Xtruth\|}\right)^{1/3}.
\end{equation}

Back to the recursive inequality \eqref{eqn:overpar-update}, 
We bound each terms, this time using \eqref{eqn:NSinv-tmp}, \eqref{eqn:Err-2-bound-raw} and a similar bound for $\Err{d}_t$, to obtain for all $t_4 \le t \le \tau$ that:
\begin{align*}
\| \Overpar_{t+1} \| 
&\le \|\Overpar_t\| + C\eta \kappa^C \|\Xtruth\|^{-1} (\|\Misalign_t \EssS_t \SigmaTruth\| + \|\Overpar_t\|) \|\Overpar_t\|
\\
& \le \|\Overpar_t\| + C\eta \kappa^C \left[ \left(\frac{\alpha}{\|\Xtruth\|}\right)^{1/3} + \left(\frac{\alpha}{\|\Xtruth\|}\right)^{7/10} \right] \|\Overpar_t\|
\\
& \le \left(1 + \eta \left( \frac{\alpha}{\|\Xtruth\|}\right)^{3/10} \right) \|\Overpar_t\|
\end{align*}
where $C>0$ is a universal constant; the second line follows from \eqref{eqn:misalign-bound-extended} and that $\|\Overpar_t\| \le \alpha^{7/10}\|\Xtruth\|^{3/10}$ for $t \le \tau$, and the last line follows from \eqref{eqn:alpha-cond}.

By induction on $t$, it is easy to see
\begin{align*}
\|\Overpar_{\tau + 1}\| 
&\le \left(1 + \eta \left( \frac{\alpha}{\|\Xtruth\|}\right)^{3/10} \right)^{\tau - \Tmax} \| \Overpar_{\Tmax} \|
\\
&\le \left(1 + \eta \left( \frac{\alpha}{\|\Xtruth\|}\right)^{3/10} \right)^{\tau - \Tmax} \alpha^{3/4} \|\Xtruth\|^{1/4},
\end{align*}
where the last inequality follows from \eqref{eqn:p1.5-2-var}. Plug this back into \eqref{eqn:stopping-defn}, we readily obtain
\begin{align*}
\tau - \Tmax 
\ge \frac{c\log\left(\frac{\|\Xtruth\|}{\alpha}\right)}{\log(1 + \eta\big(\frac{\alpha}{\|\Xtruth\|}\big)^{3/10})}
\ge \frac{2c\log\left(\frac{\|\Xtruth\|}{\alpha}\right)}{\eta (\alpha / \|\Xtruth\|)^{3/10}}
\ge \left(\frac{\|\Xtruth\|}{\alpha} \right)^{3/10},
\end{align*}
where $c = \frac{3}{4} - \frac{7}{10}>0$ is a universal constant, and the last two inequalities follow from \eqref{eqn:eta-cond} and \eqref{eqn:alpha-cond}. This completes the proof.


\section{Proofs for the noisy and the approximate low-rank settings}\label{sec:proof-noisy}
Both Theorem~\ref{thm:noisy} and Theorem~\ref{thm:approx} can be viewed as special cases of the following theorem.
\begin{theorem}
\label{lem:unified-noisy}
Assume the iterates $X_t$ of \myalg obeys
\begin{equation}\label{eqn:unified-noisy-update}
  X_{t+1} = X_t - \eta(\opAA(X_t X_t^\T - \Mtruth) - E) X_t (X_t^\T X_t + \lambda I)^{-1},
\end{equation}
for some matrix $E\in\reals^{n\times n}$, where $\Mtruth=\Xtruth\Xtruth^\T\in\reals^{n\times n}$ is a positive semidefinite matrix of rank $r_\star$, $\Xtruth\in\reals^{n\times r_\star}$. 
Assume further that
\begin{equation}
\label{eqn:aspt:E}
  \|E\|\le c_\sigma\kappa^{-C_\sigma} \|\Mtruth\|
\end{equation}
for some sufficiently small universal constant $c_\sigma>0$ and some sufficiently large universal constant $C_\sigma>0$. Then the following holds with high probability (with respect to the realization of the random initialization $G$). Under Assumptions~\ref{assumption:opA} and \ref{assumption:param}, there exist  universal constants $\Ccvg > 0$, $C_{\ref{lem:unified-noisy}}>0$,  such that for some $T\le\Tcvg \coloneqq \frac{\Ccvg}{\eta}\log\frac{\|\Xtruth\|}{\alpha}$, the iterates of \eqref{eqn:unified-noisy-update} obey
\begin{align*}
  \|X_T X_T^\T - \Mtruth\| &\le \max\left(\varepsilon\|\Mtruth\|,~ C_{\ref{lem:unified-noisy}} \kappa^4 \|\Utruth^\T E\|\right),
  \\
  \|X_T X_T^\T - \Mtruth\|_{\fro} &\le \max\left(\varepsilon\|\Mtruth\|,~C_{\ref{lem:unified-noisy}} \kappa^4 \|\Utruth^\T E\|_{\fro}\right).
\end{align*}
\end{theorem}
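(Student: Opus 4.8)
The plan is to reduce Theorem~\ref{lem:unified-noisy} to the noiseless three-phase analysis of Sections~\ref{sec:analysis}--\ref{sec:pf:lem:p3} by absorbing the external perturbation $E$ into the error term. Define
\[
  \Delta_t \defeq (\id-\opAA)(X_tX_t^\T-\Mtruth) + E,
\]
so that the update~\eqref{eqn:unified-noisy-update} takes exactly the form~\eqref{eqn:update-with-Delta}. Every decomposition in Section~\ref{sec:approx} (Lemmas~\ref{lem:update-approx}, \ref{lem:S-surrogate}, \ref{lem:NS-surrogate}) is stated purely in terms of this abstract $\Delta_t$ together with bounds on $\|\Delta_t\|$ and $\uinorm{\Utruth^\T\Delta_t}$, hence carries over verbatim. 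The only change is the bound on $\Delta_t$: Lemma~\ref{lem:Delta-bound} (whose proof uses only Proposition~\ref{prop:decomposition} and the RIP, not $E$) together with the triangle inequality gives, whenever~\eqref{subeq:condition-t-1} holds,
\[
  \|\Delta_t\| \le 8\delta\big(\|\EssS_t\EssS_t^\T-\SigmaTruth^2\|_{\fro}+\|\EssS_t\|\|\Misalign_t\|_{\fro}+n\|\Overpar_t\|^2\big) + \|E\|,
\]
and likewise $\uinorm{\Utruth^\T\Delta_t}$ acquires an extra $\uinorm{\Utruth^\T E}$. Since~\eqref{eqn:aspt:E} with $C_\sigma$ large forces $\|E\|\le c_\sigma\kappa^{-C_\sigma}\|\Mtruth\|\ll\kappa^{-2C_\delta/3}\|\Xtruth\|^2$, the coarse bound~\eqref{eqn:Delta-bound-coarse} still holds, so none of the smallness hypotheses used in Phases I and II is disturbed.

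For Phases I and II I would re-run Appendices~\ref{sec:proof-p1} and~\ref{sec:proof_phaseII} essentially unchanged. The only place $E$ enters nontrivially is the power-iteration comparison of Lemma~\ref{lem:approx_power_method}: the recursion for $\|X_{t+1}-\hat X_{t+1}\|$ picks up an extra term $\eta\lambda^{-1}\|E\|\|X_t\|$ inside $T_1$ of~\eqref{eqn:T1_banana}, which is dominated by the terms already present (again because $\|E\|\ll\smin^2(\Xtruth)$), so~\eqref{ineq:approx_power_method} survives after possibly enlarging $C_\alpha$. Consequently Lemma~\ref{lem:p1} (the base case), the induction-step lemmas (Lemmas~\ref{lem:overpar-update}, \ref{lem:misalign-update}, \ref{lem:bounded-S}), and the Phase~II statements (Lemmas~\ref{lem:p2}, \ref{lem:p2.5} and Corollaries~\ref{cor:p2}, \ref{cor:p2.5}) all go through, the $\uinorm{\Utruth^\T E}$ contributions being absorbed into the universal constants since in those lemmas $\uinorm{\Utruth^\T\Delta_t}$ always appears multiplied by a negative power of $\kappa$ that dominates $c_\sigma\kappa^{-C_\sigma}$. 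The upshot is that~\eqref{subeq:condition-t-1} holds on $[t_1,\Tmax]$ and $\frac{9}{10}\SigmaTruth^2\preceq\EssS_t\EssS_t^\T\preceq\frac{11}{10}\SigmaTruth^2$ on $[t_3,\Tmax]$ with $t_3\le\Tcvg/4$.

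The genuinely new behavior is confined to Phase~III, where the error no longer vanishes. I would re-run Lemmas~\ref{lem:misalign-update}, \ref{lem:local-unify}, \ref{lem:p3-aux}, \ref{lem:p3} with the augmented $\Delta_t$; each resulting recursion has the shape $(\mathrm{err})_{t+1}\le(1-c\eta)(\mathrm{err})_t + \eta C\kappa^4\|\Xtruth\|^{-2}\uinorm{\Utruth^\T\Delta_t} + \eta(\|\Overpar_t\|/\|\Xtruth\|)^{\gamma}$ for the dimensionless quantities $\uinorm{\SigmaTruth^{-1}(\EssS_t\EssS_t^\T-\SigmaTruth^2)\SigmaTruth^{-1}}$ and $\|\Xtruth\|^{-1}\uinorm{\Misalign_t\EssS_t^{-1}\SigmaTruth}$. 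Splitting $\uinorm{\Utruth^\T\Delta_t}\le\sqrt{r_\star}\|(\id-\opAA)(X_tX_t^\T-\Mtruth)\| + \uinorm{\Utruth^\T E}$ and noting, via Lemma~\ref{lem:Delta-bound}, that the first piece is bounded by a tiny multiple ($\lesssim\delta\sqrt{r_\star}\kappa^4\ll1$ by~\eqref{eqn:delta-cond}) of the current error plus the vanishing overparametrization term, that piece is absorbed into the contraction factor exactly as in the noiseless proof; what survives is a constant forcing term $\eta C\kappa^4\|\Xtruth\|^{-2}\uinorm{\Utruth^\T E}$. Iterating gives, for $t\in[t_3,\Tmax]$,
\[
  \uinorm{\SigmaTruth^{-1}(\EssS_t\EssS_t^\T-\SigmaTruth^2)\SigmaTruth^{-1}}+\|\Xtruth\|^{-1}\uinorm{\Misalign_t\EssS_t^{-1}\SigmaTruth}\lesssim(1-c\eta)^{t-t_3}\sqrt{r_\star}+\kappa^4\frac{\uinorm{\Utruth^\T E}}{\|\Xtruth\|^{2}}+\max_{t_3\le\tau\le t}\Big(\frac{\|\Overpar_\tau\|}{\|\Xtruth\|}\Big)^{1/2}.
\]
Feeding this into Lemma~\ref{lem:loss-decomp} (unaffected by $E$) and using~\eqref{eqn:p1.5-2-var} for $\|\Overpar_\tau\|$, at the first index $t_4\le t_3+\Tcvg/16$ at which the geometric term drops below the other two one gets $\|X_{t_4}X_{t_4}^\T-\Mtruth\|_{(\fro)}\lesssim\kappa^4\uinorm{\Utruth^\T E}_{(\fro)}+\alpha^{1/3}\|\Xtruth\|^{5/3}$, and $\alpha\le\varepsilon^3\|\Xtruth\|$ turns the last term into $\varepsilon\|\Mtruth\|$, with $T=t_4\le\Tcvg$ as claimed. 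The main obstacle, as in the noiseless case, is the bookkeeping: one must check that the extra $\uinorm{\Utruth^\T E}$ terms threaded through the many error quantities $\Err{a}_t,\dots,\Err{\ref{lem:NS-surrogate}.b}_t$ never spoil the $\kappa$-independence of the contraction rates, and --- the one point requiring real care --- that the split of $\Delta_t$ into its self-bounding RIP part and its external part $E$ is carried out consistently, so that the $\delta\kappa^4$ contribution is genuinely absorbed into the contraction rather than double-counted.

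Finally, Theorems~\ref{thm:noisy} and~\ref{thm:approx} follow as corollaries. For the noisy model, take $E=\opA^*(\xi)$ in~\eqref{eqn:noisy-update} (since $\opAA(X_tX_t^\T)-\opA^*(y)=\opAA(X_tX_t^\T-\Mtruth)-\opA^*(\xi)$); standard Gaussian concentration gives $\|\opA^*(\xi)\|\lesssim\sigma\sqrt n$, $\|\Utruth^\T\opA^*(\xi)\|\lesssim\sigma\sqrt n$ and $\|\Utruth^\T\opA^*(\xi)\|_{\fro}\lesssim\sigma\sqrt{nr_\star}$ with high probability, so the hypothesis $\sigma\sqrt n\le c_\sigma\kappa^{-C_\sigma}\|\Mtruth\|$ yields~\eqref{eqn:aspt:E}, and Theorem~\ref{lem:unified-noisy} delivers the stated bounds. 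For the approximately low-rank model, write $\Mtruth=M_{r_\star}+M_{r_\star}'$ from~\eqref{eqn:approx-low-rank}, target $M_{r_\star}=X_\star X_\star^\T$, and set $E=\opAA(M_{r_\star}')$; the Gaussian-design bound $\|\opAA(M_{r_\star}')\|\lesssim\|M_{r_\star}'\|+\sqrt{n/m}\,\|M_{r_\star}'\|_{\fro}$ together with the two smallness assumptions on $M_{r_\star}'$ gives~\eqref{eqn:aspt:E} and $\|\Utruth^\T E\|_{\fro}\lesssim\|M_{r_\star}'\|_{\fro}$, and one more triangle inequality $\|X_TX_T^\T-\Mtruth\|_{\fro}\le\|X_TX_T^\T-M_{r_\star}\|_{\fro}+\|M_{r_\star}'\|_{\fro}$ completes the proof; the matrix-factorization case is the special instance $\opA=\id$, $E=M_{r_\star}'$.
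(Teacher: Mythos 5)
Your overall strategy---absorb $E$ into the error term $\Delta_t$, observe that the decomposition lemmas of Section~\ref{sec:approx} depend on $\Delta_t$ only through $\|\Delta_t\|$ and $\uinorm{\Utruth^\T\Delta_t}$, and track the extra $\uinorm{\Utruth^\T E}$ forcing term through the Phase~III recursions to obtain the error floor $\kappa^4\uinorm{\Utruth^\T E}$---is exactly the paper's reduction, and your treatment of Phases~II and~III and of the two corollaries is sound. However, your Phase~I argument has a genuine gap. You keep the reference sequence $\hat X_t=(I+\frac{\eta}{\lambda}\opAA(\Mtruth))^tX_0$ built from the \emph{unperturbed} matrix and claim the extra term $\eta\lambda^{-1}\|E\|\|X_t\|$ appearing in $T_1$ of~\eqref{eqn:T1_banana} is ``dominated by the terms already present.'' It is not: the existing forcing terms are cubic in $\|X_t\|\sim\alpha$, whereas the new one is linear, so the accumulated drift after $t$ steps is of order $t\,s_1^t\,\frac{\eta\|E\|}{\lambda}\,\alpha$ rather than $t\,s_1^t\,\alpha^2/\|\Xtruth\|$. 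Since $\frac{\eta\|E\|}{\lambda}\gg\alpha/\|\Xtruth\|$ for the exponentially small $\alpha$ mandated by~\eqref{eqn:alpha-cond}, the bound~\eqref{ineq:approx_power_method} cannot survive, and enlarging $C_\alpha$ only makes the mismatch worse (both sides lose a factor of $\alpha$, but the target loses two). More importantly, when you then verify~\eqref{eqn:Mahdi-phi}, the normalization by $\alpha\smin(\Uspec^\T G)s_{r_\star}^t$ cancels only one factor of $\alpha$, leaving a contribution of order $t\,\frac{\eta\|E\|}{\lambda}(s_1/s_{r_\star})^t\smin^{-1}(\Uspec^\T G)$; since $(s_1/s_{r_\star})^t$ can be as large as $(\kappa n)^{C\kappa^2}$ over the relevant horizon $t\asymp\eta^{-1}\log(\kappa n)$, this is not $O(\kappa^{-2})$ under the hypothesis $\|E\|\le c_\sigma\kappa^{-C_\sigma}\|\Mtruth\|$ with universal constants (one would need $\|E\|$ smaller than $n^{-C\kappa^2}$). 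In the noiseless proof the analogous term is saved precisely because it carries $\alpha^2$, with one $\alpha$ canceling and the other beating $s_1^t$ via~\eqref{ineq:alpha_cond_tmp}.

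The correct fix---and what the paper does---is to change the reference dynamics rather than perturb the comparison: define $\hat X_t=(I+\frac{\eta}{\lambda}\hat M)^tX_0$ with $\hat M=\opAA(\Mtruth)+E$. Then the comparison recursion has no additive $E$-drift at all and Lemma~\ref{lem:approx_power_method} holds verbatim, while $E$ enters only through the spectrum of $\hat M$: Weyl's inequality together with $\|\hat M-\Mtruth\|\le(\sqrt{r_\star}\delta+c_\sigma\kappa^{-(C_\sigma-2)})\smin^2(\Xtruth)$ shows the singular-value bounds \eqref{eqn:s1-bound}--\eqref{eqn:sr1-bound} persist with a slightly larger $\gamma$ in~\eqref{ineq:mahdi_p1_cond1}, and the rest of the verification of Lemma~\ref{lem:Mahdi} goes through unchanged. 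With this single repair, the remainder of your argument is correct.
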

The proof is postponed to Appendix~\ref{sec:proof-unified-noisy}. The rest of this appendix is devoted to showing how to deduce Theorem~\ref{thm:noisy} and Theorem~\ref{thm:approx} from Theorem~\ref{lem:unified-noisy}.

\subsection{Proof of Theorem \ref{thm:noisy}}
In the noisy setting, the update rule~\eqref{eqn:noisy-update} of \myalg can be written as
\begin{equation}
\label{eqn:noisy-update-with-E}
  X_{t+1} = X_t - \eta \big(\opAA(X_t X_t^\T - \Mtruth) - E \big) X_t (X_t^\T X_t + \lambda I)^{-1},
\end{equation}
where
\begin{equation}\label{eqn:noisy-def-E}
  E \coloneqq \opA^*(\xi) = \sum_{i=1}^m \xi_i A_i.
\end{equation}

We use the following classical lemma to show that the matrix $E$ defined above fulfills the assumption of Theorem~\ref{lem:unified-noisy}.
\begin{lemma}
\label{lem:noise-bound}
  Under Assumption~\ref{assumption:opA}, the following holds with probability at least $1-2\exp(-cn)$.
  \begin{align*}
    \|E\| \le 8\sigma\sqrt{n},
    \quad
    \|\Utruth^\T E\|_{\fro} \le 8\sigma\sqrt{n r_\star}.
  \end{align*}
\end{lemma}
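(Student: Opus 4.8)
The plan is to control the two norms of $E = \opA^*(\xi) = \sum_{i=1}^m \xi_i A_i$ separately, using the independence of $\xi$ from $\opA(\cdot)$ (Assumption~\ref{assumption:noise}) so that we may condition on the realization of $\opA$ and treat $E$ as a Gaussian random matrix in the noise variables. First I would observe that, conditionally on $\{A_i\}_{i=1}^m$, the matrix $E$ is a linear image of the Gaussian vector $\xi \sim \mathcal N(0,\sigma^2 I_m)$, hence $E$ is a centered Gaussian matrix (on the space of symmetric matrices); equivalently, for any fixed symmetric test matrix $W$ with $\|W\|_{\fro} = 1$, the scalar $\langle E, W\rangle = \sum_i \xi_i \langle A_i, W\rangle$ is $\mathcal N(0, \sigma^2 \sum_i \langle A_i, W\rangle^2) = \mathcal N(0, \sigma^2 \|\opA(W)\|_2^2)$. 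Under Assumption~\ref{assumption:opA}, the RIP gives $\|\opA(W)\|_2^2 \le (1+\delta)\|W\|_{\fro}^2 \le 2$ for rank-$(r_\star+1)$ matrices $W$, and more crudely $\sum_i \langle A_i, W\rangle^2$ is bounded for any $W$ provided we have an operator-norm bound on $\opA$ (which also follows from RIP applied blockwise as in Lemma~\ref{lem:spectral_rip}, or can simply be taken as part of the high-probability Gaussian-design event). So each such linear functional of $E$ is sub-Gaussian with variance proxy $O(\sigma^2)$.

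Next I would upgrade these one-dimensional bounds to the two matrix norms via standard $\varepsilon$-net arguments. For the operator-norm bound $\|E\| \le 8\sigma\sqrt n$, I would take a $1/4$-net $\mathcal N$ of the unit sphere in $\reals^n$ of size at most $9^n$, write $\|E\| \le 2 \max_{u \in \mathcal N} u^\T E u$ (valid since $E$ is symmetric), note that each $u^\T E u = \langle E, uu^\T\rangle$ is Gaussian with variance proxy $O(\sigma^2)$ by the rank-one case of the computation above, apply a Gaussian tail bound at level $\sim \sigma\sqrt n$, and union-bound over the net; the $9^n$ cardinality is absorbed by the $\exp(-cn)$-type tail, giving $\|E\|\le 8\sigma\sqrt n$ with probability $1 - \exp(-cn)$. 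For $\|\Utruth^\T E\|_{\fro} \le 8\sigma\sqrt{nr_\star}$, I would instead use that $\Utruth^\T E \in \reals^{r_\star \times n}$ and bound its Frobenius norm either by a net over the (finite-dimensional, dimension $\le nr_\star$) unit ball of $r_\star\times n$ matrices — writing $\|\Utruth^\T E\|_{\fro} = \sup_{\|W\|_{\fro}=1}\langle E, \Utruth W\rangle$ with $\Utruth W$ of rank $\le r_\star$ so RIP applies — or, more simply, by directly computing $\EE[\|\Utruth^\T E\|_{\fro}^2 \mid \opA] = \sigma^2 \sum_i \|\Utruth^\T A_i \Utruth_{\text{full}}\ldots\|$-type quantity and invoking Gaussian concentration of Lipschitz functions (the map $\xi \mapsto \|\Utruth^\T \opA^*(\xi)\|_{\fro}$ is Lipschitz with constant $O(1)$ under the operator-norm control on $\opA$), yielding the $\sqrt{nr_\star}$ scaling since the ambient dimension of $\Utruth^\T E$ is $nr_\star$.

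The main obstacle — though a mild one — is making precise the uniform control $\sum_{i=1}^m \langle A_i, W\rangle^2 = \|\opA(W)\|_2^2 \le C$ that is needed for \emph{all} low-rank $W$ simultaneously, since RIP as stated in Definition~\ref{def:RIP} only covers rank up to $r_\star+1$, whereas the net vectors $uu^\T$ are rank one (fine) but for the Frobenius bound one encounters rank-$r_\star$ matrices $\Utruth W$ (also fine, covered by rank-$(r_\star+1)$ RIP). So in fact Assumption~\ref{assumption:opA} suffices throughout and no extra event is needed beyond what is already assumed; I would just remark that this is a standard consequence of RIP and the Gaussian concentration/net machinery, cite e.g.\ the analogous computations in \citet{candes2011tight} or \citet{stoger2021small}, and conclude that both bounds hold simultaneously with probability at least $1 - 2\exp(-cn)$ by a final union bound.
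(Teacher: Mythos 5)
Your proposal is correct in substance, but it is considerably heavier than what the paper does, and it is worth seeing where the two diverge. For the operator-norm bound the paper simply cites \citet{candes2010noisymc} (their Lemma~1.1); your net argument --- $\|E\|\le 2\max_{u\in\mathcal N}|u^\T E u|$, with $u^\T E u=\langle E,uu^\T\rangle\sim\mathcal N(0,\sigma^2\|\opA(uu^\T)\|_2^2)$ and $\|\opA(uu^\T)\|_2^2\le 1+\delta$ by rank-one RIP --- is essentially a re-derivation of that cited result, so the two routes coincide mathematically. The real difference is in the Frobenius bound: the paper dispatches it in one line by noting that $\Utruth^\T E$ is an $r_\star\times n$ matrix, hence has rank at most $r_\star$, so
\[
  \|\Utruth^\T E\|_{\fro}\le\sqrt{r_\star}\,\|\Utruth^\T E\|\le\sqrt{r_\star}\,\|E\|\le 8\sigma\sqrt{nr_\star},
\]
with no second concentration argument and no additional probabilistic event. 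Your proposed second net (or Gaussian--Lipschitz) argument does yield the same scaling, but it buys nothing here and introduces a small technical wrinkle you glossed over: in $\langle E,\Utruth W\rangle=\sum_i\xi_i\langle A_i,\Utruth W\rangle$ the matrix $\Utruth W$ is not symmetric, and since each $A_i$ is symmetric the relevant variance is $\sigma^2\|\opA(\mathrm{sym}(\Utruth W))\|_2^2$, where $\mathrm{sym}(\Utruth W)$ has rank up to $2r_\star$ --- beyond the rank-$(r_\star+1)$ RIP of Assumption~\ref{assumption:opA}. This is fixable by the blockwise decomposition used in the proof of Lemma~\ref{lem:spectral_rip} at the cost of a constant, but it is exactly the kind of bookkeeping the paper's rank inequality avoids entirely. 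In short: your approach is sound and self-contained, but the second half should be replaced by the elementary $\|\cdot\|_{\fro}\le\sqrt{\mathrm{rank}}\,\|\cdot\|$ reduction, which derives the Frobenius bound deterministically from the operator-norm bound.
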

\begin{proof}
  The first inequality can be found in \cite{candes2010noisymc}, Lemma~1.1. The second inequality can be deduced from the first one as follows. Note that $\Utruth^\T E$ has rank at most $r_\star$, one has $\|\Utruth^\T E\|_{\fro} \le \sqrt{r_\star}\|\Utruth^\T E\|\le \sqrt{r_\star}\|E\|\le 8\sigma\sqrt{nr_\star}$, as desired.
\end{proof}

The conclusion of Theorem~\ref{thm:noisy} follows immediately by conditioning on the event that the inequalities in Lemma~\ref{lem:noise-bound} hold, and then invoking Theorem~\ref{lem:unified-noisy}.

\subsection{Proof of Theorem \ref{thm:approx}}
In the approximately low-rank setting, the update rule of \myalg can be written as
\begin{equation}
\label{eqn:approx-update-with-E}
  X_{t+1} = X_t - \eta \big(\opAA(X_t X_t^\T - M_{r_\star}) - E \big) X_t (X_t^\T X_t + \lambda I)^{-1},
\end{equation}
where
\begin{equation}\label{eqn:approx-def-E}
  E \coloneqq \opA^*\opA(M_{r_\star}').
\end{equation}

Recall that we assumed $\opA$ follows the Gaussian design in Theorem~\ref{thm:approx}. One may show that the matrix $E$ defined above fulfills the assumption of Theorem~\ref{lem:unified-noisy} using random matrix theory, detailed below.
\begin{lemma}
\label{lem:approx-err-bound}
  Under the assumptions on $\opA$ and $m$ in Theorem~\ref{thm:approx}, the following holds with probability at least $1-2\exp(-cn)$.
  \begin{align*}
    \|E\| \le 2\|M_{r_\star}'\| + 16\sqrt{\frac nm}\|M_{r_\star}'\|_{\fro},
    \quad
    \|\Utruth^\T E\|_{\fro} \le 16\|M_{r_\star}'\|_{\fro}.
  \end{align*}
\end{lemma}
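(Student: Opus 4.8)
The plan is to prove Lemma~\ref{lem:approx-err-bound} by decomposing the error matrix $E = \opA^*\opA(M_{r_\star}') = M_{r_\star}' + (\opA^*\opA - \id)(M_{r_\star}')$, so that the task reduces to controlling the operator norm and the (projected) Frobenius norm of the ``fluctuation'' term $W := (\opA^*\opA - \id)(M_{r_\star}')$ under the Gaussian design. First I would write $M_{r_\star}' = \sum_{i > r_\star} \sigma_i u_i u_i^\T$ and note that, since $M_{r_\star}'$ need not be low-rank, the RIP-based bounds (Lemma~\ref{lem:spectral_rip}) do not directly apply; instead we exploit the explicit Gaussian structure of $\{A_i\}$ to get concentration of $\opA^*\opA$ around the identity in the relevant norms.

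The key steps, in order, would be: (i) recall that for the Gaussian design $\opA^*\opA(Z) = \frac1m\sum_{i=1}^m \langle A_i, Z\rangle A_i$ where the $A_i$ are (symmetrized) Gaussian matrices, so $\EE[\opA^*\opA(Z)] = Z$ for symmetric $Z$; (ii) for the operator-norm bound, invoke a standard matrix concentration / covering argument (e.g.\ the results on Gaussian ensembles used in \citet{candes2010noisymc} or matrix Bernstein-type inequalities) to show $\|W\| \lesssim \sqrt{n/m}\,\|M_{r_\star}'\|_{\fro} + (n/m)\|M_{r_\star}'\|$; combined with the triangle inequality $\|E\| \le \|M_{r_\star}'\| + \|W\|$ and absorbing the $(n/m)\|M_{r_\star}'\|$ term into $\|M_{r_\star}'\|$ using $m \gtrsim n$, this yields the stated $\|E\| \le 2\|M_{r_\star}'\| + 16\sqrt{n/m}\,\|M_{r_\star}'\|_{\fro}$; (iii) for the projected Frobenius bound, write $\Utruth^\T E = \Utruth^\T M_{r_\star}' + \Utruth^\T W$; since the column space of $\Utruth$ is spanned by $u_1,\dots,u_{r_\star}$, which are orthogonal to the range of $M_{r_\star}'$, we get $\Utruth^\T M_{r_\star}' = 0$, so $\|\Utruth^\T E\|_{\fro} = \|\Utruth^\T W\|_{\fro} \le \sqrt{r_\star}\,\|\Utruth^\T W\| \le \sqrt{r_\star}\,\|W\|$; then bound $\|W\|$ as in (ii). Here I would need to be slightly careful: the bound $\sqrt{r_\star}\|W\|$ would give a dependence on $\sqrt{r_\star}$ and on $\sqrt{n/m}\|M_{r_\star}'\|_{\fro}$, so to land exactly on $16\|M_{r_\star}'\|_{\fro}$ I would instead bound $\|\Utruth^\T W\|_{\fro}$ directly by noting $\Utruth^\T W = \frac1m\sum_i \langle A_i, M_{r_\star}'\rangle \Utruth^\T A_i$, control $\langle A_i, M_{r_\star}'\rangle$ (mean zero, variance $\asymp \|M_{r_\star}'\|_{\fro}^2/m$) and the $r_\star \times n$ Gaussian blocks $\Utruth^\T A_i$, and conclude via a vector/matrix concentration bound that $\|\Utruth^\T W\|_{\fro} \lesssim \sqrt{nr_\star/m}\cdot\|M_{r_\star}'\|_{\fro}\cdot\frac{1}{\sqrt m}\cdots$; the sample-size condition $m \ge Cnr_\star^2\kappa^C$ then makes this at most $\|M_{r_\star}'\|_{\fro}$, and including the deterministic $\Utruth^\T M_{r_\star}' = 0$ term gives the claimed constant.

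The main obstacle I anticipate is step (iii): getting the projected Frobenius-norm bound to come out as a clean constant multiple of $\|M_{r_\star}'\|_{\fro}$ rather than picking up extra factors of $\sqrt{n/m}$ or $\sqrt{r_\star}$. This requires using the sample-complexity assumption $m \gtrsim nr_\star^2\,\mathsf{poly}(\kappa)$ from Theorem~\ref{thm:approx} in an essential way, and it requires a careful second-moment computation for $\langle A_i, M_{r_\star}'\rangle \cdot \Utruth^\T A_i$ together with a tail bound (e.g.\ Hanson--Wright or a truncation argument, handling the heavy-tailed product of Gaussians) to upgrade the expectation bound to a high-probability bound of the form $1 - 2\exp(-cn)$. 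The operator-norm bound in step (ii), by contrast, is essentially a citation to known Gaussian-ensemble estimates (the same $\|E\|\le 8\sigma\sqrt n$-type argument as in Lemma~\ref{lem:noise-bound}, adapted to a deterministic ``noise'' $M_{r_\star}'$ viewed through $\opA$), and the orthogonality fact $\Utruth^\T M_{r_\star}' = 0$ is immediate from the eigendecomposition, so those parts should be routine.
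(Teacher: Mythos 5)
Your treatment of the operator-norm bound matches the paper's: a $1/4$-net of $\mathbb S^{n-1}$ plus order-$2$ Gaussian chaos concentration for $\langle v,\opAA(M_{r_\star}')v\rangle - \langle v, M_{r_\star}'v\rangle$, giving the fluctuation $8\sqrt{n/m}\,\|M_{r_\star}'\|_{\fro}$ per direction and the claimed bound after a union bound. Your observation that $\Utruth^\T M_{r_\star}'=0$ is also exactly right and is implicitly what makes the second bound free of a leading $\|M_{r_\star}'\|$ term.

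For the projected Frobenius bound, however, you take a genuinely different and harder route than the paper, and you leave its key step unresolved. You propose to analyze the sum $\sum_i\langle A_i,M_{r_\star}'\rangle\,\Utruth^\T A_i$ directly via a second-moment computation plus a Hanson--Wright or truncation argument for the subexponential product of Gaussians. The paper instead factors $\Utruth^\T\opAA(M_{r_\star}') = (\Utruth^\T\opA^*)\big(\opA(M_{r_\star}')\big)$ and bounds the two factors separately: $\opA(M_{r_\star}')\sim\mathcal N(0,\tfrac1m\|M_{r_\star}'\|_{\fro}^2 I)$ gives $\|\opA(M_{r_\star}')\|_2\le 2\|M_{r_\star}'\|_{\fro}$, and $\Utruth^\T\opA^*$, viewed as a map from $(\reals^m,\ell_2)$ to $(\reals^{r_\star\times n},\|\cdot\|_{\fro})$, is a Gaussian operator of the form $\Lambda G$ with $\|\Lambda\|=1/\sqrt m$ and $G$ an $(r_\star n)\times m$ standard Gaussian matrix, so $\|\Utruth^\T\opA^*\|\le 4(1+\sqrt{r_\star n/m})\le 8$ once $m\gtrsim nr_\star$. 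Submultiplicativity then gives $16\|M_{r_\star}'\|_{\fro}$ with no chaos estimate needed. If you insist on your direct route, the rescue is the observation that $\langle A_i,M_{r_\star}'\rangle$ depends only on the Gaussian entries $u_j^\T A_i u_k$ with $j,k>r_\star$, while $\Utruth^\T A_i$ depends only on entries with $j\le r_\star$; these are disjoint under the symmetric Gaussian design, so the two factors are independent and, conditionally on the scalars $\langle A_i,M_{r_\star}'\rangle$, the sum is a Gaussian matrix --- which essentially reproduces the paper's factorization. Two smaller corrections: the stated second bound needs only $m\gtrsim nr_\star$, not the full $m\ge Cnr_\star^2\kappa^C$ (that stronger condition is used elsewhere, not here); and the operator-norm step is not the same as the linear bound $\|\opA^*(\xi)\|\le 8\sigma\sqrt n$ of Lemma~\ref{lem:noise-bound}, since $\opAA(M_{r_\star}')$ is quadratic in the $A_i$, so a chaos (not linear Gaussian) concentration inequality is genuinely required there.
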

\begin{proof}
For the first inequality, we use a standard covering argument. Let $\mathcal H$ be a $1/4$-net of $\mathbb S^{n-1}$, which can be chosen to satisfy $|\mathcal H|\le 9^n$. It is well known that
\begin{equation}
  \label{eqn:aux-covering}
  \|\opAA(M_{r_\star}')\| = \sup_{v\in\mathbb S^{n-1}} |\langle v, \opAA(M_{r_\star}') v\rangle| \le 2\sup_{v\in\mathcal H} |\langle v, \opAA(M_{r_\star}') v\rangle|.
\end{equation}
Note that $\langle v, \opAA(M_{r_\star}') v\rangle$ is an order-$2$ Gaussian chaos, which can be bounded by standard methods (see e.g. \cite{candes2010noisymc}), yielding
\[
  |\langle v, \opAA(M_{r_\star}') v\rangle - \langle v, M_{r_\star}' v\rangle|
  = |\langle v, \opAA(M_{r_\star}') v\rangle - \mathbb E\langle v, \opAA(M_{r_\star}') v\rangle|
  \le 8\sqrt{\frac nm}\|M_{r_\star}'\|_{\fro}
\]
with probability at least $1-2\exp(-4n)$. The desired inequality then follows from~\eqref{eqn:aux-covering} and a union bound.

For the second inequality, we first note that the random vector $\opA(M_{r_\star}')\in\reals^m$ is Gaussian with law $\mathcal N(0, \frac1m\|M_{r_\star}'\|_{\fro}^2 I)$. A standard Gaussian concentration inequality implies $\|\opA(M_{r_\star}')\|\le 2\|M_{r_\star}'\|_{\fro}$ with probability at least $1-2\exp(-m/2)$. To bound $\|\Utruth^\top\opAA(M_{r_\star}')\|_{\fro}$, the next step is to control the operator norm of $\Utruth^\top \opA^*$ as an operator on the following spaces:
\[
  \Utruth^\top \opA^*:\quad (\reals^m, \ell_2) \to \underbrace{(\reals^{r_\star\times n}, \|\cdot\|_{\fro})}_{\eqqcolon \mathcal M}.
\] 
In this sense, we may see that $\Utruth^\top \opA^*$ is a Gaussian operator, since the matrix form of this operator is a $(r_\star n)\times m$ matrix whose $i$-th column is the vectorization of $\Utruth^\top A_i$, which is i.i.d. Gaussian as $A_i$ is. 
Assume the covariance of such a column is $\Lambda^2\in\reals^{(r_\star n)\times(r_\star n)}$, then the matrix form of $\Utruth^\top \opA^*$ has the same distribution as $\Lambda G$, where $G$ is a $(r_\star n)\times m$ random matrix with i.i.d. standard Gaussian entries. Again, a standard bound in random matrix theory (c.f.~\eqref{eq:rmt_1}) implies that $\|G\|\le 4(\sqrt{m} + \sqrt{r_\star n})$ with probability at least $1-\exp(cm)$, given $m\ge Cnr_\star$ as assumed in Theorem~\ref{thm:approx}. Conditioning on this event, we have
\[
  \|\Utruth^\top\opA^*\|\le 4(\sqrt{m} + \sqrt{r_\star n})\|\Lambda\|. 
\]
To compute $\|\Lambda\|$, note that since $\Lambda G$ has the same distribution as the matrix form of $ \Utruth^\top \opA^* $, we have
\[
  \|\mathbb E(\Utruth^\top \opAA\Utruth)\|_{\mathcal M}
  = \|\mathbb E(\Lambda GG^\top \Lambda)\|
  = \|\Lambda(mI)\Lambda\| = m\|\Lambda\|^2,
\]
where the norm $\|\cdot\|_{\mathcal M}$ denotes the operator norm for operators on $\mathcal M$. 
But $\mathbb E(\opAA)=\mathcal I$, thus $\mathbb E(\Utruth^\top \opAA\Utruth)=\Utruth^\top\Utruth=I$ is the identity operator, hence $\|\mathbb E(\Utruth^\top \opAA\Utruth)\|_{\mathcal M}=1$. Plugging this into the above identity, we find $\|\Lambda\|=1/\sqrt{m}$. 
These together imply
\[
  \|\Utruth^\top \opA^*\| 
  \le 4\left(\sqrt{m} + \sqrt{r_\star n}\right) \cdot \frac{1}{\sqrt{m}}
  = 4\left(1+\sqrt{\frac{r_\star n}m}\right)
\] 
with probability at least $1-2\exp(-cm)$. The last quantity is less than $8$ by the assumption $m\ge Cnr_\star$ in Theorem~\ref{thm:approx}. Therefore
\[
  \|\Utruth^\top\opAA(M_{r_\star}')\|_{\fro}
  \le \|\Utruth^\top\opA^*\|\cdot\|\opA(M_{r_\star}')\|
  \le 8\cdot 2\|M_{r_\star}'\|_{\fro} = 16\|M_{r_\star}'\|_{\fro}
\]
with probability at least $1-\exp(-cm)$, as desired.
\end{proof}

The conclusion of Theorem~\ref{thm:approx} follows immediately by conditioning on the event that the inequalities in Lemma~\ref{lem:approx-err-bound} hold, and then invoking Theorem~\ref{lem:unified-noisy} with $M_\star$ substituted by $M_{r_\star}$.


\section{Proof of Theorem~\ref{lem:unified-noisy}}
\label{sec:proof-unified-noisy}

The proof is based on a reduction to the noiseless setting. 
We begin with two heuristic observations that connect the generalized setting with the noiseless one, and make these observations formal later.

\paragraph{Observation 1: Phase I approximates power method for $\opAA(\Mtruth) + E$.} As in the noiseless setting, in the first few iterations we expect $\|X_t\|$ to remain small, thus the update equation~\eqref{eqn:noisy-update} can be approximated by
\[
  X_{t+1} \approx (I + \eta(\opAA(\Mtruth) + E)) X_t.
\]
This coincides with the update equation of power method for $\opAA(\Mtruth) + E$. Recall that in the noiseless setting, the first phase is also akin to power method, albeit for $\opAA(\Mtruth)$. The key observation is that $\opAA(\Mtruth) + E$ enjoys all the same properties of $\opAA(\Mtruth)$ that were required to establish Lemma~\ref{lem:p1}. In fact, the only property of $\opAA(\Mtruth)$ used in the proof of Lemma~\ref{lem:p1} is
\[
    \|(\opAA - \mathcal I)\Mtruth\| \lesssim c_\delta \kappa^{-2C_\delta /3},
\]
but by the assumption~\eqref{eqn:aspt:E}, $\opAA(\Mtruth) + E$ also satisfies
\[
    \|\opAA(\Mtruth) + E - \Mtruth\| \le \|(\opAA - \mathcal I)\Mtruth\| + \|E\| \lesssim c_{\delta}\kappa^{-2C_\delta / 3}.
\]
Thus all conclusion of Lemma~\ref{lem:p1} remains valid in the generalized setting.

\paragraph{Observation 2: In Phase II and III, the update equation has the same form as that in the noiseless setting.}
Set
\[
  \Delta'_t = \Delta_t - E,
\]
then the update equation in the generalized setting can be expressed as
\[
  X_{t+1} = X_t - \eta(X_t X_t^\T - \Mtruth)X_t (X_t^\T X_t + \lambda I)^{-1} + \eta\Delta'_t X_t (X_t^\T X_t + \lambda I)^{-1},
\]
which has the same form with the noiseless update equation~\eqref{eqn:update-with-Delta}, if we replace $\Delta_t$ there by $\Delta'_t$. In the proof of Phase II, the only property of $\Delta_t$ we used is~\eqref{eqn:Delta-bound-coarse}, which still holds for $\Delta_t'$ since $\|E\|$ is small. Thus the proof can be simply carried over to the generalized setting of Theorem~\ref{lem:unified-noisy}. Moreover, in the proof of Phase III, the only places that involve controlling $\Delta_t$ in a different manner than~\eqref{eqn:Delta-bound-coarse} are ~\eqref{eqn:local-Gamma-bound} and~\eqref{eqn:local-misalign-update}. These equations require us to control $\uinorm{\Utruth^\T \Delta_t}$ for some unitarily invariant norm $\uinorm{\cdot}$. If we replace $\Delta_t$ by $\Delta'_t$, we can bound in these equations that
\[
  \uinorm{\Utruth^\T\Delta'_t}\le \uinorm{\Utruth^\T\Delta_t} + \uinorm{\Utruth^\T E}.
\]
Since any unitarily invariant $\uinorm{\cdot}$ is bounded by the operator norm up to a multiplicative constant\footnote{In this paper, $\uinorm{\cdot}$ is always taken to be either the operator norm or the Frobenius norm, for which this assertion is elementarily obvious.} (depending on the rank of the matrix), we may control $\uinorm{\Utruth^\T E}$ using the assumption~\eqref{eqn:aspt:E}.
Then we may combine~\eqref{eqn:local-Gamma-bound} and~\eqref{eqn:local-misalign-update} (assuming~\eqref{eqn:local-misalign-update} also holds with the Frobenius norm replaced by $\uinorm{\cdot}$) to obtain
\begin{align}
  &\uinorm{\SigmaTruth^{-1}(\EssS_{t+1}\EssS_{t+1}^\T-\SigmaTruth^2)\SigmaTruth^{-1}}
  + \|\Xtruth\|^{-1}\uinorm{\Misalign_{t+1}\EssS_{t+1}^{-1}\SigmaTruth}
  \nonumber\\
  & \quad \le \left(1-\frac{\eta}{10}\right)
  \left(\uinorm{\SigmaTruth^{-1}(\EssS_{t}\EssS_{t}^\T-\SigmaTruth^2)\SigmaTruth^{-1}}
      + \uinorm{\Xtruth\|^{-1}\|\Misalign_{t}\EssS_{t}^{-1}\SigmaTruth}
  \right)
  + \eta C\kappa^4\uinorm{\Utruth^\top E} + \eta\left(\frac{\|\Overpar_t\|}{\|\Xtruth\|}\right)^{1/2}.
  \label{eqn:p3-aux-noisy}
\end{align}
The conclusion of the theorem would immediately follow from the above inequality combined with Lemma~\ref{lem:noise-bound} and Lemma~\ref{lem:loss-decomp}, by taking $\uinorm{\cdot}$ to be the operator norm and the Frobenius norm.

Based on these observations, we formally state below the generalizations of key lemmas in the three phases required to prove Theorem~\ref{lem:unified-noisy}. 
Most of them have identical proofs to their noiseless counterparts, and in such cases the proofs will be omitted. The few of them that require a slightly modified proof will be discussed in full detail.


\subsection{Generalization of Phase I}
Our goal is to prove Lemma~\ref{lem:p1.5} in the generalized setting. 
\begin{lemma}\label{lem:noisy-p1.5}
The conclusions of Lemma~\ref{lem:p1.5}, along with its corollaries~\eqref{eqn:p1.5-1-var} and~\eqref{eqn:p1.5-2-var}, still hold in the setting of Theorem~\ref{lem:unified-noisy}.
\end{lemma}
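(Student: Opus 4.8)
\textbf{Proof plan for Lemma~\ref{lem:noisy-p1.5}.}

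The strategy is to mirror the proof of Lemma~\ref{lem:p1.5} in Appendix~\ref{sec:proof-p1} verbatim, tracking exactly where the noiseless analysis invokes a property of $\opAA(\Mtruth)$ and verifying that the same property holds for the perturbed operator $\opAA(\Mtruth)+E$. First I would set up the analogue of the auxiliary power-iteration sequence, replacing \eqref{def:power_seq} by $\hat X_t \defeq (I+\tfrac\eta\lambda(\opAA(\Mtruth)+E))^t X_0$, and establish the analogue of Lemma~\ref{lem:approx_power_method}: the key computation in its proof (the recursion \eqref{eqn:approx-power-method-recursive-err}) bounds $\|T_1\|$ through $\|\opAA(X_tX_t^\T)\|$ and $\|\opAA(\Mtruth)\|$, and the only place the structure of $\opAA(\Mtruth)$ enters is via the crude bound $\|\opAA(\Mtruth)\|\lesssim r_\star\|\Mtruth\|$. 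Since $\|\opAA(\Mtruth)+E\|\le \|\opAA(\Mtruth)\|+\|E\| \le 2\|\Xtruth\|^2 + c_\sigma\kappa^{-C_\sigma}\|\Mtruth\|\lesssim \|\Xtruth\|^2$ by \eqref{eqn:aspt:E}, this carries over unchanged. Then I would invoke Lemma~\ref{lem:Mahdi} (from \citet{stoger2021small}) exactly as in Step 3 of the proof of Lemma~\ref{lem:p1}.

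The two assumptions of Lemma~\ref{lem:Mahdi} must be re-verified. Assumption~\eqref{ineq:mahdi_p1_cond1} requires $\|(\id-\opAA)(\Mtruth)\| \leq \gamma\smin^2(\Xtruth)$; here the relevant quantity becomes $\|\Mtruth - (\opAA(\Mtruth)+E)\| \le \|(\id-\opAA)(\Mtruth)\| + \|E\| \le \sqrt{r_\star}\delta\|\Mtruth\| + c_\sigma\kappa^{-C_\sigma}\|\Mtruth\| \le (c_\delta\kappa^{-(C_\delta-2)} + c_\sigma\kappa^{-(C_\sigma-2)})\smin^2(\Xtruth)$, which is still $\le c_{\ref{lem:Mahdi}}\kappa^{-2}\smin^2(\Xtruth)$ provided $c_\delta, c_\sigma$ are sufficiently small and $C_\delta, C_\sigma\ge 4$. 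For assumption~\eqref{eqn:Mahdi-phi} I would redefine $s_j \defeq \singval_j(I + \tfrac\eta\lambda(\opAA(\Mtruth)+E))$ and re-derive \eqref{eqn:s1-bound}--\eqref{eqn:sr1-bound} using Weyl's inequality: the perturbation $|s_j - 1 - \tfrac\eta\lambda\singval_j(\Mtruth)| \le \tfrac\eta\lambda\|\Mtruth - (\opAA(\Mtruth)+E)\| \le \tfrac{100(c_\delta+c_\sigma)}{c_\lambda}\eta$, which is the same bound (up to constants) as in the noiseless case. Here one must be slightly careful that $\singval_{r_\star+1}(\Mtruth)=0$ is still used, but that is a property of $\Mtruth$, not the operator, so it is unaffected. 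The remaining steps --- bounding $\phi$, the midpoint argument for $t_1$, and the derivations of \eqref{eqn:p1.5-0}--\eqref{eqn:p1.5-4} from the conclusions of Lemma~\ref{lem:Mahdi} --- are then identical, since they manipulate only $s_j$, $\smin(\EssS_t)$, and $\phi$, all of which now refer to the perturbed quantities but obey the same inequalities.

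The induction step (Section~\ref{sec:phase-1-induction}) also needs to be revisited. Here the crucial observation is that $\Delta_t$ is replaced by $\Delta_t' = \Delta_t - E$ throughout; the one-step lemmas (Lemma~\ref{lem:overpar-update}, Lemma~\ref{lem:misalign-update}, Lemma~\ref{lem:bounded-S}) and their underlying master decompositions (Lemma~\ref{lem:update-approx}, Lemma~\ref{lem:S-surrogate}, Lemma~\ref{lem:NS-surrogate}) only ever use the \emph{coarse} bound $\|\Delta_t\|\lesssim c_\delta\kappa^{-2C_\delta/3}\|\Xtruth\|^2$ from Lemma~\ref{lem:Delta-bound}. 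By the triangle inequality, $\|\Delta_t'\| \le \|\Delta_t\| + \|E\| \lesssim (c_\delta + c_\sigma)\kappa^{-2C_\delta/3}\|\Xtruth\|^2$ (taking $C_\sigma\ge 2C_\delta/3$), so the same constants go through with $c_\delta$ replaced by $c_\delta + c_\sigma$; since both are assumed sufficiently small, the induction hypotheses \eqref{eqn:p1.5-1}--\eqref{eqn:p1.5-4} close. The main obstacle --- though it is more bookkeeping than genuine difficulty --- is making sure that every constant-chasing inequality in Appendix~\ref{sec:proof-p1} that was tuned to $c_\delta$ still has enough slack when $c_\delta$ is enlarged to $c_\delta + c_\sigma$, i.e.\ that the choices of $C_\alpha$, $C_\delta$, $c_\lambda$ can be made to simultaneously accommodate the noise level; this is guaranteed because $c_\sigma$ is a free small parameter that can be shrunk after all other constants are fixed.
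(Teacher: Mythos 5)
Your proposal is correct and follows essentially the same route as the paper: the paper's Appendix on the generalized Phase I likewise runs the power-iteration comparison against $\hat M=\opAA(\Mtruth)+E$, re-verifies the two hypotheses of the St\"oger--Soltanolkotabi lemma via the triangle inequality (absorbing $\|E\|$ into $\gamma$) and Weyl's inequality (with perturbation $\tfrac{100(c_\delta+c_\sigma)}{c_\lambda}\eta$), and closes the induction by replacing $\Delta_t$ with $\Delta_t'=\Delta_t-E$ and checking that the coarse bound $\|\Delta_t'\|\le c_{\ref{lem:Delta-bound}}\kappa^{-2C_\delta/3}\|\Xtruth\|^2$ survives under $c_\sigma\le c_\delta$, $C_\sigma\ge C_\delta$. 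No gaps.
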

As in the proof in the noiseless setting, this lemma is proved if we can prove the two parts of it respectively: the base case, where we show that there exists some $t_1\le \Tcvg/16$ such that \eqref{eqn:p1.5-0} holds 
and that~\eqref{subeq:condition-t-1} hold with $t=t_1$, and the induction step, where we show that \eqref{subeq:condition-t-1} continues to hold for $t\in[t_1, \Tmax]$.
\subsubsection{Establishing the base case}
We first show that Lemma~\ref{lem:p1} still holds in the generalized setting.
\begin{lemma}\label{lem:noisy-p1}
  Under the same setting as Theorem~\ref{lem:unified-noisy}, 
  we have for some $t_1\le \Tcvg/16$ such that \eqref{eqn:p1.5-0} holds 
  and that~\eqref{subeq:condition-t-1} hold with $t=t_1$.
\end{lemma}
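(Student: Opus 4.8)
\textbf{Proof proposal for Lemma~\ref{lem:noisy-p1}.}

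The plan is to follow the exact same three-step structure as the noiseless proof of Lemma~\ref{lem:p1} in Section~\ref{sec:phase-1-base-case}, replacing the matrix $\opAA(\Mtruth)$ by $\opAA(\Mtruth)+E$ throughout, and checking that every invocation of a noiseless lemma still goes through. Concretely, I would define the auxiliary power-iteration sequence
\[
  \hat X_t \defeq \Big(I + \frac{\eta}{\lambda}\big(\opAA(\Mtruth)+E\big)\Big)^t X_0,
\]
and first establish a noisy analogue of Lemma~\ref{lem:approx_power_method}: as long as $\alpha$ satisfies~\eqref{eqn:alpha-cond}, one has $\bigpnorm{X_t - \hat X_t}{} \le t\big(1+\tfrac{\eta}{\lambda}\|\opAA(\Mtruth)+E\|\big)^t\alpha^2/\|\Xtruth\|$ and $\|X_t\|\le\|\Xtruth\|$ for all $t\le\frac{1}{\theta\eta}\log(\kappa n)$. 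The proof of this is verbatim the proof in Section~\ref{sec:proof-dist-power-method}: the recursion~\eqref{eqn:approx-power-method-recursive-err} is derived from the update rule~\eqref{eqn:unified-noisy-update} by writing $X_{t+1} - (I+\tfrac\eta\lambda(\opAA(\Mtruth)+E))X_t = T_1 + T_2$, where now $T_1$ absorbs both the $\opAA(X_tX_t^\T)$ term and the $(\opAA(\Mtruth)+E)X_t((X_t^\T X_t+\lambda I)^{-1}-\lambda^{-1}I)$ term; since $\|E\|\lesssim\|\Mtruth\|$ by~\eqref{eqn:aspt:E}, the bound $\|\opAA(\Mtruth)+E\|\le 2\|\Mtruth\|$ holds and the constants in the cubic error term change only by a universal factor. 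The requirement $\|X_t\|\le\sqrt{\lambda/2}$ used to invoke Lemma~\ref{prop:inv} is unaffected.

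Next I would invoke the helper Lemma~\ref{lem:Mahdi} from~\citet{stoger2021small}, again with $\opAA(\Mtruth)$ replaced by $\opAA(\Mtruth)+E$ in the definitions of $s_j$ and $\Uspec$ (now the top-$r_\star$ eigenspace of $\opAA(\Mtruth)+E$). The single structural hypothesis that Lemma~\ref{lem:Mahdi} needs is condition~\eqref{ineq:mahdi_p1_cond1}, i.e.\ that the perturbation of $\Mtruth$ is at most $\gamma\smin^2(\Xtruth)$ with $\gamma\le c_{\ref{lem:Mahdi}}$. In the noiseless case this is $\|(\id-\opAA)(\Mtruth)\|\le c_\delta\kappa^{-(C_\delta-2)}\smin^2(\Xtruth)$ from~\eqref{ineq:opAA_bound}. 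In the noisy case the relevant perturbation is $\|\opAA(\Mtruth)+E-\Mtruth\|\le\|(\id-\opAA)(\Mtruth)\|+\|E\|\le c_\delta\kappa^{-(C_\delta-2)}\smin^2(\Xtruth) + c_\sigma\kappa^{-C_\sigma}\|\Mtruth\| = c_\delta\kappa^{-(C_\delta-2)}\smin^2(\Xtruth) + c_\sigma\kappa^{-C_\sigma+2}\smin^2(\Xtruth)$, which is still $\le\gamma'\smin^2(\Xtruth)$ for some $\gamma' = c_\delta\kappa^{-(C_\delta-2)}+c_\sigma\kappa^{-(C_\sigma-2)}$ that can be made $\le c_{\ref{lem:Mahdi}}$ by taking $c_\delta, c_\sigma$ small and $C_\delta, C_\sigma$ large. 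The Weyl bounds~\eqref{eqn:s1-bound}--\eqref{eqn:sr1-bound} on $s_1, s_{r_\star}, s_{r_\star+1}$ then hold with the same form (with $\gamma'$ in place of $\gamma$), so $s_{r_\star}/s_{r_\star+1}\ge 1+c'\eta$ as before, and the verification of condition~\eqref{eqn:Mahdi-phi} on $\phi$ — via~\eqref{eqn:verify-mahdi-lem-tmp1}, \eqref{eqn:verify-mahdi-lem-tmp2} and the proximity bound from the noisy Lemma~\ref{lem:approx_power_method} — is carried over word for word, noting $\|\opAA(\Mtruth)+E\|\le 2\|\Xtruth\|^2$ to control the exponent growth. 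Finally, the deductions of~\eqref{eqn:p1.5-0}--\eqref{eqn:p1.5-4} from the conclusions~\eqref{ineq:mahdi_p1_1}--\eqref{ineq:mahdi_p1_3} of Lemma~\ref{lem:Mahdi}, together with Lemma~\ref{lem:misalign-angle-transform}, are unchanged; the only bound requiring a tiny adjustment is~\eqref{eqn:p1.5-3}, where $\gamma$ is replaced by $\gamma' \lesssim c_\delta/c_\lambda^3\cdot\kappa^{-2C_\delta/3}$ after absorbing the $c_\sigma\kappa^{-(C_\sigma-2)}$ term (legitimate since $C_\sigma$ is large), giving $\|\Misalign_{t+1}\EssS_{t+1}^{-1}\SigmaTruth\|\le c_{\ref{lem:p1.5}}\kappa^{-C_\delta/2}\|\Xtruth\|$.

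The main obstacle — though a mild one — is bookkeeping: making sure the additional error $\|E\|$ is everywhere dominated by the existing $\kappa$-power slack, i.e.\ that the constants $c_\sigma$ (small) and $C_\sigma$ (large) in~\eqref{eqn:aspt:E} can be chosen \emph{after} $c_\delta, c_\lambda$ so that every inequality of the form "$c_\delta\kappa^{-a}\le$ threshold" becomes "$c_\delta\kappa^{-a} + c_\sigma\kappa^{-b}\le$ threshold" without circular dependence among the universal constants. Since $C_\sigma$ appears only in the hypothesis and not in any conclusion, this ordering is consistent: fix $c_\eta, c_\lambda, c_\delta, C_\delta$ first (exactly as in the noiseless analysis), then pick $C_\sigma$ large and $c_\sigma$ small enough that $c_\sigma\kappa^{-(C_\sigma-2)}$ is subsumed by the corresponding noiseless slack. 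With that in hand, the proof is a direct transcription, and I would simply remark that "the proof is identical to that of Lemma~\ref{lem:p1} after replacing $\opAA(\Mtruth)$ with $\opAA(\Mtruth)+E$, using $\|E\|\le c_\sigma\kappa^{-C_\sigma}\|\Mtruth\|$ to absorb the extra perturbation," spelling out only the modified estimate on $\gamma'$ and the modified Weyl bounds.
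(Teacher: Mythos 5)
Your proposal matches the paper's own proof essentially step for step: the paper likewise defines the auxiliary power iteration with $\hat M=\opAA(\Mtruth)+E$, proves verbatim analogues of Lemma~\ref{lem:approx_power_method} and Lemma~\ref{lem:Mahdi}, and verifies the perturbation condition via $\|\hat M-\Mtruth\|\le\big(c_\delta\kappa^{-(C_\delta-2)}+c_\sigma\kappa^{-(C_\sigma-2)}\big)\smin^2(\Xtruth)$ exactly as you do, with the same Weyl bounds and the same constant-ordering remark. No gaps; this is the intended argument.
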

We prove this result in a slightly more general setting. We consider a general symmetric matrix $\hat M\in\reals^{n\times n}$, and set
\[
  \hat{X}_t = \left(I + \frac{\eta}{\lambda}\hat M\right)^{t} X_0, \quad t=0,1,2,\cdots
\]
We also denote 
\begin{align*}
s_{j} \defeq \singval_{j}\Big(I + \frac{\eta}{\lambda}\hat M\Big) = 1+ \frac{\eta}{\lambda}\singval_{j}\big(\hat M\big),
\qquad j=1,2,\ldots,n
\end{align*}

The treatment of the noiseless setting in Appendix~\ref{sec:proof-p1} corresponds to the special case $\hat M = \opAA(\Mtruth)$. In the generalized setting, we choose $\hat M = \opAA(\Mtruth) + E$. The following two lemmas are generalized from the lemmas in Appendix~\ref{sec:proof-p1}, but have verbatim proofs as those, which are therefore omitted.
\begin{lemma}[Generalization of Lemma~\ref{lem:approx_power_method}]\label{lem:noisy_approx_power_method}
Suppose that $\lambda \geq \frac{1}{100} \kappa^{-4} c_\lambda\smin^2(\Xtruth)$. For any $\theta\in(0,1)$, there exists a large enough constant $K = K(\theta,c_\lambda, C_G) > 0$ such that the following holds. As long as $\alpha$ obeys
\begin{align}\label{ineq:noisy_alpha_cond_tmp}
\log\frac{\|\Xtruth\|}{\alpha} \geq \frac{K}{\max(\eta, \kappa^{-2})}\log(2\kappa n)\cdot\Big(1+\log\Big(1+\frac{\eta}{\lambda}\|\hat M\|\Big)\Big),
\end{align}
one has for all $t \le \frac{1}{\theta\eta}\log(\kappa n)$:
\begin{align}\label{ineq:noisy_approx_power_method}
\bigpnorm{X_t-\hat{X}_t}{} &\leq t\Big(1+\frac{\eta}{\lambda}\pnorm{\hat M}{}\Big)^t\frac{\alpha^2}{\|\Xtruth\|}.
\end{align}
Moreover, $\pnorm{X_t}{} \leq \pnorm{\Xtruth}{}$ for all such $t$.
\end{lemma}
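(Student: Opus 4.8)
Lemma~\ref{lem:noisy_approx_power_method} is the exact counterpart of Lemma~\ref{lem:approx_power_method}, obtained by replacing $\opAA(\Mtruth)$ with the general symmetric matrix $\hat M$ and the update rule~\eqref{eqn:update-with-Delta} with~\eqref{eqn:unified-noisy-update}. The plan is to retrace the inductive argument of Appendix~\ref{sec:proof-dist-power-method} essentially verbatim, verifying that the only new ingredient --- the extra summand $E$ in the update --- never affects the bookkeeping, because $\|E\|\le c_\sigma\kappa^{-C_\sigma}\|\Mtruth\|$ is of the same order as (indeed far smaller than) the $\opAA(\Mtruth)$ contribution it competes with.

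Concretely, I would prove by induction on $t$ that both $\|X_t\|\le\|\Xtruth\|$ and the distance estimate~\eqref{ineq:noisy_approx_power_method} hold. The base case $t=0$ is immediate since $\hat X_0=X_0$. For the inductive step, the constraint~\eqref{ineq:noisy_alpha_cond_tmp} on $\alpha$ together with the range $t\le(\theta\eta)^{-1}\log(\kappa n)$ forces $t\alpha/\|\Xtruth\|\le 1$ once $K$ is large enough; combining this with the induction hypothesis on $\|X_t-\hat X_t\|$ and the bound $\|X_0\|\le C_G\alpha$ on the event $\Event$ yields $\|X_t\|\le(C_G+1)(1+\frac{\eta}{\lambda}\|\hat M\|)^t\alpha\le\sqrt{c_\lambda/200}\,\kappa^{-1}\|\Xtruth\|$, exactly as in~\eqref{ineq:Xt_bound}; in particular $\|X_t\|\le\|\Xtruth\|$ and $\|X_t\|\le\sqrt{\lambda/2}$. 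For the distance recursion I would decompose $X_{t+1}-\hat X_{t+1}=T_1+T_2$ with $T_1=X_{t+1}-(I+\frac{\eta}{\lambda}\hat M)X_t$ and $T_2=(I+\frac{\eta}{\lambda}\hat M)(X_t-\hat X_t)$; the term $T_2$ is bounded trivially by $(1+\frac{\eta}{\lambda}\|\hat M\|)\|X_t-\hat X_t\|$. Writing out~\eqref{eqn:unified-noisy-update} and subtracting $\frac{\eta}{\lambda}\hat M X_t=\frac{\eta}{\lambda}(\opAA(\Mtruth)+E)X_t$, the term $T_1$ splits into $-\eta\opAA(X_tX_t^\T)X_t(X_t^\T X_t+\lambda I)^{-1}$ plus $\eta(\opAA(\Mtruth)+E)X_t\big[(X_t^\T X_t+\lambda I)^{-1}-\lambda^{-1}I\big]$. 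Using $\|(X_t^\T X_t+\lambda I)^{-1}\|\le\lambda^{-1}$, Lemma~\ref{prop:inv} (which gives $(X_t^\T X_t+\lambda I)^{-1}-\lambda^{-1}I=\lambda^{-2}X_t^\T X_t Q$ with $\|Q\|\le 2$ since $\|X_t\|\le\sqrt{\lambda/2}$), the RIP bounds $\|\opAA(X_tX_t^\T)\|\lesssim r\|X_t\|^2$ and $\|\opAA(\Mtruth)\|\lesssim r_\star\|\Mtruth\|$ from Lemma~\ref{lem:spectral_rip}, the hypothesis $\|E\|\le c_\sigma\kappa^{-C_\sigma}\|\Mtruth\|$, and $\lambda\gtrsim\kappa^{-2}\|\Mtruth\|$ from~\eqref{eqn:lambda-cond}, I would obtain $\|T_1\|\lesssim\frac{\eta\kappa^4 r}{\|\Mtruth\|}\|X_t\|^3$, identical in form to~\eqref{eqn:X-hatX-1}. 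Plugging the bound on $\|X_t\|$ into the resulting recursion and exploiting the slack in~\eqref{ineq:noisy_alpha_cond_tmp} (note $2t-1\le 2(\theta\eta)^{-1}\log(\kappa n)$) closes the induction exactly as in Step~2 of the noiseless proof, and the ``moreover'' claim has already been established along the way.

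\textbf{Main obstacle.} There is no essential difficulty: the proof is dictated line by line by the noiseless argument, and the sole novelty is the summand $\eta E X_t[(X_t^\T X_t+\lambda I)^{-1}-\lambda^{-1}I]$ inside $T_1$, which is dominated by $\frac{2\eta}{\lambda^2}\|E\|\|X_t\|^3\lesssim\frac{\eta}{\lambda^2}\|\Mtruth\|\|X_t\|^3$ and hence absorbed into the same $\eta\kappa^4 r\|\Mtruth\|^{-1}\|X_t\|^3$ bound that already controls the $\opAA(\Mtruth)$ term. The one point requiring care is to replace every occurrence of $\opAA(\Mtruth)$ in the noiseless proof --- both inside $\hat X_t$ and in the scalar $\|\opAA(\Mtruth)\|$ --- uniformly by $\hat M$; the hypothesis~\eqref{ineq:noisy_alpha_cond_tmp} already carries the matching $\log(1+\frac{\eta}{\lambda}\|\hat M\|)$ term, so all factors $(1+\frac{\eta}{\lambda}\|\hat M\|)^t$ with $t\le(\theta\eta)^{-1}\log(\kappa n)$ are neutralized by the super-polynomial smallness of $\alpha/\|\Xtruth\|$, exactly as before. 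That $\|\hat M\|\le\|\opAA(\Mtruth)\|+\|E\|\lesssim\|\Xtruth\|^2$ follows from~\eqref{ineq:opAA_bound} together with~\eqref{eqn:aspt:E}.
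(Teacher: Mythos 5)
Your proof is correct and follows exactly the route the paper intends: the paper states that Lemma~\ref{lem:noisy_approx_power_method} has a proof verbatim to that of Lemma~\ref{lem:approx_power_method} (and omits it), and your line-by-line retracing — with $\opAA(\Mtruth)$ replaced by $\hat M=\opAA(\Mtruth)+E$ and the extra term $\eta E X_t[(X_t^\T X_t+\lambda I)^{-1}-\lambda^{-1}I]$ absorbed into the same cubic bound via $\|E\|\le c_\sigma\kappa^{-C_\sigma}\|\Mtruth\|$ — is precisely that verbatim adaptation.
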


\begin{lemma}[Generalization of Lemma~\ref{lem:Mahdi}]
\label{lem:noisy-Mahdi}
There exists some small universal constant $c_{\ref{lem:noisy-Mahdi}} > 0$ such that the following hold. 
Assume that for some $\gamma \leq c_{\ref{lem:noisy-Mahdi}}$,
\begin{align}\label{ineq:noisy_mahdi_p1_cond1}
\pnorm{\hat{M} - \Mtruth}{} \leq \gamma\smin^2(\Xtruth),
\end{align}
and furthermore, 
\begin{align}\label{eqn:noisy-Mahdi-phi}
\phi \defeq\frac{\alpha\|G\|s_{r_\star+1}^t+\|X_t - \hat{X}_t\|}{\alpha \smin(\Uspec^\T G)s_{r_\star}^t}\le c_{\ref{lem:noisy-Mahdi}}\kappa^{-2}.
\end{align}
Then for some universal $C_{\ref{lem:noisy-Mahdi}} > 0$ the following hold: \begin{subequations}
\begin{align}
\smin(\EssS_t)&\ge\frac{\alpha}{4}\smin(\Uspec^\T G)s_{r_\star}^t, \label{ineq:noisy_mahdi_p1_1}\\
\|\Overpar_t\|&\le C_{\ref{lem:noisy-Mahdi}}\phi\alpha\smin(\Uspec^\T G)s_{r_\star}^t, \label{ineq:noisy_mahdi_p1_2}\\ 
\|\UperpTruth^\T \SVDU_{\EssX_t}\|&\le C_{\ref{lem:noisy-Mahdi}}(\gamma+\phi), \label{ineq:noisy_mahdi_p1_3}
\end{align}
where $\Ess\myX_t \defeq \myX_t \SVDV_t \in \reals^{n\times r_\star}$.
\end{subequations}
\end{lemma}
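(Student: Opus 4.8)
The plan is to obtain Lemma~\ref{lem:noisy-Mahdi} by a verbatim transcription of the proof of Lemma~\ref{lem:Mahdi} (equivalently, of \citet[Lemma~8.5]{stoger2021small}), after recording one observation: the matrix $\opAA(\Mtruth)$ enters that argument only through two features, namely that it is symmetric and that it is close to $\Mtruth$ in operator norm (in the noiseless case this closeness is $\|(\id-\opAA)(\Mtruth)\|\le\gamma\smin^2(\Xtruth)$). In our generalized setting we are handed exactly this: $\hat M$ is symmetric and satisfies $\|\hat M-\Mtruth\|\le\gamma\smin^2(\Xtruth)$ by hypothesis~\eqref{ineq:noisy_mahdi_p1_cond1}. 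Accordingly I would first fix notation by letting $\Uspec$ denote, as the context now requires, an orthonormal basis of the top-$r_\star$ eigenspace of $\hat M$ (replacing the top-$r_\star$ eigenspace of $\opAA(\Mtruth)$), so that $s_1\ge\cdots\ge s_n$ are the singular values of $I+\frac{\eta}{\lambda}\hat M$ and $\hat X_t=(I+\frac{\eta}{\lambda}\hat M)^t X_0$ is literally $t$ steps of the power method on $I+\frac{\eta}{\lambda}\hat M$.

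With this dictionary in place, the steps are the same as in the noiseless proof. First, the power iterate $\hat X_t$ decomposes into a component aligned with $\Uspec$, whose smallest singular value grows like $\alpha\,\smin(\Uspec^\T G)\,s_{r_\star}^t$, and an orthogonal residue suppressed by the factor $s_{r_\star+1}^t$; this already yields the analogues of \eqref{ineq:noisy_mahdi_p1_1}--\eqref{ineq:noisy_mahdi_p1_2} for $\hat X_t$. Second, Weyl's inequality together with a Davis--Kahan-type bound, applied to the symmetric pair $(\hat M,\Mtruth)$ with spectral gap controlled by $\smin^2(\Xtruth)$, shows that $\Uspec$ is $O(\gamma)$-close to $\Utruth$, giving the analogue of \eqref{ineq:noisy_mahdi_p1_3} for $\hat X_t$. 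Third, the deviation $\|X_t-\hat X_t\|$ enters precisely through the quantity $\phi$ in \eqref{eqn:noisy-Mahdi-phi}, and a standard matrix-perturbation argument (Weyl plus a Wedin-type singular-subspace bound) transfers the three estimates from $\hat X_t$ to $X_t$, with the hypothesis $\phi\le c_{\ref{lem:noisy-Mahdi}}\kappa^{-2}$ guaranteeing we stay in the perturbative regime. Finally, as in the noiseless case, the bound on $\Overpar_t$ in \eqref{ineq:noisy_mahdi_p1_2} is deduced from $\|\Overpar_t\|=\|\UperpTruth^\T\myX_t\Vperp{t}\|\le\|\myX_t\Vperp{t}\|$, which is exactly the quantity bounded in the proof of \citet[Lemma~8.5]{stoger2021small}.

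The main obstacle is one of care rather than of mathematics: one must confirm that nowhere in the proof of \citet[Lemma~8.5]{stoger2021small} is a finer, RIP-specific property of $\opAA(\Mtruth)$ invoked --- no contraction property, no self-adjointness beyond plain symmetry, no dependence on the ambient sensing operator $\opA$. Inspecting their argument confirms that it is purely linear-algebraic perturbation theory acting on the spectrum of $\hat M$ and on its operator-norm distance to $\Mtruth$, so the reduction is clean and the conclusions \eqref{ineq:noisy_mahdi_p1_1}--\eqref{ineq:noisy_mahdi_p1_3} follow with universal constants $c_{\ref{lem:noisy-Mahdi}}$, $C_{\ref{lem:noisy-Mahdi}}$ that may be taken equal to $c_{\ref{lem:Mahdi}}$, $C_{\ref{lem:Mahdi}}$. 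Once this inspection is carried out, no additional computation is required, and the proof reduces to the citation together with the $\Overpar_t$ remark above.
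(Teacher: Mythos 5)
Your proposal matches the paper's treatment: the paper omits the proof precisely on the grounds that it is verbatim to Lemma~\ref{lem:Mahdi}, which itself is just the citation to \citet[Lemma~8.5]{stoger2021small} plus the observation $\|\Overpar_t\|=\|\UperpTruth^\T X_t V_{t,\perp}\|\le\|X_t V_{t,\perp}\|$, and your reduction isolates exactly the two properties of $\hat M$ (symmetry and operator-norm closeness to $\Mtruth$) that the cited argument uses. Your explicit remark that $\Uspec$ must now be read as the top-$r_\star$ eigenspace of $\hat M$ (which remains independent of $G$, so the event $\Event$ still holds with high probability) is a detail the paper leaves implicit, and is correctly handled.
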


We are now ready to prove Lemma~\ref{lem:noisy-p1}.

\begin{proof}[Proof of Lemma~\ref{lem:noisy-p1}]
Recall that the generalized setting corresponds to $\hat M = \opAA(\Mtruth) + E$. 
The proof is mostly identical to the proof of Lemma~\ref{lem:p1}. Similar to that proof, we first need to verify the two assumptions in Lemma~\ref{lem:noisy-Mahdi}. The rest of the proof goes exactly the same, thus is omitted here.

\paragraph{Verifying assumption~\eqref{ineq:noisy_mahdi_p1_cond1}.}
By the RIP in (\ref{eqn:rip}), Lemma \ref{lem:spectral_rip}, the condition of $\delta$ in (\ref{eqn:delta-cond}), and the assumption~\eqref{eqn:aspt:E}, we have 
\begin{align}\label{ineq:noisy_opAA_bound}
\|\hat M - \Mtruth\| = \bigpnorm{(\id-\opAA)(\Mtruth) + E}{} 
& \leq \sqrt{r_\star}\delta\|\Mtruth\| + c_\sigma\kappa^{-C_\sigma}\|\Mtruth\|
\nonumber \\
& \leq c_\delta\kappa^{-( C_\delta -2)}\smin^2(\Xtruth) +  c_\sigma \kappa^{-(C_\sigma-2)}\smin^2(\Xtruth)
\nonumber 
\\
& \eqqcolon\gamma \smin^2(\Xtruth).
\end{align}
Here $\gamma=c_\delta\kappa^{-( C_\delta -2)} + c_\sigma \kappa^{-(C_\sigma - 2)} \leq c_{\ref{lem:Mahdi}}$, as  
$ c_\delta $ and $c_\sigma$ are assumed to be sufficiently small.

\paragraph{Verifying assumption~\eqref{eqn:noisy-Mahdi-phi}.}
By Weyl's inequality and (\ref{ineq:noisy_opAA_bound}), we have 
\begin{align*}
  \Big|s_j-1-\frac{\eta}{\lambda}\singval_j(\Mtruth)\Big| \leq \frac{\eta}{\lambda}\bigpnorm{\hat M - \Mtruth}{} \leq \frac{\eta}{\lambda}\gamma\smin^2(\Xtruth)
  \leq \frac{100(c_\delta + c_\sigma)}{c_\lambda}\eta,
\end{align*}
where the last inequality follows from the condition $\lambda \geq\frac{1}{100}c_\lambda\smin^2(\Xtruth)$. 
Furthermore, using the condition $\lambda \leq c_\lambda\smin^2(\Xtruth)$ assumed in \eqref{eqn:lambda-cond}, 
the above bound implies that, for some $C = C(c_\lambda, c_\sigma, c_\delta) > 0$,
\begin{subequations} 
\begin{align}
s_1 &\le 1+\frac{\eta}{\lambda}\|\Mtruth\|+\frac{100(c_\delta + c_\sigma)}{c_\lambda}\eta \leq 1+C\eta\kappa^6, 
\label{eqn:noisy-s1-bound}\\
s_{r_\star} &\geq 1 + \frac{\eta}{\lambda}\smin^2(\Xtruth) 
- \frac{100(c_\delta + c_\sigma)}{c_\lambda}\eta \geq 1 + \frac{\eta}{2\lambda / \smin^2(X_\star)}, 
\label{eqn:noisy-sr-bound-lower}\\
s_{r_\star} &\leq 1 + \frac{\eta}{\lambda}\smin^2(\Xtruth) 
+ \frac{100(c_\delta + c_\sigma)}{c_\lambda}\eta \leq 1 + \frac{2\eta}{\lambda / \smin^2(X_\star)},
\label{eqn:noisy-sr-bound-upper}\\
s_{r_\star+1}&\le 1 + \frac{100(c_\delta + c_\sigma)}{c_\lambda}\eta \leq 1 + \frac{\eta}{4c_\lambda},
\label{eqn:noisy-sr1-bound}
\end{align}
\end{subequations}
where we use the fact that $\singval_{r_\star+1}(\Mtruth)=0$, and $c_\delta + c_\sigma \leq 1/400$. 
The rest of the verification is the same as the verification of~\eqref{eqn:Mahdi-phi} in the proof of Lemma~\ref{lem:p1}.
\end{proof}
  

\subsubsection{Establishing the induction step}
Following the proof of the noiseless setting, we would like to show that Lemmas~\ref{lem:overpar-update},~\ref{lem:misalign-update},~\ref{lem:bounded-S} still hold in the generalized setting, which in turn relies entirely on Lemmas~\ref{lem:update-approx},~\ref{lem:S-surrogate},~\ref{lem:NS-surrogate}. Since Lemma~\ref{lem:S-surrogate} and Lemma~\ref{lem:NS-surrogate} are both corollaries of Lemma~\ref{lem:update-approx}, it suffices to prove the generalization of Lemma~\ref{lem:update-approx} in the generalized setting. 

\begin{lemma}[Generalization of Lemma~\ref{lem:update-approx}]
\label{lem:noisy-update-approx}
Assume the update equation of $X_t$ has the following form (cf.~\eqref{eqn:update-with-Delta}):
\[
  X_{t+1} = X_t - \eta(X_t X_t^\T - \Mtruth) X_t (X_t^\T X_t + \lambda I)^{-1} + \eta\Delta'_t X_t(X_t^\T X_t + \lambda I)^{-1},
\] 
where $\Delta'_t\in\reals^{n\times n}$ is some symmetric matrix satisfying $\|\Delta'_t\|\le c_{\ref{lem:Delta-bound}}\kappa^{-2C_\delta/3}\|\Xtruth\|^2$. 
For any $t$ such that $\EssS_t$ is invertible and~\eqref{subeq:condition-t-1} holds, the equations~\eqref{eqn:S-update-approx} and~\eqref{eqn:N-update-approx} hold, where error terms are bounded by~\eqref{eqn:Err-2-bound}--\eqref{eqn:Err-4-bound} and the following modifications of \eqref{eqn:Err-1-bound} and \eqref{eqn:Err-5-bound}:
\begin{subequations}
\begin{align}
\uinorm{\Err{a}_t} &\le 
2c_{\ref{lem:p1.5}}\kappa^{-4}\|\Xtruth\| \cdot \uinorm{\Misalign_t\EssS_t^{-1}\SigmaTruth}
+ 2\uinorm{\Utruth^\T\Delta'_t}, 
\\
\uinorm{\Err{e}_t} &\le 
2\uinorm{\Utruth^\T\Delta'_t} 
+ c_{\ref{lem:Delta-bound}}\kappa^{-5}\|\Xtruth\| \cdot  \uinorm{\Misalign_t\EssS_t^{-1}\SigmaTruth}.
\end{align}
\end{subequations}
\end{lemma}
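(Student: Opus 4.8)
The plan is to observe that Lemma~\ref{lem:noisy-update-approx} is proved by exactly the same chain of reasoning as Lemma~\ref{lem:update-approx}, with the matrix $\Delta_t$ (defined in \eqref{eq:defn-Delta}) systematically replaced by the perturbed quantity $\Delta'_t$. The starting point is the rewritten update \eqref{eqn:update-with-Delta}, which in the generalized setting reads
\[
  \myX_{t+1} = \myX_t - \eta(\myX_t\myX_t^\T-\Mtruth)\myX_t(\myX_t^\T \myX_t+\lambda I)^{-1}
  + \eta\Delta'_t \myX_t(\myX_t^\T \myX_t+\lambda I)^{-1}.
\]
Projecting onto $\Utruth$ and $\UperpTruth$ reproduces verbatim the decompositions \eqref{eqn:S-update-decomp} and \eqref{eqn:N-update-decomp} with $\Delta_t$ replaced by $\Delta'_t$. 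The auxiliary inverse-approximation results of Step 1 of the proof of Lemma~\ref{lem:update-approx} --- namely Lemma~\ref{prop:pre-approx0}, the algebraic identities \eqref{eqn:scale-factor-approx}--\eqref{eqn:scale-factor-approx-2}, and Lemma~\ref{prop:rearranged-inv-approx} --- do not involve $\Delta_t$ at all; they depend only on the smallness of $\|\Overpar_t\|$ and $\|\Misalign_t\EssS_t^{-1}\SigmaTruth\|$ guaranteed by \eqref{subeq:condition-t-1}, which is assumed here. Hence Lemma~\ref{lem:scaled-S-N-approx} carries over with $\Delta_t\mapsto\Delta'_t$ throughout.

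First I would plug the (unchanged) conclusions of Lemma~\ref{lem:scaled-S-N-approx} into the projected update equations to obtain \eqref{eqn:S-update-approx} and \eqref{eqn:N-update-approx}, where now the error terms $\Err{a}_t,\ldots,\Err{e}_t$ are given by the same expressions as in the proof of Lemma~\ref{lem:update-approx} but with every occurrence of $\Utruth^\T\Delta_t\Utruth$, $\Utruth^\T\Delta_t\UperpTruth$, $\UperpTruth^\T\Delta_t\Utruth$, $\UperpTruth^\T\Delta_t\UperpTruth$ replaced by the corresponding blocks of $\Delta'_t$. Tracking these substitutions through the norm bounds: in the bound for $\uinorm{\Err{a}_t}$, the term $\uinorm{\Utruth^\T\Delta_t}$ becomes $\uinorm{\Utruth^\T\Delta'_t}$, giving the stated modification; similarly in the bound for $\uinorm{\Err{e}_t}$. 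The bounds for $\Err{b}_t,\Err{c}_t,\Err{d}_t$ (i.e.\ \eqref{eqn:Err-2-bound}, \eqref{eqn:Err-3-bound}, \eqref{eqn:Err-4-bound}, \eqref{eqn:Err-2-bound-spectral}, \eqref{eqn:Err-4-bound-spectral}) involve $\Delta'_t$ only through terms of the form $\|\Delta'_t\|\cdot(\cdots)$, where the $(\cdots)$ factors are powers of $\|\Overpar_t\|$, $\|\Misalign_t\|$, and the step size; since the hypothesis $\|\Delta'_t\|\le c_{\ref{lem:Delta-bound}}\kappa^{-2C_\delta/3}\|\Xtruth\|^2$ is exactly the coarse bound \eqref{eqn:Delta-bound-coarse} that the original proof used for $\Delta_t$, these computations go through word for word and the bounds \eqref{eqn:Err-2-bound}--\eqref{eqn:Err-4-bound} and their spectral-norm counterparts are unaffected.

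The one point that requires a moment's care is that in the proof of Lemma~\ref{lem:update-approx} the bound \eqref{eqn:Delta-bound} of Lemma~\ref{lem:Delta-bound} is invoked to control $\|\Delta_t\|$ by a sum of three structured terms, and in a few places (e.g.\ in handling $\Err{b}_t$, where the $\lambda^{-2}(\|\Misalign_t\|+\|\Overpar_t\|)\|\Misalign_t\|\uinorm{\Overpar_t}$ contribution appears) the finer decomposition is used rather than just the coarse bound. However, inspecting that argument shows that what is actually needed is only $\|\Delta_t\|\le c_{\ref{lem:Delta-bound}}\kappa^{-2C_\delta/3}\|\Xtruth\|^2$ together with the separate bounds on $\|\Misalign_t\|$, $\|\Overpar_t\|$ from \eqref{subeq:condition-t-1}; the three-term structure of \eqref{eqn:Delta-bound} is never needed beyond deducing that coarse bound. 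Since $\|\Delta'_t\|$ satisfies precisely this coarse bound by hypothesis, every such step is valid with $\Delta_t$ replaced by $\Delta'_t$. Therefore the main (and essentially only) obstacle is the bookkeeping of verifying that no step in the original proof secretly exploited a property of $\Delta_t$ stronger than $\|\Delta_t\|\le c_{\ref{lem:Delta-bound}}\kappa^{-2C_\delta/3}\|\Xtruth\|^2$ and the presence of the explicit $\Utruth^\T\Delta_t$ factors; once this is confirmed, the proof of Lemma~\ref{lem:noisy-update-approx} is complete, and I would simply remark that the details are identical to those of Lemma~\ref{lem:update-approx} in Section~\ref{sec:proof-master-main} with $\Delta_t$ replaced by $\Delta'_t$ throughout.
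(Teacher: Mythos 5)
Your proposal is correct and matches the paper's treatment: the paper simply declares the proof verbatim to that of Lemma~\ref{lem:update-approx} with $\Delta_t$ replaced by $\Delta'_t$, relying exactly on the observation that the original argument uses $\Delta_t$ only through its explicit blocks in the error terms and through the coarse bound $\|\Delta_t\|\le c_{\ref{lem:Delta-bound}}\kappa^{-2C_\delta/3}\|\Xtruth\|^2$, which $\Delta'_t$ satisfies by hypothesis. Your additional check that the three-term decomposition of Lemma~\ref{lem:Delta-bound} is never needed beyond this coarse bound is the right sanity verification and is consistent with the paper.
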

The proof is verbatim to Lemma~\ref{lem:update-approx}. 
Note that the noiseless setting corresponds to the special case $\Delta'_t = \Delta_t = (\mathcal I - \opAA)(\Mtruth)$, while the generalized setting corresponds to $\Delta'_t = \Delta_t - E = (\mathcal I - \opAA)(\Mtruth) - E$. To show that Lemma~\ref{lem:noisy-update-approx} is applicable to the generalized setting we need to verify that this choice of $\Delta'_t$ guarantees the smallness of $\|\Delta'_t\|$, which is proved in the following lemma.
\begin{lemma}[Generalization of~\eqref{eqn:Delta-bound-coarse} in Lemma~\ref{lem:Delta-bound}]
\label{lem:noisy-Delta-bound}
Under the same setting as Theorem~\ref{lem:unified-noisy}, for any $t$ such that \eqref{subeq:condition-t-1} holds, we have
\[
\|\Delta'_t\| \le c_{\ref{lem:Delta-bound}}\kappa^{-2C_\delta/ 3}\|\Xtruth\|^2.
\]
\end{lemma}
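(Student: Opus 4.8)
The plan is to split $\Delta'_t=\Delta_t-E$, where $\Delta_t=(\mathcal{I}-\opAA)(X_tX_t^\T-\Mtruth)$ is precisely the quantity introduced in \eqref{eq:defn-Delta}, and then to bound each of the two terms by $\tfrac12 c_{\ref{lem:Delta-bound}}\kappa^{-2C_\delta/3}\|\Xtruth\|^2$, so that the triangle inequality $\|\Delta'_t\|\le\|\Delta_t\|+\|E\|$ delivers the claim.

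First I would observe that Lemma~\ref{lem:Delta-bound} applies here without any change. Indeed, its proof uses only the purely algebraic decomposition of $X_t$ from Proposition~\ref{prop:decomposition} together with the structural estimates \eqref{subeq:condition-t-1}, and never refers to the specific update rule generating $X_t$; hence it is valid for the iterates of the generalized recursion \eqref{eqn:unified-noisy-update} as well. Consequently, on any iteration $t$ for which \eqref{subeq:condition-t-1} holds, we still have
\[
  \|\Delta_t\|\le 16(C_{\ref{lem:p1.5}.a}+1)^2 c_\delta\,\kappa^{-2C_\delta/3}\|\Xtruth\|^2=\tfrac12\,c_{\ref{lem:Delta-bound}}\,\kappa^{-2C_\delta/3}\|\Xtruth\|^2,
\]
recalling $c_{\ref{lem:Delta-bound}}=32(C_{\ref{lem:p1.5}.a}+1)^2 c_\delta$.

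Next I would control $E$ via the assumption \eqref{eqn:aspt:E}: since $\|\Mtruth\|=\|\Xtruth\|^2$, this gives $\|E\|\le c_\sigma\kappa^{-C_\sigma}\|\Xtruth\|^2$. Choosing the universal constant $C_\sigma$ large enough that $C_\sigma\ge 2C_\delta/3$ yields $\kappa^{-C_\sigma}\le\kappa^{-2C_\delta/3}$, and choosing the (sufficiently small) universal constant $c_\sigma\le\tfrac12 c_{\ref{lem:Delta-bound}}$ then gives $\|E\|\le\tfrac12 c_{\ref{lem:Delta-bound}}\kappa^{-2C_\delta/3}\|\Xtruth\|^2$. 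Both requirements on $(c_\sigma,C_\sigma)$ depend only on $c_\delta$ and $c_\lambda$ (through $C_{\ref{lem:p1.5}.a}\lesssim c_\lambda^{-1/2}$ and $c_{\ref{lem:Delta-bound}}\lesssim c_\delta/c_\lambda^3$), which are themselves universal, so $c_\sigma$ and $C_\sigma$ remain universal constants. Adding the two bounds finishes the proof.

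Since this is just a two-term triangle inequality, there is no genuine obstacle. The only care needed is bookkeeping: verifying that the smallness of $c_\sigma$ and largeness of $C_\sigma$ required here are compatible with (and no stronger than) the constraints already imposed on these constants elsewhere in the proof of Theorem~\ref{lem:unified-noisy}, and confirming, as above, that nothing in the proof of Lemma~\ref{lem:Delta-bound} secretly invokes the noiseless update equation.
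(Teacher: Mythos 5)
Your proof is correct and follows essentially the same route as the paper's: triangle inequality on $\Delta'_t=\Delta_t-E$, the bound on $\|\Delta_t\|$ from Lemma~\ref{lem:Delta-bound} (whose proof indeed uses only Proposition~\ref{prop:decomposition} and \eqref{subeq:condition-t-1}, not the update rule), and the assumption \eqref{eqn:aspt:E} with $C_\sigma$ large and $c_\sigma$ small. Your constant choices ($C_\sigma\ge 2C_\delta/3$, $c_\sigma\le\tfrac12 c_{\ref{lem:Delta-bound}}$) are slightly weaker than, and compatible with, the paper's ($C_\sigma\ge C_\delta$, $c_\sigma\le c_\delta$).
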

\begin{proof}
Combining \eqref{eqn:Delta-bound-coarse-primitive} in the proof of Lemma~\ref{lem:Delta-bound} the assumption $\|E\|\le c_\sigma\kappa^{-C_\sigma}\|\Mtruth\|$ in~\eqref{eqn:aspt:E}, we obtain
\begin{align*}
  \|\Delta'_t\| 
  & \le 16\delta\sqrt{r_\star}\kappa^2(C_{\ref{lem:p1.5}.a}^2+1)\|\Xtruth\|^2 + c_\sigma \kappa^{-C_\sigma}\|\Xtruth\|^2 
  \\
  & \le (16c_\delta\kappa^{-C_\delta +2 }(C_{\ref{lem:p1.5}.a}^2+1)^2 + c_\sigma\kappa^{-C_\sigma})\|\Xtruth\|^2
  \\
  & \le c_{\ref{lem:Delta-bound}}\kappa^{-2C_\delta/ 3}\|\Xtruth\|^2,
\end{align*}
if we choose $C_\sigma \ge C_\delta$, $c_\sigma\le c_\delta$, and note that $c_{\ref{lem:Delta-bound}}=32(C_{\ref{lem:p1.5}.a}+1)^2 c_\delta$ as defined in Lemma~\ref{lem:Delta-bound} (please refer to the argument after~\eqref{eqn:Delta-bound-coarse-primitive} for details).
\end{proof}

With these fundamental results in hand we can follow the same arguments as in the noiseless case to prove the following generalization of the lemmas in Appendix~\ref{sec:phase-1-induction}.
\begin{lemma}
\label{lem:noisy-misalign-update}
The conclusions of Lemmas~\ref{lem:overpar-update},~\ref{lem:bounded-S} still hold in the setting of Theorem~\ref{lem:unified-noisy}. Moreover, the following modification of Lemma~\ref{lem:misalign-update} holds in the setting of Theorem~\ref{lem:unified-noisy}. 
For any $t$ such that \eqref{subeq:condition-t-1} holds, 
setting $Z_t=\SigmaTruth^{-1}(\EssS_t\EssS_t^\T+\lambda I)\SigmaTruth^{-1}$, 
there exists some universal constant $C_{\ref{lem:misalign-update}}>0$ such that
\begin{equation}\label{eqn:noisy-misalign-update}
  \uinorm{\Misalign_{t+1}\EssS_{t+1}^{-1}\SigmaTruth}
  \le \left(1-\frac{\eta}{3(\|Z_t\|+\eta)}\right)\uinorm{\Misalign_t\EssS_t^{-1}\SigmaTruth}
    + \eta \frac{C_{\ref{lem:misalign-update}}\kappa^6}{c_\lambda\|\Xtruth\|}\uinorm{\Utruth^\T\Delta'_t}
    + \eta \left(\frac{\|\Overpar_t\|}{\smin(\EssS_t)}\right)^{1/2}\|\Xtruth\|.
\end{equation}
In particular, if $c_{\ref{lem:p1.5}}= 100C_{\ref{lem:misalign-update}}(C_{\ref{lem:p1.5}.a}+1)^4c_\delta/c_\lambda$, then 
$\|\Misalign_{t}\EssS_{t}^{-1}\SigmaTruth\|\le c_{\ref{lem:p1.5}}\kappa^{-C_\delta/2}\|\Xtruth\|$ implies
$\|\Misalign_{t+1}\EssS_{t+1}^{-1}\SigmaTruth\|\le c_{\ref{lem:p1.5}}\kappa^{-C_\delta/2}\|\Xtruth\|$.
\end{lemma}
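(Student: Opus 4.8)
The plan is to observe that Lemma~\ref{lem:noisy-misalign-update} is a ``the proof carries over verbatim'' statement, the only change being the cosmetic substitution $\Delta_t \mapsto \Delta'_t$. Indeed, the noiseless proofs of Lemma~\ref{lem:overpar-update}, Lemma~\ref{lem:misalign-update}, and Lemma~\ref{lem:bounded-S} use as their sole inputs (i) the approximate one-step identities of Lemma~\ref{lem:update-approx} together with its corollaries Lemma~\ref{lem:S-surrogate} and Lemma~\ref{lem:NS-surrogate}, and (ii) the coarse bound~\eqref{eqn:Delta-bound-coarse} on $\|\Delta_t\|$. In the setting of Theorem~\ref{lem:unified-noisy}, item (i) is furnished by Lemma~\ref{lem:noisy-update-approx}, together with the generalizations of Lemma~\ref{lem:S-surrogate} and Lemma~\ref{lem:NS-surrogate} obtained from Lemma~\ref{lem:noisy-update-approx} by precisely the same corollary arguments; the net effect is that every occurrence of $\Utruth^\T\Delta_t$ is replaced by $\Utruth^\T\Delta'_t$, while all error bounds that depend only on $\|\Overpar_t\|$ and $\|\Misalign_t\EssS_t^{-1}\SigmaTruth\|$ are unchanged. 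Item (ii) is furnished by Lemma~\ref{lem:noisy-Delta-bound}, which gives the identical numerical bound $\|\Delta'_t\| \le c_{\ref{lem:Delta-bound}}\kappa^{-2C_\delta/3}\|\Xtruth\|^2$. I would therefore re-run each of the three noiseless arguments line by line with these substitutions.

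Concretely, for Lemma~\ref{lem:overpar-update} the proof in Appendix~\ref{pf:lem:overpar-update} bounds $\|\Err{b}_t\|$ and $\|\Err{d}_t\|$ through the spectral-norm estimates~\eqref{eqn:Err-2-bound-spectral}--\eqref{eqn:Err-4-bound-spectral}, which involve only $\|\Overpar_t\|$ and are thus untouched, and controls $\|\Noise_{t+1}\SVDV_t(\Signal_{t+1}\SVDV_t)^{-1}\|$ via the generalized Lemma~\ref{lem:NS-surrogate}; the only $\Delta$-dependence there enters as $\uinorm{\Utruth^\T\Delta'_t}$, which stays negligible by Lemma~\ref{lem:noisy-Delta-bound}, so~\eqref{eqn:NSinv-tmp} and hence the conclusion of Lemma~\ref{lem:overpar-update} survive unchanged. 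The proof of Lemma~\ref{lem:bounded-S} uses only the bound $\|\Err{\ref{lem:S-surrogate}}_t\|\le 1/4$ from the generalized Lemma~\ref{lem:S-surrogate} (whose verification again needs only the smallness of $\|\Utruth^\T\Delta'_t\|$ and $\|\Delta'_t\|$) plus elementary singular-value manipulations, so it too carries over with the same constant $C_{\ref{lem:p1.5}.a}$.

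For the misalignment estimate I would follow Appendix~\ref{pf:lem:misalign-update} verbatim: the derivation of the key identity for $\Misalign_{t+1}\EssS_{t+1}^{-1}\SigmaTruth$ and the telescoping bound leading to~\eqref{eqn:misalign-update} involve $\Utruth^\T\Delta_t$ only through the error matrices $\Err{\ref{lem:S-surrogate}}_t$, $\Err{\ref{lem:NS-surrogate}.a}_t$, $\Err{\ref{lem:NS-surrogate}.b}_t$ of the generalized Lemmas~\ref{lem:S-surrogate}--\ref{lem:NS-surrogate}, so replacing $\Utruth^\T\Delta_t$ by $\Utruth^\T\Delta'_t$ throughout yields exactly~\eqref{eqn:noisy-misalign-update}. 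The ``in particular'' assertion then follows by repeating the simplification~\eqref{eqn:misalign-update-simplified} under the standing hypotheses~\eqref{subeq:condition-t-1}: I bound $\|\Utruth^\T\Delta'_t\|\le\|\Delta'_t\|\le c_{\ref{lem:Delta-bound}}\kappa^{-2C_\delta/3}\|\Xtruth\|^2$ via Lemma~\ref{lem:noisy-Delta-bound} (replacing the appeal to Lemma~\ref{lem:Delta-bound}), use~\eqref{eqn:p1.5-3}--\eqref{eqn:p1.5-4}, and conclude the invariance $\|\Misalign_t\EssS_t^{-1}\SigmaTruth\|\le c_{\ref{lem:p1.5}}\kappa^{-C_\delta/2}\|\Xtruth\|\,\Rightarrow\,\|\Misalign_{t+1}\EssS_{t+1}^{-1}\SigmaTruth\|\le c_{\ref{lem:p1.5}}\kappa^{-C_\delta/2}\|\Xtruth\|$ with the same choice of $c_{\ref{lem:p1.5}}$ and $C_{\ref{lem:misalign-update}}$.

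The main obstacle here is purely bookkeeping: one must verify that in each noiseless proof every appearance of $\Delta_t$ is either bounded coarsely by~\eqref{eqn:Delta-bound-coarse} --- in which case the conclusion is literally unchanged, since Lemma~\ref{lem:noisy-Delta-bound} supplies the same bound for $\Delta'_t$ --- or carried explicitly into the final estimate as $\uinorm{\Utruth^\T\Delta_t}$, which then must be systematically rewritten as $\uinorm{\Utruth^\T\Delta'_t}$; the latter occurs in exactly one place, the right-hand side of~\eqref{eqn:misalign-update}, producing~\eqref{eqn:noisy-misalign-update}. No genuinely new estimate is required.
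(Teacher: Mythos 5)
Your proposal is correct and matches the paper's treatment exactly: the paper itself offers no separate proof of this lemma, stating only that the noiseless arguments carry over once Lemma~\ref{lem:noisy-update-approx} and Lemma~\ref{lem:noisy-Delta-bound} are in place, which is precisely the substitution-and-bookkeeping argument you spell out. Your line-by-line check of where $\Delta_t$ enters each of the three noiseless proofs is a more explicit version of the same reasoning.
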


By the arguments following Lemma~\ref{lem:bounded-S}, 
the above results are sufficient to prove the induction step, thereby completing the proof of Lemma~\ref{lem:p1.5} in the generalized setting.


\subsection{Generalization of Phase II}
We will prove Lemma~\ref{lem:p2} and Lemma~\ref{lem:p2.5}, the main results of Phase II, in the generalized setting. 
\begin{lemma}
\label{lem:noisy-p2}
The conclusions of Lemma~\ref{lem:p2} and Lemma~\ref{lem:p2.5}, along with Corollary~\ref{cor:p2} and Corollary~\ref{cor:p2.5}, still hold under the generalized setting of Theorem~\ref{lem:unified-noisy}.
\end{lemma}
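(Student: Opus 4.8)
The plan is to observe that the noise enters the Phase~II analysis only through the coarse perturbation bound, which has already been secured for the generalized setting, so that the proofs of Lemma~\ref{lem:p2}, Lemma~\ref{lem:p2.5} and their corollaries carry over essentially verbatim. In the setting of Theorem~\ref{lem:unified-noisy} the update rule reads $X_{t+1} = X_t - \eta(X_tX_t^\T-\Mtruth)X_t(X_t^\T X_t+\lambda I)^{-1} + \eta\Delta'_tX_t(X_t^\T X_t+\lambda I)^{-1}$ with $\Delta'_t = \Delta_t - E$, and Lemma~\ref{lem:noisy-Delta-bound} already gives $\|\Delta'_t\|\le c_{\ref{lem:Delta-bound}}\kappa^{-2C_\delta/3}\|\Xtruth\|^2$ on the event that \eqref{subeq:condition-t-1} holds. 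Hence Lemma~\ref{lem:noisy-update-approx} provides the approximate update identities \eqref{eqn:S-update-approx}--\eqref{eqn:N-update-approx} with $\Delta'_t$ playing the role of $\Delta_t$, and, since Lemma~\ref{lem:S-surrogate} and Lemma~\ref{lem:NS-surrogate} are pure consequences of Lemma~\ref{lem:update-approx}, their statements remain valid verbatim; in particular the bound $\|\Err{\ref{lem:S-surrogate}}_t\|\le \frac{1}{200(C_{\ref{lem:p1.5}.a}+1)^4\kappa^5}$ persists, because its proof uses nothing about $\Delta_t$ beyond that same coarse bound.

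With these ingredients in hand, I would first re-derive Lemma~\ref{lem:p2}. Its proof invokes only (i) $\|\SVDV_{t+1}^\T\SVDV_t\|\le 1$, (ii) the $Q=0$ instance of Lemma~\ref{lem:S-surrogate}, and (iii) the linear-algebra fact Lemma~\ref{lem:la-aux}, none of which involves $E$; thus the two displays of Lemma~\ref{lem:p2} follow word for word. Corollary~\ref{cor:p2} then follows from Lemma~\ref{lem:p2} together with the base-case estimate \eqref{eqn:p1.5-0} supplied by Lemma~\ref{lem:noisy-p1.5}, via the identical two-step argument (exponential growth of $\smin((\SigmaTruth^2+\lambda I)^{-1/2}\EssS_t)$ up to a midpoint $t_2$, then induction over $t\in[t_2,\Tmax]$).

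Next I would treat Lemma~\ref{lem:p2.5}. Running the proof of Lemma~\ref{lem:local-unify} with $\Delta'_t$ in place of $\Delta_t$ yields the recursion \eqref{eqn:local-Gamma-bound} with $\uinorm{\Utruth^\T\Delta'_t}$ on the right-hand side. Since $\|\Utruth^\T\Delta'_t\|\le\sqrt{r_\star}\|\Delta'_t\|$ is controlled by the same quantity $c_{\ref{lem:Delta-bound}}\kappa^{-2C_\delta/3}\|\Xtruth\|^2$ as in the noiseless case (Lemma~\ref{lem:noisy-Delta-bound}), the $\Delta'_t$-term is absorbed into $\frac{1}{100}\eta$ exactly as before, giving the conclusion of Lemma~\ref{lem:p2.5}. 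Corollary~\ref{cor:p2.5} is then immediate from Lemma~\ref{lem:p2.5}, the bound \eqref{eqn:p1.5-4} of Lemma~\ref{lem:noisy-p1.5}, and Corollary~\ref{cor:p2}, by the same telescoping argument.

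The only point requiring care --- and the closest thing to an obstacle --- is to confirm that nowhere in Phase~II is $\Delta_t$ controlled in a strictly finer, norm-by-norm manner than in \eqref{eqn:Delta-bound-coarse}. The recursion coming from Lemma~\ref{lem:local-unify} does formally carry a term $\uinorm{\Utruth^\T\Delta_t}$, but in its application to Lemma~\ref{lem:p2.5} only the operator norm together with the coarse bound is used, so the substitution $\Delta_t\mapsto\Delta'_t$ is harmless; the genuinely unitarily-invariant control of $\Utruth^\T\Delta_t$ is needed only in Phase~III, where it is addressed separately (cf.~\eqref{eqn:p3-aux-noisy}). Collecting these observations establishes Lemma~\ref{lem:p2}, Lemma~\ref{lem:p2.5}, Corollary~\ref{cor:p2} and Corollary~\ref{cor:p2.5} under the hypotheses of Theorem~\ref{lem:unified-noisy}.
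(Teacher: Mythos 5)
Your proposal is correct and takes essentially the same route as the paper: the paper likewise reduces everything to the generalized update-approximation lemma (Lemma~\ref{lem:noisy-update-approx}), notes that Lemmas~\ref{lem:S-surrogate} and~\ref{lem:NS-surrogate} carry over as its corollaries, and singles out Lemma~\ref{lem:local-unify} as the one statement needing the substitution $\Delta_t\mapsto\Delta'_t$ (yielding Lemma~\ref{lem:noisy-local-unify}), after which the coarse bound on $\|\Delta'_t\|$ absorbs the noise exactly as you describe. Your closing observation---that Phase~II never needs finer-than-coarse control of $\Delta_t$, unlike Phase~III---is precisely the point the paper's argument implicitly relies on.
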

Tracking the proof of Phase II in Appendix~\ref{sec:proof_phaseII}, one may verify that all proofs there hold verbatim in the generalized setting, with Lemma~\ref{lem:noisy-update-approx} in place of Lemma~\ref{lem:update-approx} (the proof also used Lemmas~\ref{lem:S-surrogate},~\ref{lem:NS-surrogate}, which are corollaries of Lemma~\ref{lem:update-approx}, hence hold in the generalized setting given Lemma~\ref{lem:noisy-update-approx}), except for Lemma~\ref{lem:local-unify}, which should be substituted by the following generalization:
\begin{lemma}\label{lem:noisy-local-unify}
  Under the same setting as Theorem~\ref{lem:unified-noisy}, for any $t : t_2\le t\le\Tmax$, one has
  \begin{equation}\label{eqn:noisy-local-Gamma-bound}
    \uinorm{\Gamma_{t+1}}
    \le (1-\eta)\uinorm{\Gamma_t}
    +\eta\frac{C_{\ref{lem:local-unify}}\kappa^4}{\|\Xtruth\|^{2}}\uinorm{\Utruth^\T\Delta'_t}
    +\frac{1}{16}\eta\|\Xtruth\|^{-1}\uinorm{\Misalign_t\EssS_t^{-1}\SigmaTruth}
    +\eta\left(\frac{\|\Overpar_t\|}{\|\Xtruth\|}\right)^{7/12},
  \end{equation}
  where $C_{\ref{lem:local-unify}}\lesssim c_\lambda^{-1/2}$ is some positive constant and $\uinorm{\cdot}$ can either be the Frobenius norm or the spectral norm.
\end{lemma}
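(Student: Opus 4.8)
The plan is to reproduce the proof of Lemma~\ref{lem:local-unify} essentially verbatim, with the single modification that every occurrence of the noiseless residual $\Delta_t$ is replaced by $\Delta'_t = \Delta_t - E$, and with the noisy analogues of the underlying lemmas invoked in place of their noiseless versions. First I would apply Lemma~\ref{lem:noisy-update-approx} (instead of Lemma~\ref{lem:update-approx}) to rewrite the signal update~\eqref{eqn:S-update-approx} exactly as in~\eqref{eq:pineapple}, introducing the error term $\Err{g}_t = \Err{a}_t(\EssS_t\EssS_t^\T+\lambda I)^{-1}\EssS_t\SVDV_t^\T + \Err{b}_t$. Using $\smin(\EssS_t)\gtrsim\smin(\Xtruth)$ on $[t_2,\Tmax]$ from Corollary~\ref{cor:p2}, the modified bound on $\uinorm{\Err{a}_t}$ from Lemma~\ref{lem:noisy-update-approx} (which now reads $\uinorm{\Err{a}_t}\le 2c_{\ref{lem:p1.5}}\kappa^{-4}\|\Xtruth\|\,\uinorm{\Misalign_t\EssS_t^{-1}\SigmaTruth} + 2\uinorm{\Utruth^\T\Delta'_t}$), and~\eqref{eqn:Err-2-bound} for $\Err{b}_t$, I would obtain the analogue of~\eqref{eqn:Err-S2-bound} with $\uinorm{\Utruth^\T\Delta_t}$ replaced by $\uinorm{\Utruth^\T\Delta'_t}$ throughout.

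Next I would carry out Steps~1--3 of the proof of Lemma~\ref{lem:local-unify} unchanged: form $A_t$, expand $\EssS_{t+1}\EssS_{t+1}^\T = A_tA_t^\T + \eta\Err{f}_t$, map both sides through $(\cdot)\mapsto\SigmaTruth^{-1}(\cdot)\SigmaTruth^{-1} - I$ to reach the recursion~\eqref{eqn:local-update}, simplify the factor involving $\lambda\SigmaTruth^{-2}$ by Lemma~\ref{prop:inv} to land on~\eqref{eqn:Gamma-update-approx}, and then bound the three error contributions $\Err{f}_t$, $\Err{h}_t$, and $\eta^2\Gamma_t^2(1+\Gamma_t)^{-1}$. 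The bounds~\eqref{eqn:local-err-2} on $\Err{h}_t$ and~\eqref{eqn:local-err-1} on the quadratic term do not involve the residual at all, so they carry over verbatim. For $\Err{f}_t$ I would use $\uinorm{\Err{f}_t}\lesssim C_{\ref{lem:p1.5}.a}\kappa\|\Xtruth\|\,\uinorm{\Err{g}_t}$, which requires only the crude estimate $\|\Err{g}_t\|\le\kappa\|\Xtruth\|$; this holds because $\|\Utruth^\T\Delta'_t\|\le\|\Delta'_t\|$ is uniformly small by Lemma~\ref{lem:noisy-Delta-bound} ($\|\Delta'_t\|\le c_{\ref{lem:Delta-bound}}\kappa^{-2C_\delta/3}\|\Xtruth\|^2$), together with~\eqref{eqn:p1.5-3} and~\eqref{eqn:p1.5-2-var}. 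Collecting the three contributions with the same constant $C_{\ref{lem:local-unify}}\lesssim c_\lambda^{-1/2}$ as before, and absorbing the residual $\|\Overpar_t\|^{3/4}\|\Xtruth\|^{1/4}$ term into $\eta(\|\Overpar_t\|/\|\Xtruth\|)^{7/12}$ via~\eqref{eqn:p1.5-2-var}, yields the claimed inequality~\eqref{eqn:noisy-local-Gamma-bound}.

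Since the argument is structurally identical, there is no substantive obstacle; the only point that genuinely needs checking is that the residual $\Delta'_t$ still obeys the same smallness bound that $\Delta_t$ did in the noiseless analysis, and this is exactly Lemma~\ref{lem:noisy-Delta-bound}, which follows from $\|\Delta'_t\|\le\|\Delta_t\| + \|E\|$, the bound~\eqref{eqn:Delta-bound-coarse-primitive} on $\|\Delta_t\|$, and the noise assumption~\eqref{eqn:aspt:E} after choosing $C_\sigma\ge C_\delta$ and $c_\sigma\le c_\delta$. The remaining work is purely bookkeeping: verifying at each step that it is $\uinorm{\Utruth^\T\Delta'_t}$ (rather than $\uinorm{\Utruth^\T\Delta_t}$) that appears on the right-hand side, which is precisely the form in which Lemma~\ref{lem:noisy-update-approx} states its error bounds, so that the right-hand side of~\eqref{eqn:noisy-local-Gamma-bound} comes out with $\uinorm{\Utruth^\T\Delta'_t}$ as stated.
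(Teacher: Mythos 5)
Your proposal is correct and matches the paper's intent exactly: the paper itself states that the proof of this lemma is identical to that of Lemma~\ref{lem:local-unify} with $\Delta_t$ replaced by $\Delta'_t$, which is precisely the substitution you carry out, and you correctly identify the only two points that need re-verification (the modified error bounds from Lemma~\ref{lem:noisy-update-approx} and the smallness of $\|\Delta'_t\|$ from Lemma~\ref{lem:noisy-Delta-bound}). No gaps.
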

The proof is identical to that of Lemma~\ref{lem:local-unify}, thus is omitted here. Following the proof in Appendix~\ref{sec:proof_phaseII}, these generalized results are sufficient to prove Lemma~\ref{lem:noisy-p2}, thereby completing the proof of Phase II in the generalized setting.

\subsection{Generalization of Phase III}
Our goal is to prove the following modification of Lemma~\ref{lem:p3} in the generalized setting.
\begin{lemma}[Generalization of Lemma~\ref{lem:p3}]
\label{lem:noisy-p3}
Under the same setting as Theorem~\ref{lem:unified-noisy}, there exists some universal constant $c_{\ref{lem:noisy-p3}}>0$ such that
for any $t: t_3\le t\le\Tmax$, with $\uinorm{\cdot}$ taken to be the operator norm $\|\cdot\|$ or the Frobenius norm $\|\cdot\|_{\fro}$, we have
\begin{equation}
\label{eqn:lem:noisy-p3-primal}
  \uinorm{\myX_{t} \myX_{t}^\T-\Mtruth}
  \le (1-c_{\ref{lem:noisy-p3}}\eta)^{t-t_3}r_\star\|\Mtruth\|
  + c_{\ref{lem:noisy-p3}}^{-1}\kappa^4\uinorm{\Utruth^\T E}
  +8c_{\ref{lem:noisy-p3}}^{-1}\|\Mtruth\|
  \max_{t_3\le\tau\le t}\left(\frac{\|\Overpar_{\tau}\|}{\|\Xtruth\|}\right)^{1/2}.
\end{equation}
In particular, there exists an iteration number 
$t_4: t_3 \leq t_4 \leq t_3 + \Tcvg / 16$ such that 
for any $t\in[t_4, \Tmax]$, we have
\begin{equation}\label{eqn:lem:noisy-p3-final}
  \uinorm{\myX_t\myX_t^\T - \Mtruth}
  \le \max(\alpha^{1/3}\|\Xtruth\|^{5/3}, c_{\ref{lem:noisy-p3}}^{-1}\kappa^4\uinorm{\Utruth^\T E})
  \le \max(\epsilon\|\Mtruth\|, c_{\ref{lem:noisy-p3}}^{-1}\kappa^4\uinorm{\Utruth^\T E}).
\end{equation}
Here, $\epsilon$ and $\alpha$ are as stated in Theorem \ref{thm:main}.
\end{lemma}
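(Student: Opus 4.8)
The plan is to run the noiseless Phase~III argument of Appendix~\ref{sec:pf:lem:p3} essentially verbatim, with $\Delta_t$ replaced throughout by $\Delta_t' = \Delta_t - E$, and with the Frobenius norm promoted to an arbitrary unitarily invariant norm $\uinorm{\cdot}\in\{\|\cdot\|,\|\cdot\|_{\fro}\}$ (this dual-norm generality is precisely what produces the two bounds in \eqref{eqn:lem:noisy-p3-primal}). First I would observe that the loss decomposition Lemma~\ref{lem:loss-decomp} extends without change to any such $\uinorm{\cdot}$, since its proof from \eqref{eqn:loss-decomp-primal} uses only the triangle inequality and submultiplicativity; hence it suffices to track the scaled signal error $\uinorm{\Gamma_t}$ (with $\Gamma_t$ as in \eqref{eqn:Gamma-def}), the scaled misalignment error $\|\Xtruth\|^{-1}\uinorm{\Misalign_t\EssS_t^{-1}\SigmaTruth}$, and the overparametrization error $\|\Overpar_t\|$. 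The last is kept tiny for all $t\le\Tmax$ by \eqref{eqn:p1.5-2-var} of Lemma~\ref{lem:noisy-p1.5}, and the Phase~II conclusions (Lemma~\ref{lem:noisy-p2}, in particular the analog of Corollary~\ref{cor:p2.5}) guarantee we enter Phase~III with $\uinorm{\Gamma_{t_3}}$ and $\|\Xtruth\|^{-1}\uinorm{\Misalign_{t_3}\EssS_{t_3}^{-1}\SigmaTruth}$ both $O(\sqrt{r_\star})$.

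The heart of the matter is the generalized one-step coupled recursion \eqref{eqn:p3-aux-noisy}. I would obtain it by summing the scaled-signal recursion \eqref{eqn:noisy-local-Gamma-bound} of Lemma~\ref{lem:noisy-local-unify} with the $\uinorm{\cdot}$-version of the scaled-misalignment recursion \eqref{eqn:noisy-misalign-update} (whose $Z_t$ satisfies $\|Z_t\|\le 2$ in the Phase~III regime, giving contraction rate $1-\eta/8$). This yields contraction at rate $1-\eta/10$ of $\uinorm{\Gamma_t}+\|\Xtruth\|^{-1}\uinorm{\Misalign_t\EssS_t^{-1}\SigmaTruth}$ up to the additive terms $\eta C\kappa^4\|\Xtruth\|^{-2}\uinorm{\Utruth^\T\Delta_t'}$ and $\eta(\|\Overpar_t\|/\|\Xtruth\|)^{7/12}$. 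To dispose of $\uinorm{\Utruth^\T\Delta_t'}$ I split $\uinorm{\Utruth^\T\Delta_t'}\le\uinorm{\Utruth^\T\Delta_t}+\uinorm{\Utruth^\T E}$ and bound the first piece exactly as in the noiseless proof: by \eqref{eqn:Delta-bound} of Lemma~\ref{lem:Delta-bound} together with \eqref{eqn:delta-cond} and Lemma~\ref{lem:noisy-p1.5}, $\uinorm{\Utruth^\T\Delta_t}\lesssim c_\delta\kappa^{-4}\|\Xtruth\|^2\big(\uinorm{\Gamma_t}+\|\Xtruth\|^{-1}\uinorm{\Misalign_t\EssS_t^{-1}\SigmaTruth}+(\|\Overpar_t\|/\|\Xtruth\|)^{2/3}\big)$, which for small $c_\delta$ is absorbed into the $1-\eta/10$ factor; the surviving piece $\eta C\kappa^4\|\Xtruth\|^{-2}\uinorm{\Utruth^\T E}$ is the only new term relative to the noiseless recursion \eqref{eqn:p3-aux}.

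Then I would iterate \eqref{eqn:p3-aux-noisy} from $t_3$. The geometric series converts the per-step contribution into a standing floor of order $\kappa^4\|\Xtruth\|^{-2}\uinorm{\Utruth^\T E}$, while the initial value at $t_3$ is $O(\sqrt{r_\star})$ (from the analog of \eqref{eqn:cor:p2.5} and from \eqref{eqn:p1.5-3}); together with the tiny $\|\Overpar_\tau\|$ this yields, for all $t\in[t_3,\Tmax]$,
\[
\uinorm{\Gamma_t}+\|\Xtruth\|^{-1}\uinorm{\Misalign_t\EssS_t^{-1}\SigmaTruth} \le \big(1-\tfrac{\eta}{10}\big)^{t-t_3}\,O(\sqrt{r_\star}) + O(\kappa^4)\|\Xtruth\|^{-2}\uinorm{\Utruth^\T E} + O(1)\!\!\max_{t_3\le\tau\le t}\!\!\Big(\tfrac{\|\Overpar_\tau\|}{\|\Xtruth\|}\Big)^{1/2}.
\]
Feeding this into the generalized loss decomposition (and multiplying the bracket by $4\|\Xtruth\|^2$) gives \eqref{eqn:lem:noisy-p3-primal}. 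For \eqref{eqn:lem:noisy-p3-final} I would, exactly as in the noiseless proof, use \eqref{eqn:alpha-cond} to pick $t_4$ with $t_3\le t_4\le t_3+\Tcvg/16$ so that $(1-c_{\ref{lem:noisy-p3}}\eta)^{t_4-t_3}\le(\alpha/\|\Xtruth\|)^2\le\tfrac{1}{2\sqrt{r_\star}}(\alpha/\|\Xtruth\|)^{1/3}$, while \eqref{eqn:p1.5-2-var} makes the overparametrization term $\le\tfrac12(\alpha/\|\Xtruth\|)^{1/3}$; what remains is $\max\big(\alpha^{1/3}\|\Xtruth\|^{5/3},\,c_{\ref{lem:noisy-p3}}^{-1}\kappa^4\uinorm{\Utruth^\T E}\big)\le\max\big(\epsilon\|\Mtruth\|,\,c_{\ref{lem:noisy-p3}}^{-1}\kappa^4\uinorm{\Utruth^\T E}\big)$.

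The argument is not conceptually harder than the noiseless one; the only genuine obstacle is bookkeeping. One must check that \eqref{eqn:aspt:E}, i.e.\ $\|E\|=O(\kappa^{-C_\sigma}\|\Mtruth\|)$ with $C_\sigma$ taken large, is strong enough that $E$ never disturbs the contraction mechanism — it enters solely through the benign term $\uinorm{\Utruth^\T E}$ — and that the geometric accumulation of that term inflates it only by a $\kappa$-polynomial factor, consistent with the claimed $c_{\ref{lem:noisy-p3}}^{-1}\kappa^4$. One also needs to confirm that the intermediate controls of Phase~I and II (Lemma~\ref{lem:noisy-p1.5}, Lemma~\ref{lem:noisy-p2}) and the one-step recursions (Lemma~\ref{lem:noisy-local-unify} and the $\uinorm{\cdot}$-form of \eqref{eqn:noisy-misalign-update}) indeed hold for all $t\in[t_3,\Tmax]$ in both the operator and Frobenius norms, which is exactly what those generalized lemmas provide. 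Everything else is inherited verbatim from the noiseless proof of Lemma~\ref{lem:p3}.
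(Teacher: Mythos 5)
Your proposal is correct and follows essentially the same route as the paper: a unitarily-invariant-norm version of the loss decomposition (Lemma~\ref{lem:noisy-loss-decomp}), the coupled one-step recursion obtained by summing Lemma~\ref{lem:noisy-local-unify} with the $\uinorm{\cdot}$-form of \eqref{eqn:noisy-misalign-update}, the split $\uinorm{\Utruth^\T\Delta_t'}\le\uinorm{\Utruth^\T\Delta_t}+\uinorm{\Utruth^\T E}$ with the first piece absorbed into the contraction (the paper packages this as Lemma~\ref{lem:noisy-ui-Delta}) and the second surviving as the $\kappa^4\uinorm{\Utruth^\T E}$ floor, followed by iteration from $t_3$ and the same choice of $t_4$. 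No gaps.
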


As in Appendix~\ref{sec:pf:lem:p3}, this will be accomplished by decomposing the error $\uinorm{X_t X_t^\T - \Mtruth}$ using Lemma~\ref{lem:loss-decomp}, and then control the components in the decomposition using Lemma~\ref{lem:p3-aux}. It is easy to check that the proof of Lemma~\ref{lem:loss-decomp} applies without modification to the generalized setting, and in fact works with the Frobenius norm replaced by any unitarily invariant norm. This leads to the following generalization.
\begin{lemma}[Generalization of Lemma~\ref{lem:loss-decomp}]
\label{lem:noisy-loss-decomp}
Under the same setting as Theorem~\ref{lem:unified-noisy}, for all $t\ge t_3$, as long as 
$\|\SigmaTruth^{-1}(\EssS_{t}\EssS_{t}^\T-\SigmaTruth^2)\SigmaTruth^{-1}\|\le 1/10$,
one has
\begin{equation*}
  \uinorm{\myX_t\myX_t^\T - \Mtruth}
  \le 4\|\Xtruth\|^2
  \left(
    \uinorm{\SigmaTruth^{-1}(\EssS_{t}\EssS_{t}^\T-\SigmaTruth^2)\SigmaTruth^{-1}}
    + \|\Xtruth\|^{-1} \uinorm{\Misalign_{t}\EssS_{t}^{-1}\SigmaTruth}
  \right)
  + 4\|\Xtruth\|\|\Overpar_t\|.
\end{equation*}
\end{lemma}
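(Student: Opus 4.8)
\textbf{Proof plan for Lemma~\ref{lem:noisy-loss-decomp}.}

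The plan is to carry over the proof of Lemma~\ref{lem:loss-decomp} essentially verbatim, tracking only where the Frobenius norm was used and checking that each such use goes through for a general unitarily invariant norm $\uinorm{\cdot}$ (which in this paper always means the operator norm or the Frobenius norm). First I would recall the decomposition~\eqref{eqn:loss-decomp-primal} of $\myX_t\myX_t^\T-\Mtruth$ into the four blocks $T_1 = \Utruth(\EssS_t\EssS_t^\T-\SigmaTruth^2)\Utruth^\T$, $T_2 = \Utruth\EssS_t\Misalign_t^\T\UperpTruth^\T + \UperpTruth\Misalign_t\EssS_t^\T\Utruth^\T$, $T_3 = \UperpTruth\Misalign_t\Misalign_t^\T\UperpTruth^\T$, and $T_4 = \UperpTruth\Overpar_t\Overpar_t^\T\UperpTruth^\T$. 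Applying the triangle inequality for $\uinorm{\cdot}$ and using that $\Utruth,\UperpTruth$ have orthonormal columns (so conjugation by them does not increase any unitarily invariant norm), I get $\uinorm{\myX_t\myX_t^\T-\Mtruth}\le \uinorm{\EssS_t\EssS_t^\T-\SigmaTruth^2} + 2\|\EssS_t\|\uinorm{\Misalign_t} + \|\Misalign_t\|\uinorm{\Misalign_t} + \|\Overpar_t\|\uinorm{\Overpar_t}$, exactly as in~\eqref{eqn:pf:loss-decomp}; note the spectral norm is retained on one factor of each product via the standard bound $\uinorm{AB}\le\|A\|\uinorm{B}$.

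Next I would rescale: write $\EssS_t\EssS_t^\T-\SigmaTruth^2 = \SigmaTruth(\SigmaTruth^{-1}\EssS_t\EssS_t^\T\SigmaTruth^{-1}-I)\SigmaTruth$ to pull out $\|\Xtruth\|^2$, and $\Misalign_t = (\Misalign_t\EssS_t^{-1}\SigmaTruth)(\SigmaTruth^{-1}\EssS_t)$ together with $\|\SigmaTruth^{-1}\EssS_t\|^2 = \|\SigmaTruth^{-1}\EssS_t\EssS_t^\T\SigmaTruth^{-1}\|\le 1+\|\SigmaTruth^{-1}(\EssS_t\EssS_t^\T-\SigmaTruth^2)\SigmaTruth^{-1}\|\le 2$ under the hypothesis $\|\SigmaTruth^{-1}(\EssS_{t}\EssS_{t}^\T-\SigmaTruth^2)\SigmaTruth^{-1}\|\le 1/10$. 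For the $T_4$ term I would use the elementary bound $\uinorm{\Overpar_t}\le n\|\Overpar_t\|$ valid for both the operator and Frobenius norms (this is fact~\eqref{eq:fact-uninorm}), and then absorb the factor $n\|\Overpar_t\|/\|\Xtruth\|\le 1$ which follows from~\eqref{eqn:p1.5-2-var} (equivalently Lemma~\ref{lem:noisy-p1.5} in the generalized setting, which guarantees $\|\Overpar_t\|$ remains small relative to $\|\Xtruth\|$). Collecting the constants yields the stated bound with the factor $4$.

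There is no real obstacle here — the content is entirely that of Lemma~\ref{lem:loss-decomp}, and the only thing to verify is that every inequality used is a property of unitarily invariant norms (triangle inequality, $\uinorm{AB}\le\|A\|\uinorm{B}$, invariance under left/right multiplication by partial isometries, and $\uinorm{\cdot}\le n\|\cdot\|$ on matrices of dimension at most $n$), all of which hold for $\|\cdot\|$ and $\|\cdot\|_{\fro}$. The one point worth stating explicitly is that the hypothesis $\|\SigmaTruth^{-1}(\EssS_{t}\EssS_{t}^\T-\SigmaTruth^2)\SigmaTruth^{-1}\|\le 1/10$ is imposed with the \emph{operator} norm, which is exactly what is needed to control $\|\SigmaTruth^{-1}\EssS_t\|$ regardless of which $\uinorm{\cdot}$ appears elsewhere; this is why the spectral norm is kept on that particular factor throughout. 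Since the proof of Lemma~\ref{lem:loss-decomp} is written without ever using a property specific to the Frobenius norm beyond those listed above, it transfers directly, and I would simply remark that the proof is verbatim that of Lemma~\ref{lem:loss-decomp}.
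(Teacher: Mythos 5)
Your proposal is correct and matches the paper's treatment: the paper dispenses with this lemma by noting that the proof of Lemma~\ref{lem:loss-decomp} applies without modification and works for any unitarily invariant norm, which is exactly the verbatim-transfer argument you carry out (decomposition~\eqref{eqn:loss-decomp-primal}, triangle inequality, $\uinorm{AB}\le\|A\|\uinorm{B}$, the operator-norm hypothesis controlling $\|\SigmaTruth^{-1}\EssS_t\|$, and $\uinorm{\Overpar_t}\lesssim n\|\Overpar_t\|$ absorbed via~\eqref{eqn:p1.5-2-var}). Your explicit check of which norm is spectral versus $\uinorm{\cdot}$ on each factor is the one point the paper leaves implicit, and it is handled correctly.
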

It remains to prove the generalization of Lemma~\ref{lem:p3-aux}, stated below.
\begin{lemma}[Generalization of Lemma~\ref{lem:p3-aux}]
\label{lem:noisy-p3-aux}
Under the same setting as Theorem~\ref{lem:unified-noisy}, there exists some universal constant $C_{\ref{lem:noisy-p3-aux}}>0$ such that for any $t: t_3\le t\le\Tmax$, 
with $\uinorm{\cdot}$ taken to be the operator norm $\|\cdot\|$ or the Frobenius norm $\|\cdot\|_{\fro}$, 
one has
\begin{align}
  &\uinorm{\SigmaTruth^{-1}(\EssS_{t+1}\EssS_{t+1}^\T-\SigmaTruth^2)\SigmaTruth^{-1}}
  + \|\Xtruth\|^{-1}\uinorm{\Misalign_{t+1}\EssS_{t+1}^{-1}\SigmaTruth}
  \nonumber\\
  & \quad \le \left(1-\frac{\eta}{10}\right)
  \left(\uinorm{\SigmaTruth^{-1}(\EssS_{t}\EssS_{t}^\T-\SigmaTruth^2)\SigmaTruth^{-1}}
      + \|\Xtruth\|^{-1}\uinorm{\Misalign_{t}\EssS_{t}^{-1}\SigmaTruth}
  \right)
  + \eta \frac{C_{\ref{lem:noisy-p3-aux}}\kappa^4}{\|\Xtruth\|^2}\uinorm{\Utruth^\T E}
  + \eta\left(\frac{\|\Overpar_t\|}{\|\Xtruth\|}\right)^{1/2}.
  \label{eqn:noisy-p3-aux}
\end{align}
In particular, $\|\SigmaTruth^{-1}(\EssS_{t+1}\EssS_{t+1}^\T-\SigmaTruth^2)\SigmaTruth^{-1}\|\le 1/10$
for all $t$ such that $t_3\le t\le \Tmax$.
\end{lemma}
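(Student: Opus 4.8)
The argument mirrors the proof of Lemma~\ref{lem:p3-aux} in the noiseless setting, replacing each ingredient by its generalized counterpart and tracking the single extra term produced by $E$. Writing $\Delta'_t = \Delta_t - E$ with $\Delta_t$ as in \eqref{eq:defn-Delta}, the generalized update has the form handled in the proof of Phase~III, and I abbreviate $\Gamma_t = \SigmaTruth^{-1}\EssS_t\EssS_t^\T\SigmaTruth^{-1}-I$ (cf.~\eqref{eqn:Gamma-def}). The plan is: (i) run the two coupled one-step recursions, for $\uinorm{\Gamma_{t+1}}$ and for $\|\Xtruth\|^{-1}\uinorm{\Misalign_{t+1}\EssS_{t+1}^{-1}\SigmaTruth}$, supplied respectively by Lemma~\ref{lem:noisy-local-unify} (eq.~\eqref{eqn:noisy-local-Gamma-bound}) and Lemma~\ref{lem:noisy-misalign-update} (eq.~\eqref{eqn:noisy-misalign-update}); (ii) add them, using that under the inductive hypothesis $\|\Gamma_t\|\le 1/10$ one has $\|Z_t\| = \|I+\Gamma_t+\lambda\SigmaTruth^{-2}\| \le 1 + \tfrac1{10} + c_\lambda \le 2$, so the misalignment contraction factor is at most $1-\eta/8$ and the sum contracts by a factor at most $1-\eta/10$; (iii) eliminate $\uinorm{\Utruth^\T\Delta'_t}$ via $\uinorm{\Utruth^\T\Delta'_t}\le\uinorm{\Utruth^\T\Delta_t}+\uinorm{\Utruth^\T E}$, bounding $\uinorm{\Utruth^\T\Delta_t}$ through Lemma~\ref{lem:Delta-bound} (eq.~\eqref{eqn:Delta-bound}) together with $\delta\sqrt{r_\star}\le c_\delta\kappa^{-C_\delta}$ from \eqref{eqn:delta-cond}. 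Reorganizing exactly as in the proof of Lemma~\ref{lem:p3-aux}, $\tfrac{\kappa^4}{\|\Xtruth\|^2}\uinorm{\Utruth^\T\Delta_t}$ is seen to be at most $c_\delta\kappa^{-(C_\delta-C)}$ times $\bigl(\uinorm{\Gamma_t}+\|\Xtruth\|^{-1}\uinorm{\Misalign_t\EssS_t^{-1}\SigmaTruth}+(\|\Overpar_t\|/\|\Xtruth\|)^{2/3}\bigr)$, which for $c_\delta$ small and $C_\delta$ large is absorbed into the $(1-\eta/10)$ contraction and the $\eta(\|\Overpar_t\|/\|\Xtruth\|)^{1/2}$ slack; what survives is precisely $\eta\tfrac{C_{\ref{lem:noisy-p3-aux}}\kappa^4}{\|\Xtruth\|^2}\uinorm{\Utruth^\T E}$, yielding \eqref{eqn:noisy-p3-aux}.

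For the base case of the induction I invoke Corollary~\ref{cor:p2.5}, still valid here by Lemma~\ref{lem:noisy-p2}, to get $\|\Gamma_{t_3}\|\le 1/10$, together with \eqref{eqn:p1.5-3} (valid by Lemma~\ref{lem:noisy-p1.5}) for the misalignment at $t_3$. For the ``in particular'' claim I specialize $\uinorm{\cdot}$ to the operator norm: in the derived $\Gamma_{t+1}$-recursion every additive term beyond $(1-\eta)\|\Gamma_t\|$ is tiny — the misalignment cross-term is $O(\eta c_{\ref{lem:p1.5}}\kappa^{-C_\delta/2})$ by \eqref{eqn:p1.5-3}, the overparametrization term is $O(\eta(\alpha/\|\Xtruth\|)^{\mathrm{const}})$ by \eqref{eqn:p1.5-2-var}, and the noise term is $O(\eta c_\sigma\kappa^{4-C_\sigma})$ by \eqref{eqn:aspt:E} (using $\|\Utruth^\T E\|\le\|E\|\le c_\sigma\kappa^{-C_\sigma}\|\Mtruth\|$ and $\|\Mtruth\|=\|\Xtruth\|^2$). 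Choosing $c_\delta,c_\sigma$ small and $C_\delta,C_\sigma$ large makes their sum $<\eta/10$, so $\|\Gamma_t\|\le 1/10$ forces $\|\Gamma_{t+1}\|\le(1-\eta)\tfrac1{10}+\tfrac{\eta}{10}\le\tfrac1{10}$, closing the induction exactly as in the noiseless case.

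I expect no genuinely new difficulty beyond bookkeeping. The \emph{delicate point} is to keep the noise contribution $\uinorm{\Utruth^\T E}$ \emph{outside} the linear contraction — since it is the irreducible statistical error that must appear on the right-hand side of \eqref{eqn:noisy-p3-aux} — while still absorbing the RIP perturbation $\uinorm{\Utruth^\T\Delta_t}$ entirely into the contraction; and to verify that the induction $\|\Gamma_{t+1}\|\le 1/10$ still closes, which is exactly where the hypothesis \eqref{eqn:aspt:E}, with $C_\sigma$ chosen large enough to overcome the $\kappa^4$ preconditioning factor, enters.
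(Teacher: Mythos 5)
Your proposal is correct and follows essentially the same route as the paper's proof: sum the generalized one-step recursions for $\uinorm{\Gamma_{t+1}}$ and $\|\Xtruth\|^{-1}\uinorm{\Misalign_{t+1}\EssS_{t+1}^{-1}\SigmaTruth}$ (Lemmas~\ref{lem:noisy-local-unify} and~\ref{lem:noisy-misalign-update}), split $\uinorm{\Utruth^\T\Delta'_t}\le\uinorm{\Utruth^\T\Delta_t}+\uinorm{\Utruth^\T E}$, absorb the $\Delta_t$ part into the contraction via the RIP bound (the paper packages this as Lemma~\ref{lem:noisy-ui-Delta}, including the $\sqrt{r_\star}$ bookkeeping between the operator and Frobenius versions), and keep the $E$ part as the residual term. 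Your explicit use of \eqref{eqn:aspt:E} with $C_\sigma$ large to close the $\|\Gamma_{t+1}\|\le 1/10$ induction is exactly the point the paper relies on implicitly.
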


We are prepared to formally prove Lemma~\ref{lem:noisy-p3}. Similar to the noiseless setting, we apply Lemma~\ref{lem:noisy-p3-aux} repeatedly to obtain the following bound 
for all $t\in[t_3,\Tmax]$:
\begin{align}
  &\uinorm{\SigmaTruth^{-1}(\EssS_{t}\EssS_{t}^\T-\SigmaTruth^2)\SigmaTruth^{-1}}
  + \|\Xtruth\|^{-1} \uinorm{\Misalign_{t}\EssS_{t}^{-1}\SigmaTruth} \nonumber
  \\
  &\quad \le \left(1-\frac{\eta}{10}\right)^{t-t_3}\left(
    \uinorm{\SigmaTruth^{-1}(\EssS_{t_3}\EssS_{t_3}^\T-\SigmaTruth^2)\SigmaTruth^{-1}}
  + \|\Xtruth\|^{-1} \uinorm{\Misalign_{t_3}\EssS_{t_3}^{-1}\SigmaTruth}
  \right) 
  \nonumber\\
  &\quad\phantom{\le}
  + \frac{10 C_{\ref{lem:noisy-p3-aux}}\kappa^4}{\|\Xtruth\|^2} \uinorm{\Utruth^\T E}
  + 10\max_{t_3\le\tau\le t}\left(\frac{\|\Overpar_\tau\|}{\|\Xtruth\|}\right)^{1/2}, \label{eq:noisy-iterative-decay}
\end{align}
which motivates us to control the error at time $t_3$. With the same arguments as in the noiseless setting (cf. Equation~\eqref{eq:t-3-condition} in Appendix~\ref{sec:pf:lem:p3}), we obtain
\begin{equation*}
  \|\SigmaTruth^{-1}(\EssS_{t_3}\EssS_{t_3}^\T-\SigmaTruth^2)\SigmaTruth^{-1}\|_{\fro}
  + \|\Xtruth\|^{-1}\|\Misalign_{t_3}\EssS_{t_3}^{-1}\SigmaTruth\|_{\fro}
  \le \frac{\sqrt{r_\star}}{5}.
\end{equation*}
Since the operator norm of a matrix is always less than or equal to the Frobenius norm of it, the above inequality also holds if the Frobenius norm is replaced by the operator norm. Recalling that in this lemma, $\uinorm{\cdot}$ is taken to be either the operator norm or the Frobenius norm, we have shown
\begin{equation}
  \uinorm{\SigmaTruth^{-1}(\EssS_{t_3}\EssS_{t_3}^\T-\SigmaTruth^2)\SigmaTruth^{-1}}
  + \|\Xtruth\|^{-1} \uinorm{\Misalign_{t_3}\EssS_{t_3}^{-1}\SigmaTruth}
  \le \frac{\sqrt{r_\star}}{5}. \label{eq:noisy-t-3-condition}
\end{equation}
Combining the two inequalities~\eqref{eq:noisy-iterative-decay} and~\eqref{eq:noisy-t-3-condition} yields 
for all $t \in [t_3, \Tmax]$
\begin{align*}
  &\uinorm{\SigmaTruth^{-1}(\EssS_{t}\EssS_{t}^\T-\SigmaTruth^2)\SigmaTruth^{-1}}
  + \|\Xtruth\|^{-1} \uinorm{\Misalign_{t}\EssS_{t}^{-1}\SigmaTruth}\\
  &\quad
  \le \frac15\left(1-\frac{\eta}{10}\right)^{t-t_3}\sqrt{r_\star}
  + \frac{10 C_{\ref{lem:noisy-p3-aux}}\kappa^4}{\|\Xtruth\|^2} \uinorm{\Utruth^\T E}
  + 10\max_{t_3\le\tau\le t}\left(\frac{\|\Overpar_\tau\|}{\|\Xtruth\|}\right)^{1/2}.
\end{align*}
We can then invoke Lemma~\ref{lem:noisy-loss-decomp} to see that 
\begin{align*}
  &\uinorm{\myX_t\myX_t^\T-\Mtruth}
  \\
  &\quad \le \frac{4\|\Xtruth\|^2}{5}\left(1-\frac{\eta}{10}\right)^{t-t_3}\sqrt{r_\star}
  + 10 C_{\ref{lem:noisy-p3-aux}}\kappa^4 \uinorm{\Utruth^\T E}
  + 40\|\Xtruth\|^2\max_{t_3\le\tau\le t}\left(\frac{\|\Overpar_\tau\|}{\|\Xtruth\|}\right)^{1/2}
  + 4\|\Xtruth\|\|\Overpar_t\|
  \\
  &\quad \le \left(1-\frac{\eta}{10}\right)^{t-t_3}\sqrt{r_\star}\|\Mtruth\|
  + 10 C_{\ref{lem:noisy-p3-aux}}\kappa^4 \uinorm{\Utruth^\T E}
  + 80\|\Mtruth\|\max_{t_3\le\tau\le t}\left(\frac{\|\Overpar_{\tau}\|}{\|\Xtruth\|}\right)^{1/2},
\end{align*}
where in the last line we use $\|\Overpar_t\|\le\|\Xtruth\|$---an implication of~\eqref{eqn:p1.5-2-var}, which holds in the generalized setting by Lemma~\ref{lem:noisy-p1.5}. 
To see this, the assumption \eqref{eqn:alpha-cond} implies that 
$\alpha\le\|\Xtruth\|$ as long as $\eta\le 1/2$ and $C_\alpha\ge 4$, 
which in turn implies $\|\Overpar_t\|\le\alpha^{2/3}\|\Xtruth\|^{1/3}\le\|\Xtruth\|$.
This completes the proof for the first part of Lemma~\ref{lem:noisy-p3} with $c_{\ref{lem:noisy-p3}}=1/(10 C_{\ref{lem:noisy-p3-aux}})$.

For the second part of Lemma~\ref{lem:noisy-p3}, 
notice that 
\[
8c_{\ref{lem:noisy-p3}}^{-1}\max_{t_3\le\tau\le\Tmax}(\|\Overpar_\tau\|/\|\Xtruth\|)^{1/2}
\le \frac12\left(\frac{\alpha}{\|\Xtruth\|}\right)^{1/3}
\]
by~\eqref{eqn:p1.5-2-var}, thus
\begin{equation*}
  \uinorm{X_t X_t^\T - \Mtruth}
  \le (1-c_{\ref{lem:p3}}\eta)^{t-t_3} \sqrt{r_\star} \|\Mtruth\| 
  + c_{\ref{lem:noisy-p3}}^{-1}\kappa^4 \uinorm{\Utruth^\T E}
  + \frac12\left(\frac{\alpha}{\|\Xtruth\|}\right)^{1/3}
\end{equation*}
for $t_3\le t\le\Tmax$. 
There exists some iteration number 
$t_4: t_3\le t_4\le t_3 + \frac{2}{c_{\ref{lem:noisy-p3}}\eta} \log(\|\Xtruth\|/\alpha) \le t_3 + \Tcvg/16$ 
such that
\[
(1 - c_{\ref{lem:p3}}\eta) ^ {t_4-t_3}
\le \left(\frac{\alpha}{\|\Xtruth\|}\right)^2
\le \frac{1}{2\sqrt{r_\star}} \left(\frac{\alpha}{\|\Xtruth\|}\right)^{1/3},
\]
where the last inequality is due to~\eqref{eqn:alpha-cond}.
It is then clear that $t_4$ has the property claimed in the lemma.

\subsubsection{Proof of Lemma \ref{lem:noisy-p3-aux}}
The idea is the same as the proof of Lemma~\ref{lem:p3-aux}.
Fix any $t\in[t_3,\Tmax]$, if~\eqref{eqn:noisy-p3-aux} were true for all $\tau\in[t_3,t]$, 
taking into account that $\|\Overpar_\tau\|/\|\Xtruth\|\le 1/10000$ 
for all $\tau\in[t_3,\Tmax]$ by~\eqref{eqn:p1.5-2-var} (which still holds in the generalized setting according to Lemma~\ref{lem:noisy-p1}), 
we could show by induction that $\|\Gamma_\tau\|\le 1/10$ for all $\tau\in[t_3,t]$. 
Thus it suffices to assume $\|\Gamma_t\|\le 1/10$ and prove \eqref{eqn:noisy-p3-aux}.

Apply Lemma~\ref{lem:noisy-local-unify} to obtain
\begin{equation}\label{eq:noisy-Gamma-local}
  \uinorm{ {\Gamma_{t+1}} }
  \le (1-\eta) \uinorm{ {\Gamma_t} }
  +\eta\frac{C_{\ref{lem:local-unify}}\kappa^4}{\|\Xtruth\|^{2}} \uinorm{ {\Utruth^\T\Delta'_t} }
  +\frac{1}{16}\eta\|\Xtruth\|^{-1} \uinorm{ {\Misalign_t\EssS_t^{-1}\SigmaTruth} }
  +\eta\left(\frac{\|\Overpar_t\|}{\|\Xtruth\|}\right)^{7/12},
\end{equation}
In addition, Lemma~\ref{lem:noisy-misalign-update} tells us that 
\begin{equation*}
  \uinorm{ \Misalign_{t+1}\EssS_{t+1}^{-1}\SigmaTruth }
  \le \left(1-\frac{\eta}{3(\|Z_t\|+\eta)}\right) \uinorm{\Misalign_t\EssS_t^{-1}\SigmaTruth}
    + \eta \frac{C_{\ref{lem:misalign-update}}\kappa^4}{c_\lambda\|\Xtruth\|}\uinorm{\Utruth^\T\Delta'_t}
    + \eta \left(\frac{\|\Overpar_t\|}{\smin(\EssS_t)}\right)^{2/3}\|\Xtruth\|,
\end{equation*}
where $Z_t = \SigmaTruth^{-1}(\EssS_t\EssS_t^\T+\lambda I)\SigmaTruth^{-1}$. 
It is easy to check that $\|Z_t\| \leq 1+\|\Gamma_t\| + c_\lambda\le 2$ as $\| \Gamma_t \| \leq 1/10$ and $c_\lambda$ is sufficiently small.
In addition, one has $\smin(\EssS_t)^2\ge(1-\|\Gamma_t\|)\smin(\Xtruth)^2$ 
and $\|\Overpar_t\|/\smin(\EssS_t)\le (2\kappa)^{-24}$. 
Combine these relationships together to arrive at
\begin{equation}\label{eqn:noisy-local-misalign-update}
  \uinorm{\Misalign_{t+1}\EssS_{t+1}^{-1}\SigmaTruth}
  \le \left(1-\frac{\eta}{8}\right) \uinorm{\Misalign_t\EssS_t^{-1}\SigmaTruth}
  + \eta \frac{C_{\ref{lem:misalign-update}}\kappa^2}{c_\lambda\|\Xtruth\|}
    \uinorm{ \Utruth^\T\Delta'_t }
  + \frac12\eta \|\Xtruth\|\left(\frac{\|\Overpar_t\|}{\|\Xtruth\|}\right)^{7/12}.
\end{equation}

Summing up~\eqref{eq:Gamma-local},~\eqref{eqn:local-misalign-update}, 
we obtain
\begin{align}
&\uinorm{ \Gamma_{t+1} }
+ \|\Xtruth\|^{-1} \uinorm{ \Misalign_{t+1}\EssS_{t+1}^{-1}\SigmaTruth }
\nonumber\\
&\le \left(1 - \frac{\eta}{8}\right) 
(\uinorm{ \Gamma_{t} } 
+ \|\Xtruth\|^{-1} \uinorm{ \Misalign_{t}\EssS_{t}^{-1}\SigmaTruth })
+ \eta\frac{2(C_{\ref{lem:misalign-update}}+C_{\ref{lem:local-unify}}c_\lambda)\kappa^6}{c_\lambda \|\Xtruth\|^2} 
\uinorm{ \Utruth^\T\Delta'_t }
+ 2\eta\left(\frac{\|\Overpar_t\|}{\|\Xtruth\|}\right)^{7/12}.
\nonumber\\
&\le \left(1 - \frac{\eta}{8}\right) 
(\uinorm{ \Gamma_{t} } 
+ \|\Xtruth\|^{-1} \uinorm{ \Misalign_{t}\EssS_{t}^{-1}\SigmaTruth })
+ \eta\frac{2(C_{\ref{lem:misalign-update}}+C_{\ref{lem:local-unify}}c_\lambda)\kappa^8}{c_\lambda \|\Xtruth\|^2} 
(\uinorm{ \Utruth^\T\Delta_t } + \uinorm{ \Utruth^\T E })
+ 2\eta\left(\frac{\|\Overpar_t\|}{\|\Xtruth\|}\right)^{7/12}.
\label{eqn:noisy-loc-with-Delta}
\end{align}
This is close to our desired conclusion, 
but we would need to eliminate $\uinorm{\Utruth^\T\Delta_t}$. 
To this end we shall need the following lemma. 
\begin{lemma}
\label{lem:noisy-ui-Delta}
If $\uinorm{\cdot}$ is taken to be the operator norm $\|\cdot\|$ or the Frobenius norm $\|\cdot\|_{\fro}$, under the same setting as Lemma~\ref{lem:noisy-p3-aux}, one has
\begin{equation}
\label{eqn:noisy-ui-Delta}
\uinorm{\Utruth^\T \Delta_t} \le 32c_\delta\kappa^{-6}\|\Xtruth\|^2\left(\uinorm{ \Gamma_t }
+\|\Xtruth\|^{-1} \uinorm{ \Misalign_t\EssS_t^{-1}\SigmaTruth }
+\left(\frac{\|\Overpar_t\|}{\|\Xtruth\|}\right)^{2/3}\right).
\end{equation}
\end{lemma}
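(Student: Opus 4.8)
The plan is to re-run, essentially verbatim, the short chain of inequalities already carried out for the Frobenius norm inside the proof of Lemma~\ref{lem:p3-aux}, and to check that it is norm-agnostic enough to also deliver the claim for $\uinorm{\cdot}=\|\cdot\|$. First I would use that $\Utruth\in\reals^{n\times r_\star}$, hence $\Utruth^\T\Delta_t$ has rank at most $r_\star$, to write $\uinorm{\Utruth^\T\Delta_t}\le\sqrt{r_\star}\,\|\Utruth^\T\Delta_t\|\le\sqrt{r_\star}\,\|\Delta_t\|$ in either case (using $\|A\|_{\fro}\le\sqrt{\rank A}\,\|A\|$ for the Frobenius choice and $\sqrt{r_\star}\ge1$ for the operator choice). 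Then I would invoke the bound~\eqref{eqn:Delta-bound} of Lemma~\ref{lem:Delta-bound} to obtain
\[
  \uinorm{\Utruth^\T\Delta_t}\le 8\delta\sqrt{r_\star}\Big(\|\EssS_t\EssS_t^\T-\SigmaTruth^2\|_{\fro}+\|\EssS_t\|\,\|\Misalign_t\|_{\fro}+n\|\Overpar_t\|^2\Big).
\]

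Next I would convert the three terms in the parenthesis into the corresponding scaled quantities. For the first, $\EssS_t\EssS_t^\T-\SigmaTruth^2=\SigmaTruth\Gamma_t\SigmaTruth$ has rank at most $r_\star$, so $\|\EssS_t\EssS_t^\T-\SigmaTruth^2\|_{\fro}\le\|\Xtruth\|^2\|\Gamma_t\|_{\fro}$ in the Frobenius case and $\le\sqrt{r_\star}\,\|\Xtruth\|^2\|\Gamma_t\|$ in the operator case; in both cases at most one spurious factor $\sqrt{r_\star}$ appears (here, or already in the previous step), and it is precisely consumed by the RIP assumption $\delta\le c_\delta r_\star^{-1/2}\kappa^{-C_\delta}$ of~\eqref{eqn:delta-cond}. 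For the second, I would use that we are in the Phase III regime --- so $\|\Gamma_t\|\le1/10$, as established within the proof of Lemma~\ref{lem:noisy-p3-aux} --- which gives $\|\EssS_t\|\le2\|\Xtruth\|$ and $\|\SigmaTruth^{-1}\EssS_t\|=\|I+\Gamma_t\|^{1/2}\le2$; then, writing $\Misalign_t=(\Misalign_t\EssS_t^{-1}\SigmaTruth)(\SigmaTruth^{-1}\EssS_t)$, one gets $\|\Misalign_t\|_{\fro}\le2\|\Misalign_t\EssS_t^{-1}\SigmaTruth\|_{\fro}$, which is $\le2\sqrt{r_\star}\uinorm{\Misalign_t\EssS_t^{-1}\SigmaTruth}$ in the operator case and equals $2\|\Misalign_t\EssS_t^{-1}\SigmaTruth\|_{\fro}$ in the Frobenius case. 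For the third, I would split $n\|\Overpar_t\|^2=(n\|\Overpar_t\|^{4/3})\cdot\|\Overpar_t\|^{2/3}$ and use the smallness of $\|\Overpar_t\|$ from~\eqref{eqn:p1.5-2-var} (valid in the generalized setting by Lemma~\ref{lem:noisy-p1.5}) together with the choice of $\alpha$ in~\eqref{eqn:alpha-cond} to force $n\|\Overpar_t\|^{4/3}\le\|\Xtruth\|^{4/3}$. Summing the three contributions and choosing $C_\delta$ a sufficiently large universal constant, so that $\kappa^{-C_\delta}$ dominates $\kappa^{-4}$ and swallows any residual power of $\kappa$, then yields~\eqref{eqn:noisy-ui-Delta} with the stated constant $32$.

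I expect this to be mostly routine bookkeeping once the decomposition of Lemma~\ref{lem:Delta-bound} is in hand; the \textbf{one point that will require a little care} is exactly that passage through Frobenius norms: since the right-hand side of~\eqref{eqn:Delta-bound} is stated in terms of $\|\cdot\|_{\fro}$, for the operator-norm choice of $\uinorm{\cdot}$ one must insert $\sqrt{r_\star}$ conversions on the rank-$r_\star$ pieces and verify that each such factor is absorbed by~\eqref{eqn:delta-cond}. The other thing to keep in mind is to use that in Phase III one has $\|\EssS_t\|\lesssim\|\Xtruth\|$ (via $\|\Gamma_t\|\le1/10$) rather than the cruder bound $\|\EssS_t\|\lesssim\kappa\|\Xtruth\|$ of~\eqref{eqn:p1.5-4}, so that no extraneous factor of $\kappa$ creeps into the final estimate; alternatively one could keep the crude bound and absorb the extra $\kappa$ into a marginally larger $C_\delta$.
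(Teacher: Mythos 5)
Your proposal is correct and follows essentially the same route as the paper: bound $\uinorm{\Utruth^\T\Delta_t}$ by $\|\Delta_t\|$ (times $\sqrt{r_\star}$ only in the Frobenius case), invoke the decomposition \eqref{eqn:Delta-bound} of Lemma~\ref{lem:Delta-bound}, convert each of the three terms to its scaled counterpart using the rank-$r_\star$ structure, and absorb the single factor of $\sqrt{r_\star}$ via $\delta\le c_\delta r_\star^{-1/2}\kappa^{-C_\delta}$ from \eqref{eqn:delta-cond}. The point you flag about not double-counting $\sqrt{r_\star}$ in the operator-norm case is exactly the bookkeeping the paper performs by treating the two norms separately at the end, so no gap remains.
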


Return to the proof of Lemma~\ref{lem:noisy-p3-aux}. The conclusion follows from applying 
the above lemma to the term $\uinorm{\Utruth^\T \Delta_t}$ in ~\eqref{eqn:noisy-loc-with-Delta}, 
noting that $c_\delta$ can be chosen sufficiently small such that
\[
  \frac{2(C_{\ref{lem:misalign-update}}+C_{\ref{lem:local-unify}}c_\lambda)}{c_\lambda}\cdot 32c_\delta < \frac{1}{16},
\] 
and that $\|\Overpar_t\|/\|\Xtruth\|$ 
is sufficiently small due to~\eqref{eqn:p1.5-2-var}, which still holds in the generalized setting in virtue of Lemma~\ref{lem:noisy-p1}.

\subsubsection{Proof of Lemma \ref{lem:noisy-ui-Delta}}
Observe that $\Utruth^\T\Delta_t$ has rank at most $r_\star$, thus
\[
  \| \Utruth^\T\Delta_t \| \le \|\Delta_t\|, \quad \|\Utruth^\T \Delta_t\|_{\fro} \le \sqrt{r_\star}\|\Delta_t\|.
\]
On the other hand, from Lemma~\ref{lem:Delta-bound}, we know 
\begin{align*}
  \|{ \Delta_t }\|
  &\le 8\delta
  \left(\|{ \EssS_t\EssS_t^\T-\SigmaTruth^2 }\|_{\fro}
    +\|\EssS_t\| \|{ \Misalign_t }\|_{\fro}
  +n\|\Overpar_t\|^2\right)
  \\
  &\le 16c_\delta r_\star^{-1/2}\kappa^{-4}\|\Xtruth\|^2\left(\|{ \Gamma_t }\|_{\fro}
    +\|\Xtruth\|^{-1} \|{ \Misalign_t\EssS_t^{-1}\SigmaTruth }\|_{\fro}
    +\left(\frac{\|\Overpar_t\|}{\|\Xtruth\|}\right)^{2/3}
  \right)
  \\
  &\le 32c_\delta\kappa^{-4}\|\Xtruth\|^2\left(\|{ \Gamma_t }\|
  +\|\Xtruth\|^{-1} \|{ \Misalign_t\EssS_t^{-1}\SigmaTruth }\|
  +\left(\frac{\|\Overpar_t\|}{\|\Xtruth\|}\right)^{2/3}
\right)
\end{align*}
where the penultimate line follows from~\eqref{eqn:delta-cond} 
and from controlling the sum inside the brackets 
in a similar way as~\eqref{eqn:pf:loss-decomp}, 
and the last line follows from $\Gamma_t=\SigmaTruth^{-1}\EssS_t\EssS_t^\T\SigmaTruth^{-1} - I$ being a matrix of rank at most $r_\star + 1$, which implies $\|\Gamma_t\|_{\fro}\le \sqrt{r_\star + 1}\|\Gamma_t\|$, and similarly $\|{ \Misalign_t\EssS_t^{-1}\SigmaTruth }\|_{\fro} \le \sqrt{r_\star} \|{ \Misalign_t\EssS_t^{-1}\SigmaTruth }\|$.
The conclusion then follows from bounding $\|\Utruth^\T\Delta_t\|$ and $\|\Utruth^\T\Delta_t\|_{\fro}$ separately. We have
\begin{align*}
\|\Utruth^\T\Delta_t\|
\le \|\Delta_t\| \le 32c_\delta\kappa^{-4}\|\Xtruth\|^2\left(\|{ \Gamma_t }\|
+\|\Xtruth\|^{-1} \|{ \Misalign_t\EssS_t^{-1}\SigmaTruth }\|
+\left(\frac{\|\Overpar_t\|}{\|\Xtruth\|}\right)^{2/3}\right),
\end{align*}
and 
\begin{align*}
\|\Utruth^\T\Delta_t\|_{\fro}
& \le \sqrt{r_\star}\|\Delta_t\|
\\
& \le \sqrt{r_\star}\cdot 16c_\delta r_\star^{-1/2}\kappa^{-4}\|\Xtruth\|^2\left(\|{ \Gamma_t }\|_{\fro}
+\|\Xtruth\|^{-1} \|{ \Misalign_t\EssS_t^{-1}\SigmaTruth }\|_{\fro}
+\left(\frac{\|\Overpar_t\|}{\|\Xtruth\|}\right)^{2/3}
\right)
\\
& = 16c_\delta\kappa^{-4}\|\Xtruth\|^2\left(\|{ \Gamma_t }\|_{\fro}
+\|\Xtruth\|^{-1} \|{ \Misalign_t\EssS_t^{-1}\SigmaTruth }\|_{\fro}
+\left(\frac{\|\Overpar_t\|}{\|\Xtruth\|}\right)^{2/3}
\right).
\end{align*}
Combining the above two inequalities together proves that~\eqref{eqn:noisy-ui-Delta} holds with $\uinorm{\cdot}$ taken to be the operator norm or the Frobenius norm.

\subsection{Proof of Theorem~\ref{lem:unified-noisy}}
Combining Lemma~\ref{lem:noisy-p1}, Lemma~\ref{lem:noisy-p2} and Lemma~\ref{lem:noisy-p3}, the final $t_4$ given by Lemma~\ref{lem:noisy-p3}
is no more than $4\times\Tcvg/16\le\Tcvg/2$, 
thus~\eqref{eqn:lem:noisy-p3-final} holds for all $t\in[\Tcvg/2,\Tmax]$, 
in particular, for some $T\le \Tcvg$. 
Plugging in~\eqref{eqn:lem:noisy-p3-final} the bound for $\uinorm{\Utruth^\T E}$ given by Lemma~\ref{lem:noise-bound} when $\uinorm{\cdot}$ is taken to be the operator norm $\|\cdot\|$ or the Frobenius norm $\|\cdot\|_{\fro}$, we obtain the conclusion as desired.

\end{document}